\DeclarePairedDelimiter{\ceil}{\lceil}{\rceil}
\newcommand{\fS}{\mathcal{S}}
\newcommand{\fA}{\mathcal{A}}
\newcommand{\fY}{\mathcal{Y}}
\newcommand{\fF}{\mathcal{F}}
\newcommand{\fO}{\mathcal{O}}
\newcommand{\mH}{\mathbb{H}}
\newcommand{\fU}{\mathcal{U}}
\newcommand{\R}{\mathbb{R}}
\newcommand{\E}{\mathbb{E}}
\newcommand{\nsa}{{|\fS \times \fA|}}
\newcommand{\ns}{{|\fS|}}
\newcommand{\na}{{|\fA|}}
\newcommand{\ny}{{|\fY|}}
\newcommand{\tb}[1]{{\textbf{#1}}}
\newcommand{\indot}[2]{{\left<#1, #2\right>}}
\newcommand{\kl}[2]{\text{KL}\left(#1||#2\right)}
\newcommand{\adv}{\text{Adv}}
\newcommand{\ent}[1]{\mathbb{H}\left(#1\right)}
\newcounter{assucounter}
\numberwithin{assucounter}{section}
\newtheorem{assumption}[assucounter]{Assumption}
\begin{document}

\title{Global Optimality and Finite Sample Analysis of Softmax Off-Policy Actor Critic under State Distribution Mismatch}

\author{\name Shangtong Zhang \email shangtong@virginia.edu \\
  \addr University of Virginia\\
  85 Engineer's Way, Charlottesville, VA, 22903, United States\\
  \name Remi Tachet des Combes$^\dagger$
  \email remi.tachet@microsoft.com \\
  \addr Microsoft Research Montreal \\
  6795 Rue Marconi, Suite 400, Montreal, Quebec, H2S 3J9, Canada \\
  \name Romain Laroche$^\dagger$ \email romain.laroche@microsoft.com \\
  \addr Microsoft Research Montreal \\
  6795 Rue Marconi, Suite 400, Montreal, Quebec, H2S 3J9, Canada
  }

\editor{John Shawe-Taylor}

\maketitle

\begin{abstract}
  In this paper,
  we establish the global optimality and convergence rate of an off-policy actor critic algorithm in the tabular setting without using density ratio to correct the discrepancy between the state distribution of the behavior policy and that of the target policy.
  Our work goes beyond existing works on the optimality of policy gradient methods in that
  existing works use the exact policy gradient for updating the policy parameters 
  while we use an approximate and stochastic update step.
  Our update step is not a gradient update because we do not use a density ratio to correct the state distribution,
  which aligns well with what practitioners do.
  Our update is approximate because we use a learned critic instead of the true value function.
  Our update is stochastic because at each step the update is done for only the current state action pair.
  Moreover,
  we remove several restrictive assumptions from existing works in our analysis.
  Central to our work is the finite sample analysis of a generic stochastic approximation algorithm with time-inhomogeneous update operators on time-inhomogeneous Markov chains,
  based on its uniform contraction properties.
  \let\svthefootnote\thefootnote
  \let\thefootnote\relax\footnotetext{$^\dagger$ Equal advising}
  \let\thefootnote\relax\footnotetext{$*$ This version improves the JMLR camera-ready version by removing all the projection operators in the algorithms.}
  \let\thefootnote\svthefootnote
\end{abstract}

\begin{keywords}
  off-policy learning, actor-critic, policy gradient, density ratio, distribution mismatch
\end{keywords}

\section{Introduction}
Policy gradient methods \citep{williams1992simple}, 
as well as their actor-critic extensions \citep{ sutton2000policy,konda2000actor},
are an important class of Reinforcement Learning  (RL, \citealt{sutton2018reinforcement}) algorithms 
and have enjoyed great empirical success \citep{silver2016mastering, mnih2016asynchronous,vinyals2019grandmaster},
which motivates the importance of the theoretical analysis of policy gradient methods.
Policy gradient and actor-critic methods are essentially stochastic gradient ascent algorithms
and,
therefore, expected to converge to stationary points under mild conditions in on-policy settings,
where an agent selects actions according to its current policy \citep{sutton2000policy,konda2000actor,kumar2019sample,zhang2020global,wu2020finite,xu2020improving,qiu2021finite}.
Off-policy learning is a paradigm where an agent learns a policy of interest, referred to as the target policy,
but selects actions according to a different policy,
referred to as the behavior policy.
Compared with on-policy learning,
off-policy learning exhibits improved sample efficiency \citep{lin1992self,sutton2011horde} and safety \citep{dulac2019challenges}.
In off-policy settings,
the density ratio,
i.e. the ratio between the state distribution of the target policy and that of the behavior policy
\citep{hallak2017consistent,gelada2019off,liu2018breaking,nachum2019dualdice,zhang2020gradientdice},
can be used to correct the state distribution mismatch between the behavior policy and the target policy.
Consequently,
convergence to stationary points of actor-critic methods in off-policy settings with density ratio has also been established \citep{liu2019off,zhang2019provably,huang2021convergence,xu2021doubly}.

The seminal work of \citet{agarwal2019optimality} goes beyond stationary points 
by establishing the global optimality of policy gradient methods in the tabular setting.
\citet{mei2020global} further provide some missing convergence rates.
Both,
however,
use the exact policy gradient instead of an approximate and stochastic gradient,
i.e.,
they assume the value function and the state distribution of the current policy are known 
and query the value function for all states at every iteration.
Despite the aforementioned limitation,
\citet{agarwal2019optimality} still lay the first step towards understanding the global optimality of policy gradient methods.
Their success has also been extended to the off-policy setting by \citet{laroche2021dr}, who,
importantly, consider off-policy actor critic methods without correcting the state distribution mismatch.
Consequently, 
the update step they perform is not a gradient.
This
aligns better with RL practices:
to achieve good performance, practitioners usually do not correct the state distribution mismatch with density ratios for large scale RL experiments \citep{wang2016sample,espeholt2018impala,vinyals2019grandmaster,schmitt2020off,zahavy2020self}.
Still, \citet{laroche2021dr} use exact and expected update steps,
instead of approximate and stochastic update steps. 

In this work,
we go beyond \citet{agarwal2019optimality,laroche2021dr} by establishing the global optimality and convergence rate of an off-policy actor critic algorithm with approximate and stochastic update steps.
Similarly,
we study the off-policy actor critic algorithm in the tabular setting with softmax parameterization of the policy.
Like \citet{laroche2021dr},
we do not use the density ratio to correct the state distribution mismatch.
We, however,
use a learned value function (i.e., approximate updates) and perform stochastic updates
for both the actor and the critic.
Further, we use the KL divergence between a uniformly random policy and the current policy as a regularization with a decaying weight for the actor update.
Our off-policy actor critic algorithm,
therefore, runs in three timescales:
the critic is updated in the fastest timescale;
the actor runs in the middle timescale;
the weight of regularization decays in the slowest timescale.
Besides the advances of using approximate and stochastic  update steps,
we also remove two restrictive assumptions.
The first assumption requires that the initial distribution of the Markov Decision Process (MDP) covers the whole state space,
which is crucial to get the desired optimality in \citet{agarwal2019optimality}. 
The second assumption requires that the optimal policy of the MDP is unique,
which is crucial to get the nonasymptotic convergence rate of \citet{laroche2021dr} for the softmax parameterization.
Thanks to the off-policy learning and the decaying KL divergence regularization,
we are able to remove those two assumptions in our analysis.

One important ingredient of our convergence results is the finite sample analysis of a generic stochastic approximation algorithm with time-inhomogeneous update operators on time-inhomogeneous Markov chains (Section~\ref{sec sa}).
Similar to \citet{chen2021lyapunov},
we rely on the use of the generalized Moreau envelope to form a Lyapunov function.
Our results, however,
extend those of \citet{chen2021lyapunov} from time-homogeneous to time-inhomogeneous Markov chains and
from time-homogeneous to time-inhomogeneous update operators.
Those extensions make our results immediately applicable to the off-policy actor-critic settings (Section~\ref{sec opac}) and
are made possible by establishing a form of uniform contraction of the time-inhomogeneous update operators.
Moreover, we demonstrate that our analysis can also be used for analyzing the soft actor-critic (a.k.a. maximum entropy RL, \citealt{nachum2017bridging,haarnoja2018soft}) under state distribution mismatch (Section~\ref{sec sac}).

\section{Background}
In this paper,
calligraphic letters denote sets and we use vectors and functions interchangeably when it does not confuse, e.g.,
let $f: \fS \to \R$ be a function; 
we also use $f$ to denote the vector in $\R^\ns$ whose $s$-th element is $f(s)$.
All vectors are column.
We use $\norm{\cdot}$ to denote the standard $\ell_2$ norm and $\indot{x}{y} \doteq x^\top y$ for the inner product in Euclidean spaces.
$\norm{\cdot}_p$ is the standard $\ell_p$ norm.
For any norm $\norm{\cdot}_m$,
$\norm{\cdot}^*_m$ denotes its dual norm.

We consider an infinite horizon MDP with a finite state space $\fS$,
a finite action space $\fA$,
a reward function $r: \fS \times \fA \to [-r_{max}, r_{max}]$ for some positive scalar $r_{max}$,
a transition kernel $p: \fS \times \fS \times \fA \to [0, 1]$,
a discount factor $\gamma \in [0, 1)$,
and an initial distribution $p_0: \fS \to [0, 1]$.
At time step 0, an initial state $S_0$ is sampled according to $p_0$.
At time step $t$,
an agent in state $S_t$ takes an action $A_t \sim \pi(\cdot | S_t)$ according to a policy $\pi: \fA \times \fS \to [0, 1]$,
gets a reward $R_{t+1} \doteq r(S_t, A_t)$,
and proceeds to a successor state $S_{t+1} \sim p(\cdot | S_t, A_t)$.
The return at time step $t$ is the random variable
\begin{align}
  G_t \doteq \sum_{i=0}^{\infty} \gamma^i R_{t+i+1},
\end{align}
which allows us to define state- and action-value functions $v_\pi$ and $q_\pi$ as
\begin{align}
  v_\pi(s) &\doteq \E[G_t | S_t = s, \pi, p], \\
  q_\pi(s, a) &\doteq \E[G_t | S_t = s, A_t = a, \pi, p].
\end{align}
The performance of the policy $\pi$ is measured by the expected discounted sum of rewards
\begin{align}
  J(\pi; p_0) \doteq \sum_s p_0(s) v_\pi(s).
\end{align}
Prediction and control are two fundamental tasks of RL.

The goal of prediction is to estimate the values $v_\pi$ or $q_\pi$.
Take estimating $q_\pi$ as an example.
Let $q_t \in \R^\nsa$ be our estimate for $q_\pi$ at time $t$.
SARSA \citep{rummery1994line} updates $\qty{q_t}$ iteratively as
\begin{align}
  \delta_t &\doteq R_{t+1} + \gamma q_{t}(S_{t+1}, A_{t+1}) - q_t(S_t, A_t), \\
  \label{eq on-policy sarsa}
  q_{t+1}(s, a) &\doteq 
  \begin{cases}
    q_t(s, a) + \alpha_t \delta_t, & (s ,a) = (S_t, A_t) \\
    q_t(s, a), & (s, a) \neq (S_t, A_t)
  \end{cases},
\end{align}
where $\delta_t$ is called the temporal difference error \citep{sutton1988learning} and $\qty{\alpha_t}$ is a sequence of learning rates.
It is proved by \cite{bertsekas1996neuro} that, under mild conditions, $\qty{q_t}$ converges to $q_\pi$ almost surely.
So far we have considered on-policy learning,
where the policy of interest is the same as the policy used in action selection.
In the off-policy learning setting, 
the goal is still to estimate $q_\pi$.
Action selection is, however,
done using a different policy $\mu$ (i.e., $A_t \sim \mu(\cdot |S_t)$).
We refer to $\pi$ and $\mu$ as the target and behavior policy respectively.
Off-policy expected SARSA \citep{de2018multi} updates $\qty{q_t}$ iteratively as
\begin{align}
  \delta_t \doteq& R_{t+1} + \gamma \sum_{a'} \pi(a' | S_{t+1}) q_{t}(S_{t+1}, a') - q_t(S_t, A_t), \\
  \label{eq off-policy esarsa}
  q_{t+1}(s, a) \doteq &
  \begin{cases}
    q_t(s, a) + \alpha_t \delta_t, & (s ,a) = (S_t, A_t) \\
    q_t(s, a), & (s, a) \neq (S_t, A_t)
  \end{cases},
\end{align}
where the target policy $\pi$,
instead of the behavior policy $\mu$,
is used to compute the temporal difference error.

The goal of control is to find a policy $\pi_*$ such that $\forall \pi, s$
\begin{align}
  \label{eq optimal policy}
  v_\pi(s) \leq v_{\pi_*}(s).
\end{align}
One common approach for control is policy gradient.
In this paper,
we consider a softmax parameterization for the policy $\pi$.
Letting $\theta \in \R^\nsa$ be the parameters of the policy,
We represent it as
\begin{align}
  \pi(a|s) \doteq \frac{\exp(\theta_{s,a})}{\sum_{a'}\exp(\theta_{s,a'})},
\end{align}
where $\theta_{s,a}$ is the $(s, a)$-indexed element of $\theta$.
Policy gradient methods then update $\theta$ iteratively as
\begin{align}
  \label{eq on-policy pg}
  \theta_{t+1} \doteq \theta_t + \beta_t \nabla_{\theta} J(\pi_{\theta_t}; p_0).
\end{align}
Here $\qty{\beta_t}$ is a sequence of learning rates and
$\pi_\theta$ emphasizes the dependence of the policy $\pi$ on its parameter $\theta$.
In the rest of the paper,
we omit the $\theta$ in $\nabla_\theta$ for simplicity.
\citet{agarwal2019optimality,mei2020global} prove that when $p_0(s) > 0$ holds for all $s$ and $\qty{\beta_t}$ is set properly,
the iterates $\qty{\theta_t}$ generated by \eqref{eq on-policy pg} satisfy
\begin{align}
  \lim_{t\to\infty} J(\theta_t; p_0) = J(\pi_*;p_0),
\end{align}
confirming the optimality of policy gradient methods in the tabular setting with exact gradients.
\citet{mei2020global} also establish a convergence rate for the softmax parameterization.

In practice,
we, however, usually do not have access to $\nabla J(\pi_{\theta_t}; p_0)$.
Fortunately, 
the policy gradient theorem \citep{sutton2000policy} asserts that
\begin{align}
  \nabla J(\pi_{\theta}; p_0) \doteq& \frac{1}{1 - \gamma} \sum_s d_{\pi_\theta, \gamma, p_0}(s) \sum_a q_{\pi_\theta}(s, a) \nabla \pi_\theta(a|s),
\end{align} 
where 
\begin{align}
  d_{\pi, \gamma, p_0} \doteq (1 - \gamma)\sum_{t=0}^\infty \gamma^t \Pr(S_t = s | p_0, \pi)
\end{align}
is the normalized discounted state occupancy measure.
Hence instead of using the gradient update \eqref{eq on-policy pg},
practitioners usually consider the following approximate and stochastic gradient update for the on-policy setting:
\begin{align}
  \label{eq on-policy ac}
  \theta_{t+1} \doteq \theta_t + \beta_t \gamma^t q_t(S_t, A_t) \nabla \log \pi_{\theta_t}(A_t | S_t),
\end{align} 
where $q_t$ is updated according to \eqref{eq on-policy sarsa}.
We refer to \eqref{eq on-policy ac} and \eqref{eq on-policy sarsa} as on-policy actor critic,
where the actor refers to $\pi_\theta$ and the critic refers to $q$.
Usually $\alpha_t$ is much larger than $\beta_t$,
i.e.,
the critic is updated much faster than the actor and the actor is, therefore, 
quasi-stationary from the perspective of the critic.
Consequently,
in the limit,
we can expect $q_t$ to converge to $q_{\pi_{\theta_t}}$,
after which
$\gamma^t q_t(S_t, A_t) \nabla \log \pi_{\theta_t}(A_t | S_t)$ becomes an unbiased estimator of $\nabla J(\pi_{\theta_t}; p_0)$ and the actor update becomes the standard stochastic gradient ascent.

In the off-policy setting,
at time step $t$,
the action selection is done according to some behavior policy $\mu_{\theta_t}$.
Here $\mu_\theta$ does not need to have the same parameterization as $\pi_\theta$,
e.g.,
$\mu_\theta$ can be a softmax policy with a different temperature,
a mixture of a uniformly random policy and a softmax policy,
or a constant policy $\mu$.
To account for the difference between $\pi_\theta$ and $\mu_\theta$,
one must reweight the actor update \eqref{eq on-policy ac} as 
\begin{align}
  \label{eq off-policy ac}
  \theta_{t+1} \doteq \theta_t + \beta_t \varrho_t \rho_t q_t(S_t, A_t) \nabla \log \pi_{\theta_t}(A_t | S_t),
\end{align}
where 
\begin{align}
  \rho_t \doteq \frac{\pi_{\theta_t}(A_t | S_t)}{\mu_{\theta_t}(A_t | S_t)}
\end{align}
is the importance sampling ratio to correct the discrepancy in action selection and
\begin{align}
  \varrho_t \doteq& \frac{d_{\pi_{\theta_t}, \gamma, p_0}(s)}{d_t(s)} \;\text{ with }\;
  d_t(s) \doteq {\Pr(S_t = s | \mu_{\theta_0}, \dots, \mu_{\theta_t})}
\end{align}
is the density ratio to correct the discrepancy in state distribution.
Thanks to $\rho_t$ and $\varrho_t$,
in the limit,
\eqref{eq off-policy ac} is still a stochastic gradient ascent algorithm following the gradient $\nabla J(\pi_{\theta_t}; p_0)$ if $q_t$ converges to $q_{\pi_{\theta_t}}$.
Theoretical analysis of variants of \eqref{eq off-policy ac} includes \citet{liu2019off,zhang2019provably,huang2021convergence,xu2021doubly}.
Practitioners,
however,
usually use only $\rho_t$ but completely ignore $\varrho_t$,
yielding variants of
\begin{align}
  \label{eq off-policy ac with mismatch}
  \theta_{t+1} \doteq \theta_t + \beta_t \rho_t q_t(S_t, A_t) \nabla \log \pi_{\theta_t}(A_t | S_t).
\end{align}
Clearly, 
\eqref{eq off-policy ac with mismatch} can no longer be regarded as a stochastic gradient ascent algorithm even if $q_t$ converges to $q_{\pi_{\theta_t}}$ because of the missing term $\varrho_t$ used to correct the state distribution.
Still, variants of \eqref{eq off-policy ac with mismatch} enjoy great empirical success \citep{wang2016sample,espeholt2018impala,vinyals2019grandmaster,schmitt2020off,zahavy2020self}.
To understand the behavior of \eqref{eq off-policy ac with mismatch},
\citet{laroche2021dr} study the following update rule:
\begin{align}
  \label{eq jh}
  \theta_{t+1} \doteq \theta_t + \beta_t \sum_s d_t(s) \sum_{a}q_{\pi_{\theta_t}}(s, a) \nabla \pi_{\theta_t}(a|s).
\end{align}
Different from \eqref{eq off-policy ac with mismatch},
where the update step is approximate and stochastic,
the update in \eqref{eq jh} is exact and expected.
\citet{laroche2021dr} prove that under mild conditions, 
the iterates $\qty{\theta_t}$ generated by \eqref{eq jh} satisfy
\begin{align}
  \lim_{t\to\infty} J(\pi_{\theta_t};p_0) = J(\pi_*; p_0).
\end{align}
If we further assume the optimal policy $\pi_*$ is unique and $\inf_{s, t} d_t(s) > 0$,
a nonasymptotic convergence rate of \eqref{eq jh} is available.

\section{Stochastic Approximation with Time-Inhomogeneous Operators on Time-Inhomogeneous Markov Chains}
\label{sec sa}
In this section,
we provide finite sample analysis of a generic stochastic approximation algorithm with time-inhomogeneous update operators on time-inhomogeneous Markov chains.
The results presented in this section are used in the analysis of critics in the rest of this work
and may be of independent interest. 

To motivate this part,
consider using off-policy expected SARSA to update the critic in off-policy actor critic.
We have
\begin{align}
  \delta_t \doteq& R_{t+1} + \gamma \sum_{a'} \pi_{\theta_t}(a' | S_{t+1}) q_{t}(S_{t+1}, a') - q_t(S_t, A_t), \\
  q_{t+1}(s, a) \doteq &
  \begin{cases}
    q_t(s, a) + \alpha_t \delta_t, & (s ,a) = (S_t, A_t) \\
    q_t(s, a), & (s, a) \neq (S_t, A_t)
  \end{cases}.
\end{align}
Equivalently,
we can rewrite the above update in a more compact form as
\begin{align}
  \label{eq critic update with F}
  q_{t+1} = q_t + \alpha_t \left(F_{\theta_t}(q_t, S_t, A_t, S_{t+1}) - q_t\right),
\end{align}
where
\begin{align}
  F_\theta(q, s_0, a_0, s_1)[s, a]
  \doteq& \mathbb{I}_{(s_0, a_0) = (s, a)} \delta_\theta(q, s_0, a_0, s_1) + q(s, a),\\
  \delta_\theta(q, s_0, a_0, s_1) 
  \doteq& r(s_0, a_0) + \gamma \sum_{a_1} \pi_{\theta}(a_1 | s_1) q(s_1, a_1) - q(s_0, a_0).
\end{align}
Here, $\mathbb{I}_{statement}(\cdot)$ is the indicator function whose value is 1 if the statement is true, and 0 otherwise.
The update \eqref{eq critic update with F} motivates us to study a generic stochastic approximation algorithm in the form of
\begin{align}
    \label{eq sa iterates}
    w_{t+1} \doteq w_t + \alpha_t (F_{\theta_t}(w_t, Y_t) - w_t + \epsilon_t).
\end{align}
Here $\qty{w_t \in \R^K}$ are the iterates generated by the stochastic approximation algorithm,
$\qty{Y_t}$ is a sequence of random variables evolving in a finite space $\fY$,
$\qty{\theta_t \in \R^L}$ is another sequence of random variables controlling the transition of $\qty{Y_t}$,
$F_\theta$ is a function from $\R^K \times \fY$ to $\R^K$ parameterized by $\theta$,
and $\qty{\epsilon_t \in \R^K}$ is a sequence of random noise. 
The analysis of critics in this paper only requires $\epsilon_t \equiv 0$.
Nevertheless, we consider a generic noise process $\qty{\epsilon_t}$ for generality.

The results in this section extend Theorem 2.1 of \citet{chen2021lyapunov} in two aspects.
First,
the operator $F_\theta$ changes every time step due to the change of $\theta$,
while \citet{chen2021lyapunov} consider a fixed operator $F$.
Second,
the random process $\qty{Y_t}$ evolves according to time-varying dynamics controlled by $\qty{\theta_t}$,
while \citet{chen2021lyapunov} assume $\qty{Y_t}$ is a Markov chain with fixed dynamics.
The introduction of $\qty{\theta_t}$ makes our results immediately applicable to the analysis of actor-critic algorithms.
We now state our assumptions.
It is worth reiterating that all the $\qty{\theta_t}$ below refers to the random sequence used in the update~\eqref{eq sa iterates}.
\begin{assumption}
  \label{assu makovian}
  (Time-inhomogeneous Markov chain)
  There exists a family of parameterized transition matrices $\Lambda_P \doteq \qty{P_{\theta} \in \R^{|\fY| \times |\fY|} | \theta \in \R^L}$ such that
\begin{align}
    \Pr(Y_{t+1} = y) = P_{\theta_{t+1}}(Y_t, y).
\end{align}
\end{assumption}
\begin{assumption}
    \label{assu uniform ergodicity}
    (Uniform ergodicity)
Let $\bar \Lambda_P$ be the closure of $\Lambda_P$.
For any $P \in \bar \Lambda_P$,
the chain induced by $P$ is ergodic.
We use $d_\theta$ to denote the invariant distribution of the chain induced by $P_\theta$.
\end{assumption}

Assumption~\ref{assu makovian} prescribes that the random process $\qty{Y_t}$ is a time-inhomogeneous Markov chain.
It is worth mentioning that Assumption~\ref{assu makovian} does not prescribe how the transition matrices depend on $\qty{\theta_t}$.
It does not restrict $\qty{\theta_t}$ to be deterministic either.
An exemplary parameterization we use in the context of off-policy actor critic will be shown later in \eqref{eq mu example}.
Assumption~\ref{assu uniform ergodicity} prescribes the ergodicity of the Markov chains we consider and was also previously used in the analysis of RL algorithms both in the on-policy \citep{marbach2001simulation} and off-policy settings \citep{zhang2021breaking}.
We will show later that Assumption~\ref{assu uniform ergodicity} is easy to fulfill in our off-policy actor critic setting.
Assumption~\ref{assu uniform ergodicity} implicitly claims that all the matrices in $\bar \Lambda_P$ are stochastic matrices.
This is indeed trivial to prove.
Pick any $P_\infty \in \bar \Lambda_P$.
Since $\bar \Lambda_P$ is the closure of $\Lambda_P$,
there must exist a sequence $\qty{P_n}$ such that $P_n \in \Lambda_P$ and $\lim_{n\to\infty} P_n = P_\infty$.
It is then easy to see that $P_\infty(y, y') \in [0, 1]$ and 
\begin{align}
  \sum_{y'} P_\infty(y, y') = \sum_{y'} \lim_{n\to\infty} P_n(y, y') = \lim_{n\to\infty} \sum_{y'} P_n(y, y') = 1.
\end{align}
In other words, $P_\infty$ is a stochastic matrix.
One important consequence of Assumption~\ref{assu uniform ergodicity} is uniform mixing.
\begin{restatable}{lemma}{uniformmixing}
  \label{lem uniform mixing}
  (Uniform ergodicity implies uniform mixing)
Let Assumption~\ref{assu uniform ergodicity} hold. 
Then, there exist constants $C_0 > 0$ and $\tau \in (0, 1)$,
independent of $\theta$,
such that for any $n > 0$,
\begin{align}
    \label{eq uniform mixing}
    \sup_{y, \theta} \sum_{y'} \abs{P^n_\theta(y, y') -  d_\theta(y')} \leq C_0 \tau^n.
\end{align}
\end{restatable}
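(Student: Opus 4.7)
The plan is to combine a compactness argument on $\bar\Lambda_P$ with the standard Dobrushin-coefficient machinery. Recall the Dobrushin coefficient $\delta(P) \doteq \tfrac{1}{2}\max_{y,y'}\sum_z|P(y,z)-P(y',z)|$, which satisfies the submultiplicativity $\delta(PQ)\le \delta(P)\delta(Q)\le 1$ and the contraction inequality $\|\nu_1 P - \nu_2 P\|_{\text{TV}} \le \delta(P)\,\|\nu_1-\nu_2\|_{\text{TV}}$ for any probability vectors $\nu_1,\nu_2$. Taking $\nu_1$ to be a point mass at $y$ and $\nu_2 = d_\theta$, and using the invariance $d_\theta P_\theta = d_\theta$, one obtains $\sum_{y'}|P_\theta^n(y,y') - d_\theta(y')| \le 2\delta(P_\theta^n)$ for every $y,\theta,n$. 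So it suffices to produce constants $C,\tau$, independent of $\theta$, with $\delta(P_\theta^n) \le C\tau^n$.

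First, $\bar\Lambda_P$ is compact: by the paragraph preceding the lemma, all its elements are stochastic matrices, so $\bar\Lambda_P$ is a closed, bounded subset of $\R^{|\fY|\times|\fY|}$. Second, for each $P \in \bar\Lambda_P$, ergodicity (irreducibility plus aperiodicity) forces $P^k \to \mathbf{1}d_P^\top$ entrywise, and hence $\delta(P^k) \to 0$; in particular there exists $k_P$ with $\delta(P^{k_P}) < \tfrac{1}{2}$. Third, the map $Q \mapsto \delta(Q^{k_P})$ is continuous on $\R^{|\fY|\times|\fY|}$ (a piecewise-linear function composed with a polynomial in the entries), so there is an open neighborhood $U_P$ of $P$ throughout which $\delta(Q^{k_P}) < \tfrac{3}{4}$.

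By compactness, finitely many such neighborhoods $U_{P_1},\dots,U_{P_m}$ cover $\bar\Lambda_P$; set $K \doteq \max_i k_{P_i}$. For any $Q \in \bar\Lambda_P$, pick $i$ with $Q \in U_{P_i}$; then submultiplicativity and $\delta(\cdot)\le 1$ give $\delta(Q^K) \le \delta(Q^{k_{P_i}})\,\delta(Q^{K-k_{P_i}}) \le \tfrac{3}{4}$. In particular $\delta(P_\theta^K) \le \tfrac{3}{4}$ for every $\theta$, since $P_\theta \in \Lambda_P \subseteq \bar\Lambda_P$. Writing $n = qK + r$ with $0 \le r < K$ and applying submultiplicativity once more, $\delta(P_\theta^n) \le \delta(P_\theta^K)^q \le (3/4)^{\,n/K - 1}$. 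Plugging this into the contraction bound yields $\sup_{y,\theta}\sum_{y'}|P_\theta^n(y,y') - d_\theta(y')| \le \tfrac{8}{3}\bigl((3/4)^{1/K}\bigr)^n$, so $C_0 \doteq 8/3$ and $\tau \doteq (3/4)^{1/K}$ do the job.

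The main obstacle is precisely the passage from a pointwise-in-$P$ mixing rate to a uniform one: each ergodic $P$ has its own time $k_P$ and its own rate, and these must be fused into a single $K$ and a single $\tau$ that work for the entire family. Compactness of $\bar\Lambda_P$ together with continuity of $Q \mapsto \delta(Q^k)$ is exactly what enables the fusion, and is also why the hypothesis is naturally phrased in terms of the closure $\bar\Lambda_P$ rather than $\Lambda_P$ itself: $\Lambda_P$ need not be compact, and a limit point that failed to be ergodic could destroy the uniform bound.
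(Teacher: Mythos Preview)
Your proof is correct and takes a genuinely different route from the paper. The paper first exploits a combinatorial fact: since $|\fY|$ is finite, there are only finitely many possible $0$--$1$ connectivity patterns $\mathbb{I}_P$ among matrices in $\bar\Lambda_P$, so the primitivity index $t_P$ (the smallest time at which $P^t$ becomes strictly positive) ranges over a finite set; one then takes $t_* = \max_P t_P$. With this common $t_*$ in hand, the paper applies the extreme value theorem to the continuous map $P \mapsto \min_{i,j} P^{t_*}(i,j)$ on the compact set $\bar\Lambda_P$ to obtain a uniform Doeblin minorization $P^{t_*}(i,j) \ge \delta > 0$, and then carries out an explicit Doeblin-type decomposition $P^{t_*} = (1-\zeta)\Pi + \zeta Q$ (with $\zeta = 1-\delta'$ and $Q$ stochastic) to extract the geometric rate.

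Your argument replaces both the combinatorial step and the Doeblin construction with the Dobrushin-coefficient machinery: continuity of $Q \mapsto \delta(Q^k)$ together with compactness of $\bar\Lambda_P$ yields, via a finite subcover, a single $K$ with $\sup_\theta \delta(P_\theta^K) \le 3/4$, and submultiplicativity then finishes the job. This is shorter and more standard, and it sidesteps the finiteness-of-indicator-matrices observation entirely. The paper's route, by contrast, is a bit more constructive: $t_*$ is pinned down by the finitely many connectivity classes, and the rate $\tau$ is tied to an explicit entrywise lower bound on $P^{t_*}$. Both arguments ultimately hinge on exactly the point you emphasize at the end: compactness of $\bar\Lambda_P$ is what upgrades a pointwise-in-$P$ mixing rate to a uniform one.
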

\noindent The proof of Lemma~\ref{lem uniform mixing} is provided in Section~\ref{sec proof lem uniform mixing}.
The result in Lemma~\ref{lem uniform mixing} is referred to as uniform mixing since it demonstrates that for any $\theta$,
the chain induced by $P_\theta$ mixes geometrically fast,
with a common rate $\tau$.
For a specific $\theta$,
the existence of a $\theta$-dependent mixing rate $\tau_\theta$ is a well-known result when the chain is ergodic, see, e.g., Theorem 4.9 of \citet{levin2017markov}.
In Lemma~\ref{lem uniform mixing},
we further conclude to the existence of a $\theta$-independent rate.
The ergodicity on the closure $\bar \Lambda_P$ is key to our proof.
If we make ergodicity assumption only on $\Lambda_P$, 
it might be possible to find a sequence $\qty{\theta_t}$ such that the corresponding rates $\qty{\tau_{\theta_t}}$ converges to $1$.
We remark that \eqref{eq uniform mixing} usually appears as a technical assumption directly in many existing works concerning time-inhomogeneous Markov chains, see, e.g., \citet{zou2019finite,wu2020finite}.
In this paper, 
we prove that \eqref{eq uniform mixing} is a consequence of Assumption~\ref{assu uniform ergodicity},
with the help of the extreme value theorem exploiting the compactness of $\bar \Lambda_P$.
We will show in the next section that Assumption~\ref{assu uniform ergodicity} can easily be fulfilled.

\begin{assumption}
    \label{assu uniform contraction}
(Uniform contraction)
For any $\theta \in \R^L$,
define $F_\theta: \R^K \to \R^K$ as 
\begin{align}
    \bar F_\theta(w) \doteq \sum_{y \in \fY} d_\theta(y) F_\theta(w, y).
\end{align}
Then, there exists a constant $\kappa \in (0, 1)$ and a norm $\norm{\cdot}_c$ such that for all $\theta, w, w'$,
\begin{align}
    \norm{\bar F_\theta(w) - \bar F_{\theta}(w')}_c \leq \kappa \norm{w - w'}_c.
\end{align}
We use $w_\theta^*$ to denote the unique fixed point of $\bar F_\theta$.
\end{assumption}
The existence and uniqueness of $w_\theta^*$ follows from the Banach fixed point theorem. 
Assumption~\ref{assu uniform contraction} is another major development beyond \citet{chen2021lyapunov}.
The fact that both $\norm{\cdot}_c$ and $\kappa$ are independent of $\theta$ makes it possible to design a Lyapunov function for our time-inhomogeneous Markov chain.
We will show later that our critic updates indeed satisfy this uniform contraction assumption.

\begin{assumption}
    \label{assu regularization}
(Continuity and boundedness)
There exist positive constants \\$L_F, L_F', L_F'', U_F, U_F', U_F'', L_w, U_w, L_P$ such that for any $w, w', y, y'$ and any time step $t, k$, almost surely,
\begin{enumerate}[(i).]
    \item $\norm{F_{\theta_t}(w, y) - F_{\theta_t}(w', y)}_c \leq L_F \norm{w - w'}_c$ 
    \item $\norm{F_{\theta_t}(w, y) - F_{\theta_k}(w, y)}_c \leq L_F' \norm{\theta_t - \theta_k}_c \left(\norm{w}_c + U_F'\right)$
    \item $\norm{F_{\theta_t}(0, y)}_c \leq U_F$
    \item $\norm{\bar F_{\theta_t}(w) - \bar F_{\theta_k}(w)}_c \leq L_F'' \norm{\theta_t - \theta_k}_c (\norm{w}_c + U_F'')$ 
    \item $\norm{w^*_{\theta_t} - w^*_{\theta_k}}_c \leq L_w \norm{\theta_t - \theta_k}_c$ 
    \item $\sup_{t} \norm{w^*_{\theta_t}}_c \leq U_w $
    \item $\abs{ P_{\theta_t}(y, y') - P_{\theta_k}(y, y') } \leq L_P \norm{\theta_t - \theta_k}_c$
\end{enumerate}
\end{assumption}
\begin{assumption}
  \label{assu mds}
  (Noise)
  Let $\fF_t$ be the $\sigma$-algebra generated by \\$\qty{(w_i, Y_i, \epsilon_i, \theta_i)}_{0 \leq i \leq t-1} \cup \qty{w_t, \theta_t}$, we have
  \begin{enumerate}[(i).]
    \item $\E\left[\epsilon_t \mid \fF_t\right] = 0, \forall t$
    \item There exist positive constants $U_\epsilon, U_\epsilon'$ such that $\forall t, \norm{\epsilon_t}_c \leq U_\epsilon \norm{w_t}_c + U_\epsilon'$
  \end{enumerate}
\end{assumption}
Assumptions~\ref{assu regularization} and \ref{assu mds} are natural extensions of the counterparts in \citet{chen2021lyapunov} from time-homogeneous to time-inhomogeneous Markov chains and
from time-homogeneous to time-inhomogeneous operators.
\begin{assumption}
    \label{assu twotimescale}
    (Two timescales)
    The learning rate $\qty{\alpha_t}$ has the form
    \begin{align}
      \alpha_t \doteq \frac{\alpha}{(t+t_0)^{\epsilon_\alpha}}, 
    \end{align}
    where $\epsilon_\alpha \in (0.5, 1), \alpha > 0, t_0 > 0$ are constants to be tuned.
    Define another sequence $\qty{\beta_t}$ such that
    \begin{align}
    \beta_t \doteq \frac{\beta}{(t+t_0)^{\epsilon_\beta}},
    \end{align}
    where $\epsilon_\beta \in (\epsilon_\alpha, 1], \beta \in (0, \alpha)$ are constants to be tuned.
    Then there exists a constant $L_\theta > 0$ such that $\forall t$, almost surely,
    \begin{align}
      \label{eq konda actor update}
      \norm{\theta_{t+1} - \theta_t}_c \leq \beta_t L_\theta.
    \end{align}
\end{assumption}
Assumption~\ref{assu twotimescale} ensures that the iterates $\qty{w_t}$ evolve sufficiently faster than the change in the dynamics of the chain (i.e., the change of $\qty{\theta_t}$). 
In the off-policy actor critic setting we consider in next section,
$\qty{\alpha_t}$ and $\qty{\beta_t}$ are the learning rates for the critic and the actor respectively.
Though Assumption~\ref{assu twotimescale} explicitly prescribes the form of the sequences $\qty{\alpha_t}$ and $\qty{\beta_t}$,
those are indeed only one of many possible forms (one could e.g., use different $t_0$ for $\qty{\alpha_t}$ and $\qty{\beta_t}$), we consider these particular forms to ease presentation.
We remark that condition in~\eqref{eq konda actor update} is also used in \citet{konda2002thesis},
which gives the asymptotic convergence analysis of the canonical on-policy actor critic with linear function approximation.
We are now ready to state our main results.
\begin{restatable}{theorem}{thmsaconvergence}
    \label{thm sa convergence}
    Let Assumptions~\ref{assu makovian} - \ref{assu twotimescale} hold.
    For any
    \begin{align}
      \epsilon_w \in (0, \min\qty{2(\epsilon_\beta - \epsilon_\alpha), \epsilon_\alpha}),
    \end{align}
    if $t_0$ is sufficiently large,
    then $\forall t$,
    \begin{align}
      \E\left[\norm{w_t - w^*_{\theta_t}}_c^2\right] =\fO\left(\frac{1}{(t+t_0)^{\epsilon_w}}\right).
    \end{align}
\end{restatable}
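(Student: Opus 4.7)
The plan is to adapt the Lyapunov approach of \citet{chen2021lyapunov} based on the generalized Moreau envelope, while carefully accounting for the drift induced by $\qty{\theta_t}$ in both the operator $F_{\theta_t}$ and the chain $\qty{Y_t}$. First I would fix a smooth norm $\norm{\cdot}_s$ equivalent to $\norm{\cdot}_c$ and define the envelope $M_\mu(w) \doteq \inf_{u}\qty{\tfrac{1}{2}\norm{u}_s^2 + \tfrac{1}{2\mu}\norm{w-u}_c^2}$. Standard facts give that $M_\mu$ is convex, has a $(1/\mu)$-Lipschitz gradient with respect to $\norm{\cdot}_c$, and is sandwiched between constant multiples of $\norm{\cdot}_c^2$. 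I would then use $V_t \doteq M_\mu(w_t - w^*_{\theta_t})$ as the Lyapunov function and derive a recursion of the form $\E[V_{t+1}] \leq (1 - c_1 \alpha_t)\E[V_t] + c_2 \alpha_t^2 + c_3 \alpha_t \beta_{t} + (\text{Markovian noise terms})$, from which Theorem~\ref{thm sa convergence} follows by a standard discrete-time Gronwall argument, tuned so that the residual decays like $(t+t_0)^{-\epsilon_w}$.

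The recursion itself would be set up by writing
\begin{align}
w_{t+1} - w^*_{\theta_{t+1}} = (w_t - w^*_{\theta_t}) + \alpha_t(F_{\theta_t}(w_t, Y_t) - w_t + \epsilon_t) + (w^*_{\theta_t} - w^*_{\theta_{t+1}}),
\end{align}
then expanding $M_\mu$ via the descent lemma. The inner-product term with $\bar F_{\theta_t}(w_t) - w_t$ furnishes the contraction $-c_1\alpha_t V_t$ by Assumption~\ref{assu uniform contraction} (after a short computation showing that any uniform $\norm{\cdot}_c$-contraction translates into a negative drift for $M_\mu$, as in Chen et al.); the shift $w^*_{\theta_{t+1}} - w^*_{\theta_t}$ is controlled by Assumption~\ref{assu regularization}(v) and (\ref{eq konda actor update}), contributing $O(\beta_t)$; and the martingale noise from $\epsilon_t$ is handled by Assumption~\ref{assu mds} once I establish an a priori almost-sure bound $\norm{w_t}_c \leq U_w + C$ via a separate induction using Assumptions~\ref{assu regularization}(i)(iii) and the step size schedule.

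The main obstacle is the Markovian noise term $\indot{\nabla M_\mu(w_t - w^*_{\theta_t})}{F_{\theta_t}(w_t, Y_t) - \bar F_{\theta_t}(w_t)}$. The standard trick is to introduce a look-back window $z_t \asymp \log(1/\alpha_t)$ and compare the current quantity with its value evaluated at $(w_{t-z_t}, \theta_{t-z_t}, Y_t)$ conditioned on $\fF_{t-z_t}$. Four sources of error must be separately controlled: (a) the change in $w$ over the window, which is $O(\sum_{s=t-z_t}^{t-1}\alpha_s)$ by Assumption~\ref{assu regularization}(i)(iii); (b) the change in $\theta$ over the window, which is $O(\sum_{s=t-z_t}^{t-1}\beta_s)$ by Assumption~\ref{assu twotimescale}; (c) the difference between $F_{\theta_t}$ and $F_{\theta_{t-z_t}}$ and between $\bar F_{\theta_t}$ and $\bar F_{\theta_{t-z_t}}$, handled by Assumption~\ref{assu regularization}(ii)(iv); and (d) the discrepancy between the law of $Y_t$ under the true time-inhomogeneous dynamics and the law under the frozen kernel $P_{\theta_{t-z_t}}^{z_t}$. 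Item (d) is the delicate step: I would telescope the product of transition matrices and use Assumption~\ref{assu regularization}(vii) together with Assumption~\ref{assu twotimescale} to show that the total-variation gap between the two laws over a window of length $z_t$ is $O(z_t \beta_{t-z_t})$. Combining this with the uniform mixing rate $C_0 \tau^{z_t}$ from Lemma~\ref{lem uniform mixing} yields a bound on the conditional expectation of the Markovian term of order $\alpha_t \cdot \tau^{z_t} + \alpha_t z_t(\alpha_{t-z_t}+\beta_{t-z_t})$, which, after choosing $z_t = \Theta(\log(1/\alpha_t))$, is absorbed into the $O(\alpha_t^2)$ and $O(\alpha_t \beta_t)$ terms up to logarithmic factors.

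Finally, plugging everything into the recursion and iterating, the asymptotic rate is dictated by the slowest residual among $\alpha_t$, $\alpha_t\beta_t/\alpha_t = \beta_t$, and $(\beta_t/\alpha_t)^2$; this precisely explains the constraint $\epsilon_w < \min\qty{\epsilon_\alpha, 2(\epsilon_\beta - \epsilon_\alpha)}$. I expect the bookkeeping of the log factors from $z_t$ and the requirement that $t_0$ be taken large enough to absorb the transient growth of $\norm{w_t}_c$ and to make the contraction term dominate the perturbations to be the lengthiest part, but the conceptual difficulty lies entirely in step (d) above, where uniform mixing on the closure $\bar\Lambda_P$ must be married to the slow drift of $\qty{\theta_t}$.
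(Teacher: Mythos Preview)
Your architecture is essentially the paper's: generalized Moreau envelope as Lyapunov function, negative drift from Assumption~\ref{assu uniform contraction}, look-back window $\tau_{\alpha_t}=\Theta(\log(1/\alpha_t))$, and an auxiliary chain with frozen kernel $P_{\theta_{t-\tau_{\alpha_t}}}$ to separate your item~(d) from the uniform mixing of Lemma~\ref{lem uniform mixing}. The decomposition you sketch lines up with the paper's $T_1$--$T_6$ and the further split of $T_{33}$ into $T_{331}$--$T_{334}$.

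One step will fail as written: the a priori almost-sure bound $\norm{w_t}_c \leq U_w + C$. Under Assumptions~\ref{assu regularization}(i)(iii) and~\ref{assu mds}(ii) the stochastic map $w\mapsto F_{\theta_t}(w,y)$ is only Lipschitz, not contractive, so the one-step estimate is $\norm{w_{t+1}}_c \leq (1+\alpha_t A)\norm{w_t}_c + \alpha_t B$, which grows like $\exp(A\sum_s\alpha_s)$; no induction on the step-size schedule rescues this. The paper never establishes such a bound. Instead it keeps every cross term in the descent expansion \emph{quadratic} in $\norm{w_t-w^*_{\theta_t}}_c$ (the $A^2u_{cm}^2\norm{w_t-w^*_{\theta_t}}_m^2+C^2$ pattern of Lemmas~\ref{lem bound of t31}--\ref{lem bound t5}), and uses the fact that the coefficients of those quadratic pieces are $O(\alpha_{t-\tau_{\alpha_t},t-1})$ so that, for $t_0$ large, they are absorbed into the contraction $-\psi_1\alpha_t\E[V_t]$. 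A related consequence: the shift $\indot{\nabla M(w_t-w^*_{\theta_t})}{w^*_{\theta_t}-w^*_{\theta_{t+1}}}$ is not $O(\beta_t)$ but $O(\beta_t\norm{w_t-w^*_{\theta_t}}_m)$, so the recursion actually carries a $\beta_t\sqrt{\E[V_t]}$ term (eq.~\eqref{eq sa bound recursive}); it is \emph{this} term---equivalently, via Young, a $\beta_t^2/\alpha_t$ residual---that produces the $(\beta_t/\alpha_t)^2$ you invoke at the end and hence the $2(\epsilon_\beta-\epsilon_\alpha)$ constraint. Your stated recursion with only $c_2\alpha_t^2+c_3\alpha_t\beta_t$ residuals does not contain it. (Minor: in your envelope the roles of $\norm{\cdot}_c$ and $\norm{\cdot}_s$ are swapped; smoothness of $M$ is inherited from the norm in the smoothing slot, so the paper takes $M(w)=\inf_u\{\tfrac12\norm{u}_c^2+\tfrac{1}{2\xi}\norm{w-u}_s^2\}$ with $\norm{\cdot}_s$ an $\ell_p$ norm, $p\geq2$.)
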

\noindent See Section~\ref{sec proof thm sa convergence} for the proof of Theorem~\ref{thm sa convergence} and the constants hidden by $\fO(\cdot)$. 
In particular,
we clearly document $t_0$'s dependencies.
One could alternatively set $t_0$ to 0,
then the convergence rate in Theorem~\ref{thm sa convergence} applies only for sufficiently large $t$.
When both the Markov chain and the update operator are time-homogeneous,  
\citet{chen2021lyapunov} demonstrate a convergence rate $\fO\left(\frac{1}{t^{\epsilon_\alpha}}\right)$.
When $2(\epsilon_\beta - \epsilon_\alpha) > \epsilon_\alpha$ holds,
our convergence rate of $\fO\left(\frac{1}{t^{\epsilon_w}}\right)$ can be arbitrarily close to $\fO\left(\frac{1}{t^{\epsilon_\alpha}}\right)$.

\section{Off-Policy Actor Critic with Decaying KL Regularization}
\label{sec opac}
We analyze the optimality of an off-policy actor critic algorithm without correction of the state distribution mismatch (Algorithm~\ref{alg opac}).
Our analysis provides,
to some extent,
a theoretical justification for the practice of ignoring this correction.

\begin{algorithm}
  $S_0 \sim p_0(\cdot)$ \;
  $t \gets 0$ \;
  \While{True}{
    Sample $A_t \sim \mu_{\theta_t}(\cdot | S_t)$ \;
    Execute $A_t$, get $R_{t+1}, S_{t+1}$ \;
    $\delta_t \gets R_{t+1} + \gamma \sum_{a'}\pi_{\theta_t}(a'|S_{t+1}) q_t(S_{t+1}, a')- q_t(S_t, A_t)$ \;
    $q_{t+1}(s, a) \gets \begin{cases}
      q_t(s, a) + \alpha_t \delta_t, & (s, a) = (S_t, A_t) \\
      q_t(s, a), &\text{otherwise}
    \end{cases}$ \;
    $\rho_t \gets \frac{\pi_{\theta_t}(A_t|S_t)}{\mu_{\theta_t}(A_t | S_t)}$ \;
    $\theta_{t+1} \gets \theta_t + \beta_t \bigl(\rho_t  \nabla_\theta \log \pi_{\theta_t}(A_t | S_t) q_t(S_t, A_t) - \lambda_t \nabla_{\theta} \kl{\fU_\fA}{\pi_{\theta_t}(\cdot | S_t)}  \bigr)$ \;
    $t \gets t + 1$ \;
  }
  \caption{\label{alg opac}Off-Policy Actor-Critic with Decaying KL Regularization}
\end{algorithm}
In Algorithm~\ref{alg opac}, 
the target policy $\pi_\theta$ is a softmax policy.
At each time step $t$,
we sample an action $A_t$ according to the behavior policy $\mu_{\theta_t}$.
Importantly,
though the behavior policy is also solely determined by $\theta$,
the parameterization of $\mu_\theta$ can be arbitrarily different from $\pi_\theta$. 
After obtaining the reward $R_{t+1}$ and the successor state $S_{t+1}$,
we update the critic with off-policy expected SARSA,
where $\pi_{\theta_t}$ is used as the target policy for bootstrapping. 
We then update the actor similarly to \eqref{eq off-policy ac with mismatch} without correcting the state distribution mismatch.
The update to $\theta_t$ in Algorithm~\ref{alg opac} is different from \eqref{eq off-policy ac with mismatch} in 
that we use the KL divergence between a uniformly randomly distribution $\fU_\fA$ and the current policy $\pi_{\theta_t}(\cdot | S_t)$ as regularization,
with a decaying weight $\lambda_t$.
The KL divergence is introduced to ensure that the target policy $\pi_\theta$ is sufficiently explorative such that 
there are no bad stationary points (cf. Theorem 5.2 of \citet{agarwal2019optimality}).
In practice,
the entropy of the policy is often used to regularize the policy update \citep{williams1991function,mnih2016asynchronous}.
Here we use the KL divergence instead of the entropy mainly for technical consideration.
We refer the reader to Remark 5.2 of \citet{agarwal2019optimality} for more discussion about this choice. 
The decaying weight $\lambda_t$ is introduced to ensure that, in the limit, the target policy $\pi_\theta$ can still converge to a deterministic policy,
which is a necessary condition for optimality.

Algorithm~\ref{alg opac} runs in three timescales.
The critic runs in the fastest timescale such that it can provide accurate signal for the actor update,
which runs in the middle timescale.
It is then expected that the actor would converge to stationary points whose suboptimality is controlled by $\lambda_t$,
which decays in the slowest timescale.
Finally,
as $\lambda_t$ diminishes,
the suboptimality of the actor decays to 0.
To achieve this three timescale setting,
we make the following assumptions.
\begin{assumption}
  \label{assu three timescales}
  (Three timescales)
  The learning rates $\qty{\alpha_t}, \qty{\beta_t}$ 
  and the weights of KL regularization $\qty{\lambda_t}$
  have the forms
  \begin{align}
    \alpha_t &\doteq \frac{\alpha}{(t+t_0)^{\epsilon_\alpha}}, \,
    \beta_t \doteq \frac{\beta}{(t+t_0)^{\epsilon_\beta}}, \,
    \lambda_t \doteq \frac{\lambda}{(t+t_0)^{\epsilon_\lambda}},
  \end{align} 
  where $0.5 < \epsilon_\alpha < \epsilon_\beta \leq 1, \epsilon_\lambda >0, \alpha > \beta > 0, \lambda > 0, t_0 > 0$ are constants to be tuned. 
\end{assumption}
\begin{assumption}
  \label{assu lr decay speed}
  (Learning rates)
  $2(1-\epsilon_\beta) < \min\qty{2(\epsilon_\beta - \epsilon_\alpha), \epsilon_\alpha}, 0 \leq \epsilon_\lambda < \frac{1 - \epsilon_\beta}{2}$
\end{assumption}
We remark that Assumptions~\ref{assu three timescales} and~\ref{assu lr decay speed} are only one of many possible forms of learning rates 
and we choose this particular form to ease presentation.
To ensure each update to $\theta_t$ does not change the dynamics of the induced Markov chain too fast,
we impose the following assumption on the parameterization of $\mu_\theta$.
\begin{assumption}
  \label{assu lipschitz mu}
  (Lipschitz continuity) There exists $L_\mu > 0$ such that $\forall \theta, \theta', a, s$,
  \begin{align}
    \norm{\mu_\theta(a|s) - \mu_{\theta'}(a|s)} &\leq L_\mu \norm{\theta - \theta'}.
  \end{align}
\end{assumption}
We remark that given the softmax parameterization of $\pi_\theta$,
it is well-known (see, e.g., Lemma~1 of \citealt{wang2020finite}) that $\pi_\theta$ is also Lipschitz continuous,
i.e.,
there exists $L_\pi > 0$ such that $\forall \theta, \theta', a, s$
\begin{align}
    \label{eq pi lipschitz}
    \norm{\pi_\theta(a|s) - \pi_{\theta'}(a|s)} &\leq L_\pi \norm{\theta - \theta'}.
\end{align}
To ensure sufficient exploration,
we impose the following assumption on the behavior policy.
\begin{assumption}
  \label{assu mu uniform ergodicity}
  (Uniform ergodicity)
  Let $\bar \Lambda_\mu$ be the closure of $\qty{\mu_\theta \mid \theta \in \R^\nsa}$.
  For any $\mu \in \bar \Lambda_\mu$,
  the chain induced by $\mu$ is ergodic and $\mu(a|s) > 0$.
\end{assumption}
Assumption~\ref{assu mu uniform ergodicity} is easy to fulfill in practice.
Assuming the chain induced by a uniformly random policy is ergodic,
which we believe is a necessary condition for any assumption regarding ergodicity,
one possible choice for $\mu_\theta$ is to mix an arbitrary behavior policy $\mu_\theta'$ satisfying the Lipschitz continuous requirement with the uniformly random policy,
i.e.,
\begin{align}
  \label{eq mu example}
  \mu_\theta(\cdot | s) \doteq (1 - \epsilon) \fU_\fA + \epsilon \mu_\theta'(\cdot | s)
\end{align}
with any $\epsilon \in (0, 1)$.
From now on,
we use $d_{\mu} \in \R^\ns$ to denote the invariant state distribution of the chain induced by a policy $\mu$
and also overload $d_\mu \in \R^\nsa$ to denote the invariant state action distribution under policy $\mu$. 
With all assumptions stated,
we are ready to present our convergence results.

\subsection{Convergence of the Critic}
In this section,
we study the convergence of the critic by invoking Theorem~\ref{thm sa convergence}
with the update to $q_t$ in Algorithm~\ref{alg opac} expressed as \eqref{eq critic update with F}.
Assumption~\ref{assu uniform contraction} requires us to study the expected operator
\begin{align}
  \bar F_\theta(q) \doteq \sum_{s, a, s'} d_{\mu_\theta}(s) \mu_\theta(a|s) p(s'|s, a) F_\theta(q, s, a, s').
\end{align}
Simple algebraic manipulation yields
\begin{align}
  \label{eq definition of bar f theta}
  \bar F_{\theta}(q) &=  D_{\mu_\theta}(r + \gamma P_{\pi_\theta} q - q) + q \\
  &=(I - D_{\mu_\theta}(I - \gamma P_{\pi_\theta})) q  + D_{\mu_\theta} r,
\end{align}
where $D_{\mu_\theta} \in \R^{\nsa \times \nsa}$ is a diagonal matrix with $D_{\mu_\theta}((s, a), (s, a)) \doteq d_{\mu_\theta}(s) \mu_\theta(a|s)$
and $P_{\pi_\theta} \in \R^{\nsa \times \nsa}$ is the state-action pair transition matrix under policy $\pi_\theta$, i.e.,
\begin{align}
  P_{\pi_\theta}((s, a), (s', a')) \doteq p(s'|s, a)\pi_\theta(a'|s').
\end{align}
We now verify Assumption~\ref{assu uniform contraction} with Lemma \ref{lem uniform contraction opac}.
\begin{restatable}{lemma}{lemuniformcontractionopac}
  \label{lem uniform contraction opac}
  (Uniform contraction)
  Let Assumption~\ref{assu mu uniform ergodicity} hold.
  Then, there exists an $\ell_p$ norm and a constant $\kappa \in (0, 1)$ such that for any $\theta, q, q' \in \R^\nsa$,
  \begin{align}
    \norm{\bar F_\theta(q) - \bar F_\theta(q')}_p \leq \kappa \norm{q - q'}_p.
  \end{align}
  Further, $q_{\pi_\theta}$ is the unique fixed point of $\bar F_\theta$.
\end{restatable}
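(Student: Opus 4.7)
My plan is to rewrite $\bar F_\theta(q) - \bar F_\theta(q') = M_\theta(q - q')$ with
\[
M_\theta \doteq I - D_{\mu_\theta} + \gamma D_{\mu_\theta} P_{\pi_\theta},
\]
and to bound $M_\theta$ in the $\ell_\infty$ operator norm by exploiting its structure as a nonnegative matrix with explicit row sums. Since $d_{\mu_\theta}(s)\mu_\theta(a|s) \in [0,1]$ and $P_{\pi_\theta}$ is entrywise nonnegative and row stochastic, the diagonal entries of $D_{\mu_\theta}$ lie in $[0,1]$, which makes $M_\theta$ entrywise nonnegative. Computing the $(s,a)$-row sum, the $I - D_{\mu_\theta}$ part contributes $1 - d_{\mu_\theta}(s)\mu_\theta(a|s)$, and the $\gamma D_{\mu_\theta} P_{\pi_\theta}$ part contributes $\gamma d_{\mu_\theta}(s)\mu_\theta(a|s)$ since $P_{\pi_\theta}$ has unit row sums. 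The total is $1 - (1-\gamma) d_{\mu_\theta}(s)\mu_\theta(a|s)$, so
\[
\norm{M_\theta}_\infty = 1 - (1-\gamma)\min_{s,a} d_{\mu_\theta}(s)\mu_\theta(a|s).
\]

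Next, I need a strictly positive, $\theta$-independent lower bound $c_{\min}$ on $\min_{s,a} d_{\mu_\theta}(s)\mu_\theta(a|s)$. For each fixed $\mu \in \bar\Lambda_\mu$, Assumption~\ref{assu mu uniform ergodicity} gives ergodicity and $\mu(a|s) > 0$, which ensures $d_\mu(s) > 0$ for every $s$ (by the standard characterization of the invariant distribution of an irreducible chain). To promote this pointwise positivity to a uniform bound, I would invoke the compactness of $\bar\Lambda_\mu$ in exactly the same spirit as Lemma~\ref{lem uniform mixing}: the map $\mu \mapsto d_\mu$ is continuous on $\bar\Lambda_\mu$ (the invariant distribution is the normalized left Perron eigenvector, and simplicity of the eigenvalue 1 under ergodicity makes it depend continuously on the transition matrix), so $\mu \mapsto \min_{s,a} d_\mu(s)\mu(a|s)$ is continuous and strictly positive on the compact set $\bar\Lambda_\mu$, and the extreme value theorem yields $c_{\min} > 0$. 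Setting $\norm{\cdot}_p \doteq \norm{\cdot}_\infty$ and $\kappa \doteq 1 - (1-\gamma) c_{\min} \in (0,1)$ closes the contraction argument.

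For the fixed point, I would plug $q_{\pi_\theta}$ into the first form $\bar F_\theta(q) = D_{\mu_\theta}(r + \gamma P_{\pi_\theta} q - q) + q$. The action-value Bellman equation $q_{\pi_\theta} = r + \gamma P_{\pi_\theta} q_{\pi_\theta}$ makes the bracketed term vanish, so $\bar F_\theta(q_{\pi_\theta}) = q_{\pi_\theta}$. Uniqueness of the fixed point then follows from the Banach fixed point theorem applied to the contraction just established.

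The principal obstacle is the uniform lower bound $c_{\min} > 0$: pointwise positivity of $d_{\mu_\theta}(s)\mu_\theta(a|s)$ is easy, but ruling out a sequence $\theta_n$ along which this infimum collapses to zero requires passing to $\bar\Lambda_\mu$ rather than working on $\Lambda_\mu$ itself, and requires the continuity-in-$\mu$ of the invariant distribution. Both ingredients are already present in the development leading up to Lemma~\ref{lem uniform mixing}, so I expect this step to be essentially a direct reuse of the compactness-plus-extreme-value-theorem template established there; everything else reduces to elementary matrix-norm bookkeeping.
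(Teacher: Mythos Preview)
Your proof is correct and, in one respect, more elementary than the paper's. Both arguments rest on the same two ingredients: (i) the matrix $M_\theta = I - D_{\mu_\theta}(I - \gamma P_{\pi_\theta})$ is entrywise nonnegative with row sums equal to $1 - (1-\gamma)\, d_{\mu_\theta}(s,a)$; and (ii) a uniform lower bound $d_{\mu,\min} > 0$ on $d_{\mu_\theta}(s,a)$, obtained exactly as you describe, via continuity of the invariant distribution and the extreme value theorem on the compact set $\bar\Lambda_\mu$. The difference lies in how the contraction constant is extracted. You read off $\norm{M_\theta}_\infty \le 1 - (1-\gamma)d_{\mu,\min}$ directly and take $p = \infty$. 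The paper instead insists on a \emph{finite} $p$: it additionally bounds the column sums of $M_\theta$ by $2$ and then applies Jensen's inequality to obtain $\norm{M_\theta x}_p \le (2\kappa_0^{p-1})^{1/p}\norm{x}_p$ with $\kappa_0 = 1 - (1-\gamma)d_{\mu,\min}$, finally choosing $p$ large enough that $2\kappa_0^{p-1} < 1$. Your route is shorter; the paper's route delivers a finite-$p$ norm, which is not actually required downstream (the Moreau-envelope construction in the proof of Theorem~\ref{thm sa convergence} imposes smoothness only on the auxiliary norm $\norm{\cdot}_s$, not on $\norm{\cdot}_c$), so the only possible objection is whether the phrase ``an $\ell_p$ norm'' in the lemma statement is meant to exclude $p=\infty$. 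The fixed-point identification is essentially the same in both proofs; the paper argues the ``only if'' direction directly using invertibility of $D_{\mu_\theta}$, whereas you get uniqueness for free from Banach.
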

\noindent The proof of Lemma~\ref{lem uniform contraction opac} is provided in Section~\ref{sec proof lem uniform contraction opac}.
Next,
we are able to prove the convergence of the critic.
\begin{restatable}{proposition}{propcriticconvergence}
  \label{prop critic convergence}
  (Convergence of the critic)
  Let Assumptions~\ref{assu three timescales}, \ref{assu lipschitz mu}, and \ref{assu mu uniform ergodicity} hold.
  For any
  \begin{align}
    \epsilon_q \in (0, \min\qty{2(\epsilon_\beta - \epsilon_\alpha), \epsilon_\alpha}),
  \end{align}
  if $t_0$ is sufficiently large,
  the iterates $\qty{q_t}$ generated by Algorithm~\ref{alg opac} satisfy
  \begin{align}
    \E\left[\norm{q_t - q_{\pi_{\theta_t}}}^2_p\right] = \fO\left(\frac{1}{t^{\epsilon_q}}\right).
  \end{align}
\end{restatable}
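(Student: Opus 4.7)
The plan is to apply Theorem~\ref{thm sa convergence} to the critic update, which is already written in the form \eqref{eq critic update with F}. I instantiate the generic SA framework with $w_t \doteq q_t$, $Y_t \doteq (S_t, A_t, S_{t+1})$, $\epsilon_t \equiv 0$, and the $F_\theta$ displayed in Section~\ref{sec sa}. The proof then reduces to checking Assumptions~\ref{assu makovian}--\ref{assu twotimescale}. Uniform contraction (Assumption~\ref{assu uniform contraction}) has already been established by Lemma~\ref{lem uniform contraction opac} with fixed point $q_{\pi_\theta}$ in the norm $\norm{\cdot}_p$, so only the Markov chain, regularity, and two-timescale conditions remain.

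For Assumptions~\ref{assu makovian} and~\ref{assu uniform ergodicity}, observe that the transition kernel of $\qty{Y_t}$ is
\begin{align}
  P_\theta((s, a, s'), (\tilde s, \tilde a, \tilde s')) = \mathbb{I}_{\tilde s = s'}\, \mu_\theta(\tilde a | s')\, p(\tilde s' | s', \tilde a),
\end{align}
which depends on $\theta$ only through $\mu_\theta$. Since Assumption~\ref{assu mu uniform ergodicity} provides ergodicity of the induced $\fS$-chain for every $\mu$ in the closure $\bar \Lambda_\mu$ together with $\mu(a|s) > 0$, the extended $(S, A, S')$-chain is ergodic as well, with invariant distribution $d_\mu(s)\mu(a|s)p(s'|s, a)$. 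Thus uniform ergodicity holds for $\qty{Y_t}$ and, by Lemma~\ref{lem uniform mixing}, so does uniform geometric mixing.

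For Assumption~\ref{assu regularization}, a key preliminary is that $\bar \Lambda_\mu$ is compact (closure of a bounded set in finite dimension) and $\mu(a|s) > 0$ on it, so $\mu_{\min} \doteq \inf_{\theta, s, a} \mu_\theta(a|s) > 0$, yielding a uniform upper bound on the importance ratio $\rho_t \leq 1/\mu_{\min}$. Items (i)--(iv) and (vii) then follow from direct computation using Lipschitz continuity of $\pi_\theta$ (from \eqref{eq pi lipschitz}) and of $\mu_\theta$ (Assumption~\ref{assu lipschitz mu}), together with the reward bound $|r(s,a)|\leq r_{max}$. Item (vi) is immediate from $\norm{q_{\pi_\theta}}_\infty \leq r_{max}/(1-\gamma)$ combined with norm equivalence on $\R^\nsa$. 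Item (v) is obtained from the Bellman identity $q_{\pi_\theta} = (I - \gamma P_{\pi_\theta})^{-1} r$, which gives
\begin{align}
  q_{\pi_\theta} - q_{\pi_{\theta'}} = \gamma (I - \gamma P_{\pi_\theta})^{-1} (P_{\pi_\theta} - P_{\pi_{\theta'}}) q_{\pi_{\theta'}},
\end{align}
and then combining Lipschitzness of $\theta \mapsto P_{\pi_\theta}$ with a uniform operator-norm bound on $(I - \gamma P_{\pi_\theta})^{-1}$. Assumption~\ref{assu mds} is trivial since $\epsilon_t \equiv 0$, and the two-timescale Assumption~\ref{assu twotimescale} follows because in Algorithm~\ref{alg opac} the actor increment is $\beta_t$ times a quantity that is almost surely bounded: the projection $\Pi$ caps the critic term by $C_\Pi$, the softmax score $\nabla_\theta \log \pi_{\theta_t}(A_t|S_t)$ has a universal bound, $\rho_t \leq 1/\mu_{\min}$, and $\lambda_t \nabla_\theta \kl{\fU_\fA}{\pi_{\theta_t}(\cdot|S_t)}$ is bounded since $\lambda_t$ is bounded and the gradient of the KL has bounded coefficients in the softmax parameterization.

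The main obstacle is item (v): transporting Lipschitz continuity of $\theta \mapsto q_{\pi_\theta}$ into the particular $\ell_p$ norm produced by Lemma~\ref{lem uniform contraction opac}. The operator-norm bound on $(I - \gamma P_{\pi_\theta})^{-1}$ is most natural in $\ell_\infty$ (where $\gamma P_{\pi_\theta}$ is an obvious contraction), whereas Lemma~\ref{lem uniform contraction opac} supplies a possibly different $\ell_p$. I handle this via finite-dimensional norm equivalence, absorbing the equivalence constants into $L_w$. With all assumptions verified, Theorem~\ref{thm sa convergence} applied with $\epsilon_w \leftarrow \epsilon_q$ yields $\E\left[\norm{q_t - q_{\pi_{\theta_t}}}_p^2\right] = \fO(t^{-\epsilon_q})$, which is precisely the claim.
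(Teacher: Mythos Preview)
Your proposal is correct and follows essentially the same route as the paper: instantiate Theorem~\ref{thm sa convergence} with $Y_t = (S_t, A_t, S_{t+1})$ and $\epsilon_t \equiv 0$, invoke Lemma~\ref{lem uniform contraction opac} for Assumption~\ref{assu uniform contraction}, and verify the remaining regularity and two-timescale conditions item by item. The only detail you elide is that item~(iv) of Assumption~\ref{assu regularization} also requires Lipschitz continuity of the invariant distribution $\theta \mapsto d_{\mu_\theta}$ (not just of $\mu_\theta$ itself), which the paper supplies via a separate lemma; otherwise your plan matches the paper's proof.
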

\noindent The proof of Proposition~\ref{prop critic convergence} is provided in Section~\ref{sec proof prop critic convergence}.
Proposition~\ref{prop critic convergence} confirms that the critic is able to track the true value function in the limit,
where the dependence between the convergence rate and the mixing parameter of the Markov chains are hidden in $\fO\left(\cdot\right)$.
Similar trackability has also been established in \citet{konda2002thesis,zhang2019provably,wu2020finite}.
Those, however,
rely on the uniform negative-definiteness of the limiting update matrix.
\citet{konda2002thesis} proves that the uniform negative-definiteness holds in the on-policy actor critic with linear function approximation (Lemma 4.18 of \citealt{konda2002thesis}) and establishes this trackability asymptotically.
\citet{wu2020finite} assume the uniform negative-definiteness holds (the second half of Assumption 4.1 of \citealt{wu2020finite}) in the on-policy actor critic with linear function approximation and establish this trackability nonasymptotically.
\citet{zhang2019provably} achieve this uniform negative-definiteness via introducing extra ridge regularization and using full gradients (cf. Gradient TD, \citealt{sutton2009fast})
instead of semi-gradients (cf. TD, \citealt{sutton1988learning}) for the critic update in the off-policy actor critic with function approximation
and achieve this trackability asymptotically.
In our off-policy actor critic setting,
the limiting update matrix of the critic can be computed as
\begin{align}
  D_{\mu_\theta}(\gamma P_{\pi_\theta} - I).
\end{align}
To achieve the desired uniform negative-definiteness,
we would need to prove that there exists a constant $\zeta > 0$ such that for all $x, \theta$,
\begin{align}
  x^\top D_{\mu_\theta}(\gamma P_{\pi_\theta} - I) x \leq -\xi \norm{x}^2.
\end{align}
We,
however,
do not expect the above inequality to hold without making strong assumptions.
Instead,
we resort to uniform contraction.
As demonstrated by Lemma~\ref{lem uniform contraction opac} and Proposition~\ref{prop critic convergence},
uniform contraction is indeed an effective alternative tool for establishing such trackability.
Moreover,
\citet{khodadadian2021finitesample} establish this trackability for a natural actor critic \citep{kakade2001natural} with a Lyapunov method in a quasi-off-policy setting.
The setting \citet{khodadadian2021finitesample} consider is a quasi-off-policy setting in that they prescribe a special form of the behavior policy such that the difference between the behavior policy and the target policy diminishes as time progresses.
By contrast,
we work on a general off-policy setting in that at any time step the behavior policy can always be arbitrarily different from the target policy.
A weaker trackability of the critic can be obtained with the results from \citet{chen2021lyapunov} directly without using our extension (i.e., Theorem~\ref{thm sa convergence}),
as done by \citet{chen2021finitesample,khodadadian2021finite} in their analysis of a natural actor critic.
However,
since \citet{chen2021lyapunov} require both the dynamics of the Markov chain and the update operator to be fixed,
\citet{chen2021finitesample,khodadadian2021finite}
have to keep both the behavior policy and the target policy (actor) fixed when updating the critic.
That being said,
\citet{chen2021finitesample,khodadadian2021finite}
have an inner loop for updating the critic and an outer loop for updating the actor. 
For the critic to be sufficiently accurate,
the inner loop has to take sufficiently many steps.
\citet{chen2021finitesample,khodadadian2021finite}, therefore, have a flavor of bi-level optimization.
Further, 
as long as the steps of the inner loop is finite,
the bias from using a learned critic instead of the true value function will not diminish in the limit.
This bias eventually translates into a suboptimality of the policy that will not vanish in the limit.
By contrast,
Theorem~\ref{thm sa convergence} allows us to consider multi-timescales directly without incurring nested loops,
which ensures that the bias from the critic diminishes in the limit. 

\subsection{Convergence of the Actor}
With the critic able to track the true value function,
we are now ready to present the optimality of the actor.
\begin{restatable}{theorem}{optimalityAC}\label{thm:optimality-AC}
  (Optimality of the actor)
  Let Assumptions~\ref{assu three timescales} - \ref{assu mu uniform ergodicity} hold.
  Fix  
  \begin{align}
    \epsilon_q \in \Big(2(1-\epsilon_\beta), \min\qty{2(\epsilon_\beta - \epsilon_\alpha), \epsilon_\alpha}\Big).
  \end{align}
  Let $t_0$ be sufficiently large.
  For the iterates $\qty{\theta_t}$ generated by Algorithm~\ref{alg opac} and any $t > 0$,
  if $k$ is uniformly randomly selected from the set $\qty{\ceil{\frac{t}{2}}, \ceil{\frac{t}{2}} + 1, \dots, t}$ where $\ceil{\cdot}$ is the ceiling function,
  then 
  \begin{align}
    \label{eq optimality actor sub optimality}
    J(\pi_{\theta_k}; p_0) \geq J(\pi_*; p_0) - \fO\left(\lambda_k \right)
  \end{align}
  holds with probability at least
  \begin{align}
    \label{eq optimality actor high prob}
    1 - \fO\left(\frac{1}{t^{1-\epsilon_\beta - 2\epsilon_\lambda}} + \frac{\log^2 t}{t^{\epsilon_\beta- 2\epsilon_\lambda}} + \frac{1}{t^{\epsilon_q- 2\epsilon_\lambda }}\right),
  \end{align}
  where $\pi_*$ can be any optimal policy.
\end{restatable}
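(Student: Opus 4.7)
The plan is a two-stage argument. First, I would establish an in-expectation ascent bound on a Lyapunov potential measuring suboptimality, averaged over the window $k\in\{\lceil t/2\rceil,\dots,t\}$. Second, a random-index argument with Chebyshev converts the average into a high-probability bound on a single iterate, producing the three error terms in \eqref{eq optimality actor high prob}.

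For the first stage, I would integrate out $A_t\sim\mu_{\theta_t}(\cdot|S_t)$ in the actor update to cancel the importance ratio, obtaining
\begin{align}
\E[\rho_t\nabla\log\pi_{\theta_t}(A_t|S_t)\Pi(q_t(S_t,A_t))\mid S_t,\theta_t,q_t]=\sum_a\nabla\pi_{\theta_t}(a|S_t)\Pi(q_t(S_t,a)).
\end{align}
Two further replacements would reduce the conditional mean update to the idealized direction $\tilde g(\theta_t)-\lambda_t\tilde h(\theta_t)$ of \citet{laroche2021dr} augmented by KL regularization: (a) swap $\Pi(q_t)$ for $q_{\pi_{\theta_t}}$ using Proposition~\ref{prop critic convergence} (the projection is harmless since $\norm{q_{\pi_{\theta_t}}}_\infty\leq C_\Pi$); (b) swap the law of $S_t$ for the invariant distribution $d_{\mu_{\theta_t}}$ via a buffered slow-drift argument---choose a mixing horizon $\tau_t=\Theta(\log t)$, freeze $\theta$ at $\theta_{t-\tau_t}$, apply Lemma~\ref{lem uniform mixing} to the frozen chain, and pay a drift cost $\tau_t\beta_t L_\theta$ (Assumption~\ref{assu twotimescale}).

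I would then introduce the Lyapunov potential
\begin{align}
\Phi_t\doteq\sum_s d^{\pi_*}_{\gamma,p_0}(s)\,\kl{\pi_*(\cdot|s)}{\pi_{\theta_t}(\cdot|s)}
\end{align}
for any fixed optimal $\pi_*$, and compute its one-step drift using the standard softmax gradient identity and the Performance Difference Lemma. The main obstacle is the gradient-coverage step: because the density ratio is absent, $\tilde g$ is not a gradient of $J(\pi;p_0)$, and standard nonconvex optimization theory does not apply. The key estimate I would prove is
\begin{align}
-\indot{\nabla\Phi_t}{\tilde g(\theta_t)}\geq\frac{c_\mu}{1-\gamma}\bigl(J(\pi_*;p_0)-J(\pi_{\theta_t};p_0)\bigr),
\end{align}
where $c_\mu\doteq\inf_{s,\theta}d_{\mu_\theta}(s)>0$ follows from the compactness of $\bar\Lambda_\mu$ together with Assumption~\ref{assu mu uniform ergodicity}, in the spirit of Lemma~\ref{lem uniform mixing}. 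This is exactly where the decaying KL regularization pays off, playing the role highlighted in Remark~5.2 of \citet{agarwal2019optimality}: it ensures no bad stationary points persist along the trajectory. Combining with an $O(\beta_t\lambda_t)$ residual from $\lambda_t\indot{\nabla\Phi_t}{\tilde h(\theta_t)}$ and an aggregate error term $R_t$ collecting critic, slow-drift, and $\beta_t^2$-second-order contributions yields
\begin{align}
\E[\Phi_{t+1}-\Phi_t\mid\fF_t]\leq -c_1\beta_t\bigl(J(\pi_*;p_0)-J(\pi_{\theta_t};p_0)\bigr)+c_2\beta_t\lambda_t+R_t.
\end{align}

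Telescoping over $k\in[\lceil t/2\rceil,t]$, dividing by $\sum_k\beta_k\asymp t^{1-\epsilon_\beta}$, and noting that $\Phi$ stays bounded along the trajectory gives the averaged expected suboptimality bound of order $\lambda_t+\textrm{noise}$. Drawing $k$ uniformly from the window converts the average to $\E_k[J(\pi_*;p_0)-J(\pi_{\theta_k};p_0)]$; applying Chebyshev with slack $\lambda_k\asymp\lambda_t$ (so $\lambda_t^{-2}=t^{2\epsilon_\lambda}$) to each of three dominant error sources---the telescoping residual of order $1/t^{1-\epsilon_\beta}$, the actor-noise/mixing variance of order $\log^2 t/t^{\epsilon_\beta}$, and the critic variance $1/t^{\epsilon_q}$ from Proposition~\ref{prop critic convergence}---produces the three additive terms of \eqref{eq optimality actor high prob}.
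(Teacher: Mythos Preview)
Your central inequality
\[
-\indot{\nabla\Phi_t}{\tilde g(\theta_t)}\;\geq\;\frac{c_\mu}{1-\gamma}\bigl(J(\pi_*;p_0)-J(\pi_{\theta_t};p_0)\bigr)
\]
is false, and this is not a technicality that the KL term repairs. With the softmax parameterization, the $(s,a)$-component of $\tilde g(\theta)$ is $d_{\mu_\theta}(s)\,\pi_\theta(a|s)\,\adv_{\pi_\theta}(s,a)$, while $(\nabla\Phi_t)_{s,a}=-d_{\pi_*,\gamma,p_0}(s)(\pi_*(a|s)-\pi_\theta(a|s))$. Hence
\[
-\indot{\nabla\Phi_t}{\tilde g(\theta_t)}=\sum_{s,a}d_{\pi_*,\gamma,p_0}(s)\,d_{\mu_\theta}(s)\,\pi_\theta(a|s)\,(\pi_*(a|s)-\pi_\theta(a|s))\,\adv_{\pi_\theta}(s,a),
\]
which carries an extra factor $\pi_\theta(a|s)$ compared with the Performance Difference Lemma expression. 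In a single-state bandit with rewards $1,2,3$ and $\pi_\theta=(1-2\epsilon,\epsilon,\epsilon)$, the right-hand side stays near $2c_\mu$ while the left-hand side is $\Theta(\epsilon)$, so the inequality fails as $\epsilon\to 0$. You invoke the KL regularizer to ``ensure no bad stationary points persist,'' but in your decomposition the KL term is a separate residual bounded by $O(\beta_t\lambda_t)$; it contributes nothing to the left-hand side above and cannot rescue the inequality. A related casualty is your claim that $\Phi_t$ stays bounded: $\kl{\pi_*(\cdot|s)}{\pi_{\theta_t}(\cdot|s)}$ blows up precisely in the regime where your key inequality fails, so the telescoping step has no anchor.

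The paper's route is structurally different. It works with the KL-regularized objective $J_\eta(\theta)=J(\pi_\theta;p_0')-\eta\,\E_{s\sim\fU_\fS}[\kl{\fU_\fA}{\pi_\theta(\cdot|s)}]$ (for an auxiliary $p_0'$ with full support) and proves an \emph{ascent} inequality for $J_{\lambda_t}$, not a descent inequality for a KL-to-$\pi_*$ potential. The crucial step (Lemma~\ref{lem bound of m11}) is that the expected off-policy update $\bar\Lambda(\theta_t,\lambda_t)$ and $\nabla J_{\lambda_t}(\theta_t)$ have matching componentwise structure---each is a sum of a $\pi_\theta(a|s)\adv_{\pi_\theta}(s,a)$ part and a $\lambda_t(1/\na-\pi_\theta(a|s))$ part, differing only in their state weights---and the lemma shows
\[
\indot{\nabla J_{\lambda_t}(\theta_t)}{\bar\Lambda(\theta_t,\lambda_t)}\;\geq\;\chi_{11}\,\norm{\nabla J_{\lambda_t}(\theta_t)}^2.
\]
Here the KL term is \emph{not} a residual: it is present in both vectors and is what makes the inner product a positive quadratic form despite the distribution mismatch. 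This yields an average bound on $\norm{\nabla J_{\lambda_k}(\theta_k)}^2$ with the three error sources you identified. The final step is then Theorem~5.2 of \citet{agarwal2019optimality} (Lemma~\ref{lem agarwal}): a Markov-inequality argument at threshold $\lambda_t/(2\nsa)$ converts small $\norm{\nabla J_{\lambda_k}}$ into $J(\pi_{\theta_k};p_0)\geq J(\pi_*;p_0)-\fO(\lambda_k)$, producing the stated probability.
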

\noindent The proof of Theorem~\ref{thm:optimality-AC} is provided in Section~\ref{sec proof thm:optimality-AC}.
We remark that the $\frac{1}{2}$ in $\ceil{\frac{t}{2}}$ is purely ad-hoc.
We can use any positive constant smaller than 1 and the new rate will be different from the current one in only the constants hidden by $\fO(\cdot)$.
We now optimize the selection of $\epsilon_\alpha$ and $\epsilon_\beta$.
Let $\epsilon_0$ by any positive scalar sufficiently close to 0
and set
\begin{align}
  \label{eq optimal lr}
  \epsilon_\beta = \frac{3}{4} + \epsilon_0, \,\epsilon_\alpha = \frac{1}{2} + \epsilon_0, \,
  \epsilon_q = \frac{1}{2} - \epsilon_0.
\end{align}
Then the high probability in \eqref{eq optimality actor high prob} becomes
\begin{align}
  1 - \fO\left(t^{-\left(\frac{1}{4} - \epsilon_0 - 2\epsilon_\lambda \right)}\right)
\end{align}
and the suboptimality in \eqref{eq optimality actor sub optimality} remains
\begin{align}
  J(\pi_{\theta_k}; p_0) \geq J(\pi_*; p_0) - \fO\left(k^{-\epsilon_\lambda} \right).
\end{align}
It now becomes clear that the selection of 
\begin{align}
  \label{eq optimal reg}
\epsilon_\lambda \in (0, \frac{1}{8})
\end{align}
trades off suboptimality and high probability.
When $\epsilon_\lambda$ is large,
the suboptimality diminishes quickly 
but the high probability approaches one slowly and vice versa. 
To our best knowledge,
Theorem~\ref{thm:optimality-AC} is the first to establish the global optimality and convergence rate
of a naive off-policy actor critic algorithm without density ratio correction even in the tabular setting.
We leave the improvement of the convergence rate for future work.

Importantly, Theorem~\ref{thm:optimality-AC} does not make any assumption on the initial distribution $p_0$.
By contrast,
to obtain the asymptotic optimality in \citet{agarwal2019optimality} or to obtain the convergence rate in \citet{mei2020global},
$p_0(s) > 0$ is assumed to hold for all states.
Both \citet{agarwal2019optimality} and \citet{mei2020global} leave it an open problem
whether $p_0(s) > 0$ is a necessary condition for optimality. 
Our results show that at least in the off-policy setting,
this is not necessary.
The intuition is simple.
Let $p_0'$ be another initial distribution such that $p_0'(s) > 0$ holds for all states.
Then we could optimize $J(\pi_\theta; p_0')$ instead of $J(\pi_\theta; p_0)$
since
the optimal policy w.r.t. $J(\pi_\theta; p_0')$ must also be optimal w.r.t. $J(\pi_\theta; p_0)$.
To optimize $J(\pi_\theta; p_0')$,
we would need samples starting from $p_0'$,
which is impractical in the on-policy setting
since the initial distribution of the MDP is $p_0$.
In the off-policy setting,
we can, however,
use samples starting from $p_0'$ and make corrections with the density ratio.
Since our results show that density ratio correction actually does not matter in the tabular setting we consider,
we can then simply ignore the density ratio,
yielding Algorithm~\ref{alg opac}.
\citet{agarwal2019optimality,mei2020global} refer to the assumption $p_0(s) > 0$ as the sufficient exploration assumption.
Unfortunately,
the initial distribution $p_0$ is usually considered as part of the problem and thus is not controlled by the user.
In our off-policy setting,
we instead achieve sufficient exploration by making assumptions on the behavior policy (Assumption~\ref{assu mu uniform ergodicity}),
which demonstrates the flexibility of off-policy learning in terms of exploration.
Moreover,
to obtain the nonasymptotic convergence rate
of the off-policy actor critic with exact update,
\citet{laroche2021dr} require the optimal policy $\pi_*$ to be unique.
By contrast,
Theorem~\ref{thm:optimality-AC} does not assume any such uniqueness.

\section{Soft Actor Critic}
\label{sec sac}
In this section,
we study the convergence of soft actor critic in the framework of maximum entropy RL,
which penalizes deterministic policies via adding the entropy of the policy into the reward \citep{williams1991function,mnih2016asynchronous,nachum2017bridging,haarnoja2018soft}.
The soft state value function of a policy $\pi$ is defined as
\begin{align}
  \tilde v_{\pi, \eta}(s) \doteq& \E\left[\sum_{i=0}^\infty \gamma^{i} \Big(r(S_{t+i}, A_{t+i}) + \eta \ent{\pi(\cdot | S_{t+i})}  \Big) \mid S_t = s, \pi\right] \\
  =& v_\pi(s) + \eta \E\left[\sum_{i=0}^\infty \gamma^{i}  \ent{\pi(\cdot | S_{t+i})} \mid S_t = s, \pi\right],
\end{align}
where
\begin{align}
  \ent{\pi(\cdot | s)} \doteq -\sum_a \pi(a|s) \log \pi(a|s)
\end{align}
is the entropy and $\eta \geq 0$ is the parameter controlling the strength of entropy regularization.
Correspondingly,
the soft action value function of a policy $\pi$ is defined as
\begin{align}
  \label{eq definition of soft q}
  \tilde q_{\pi, \eta}(s, a) \doteq r(s, a) + \gamma \sum_{s'} p(s'|s, a) \tilde v_{\pi, \eta}(s'),
\end{align}
which satisfies the recursive equation
\begin{align}
  \tilde q_{\pi,\eta}(s, a) = r(s, a) + \gamma \sum_{s', a'} p(s' | s, a) \pi(a'|s') \left(\tilde q_{\pi, \eta}(s, a) - \eta \log \pi(a'|s')\right).
\end{align}
The entropy regularized discounted total rewards is then
\begin{align}
  \label{eq relationship J tilde J}
  \tilde J_\eta(\pi; p_0) \doteq& \sum_s p_0(s) \tilde v_{\pi, \eta}(s) = J(\pi; p_0) + \frac{\eta}{1-\gamma} \sum_s d_{\pi, \gamma, p_0}(s) \ent{\pi(\cdot | s)}.
\end{align}
We still consider the softmax parameterization for the policy $\pi$.
Similar to the canonical policy gradient theorem,
it can be computed \citep{levine2018reinforcement} that
\begin{align}
  \nabla \tilde J_\eta(\pi_\theta; p_0) = \frac{1}{1-\gamma} \sum_s d_{\pi_\theta, \gamma, p_0}(s)  \sum_a \left(\tilde q_{\pi_\theta, \eta}(s, a) - \eta \log \pi_\theta(a|s) \right) \nabla \pi_\theta(a|s).
\end{align}
To get unbiased estimates of $\nabla \tilde J_\eta(\pi_\theta; p_0)$,
one would need to sample states from $d_{\pi_\theta, \gamma, p_0}$,
which is, however,
impractical in off-policy settings.
Practitioners, instead,
directly use states obtained by following the behavior policy (see, e.g., Algorithm~\ref{alg sac}),
yielding a distribution mismatch.
\begin{algorithm}
  $S_0 \sim p_0(\cdot)$ \;
  $t \gets 0$ \;
  \While{True}{
    Sample $A_t \sim \mu_{\theta_t}(\cdot | S_t)$ \;
    Execute $A_t$, get $R_{t+1}, S_{t+1}$ \;
    $\delta_t \gets R_{t+1} + \gamma \sum_{a'}\pi_{\theta_t}(a'|S_{t+1}) \left( q_t(S_{t+1}, a') - \lambda_t \log \pi_{\theta_t}(a' | S_{t+1}) \right)- q_t(S_t, A_t)$ \;
    $q_{t+1}(s, a) \gets \begin{cases}
      q_t(s, a) + \alpha_t \delta_t, & (s, a) = (S_t, A_t) \\
      q_t(s, a), &\text{otherwise}
    \end{cases}$ \;
    $\theta_{t+1} \gets \theta_t + \beta_t \sum_{a} \pi_{\theta_t}(a | S_t)  \nabla_\theta \log \pi_{\theta_t}(a | S_t) \Big( q_t(S_t, a) - \lambda_t \log \pi_{\theta_t}(a | S_t) \Big)$ \;
    $t \gets t + 1$ \;
  }
  \caption{\label{alg sac} Expected Soft Actor-Critic}
\end{algorithm}
In Algorithm~\ref{alg sac},
we still consider the learning rates specified in Assumption~\ref{assu three timescales}
and consider Assumptions~\ref{assu lipschitz mu} and \ref{assu mu uniform ergodicity} for the behavior policy.
Importantly,
in Algorithm~\ref{alg sac},
we consider expected actor updates \citep{ciosek2017expected}
that update the policy for all actions instead of just the executed action $A_t$.
This is mainly for technical consideration.
If we use stochastic update akin to Algorithm~\ref{alg opac},
the update to $\theta_t$ in Algorithm~\ref{alg sac} will have the term $\log \pi_{\theta_t}(A_t|S_t)$.
As $\qty{\lambda_t}$ decreases over time,
we would expect that $\pi_{\theta_t}$ becomes more and more deterministic.
Consequently,
$\abs{\log\pi_{\theta_t}(A_t|S_t)}$ tends to go to infinity,
imposing additional challenges in verifying \eqref{eq konda actor update} unless we ensure $\qty{\lambda_t}$ decays sufficiently fast (e.g., using $\epsilon_\lambda > 1 - \epsilon_\beta$)
such that $\abs{\lambda_t \log\pi_{\theta_t}(A_t|S_t)}$ remains bounded.
By using expected updates instead,
we are able to verify \eqref{eq konda actor update} without imposing any additional condition on $\qty{\lambda_t}$.
We remark that Algorithm~\ref{alg sac} makes expected updates across only actions.
At each time step,
Algorithm~\ref{alg sac} still update the policy only for the current state.
Algorithm~\ref{alg sac} shares the same spirit of the canonical soft actor critic algorithm (Algorithm~1 in \citealt{haarnoja2018soft}).
\citet{haarnoja2018soft} derive the canonical soft actor critic algorithm from a soft policy iteration perspective,
where the policy evaluation of the soft value function 
and the policy improvement of the actor are performed alternatively.
Importantly,
during the soft policy iteration,
both the policy evaluation and the policy improvement steps
are assumed to be fully executed.
By contrast,
the soft actor critic algorithm conduct only several gradient steps for both the policy evaluation and the policy improvement.
As a consequence,
the results concerning the optimality of the soft policy iteration in \citet{haarnoja2018soft} do not apply to soft actor critic.
The convergence of soft actor critic with a fixed regularization weight ($\eta$) remains an open problem, 
and convergence with a decaying regularization, to optimality even more so.
In this work,
we instead derive the soft actor critic algorithm from the 
policy gradient 
perspective directly,
akin to the canonical actor critic,
and
establish its convergence.\footnote{Following existing works, e.g., \citet{konda2002thesis,zhang2019provably,wu2020finite,xu2021doubly},
by convergence of the actor,
we mean that the gradients converge to 0.}

We first study the convergence of $\qty{q_t}$ in Algorithm~\ref{alg sac}.
Different from Algorithm~\ref{alg opac},
the iterates $\qty{q_t}$ now depend on not only $\theta_t$ but also $\lambda_t$.
In light of this,
we consider their concatenation and
define 
\begin{align}
  \zeta_t \doteq \mqty[\lambda_t \\ \theta_t], \, \zeta \doteq \mqty[\eta \\ \theta].
\end{align}
Here $\zeta$ is the placeholder for $\zeta_t$ used for defining functions. 
The update of $\qty{q_t}$ in Algorithm~\ref{alg sac} can then be expressed in a compact way as
\begin{align}
  q_{t+1} = q_t + \alpha_t (F_{\zeta_t}(q_t, S_t, A_t, S_{t+1}) - q_t),
\end{align}
where
\begin{align}
  F_\zeta(q, s_0, a_0, s_1)[s, a] &\doteq  \delta_\zeta(q, s_0, a_0, s_1) \mathbb{I}_{(s_0, a_0) = (s, a)} + q(s, a), \\
  \delta_\zeta(q, s_0, a_0, s_1) &\doteq r(s_0, a_0) + \gamma \sum_{a_1} \pi_{\theta}(a_1 | s_1) \left(q(s_1, a_1) - \eta \log \pi_\theta(a_1 | s_1)\right) - q(s_0, a_0).
\end{align}
We can then establish the convergence of $\qty{q_t}$ similarly to Proposition~\ref{prop critic convergence}.
\begin{restatable}{proposition}{propcriticconvergencesac}
  \label{prop critic convergence sac}
  (Convergence of the critic)
  Let Assumptions~\ref{assu three timescales}, \ref{assu lipschitz mu}, and \ref{assu mu uniform ergodicity} hold.
  Then there exists an $\ell_p$ norm such that
  for any
  \begin{align}
    \epsilon_q \in (0, \min\qty{2(\epsilon_\beta - \epsilon_\alpha), \epsilon_\alpha}),
  \end{align}
  if $t_0$ is sufficiently large,
  the iterates $\qty{q_t}$ generated by Algorithm~\ref{alg opac} satisfy
  \begin{align}
    \E\left[\norm{q_t - \tilde q_{\pi_{\theta_t}, \lambda_t}}^2_p\right] = \fO\left(\frac{1}{t^{\epsilon_q}}\right).
  \end{align}
\end{restatable}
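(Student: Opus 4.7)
The strategy is to apply Theorem~\ref{thm sa convergence} to the critic recursion in Algorithm~\ref{alg sac}, exactly as was done for Algorithm~\ref{alg opac} in Proposition~\ref{prop critic convergence}, but with the controlling parameter promoted from $\theta_t$ to the concatenation $\zeta_t = (\lambda_t, \theta_t)$ so as to absorb the dependence of the soft Bellman target on the entropy weight. With the identifications $w_t = q_t$, $Y_t = (S_t, A_t, S_{t+1})$, $\epsilon_t \equiv 0$, and $F_\zeta$ as defined just before the statement, the recursion fits the generic form~\eqref{eq sa iterates}.

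Assumptions~\ref{assu makovian} and~\ref{assu uniform ergodicity} carry over verbatim from the OPAC analysis, since the chain $\qty{Y_t}$ is driven solely by $\mu_{\theta_t}$ and Assumption~\ref{assu mu uniform ergodicity} is in force. For the uniform contraction (Assumption~\ref{assu uniform contraction}), a direct computation gives $\bar F_\zeta(q) = (I - D_{\mu_\theta}(I - \gamma P_{\pi_\theta}))q + D_{\mu_\theta}(r + \gamma \eta h_{\pi_\theta})$, where $h_{\pi_\theta}(s,a) \doteq \sum_{s'} p(s'|s,a)\, \ent{\pi_\theta(\cdot|s')}$. Because the $q$-dependent linear part is identical to the OPAC case, Lemma~\ref{lem uniform contraction opac} supplies the same weighted $\ell_p$-norm and the same contraction constant $\kappa \in (0,1)$, uniformly in $\zeta$. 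Direct substitution into~\eqref{eq definition of soft q} identifies the unique fixed point as $\tilde q_{\pi_\theta, \eta}$.

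The new technical ingredient is the entropy term in $F_\zeta$ and in the actor update. The key observation is that although $\log \pi_\theta(a|s)$ alone is unbounded in $\theta$, it always appears paired with $\pi_\theta$ or $\nabla \pi_\theta$: the softmax identity $\pi_\theta \nabla \log \pi_\theta = \nabla \pi_\theta$ and the bound $\sup_{x \in [0,1]} |x \log x| \leq 1/e$ together render every such combination uniformly bounded and globally Lipschitz in $\theta$, while $\eta \mapsto \eta\, h_{\pi_\theta}$ is Lipschitz in $\eta$ over the bounded range $\lambda_t \in [0, \lambda]$. This delivers items (i)--(iv) and (vii) of Assumption~\ref{assu regularization} with constants depending on $\lambda$ but independent of $(\theta, t)$. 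Item (vi) follows from the standard soft-value bound $\norm{\tilde q_{\pi_\theta, \eta}}_\infty \leq (r_{max} + \lambda \log \na)/(1-\gamma) = C_\Pi$, and item (v) follows by writing $\tilde q_{\pi_\theta, \eta} = (I - \gamma P_{\pi_\theta})^{-1}(r + \gamma \eta h_{\pi_\theta})$ and invoking Lipschitzness of $\pi_\theta$ in $\theta$ (cf.~\eqref{eq pi lipschitz}) together with smoothness of matrix inversion. Assumption~\ref{assu mds} is vacuous.

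For Assumption~\ref{assu twotimescale}, I bound $\norm{\zeta_{t+1} - \zeta_t} \leq |\lambda_{t+1} - \lambda_t| + \norm{\theta_{t+1} - \theta_t}$. The first term is $\fO(t^{-\epsilon_\lambda - 1}) = o(\beta_t)$ because $\epsilon_\lambda > 0$ and $\epsilon_\beta \leq 1$, and the actor increment in Algorithm~\ref{alg sac} is bounded by $\beta_t L_\theta$ for an absolute constant $L_\theta$, using once more $\pi_\theta \nabla \log \pi_\theta = \nabla \pi_\theta$ together with $|\lambda_t \pi_\theta \log \pi_\theta| \leq \lambda/e$ to control every summand. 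With all hypotheses verified, Theorem~\ref{thm sa convergence} delivers the stated $\fO(1/t^{\epsilon_q})$ rate. The main obstacle is the disciplined bookkeeping required by the entropy regularization: since $\log \pi_\theta$ alone is unbounded, every verification of Assumption~\ref{assu regularization} and of~\eqref{eq konda actor update} must keep that factor paired with $\pi_\theta$ or $\nabla \pi_\theta$ before taking absolute values.
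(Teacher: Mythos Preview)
Your proposal is correct and follows essentially the same route as the paper: promote the controlling parameter to $\zeta_t = (\lambda_t,\theta_t)$, observe that the $q$-linear part of $\bar F_\zeta$ coincides with $A_\theta$ from the OPAC case so Lemma~\ref{lem uniform contraction opac} applies unchanged, verify Assumption~\ref{assu regularization} by keeping every $\log\pi_\theta$ factor paired with $\pi_\theta$ or $\nabla\pi_\theta$ (equivalently, by recognizing the entropy and its gradient), bound $|\lambda_{t+1}-\lambda_t|$ and $\norm{\theta_{t+1}-\theta_t}$ separately, and invoke Theorem~\ref{thm sa convergence}. Two cosmetic remarks: the norm delivered by Lemma~\ref{lem uniform contraction opac} is an ordinary $\ell_p$ norm (for large $p$), not a weighted one; and your verification of item~(v) via $\tilde q_{\pi_\theta,\eta} = (I-\gamma P_{\pi_\theta})^{-1}(r+\gamma\eta h_{\pi_\theta})$ is actually cleaner than the paper's detour through $\tilde v_{\pi_\theta,\eta}$ and the discounted occupancy measure.
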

\noindent The proof of Proposition~\ref{prop critic convergence sac} is provided in Section~\ref{sec proof prop critic convergence sac} and is more convoluted than that of Proposition~\ref{prop critic convergence} 
since we now need to verify 
the assumptions of Theorem~\ref{thm sa convergence} for the concatenated vector $\zeta_t$ instead of just $\theta_t$.
With the help of Proposition~\ref{prop critic convergence sac},
we now establish the 
convergence of $\qty{\theta_t}$,
akin to Theorem~\ref{thm:optimality-AC}.

\begin{restatable}{theorem}{thmconvergenceactorsac}
  \label{thm convergence actor sac}
  (Convergence of the actor)
  Let Assumptions \ref{assu three timescales}, \ref{assu lipschitz mu}, and \ref{assu mu uniform ergodicity} hold.
  Fix any 
  \begin{align}
    \epsilon_q \in \Big(0, \min\qty{2(\epsilon_\beta - \epsilon_\alpha), \epsilon_\alpha}\Big).
  \end{align}
  Let $t_0$ be sufficiently large.
  Fix any $\epsilon_0 > 0$ and any state distribution $p_0'$.
  For the iterates $\qty{\theta_t}$ generated by Algorithm~\ref{alg sac} and any $t > 0$,
  if $k$ is uniformly randomly selected from the set $\qty{\ceil{\frac{t}{2}}, \ceil{\frac{t}{2}} + 1, \dots, t}$,
  then
  \begin{align}
    \norm{\nabla \tilde J_{\lambda_k}(\pi_{\theta_k}; p_0')}^2 \leq \frac{1}{k^{\epsilon_0}}
  \end{align}
  holds with at least probability
  \begin{align}
    1 - \fO\left(\frac{1}{t^{1-\epsilon_\beta - \epsilon_0}} + \frac{\log^2t}{t^{\epsilon_\beta-\epsilon_0}} + \frac{1}{t^{\epsilon_q -\epsilon_0}}\right).
  \end{align}
\end{restatable}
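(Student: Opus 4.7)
The plan is a nonconvex stochastic-descent analysis on the drifting objective $t\mapsto \tilde J_{\lambda_t}(\pi_{\theta_t};p_0')$, capped off by Markov's inequality to convert the expected bound into the stated high-probability one. Let $\tilde g_t \doteq \sum_a \pi_{\theta_t}(a|S_t)\nabla\log\pi_{\theta_t}(a|S_t)\bigl(\Pi(q_t(S_t,a)) - \lambda_t\log\pi_{\theta_t}(a|S_t)\bigr)$ denote the actor direction in Algorithm~\ref{alg sac}. First I would verify the hypotheses of Proposition~\ref{prop critic convergence sac}: the projection $\Pi$ bounds the critic factor by $C_\Pi$ and, crucially, because Algorithm~\ref{alg sac} uses expected updates over actions, the entropy factor $\sum_a \nabla\pi_{\theta_t}(a|S_t)\log\pi_{\theta_t}(a|S_t)$ is bounded by $\fO(\log\na)$ uniformly in $\theta_t$, so $\norm{\theta_{t+1}-\theta_t}\leq \beta_t L_\theta$ almost surely. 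Proposition~\ref{prop critic convergence sac} then gives $\E\norm{q_t - \tilde q_{\pi_{\theta_t},\lambda_t}}^2_p = \fO(t^{-\epsilon_q})$. I would also show that $\theta \mapsto \tilde J_\eta(\pi_\theta;p_0')$ is $L$-smooth with $L$ uniform in $\eta\in[0,\lambda]$, via standard softmax Hessian bounds and the uniform bound on $\tilde q_{\pi_\theta,\eta}$.

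The algebraic core is a gradient-alignment identity. Introduce the idealized expected update
\begin{align}
\bar h(\theta) \doteq \sum_s d_{\mu_\theta}(s)\sum_a \pi_\theta(a|s)\nabla\log\pi_\theta(a|s)\bigl(\tilde q_{\pi_\theta,\lambda_t}(s,a) - \lambda_t\log\pi_\theta(a|s)\bigr).
\end{align}
The softmax baseline trick shows that, component-wise, $\bar h(\theta)_{s,a} = \frac{d_{\mu_\theta}(s)}{d_{\pi_\theta,\gamma,p_0'}(s)}(1-\gamma)\nabla\tilde J_{\lambda_t}(\pi_\theta;p_0')_{s,a}$ because the per-state baseline-corrected brackets in the two expressions are identical. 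By the compactness argument underlying Lemma~\ref{lem uniform mixing}, $\inf_\theta\min_s d_{\mu_\theta}(s)\geq c_0>0$, so $\indot{\nabla\tilde J_{\lambda_t}}{\bar h(\theta_t)} \geq c_0(1-\gamma)\norm{\nabla\tilde J_{\lambda_t}}^2$. Applying $L$-smoothness plus Young's inequality, then taking conditional expectations, yields
\begin{align}
\E[\tilde J_{\lambda_t}(\pi_{\theta_{t+1}};p_0')\mid\fF_t] - \tilde J_{\lambda_t}(\pi_{\theta_t};p_0') \geq c\beta_t\norm{\nabla\tilde J_{\lambda_t}}^2 - C\beta_t\,\E[\norm{\tilde g_t-\bar h(\theta_t)}^2\mid\fF_t] - \tfrac{L\beta_t^2}{2}\E[\norm{\tilde g_t}^2\mid\fF_t].
\end{align}
I then split $\E\norm{\tilde g_t-\bar h(\theta_t)}^2$ into a critic-bias piece, controlled via Proposition~\ref{prop critic convergence sac}, and a Markovian-noise piece handled by freezing the policy over a mixing block of length $\tau\asymp\log t$.

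Telescoping from $\ceil{T/2}$ to $T$ and adding the negligible regularization drift $|\tilde J_{\lambda_{t+1}}(\pi_\theta;p_0')-\tilde J_{\lambda_t}(\pi_\theta;p_0')|\leq |\lambda_{t+1}-\lambda_t|\log\na/(1-\gamma)$, boundedness of $\tilde J$ gives
\begin{align}
c\sum_{t=\ceil{T/2}}^T \beta_t\,\E\norm{\nabla\tilde J_{\lambda_t}}^2 = \fO(1) + \fO\Big(\sum_t \beta_t^2\log^2 t\Big) + \fO\Big(\sum_t \beta_t t^{-\epsilon_q}\Big).
\end{align}
Dividing by $\sum_{t=\ceil{T/2}}^T \beta_t = \Theta(T^{1-\epsilon_\beta})$ yields $\E\norm{\nabla\tilde J_{\lambda_k}}^2 = \fO(T^{\epsilon_\beta-1} + T^{-\epsilon_\beta}\log^2 T + T^{-\epsilon_q})$ for $k$ uniform on the block, and Markov's inequality at threshold $T^{-\epsilon_0}$ delivers the three-term probability bound stated in the theorem. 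The main obstacle is the Markovian-noise term: the chain's dynamics themselves drift with $\theta_t$, so I must freeze the policy over a block of length $\tau$, pay a geometric mixing error against the frozen stationary distribution using the $\theta$-uniform rate from Lemma~\ref{lem uniform mixing}, then pay a Lipschitz replacement error of order $\norm{\theta_t-\theta_{t-\tau}}=\fO(\beta_{t-\tau}\log t)$ (combining Assumptions~\ref{assu three timescales} and \ref{assu lipschitz mu}) for substituting $\theta_{t-\tau}$ for $\theta_t$, and finally optimize $\tau\asymp\log t$; the product of these two costs is precisely what produces the $\log^2 t$ factor in the announced probability bound.
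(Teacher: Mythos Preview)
Your overall plan and the key gradient-alignment identity (the per-coordinate ratio $\tfrac{(1-\gamma)d_{\mu_\theta}(s)}{d_{\pi_\theta,\gamma,p_0'}(s)}$ between $\bar h$ and $\nabla\tilde J_{\lambda_t}$) match the paper's Lemma~\ref{lem bound of m11 sac}, and the telescoping-plus-Markov-inequality finish is the same. Your $\beta_k$-weighted average differs from the uniform one only by constant factors on the block $[\lceil t/2\rceil,t]$, so that is harmless; the paper instead divides each one-step inequality by $\beta_k$ before summing and thereby produces the uniform sum directly.

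The genuine gap is the Young-inequality step in your descent lemma, where you replace $\beta_t\indot{\nabla\tilde J_{\lambda_t}}{\tilde g_t-\bar h(\theta_t)}$ by $-C\beta_t\norm{\tilde g_t-\bar h(\theta_t)}^2$ and then claim the Markov-noise part of $\E\norm{\tilde g_t-\bar h(\theta_t)}^2$ is controlled by freeze-and-mix. That squared norm is of order one regardless of mixing: even at exact stationarity $S_t\sim d_{\mu_{\theta_t}}$, the per-state variance of $\Lambda_1(\theta_t,S,\lambda_t)$ is a fixed positive constant. Your noise term therefore contributes $\Theta(\beta_t)$ per step and leaves a nonvanishing $\fO(1)$ residual after dividing by $\sum_k\beta_k$. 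The freeze-and-mix argument you describe in the last paragraph is a first-moment (bias) bound and cannot shrink a variance.

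The paper obtains exactly the rate you state for the noise, $\fO(\sum_t\beta_t^2\log^2 t)$, by never squaring: it keeps the inner product $\indot{\nabla\tilde J_{\lambda_t}(\theta_t)}{\Lambda_1(\theta_t,S_t,\lambda_t)-\bar\Lambda_1(\theta_t,\lambda_t)}$, replaces $\theta_t$ by $\theta_{t-\tau_{\beta_t}}$ in both factors (paying your Lipschitz cost), introduces the auxiliary chain $\tilde S_t$, and uses that the frozen gradient factor is measurable at time $t-\tau_{\beta_t}$ so the mixing bound can be applied to the conditional mean of the second factor (Lemmas~\ref{lem bound of m121 sac}--\ref{lem bound of m123 sac}). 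For the critic piece, either your Young bound or the paper's Cauchy--Schwarz treatment (Lemma~\ref{lem bound of m13 sac}) works; but the Markov-noise cross term must stay in inner-product form.
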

\noindent The proof of Theorem~\ref{thm convergence actor sac} is provided in Section~\ref{sec proof thm convergence actor sac}.
Theorem~\ref{thm convergence actor sac} confirms the convergence of the actor to stationary points,
where the additional $\epsilon_0$ trades off the rate at which the gradient vanishes and the rate at which the probability goes to one.
This $\epsilon_0$ is just to present the results and is not a hyperparameter of Algorithm~\ref{alg sac}.
To our best knowledge,
Theorem~\ref{thm convergence actor sac} is the first to establish the convergence of soft actor critic with a decaying entropy regularization weight.

Based on Theorem~\ref{thm convergence actor sac},
the following corollary gives a \emph{partial} result concerning the optimality of Algorithm~\ref{alg sac}. 
\begin{restatable}{corollary}{coroptimalityofactorsac}
  \label{cor optimality of actor sac}
  (Optimality of the actor)
  Let Assumptions \ref{assu three timescales}, \ref{assu lipschitz mu}, and \ref{assu mu uniform ergodicity} hold.
  Fix any 
  \begin{align}
    \epsilon_q \in \Big(0, \min\qty{2(\epsilon_\beta - \epsilon_\alpha), \epsilon_\alpha}\Big).
  \end{align}
  Let $t_0$ be sufficiently large.
  Let $\qty{\delta_t}$ be any positive decreasing sequence converging to 0.
  For the iterates $\qty{\theta_t}$ generated by Algorithm~\ref{alg sac} and any $t > 0$,
  if $k$ is uniformly randomly selected from the set $\qty{\ceil{\frac{t}{2}}, \ceil{\frac{t}{2}} + 1, \dots, t}$,
  then
  \begin{align}
    J(\pi_{\theta_k}; p_0) \geq J(\pi_*; p_0) - \fO\left(\lambda_k\right) - \fO\left(\frac{\delta_k}{\lambda_k \left(\min_{s, a} \pi_{\theta_k}(a|s)\right)^2}\right)
  \end{align}
  holds with at least probability
  \begin{align}
    1 - \frac{\fO\left(t^{-(1-\epsilon_\beta)} + t^{-\epsilon_\beta} \log^2t + t^{-\epsilon_q}\right)}{\delta_t},
  \end{align}
  where $\pi_*$ can be any optimal policy in \eqref{eq optimal policy}. 
\end{restatable}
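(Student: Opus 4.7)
The plan is a four-step reduction. First, upgrade Theorem~\ref{thm convergence actor sac} to a Markov bound at the user-chosen threshold $\delta_k$ rather than $1/k^{\epsilon_0}$. Next, use a non-uniform Lojasiewicz inequality to turn the resulting small squared gradient into a small regularized suboptimality. Then pay $\fO(\lambda_k)$ to pass from the regularized objective $\tilde J_{\lambda_k}$ back to $J$. Finally, pay a constant to pass from an auxiliary initial distribution $p_0'$ with full support on $\fS$ to the prescribed $p_0$.

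For the first step, the proof of Theorem~\ref{thm convergence actor sac} in fact produces an expectation bound
\begin{align}
\E\!\left[\norm{\nabla \tilde J_{\lambda_k}(\pi_{\theta_k};p_0')}^2\right] = \fO\!\left(\tfrac{1}{t^{1-\epsilon_\beta}} + \tfrac{\log^2 t}{t^{\epsilon_\beta}} + \tfrac{1}{t^{\epsilon_q}}\right),
\end{align}
uniformly for $k \in \qty{\ceil{t/2},\dots,t}$; Theorem~\ref{thm convergence actor sac} then applies Markov's inequality with threshold $1/k^{\epsilon_0}$. Reapplying Markov with threshold $\delta_k$ and using $\delta_k \geq \delta_t$ (since $k \leq t$ and $\qty{\delta_t}$ is decreasing), the probability over both $k$ and the trajectory that $\norm{\nabla \tilde J_{\lambda_k}(\pi_{\theta_k};p_0')}^2 > \delta_k$ is at most the ratio displayed in the corollary.

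For the second step, on the favorable event I invoke the non-uniform Lojasiewicz inequality of \citet{mei2020global} for entropy-regularized softmax: for any $p_0'$ with $p_0'(s)>0$ for all $s$,
\begin{align}
\tilde J^*_{\lambda_k}(p_0') - \tilde J_{\lambda_k}(\pi_{\theta_k}; p_0') \leq \frac{C}{\lambda_k \left(\min_{s,a}\pi_{\theta_k}(a|s)\right)^2}\, \norm{\nabla \tilde J_{\lambda_k}(\pi_{\theta_k};p_0')}^2,
\end{align}
where $\tilde J^*_{\lambda_k}(p_0') \doteq \max_\pi \tilde J_{\lambda_k}(\pi;p_0')$ and $C$ depends only on $p_0'$, $\ns$, and $\gamma$ (I upper-bound the sharper $(\min_s \pi_{\theta_k}(a^*_{\lambda_k}(s)|s))^{-2}$ by $(\min_{s,a}\pi_{\theta_k}(a|s))^{-2}$). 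Combined with the gradient bound from the first step this yields the $\fO(\delta_k/(\lambda_k(\min_{s,a}\pi_{\theta_k}(a|s))^2))$ term. For the third and fourth steps, \eqref{eq relationship J tilde J} together with $0 \leq \ent{\pi(\cdot|s)} \leq \log\na$ gives $J(\pi_*;p_0') \leq \tilde J^*_{\lambda_k}(p_0')$ and $\tilde J_{\lambda_k}(\pi_{\theta_k};p_0') \leq J(\pi_{\theta_k};p_0') + \lambda_k\log\na/(1-\gamma)$, hence
\begin{align}
J(\pi_*;p_0') - J(\pi_{\theta_k};p_0') \leq \tilde J^*_{\lambda_k}(p_0') - \tilde J_{\lambda_k}(\pi_{\theta_k};p_0') + \tfrac{\lambda_k\log\na}{1-\gamma} = \fO(\lambda_k) + \fO\!\left(\tfrac{\delta_k}{\lambda_k(\min_{s,a}\pi_{\theta_k}(a|s))^2}\right).
\end{align}
Since $\pi_*$ is optimal for every state, $v_{\pi_*}(s) - v_{\pi_{\theta_k}}(s) \geq 0$ pointwise; choosing $p_0'$ to be uniform on $\fS$ gives $J(\pi_*;p_0) - J(\pi_{\theta_k};p_0) \leq \ns\bigl(J(\pi_*;p_0') - J(\pi_{\theta_k};p_0')\bigr)$, and the $\ns$ factor is absorbed into $\fO(\cdot)$.

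The main obstacle is the $1/\lambda_k$ inflation in the Lojasiewicz inequality, which is intrinsic to the entropy-regularized softmax landscape rather than an artifact of the analysis; this is precisely why the corollary is labeled \emph{partial}, since the bound is informative only in the regime $\delta_k = o\bigl(\lambda_k(\min_{s,a}\pi_{\theta_k}(a|s))^2\bigr)$, and no a priori quantitative control is given on $\min_{s,a}\pi_{\theta_k}(a|s)$. A minor but nontrivial technical point is that in step one, the randomness over $k$ must be handled jointly with the randomness of the trajectory: this is where the lower bound $k \geq t/2$ is used to replace $k$-dependent rates by $t$-dependent ones, and the monotonicity of $\qty{\delta_t}$ is used to land on the clean $\delta_t$ in the probability denominator.
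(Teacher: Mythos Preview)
Your proposal is correct and follows essentially the paper's proof: the same Markov step at threshold $\delta_k\geq\delta_t$ on the averaged squared-gradient bound from the proof of Theorem~\ref{thm convergence actor sac}, the same non-uniform Lojasiewicz-type inequality of \citet{mei2020global} (reproduced in the paper as Lemma~\ref{lem mei}), and the same entropy bound to pass from $\tilde J_{\lambda_k}$ to $J$. The one methodological difference is in how you reach $p_0$: the paper applies Lemma~\ref{lem mei} with $d=p_0$ and $d'=p_0'$, so the transfer to the true initial distribution is absorbed into the mismatch factor $\max_s d_{\pi_{*,\lambda_k},\gamma,p_0}(s)/d_{\pi_{\theta_k},\gamma,p_0'}(s)$; you instead take $d=d'=p_0'$ and then transfer to $p_0$ via the elementary pointwise bound $v_{\pi_*}-v_{\pi_{\theta_k}}\geq 0$, picking up an extra $\ns$ factor that is swallowed by $\fO(\cdot)$. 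One small imprecision: the expectation bound coming out of Theorem~\ref{thm convergence actor sac}'s proof is on the \emph{average} over $k\in\qty{\ceil{t/2},\dots,t}$, not uniform in $k$; your subsequent Markov argument over the joint randomness of $k$ and the trajectory is nonetheless correct.
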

\noindent The proof of Corollary~\ref{cor optimality of actor sac} is provided in Section~\ref{sec proof cor optimality of actor sac}.
The sequence $\qty{\delta_t}$ in Corollary~\ref{cor optimality of actor sac} trades off the suboptimality and the high probability.
For Corollary~\ref{cor optimality of actor sac} to be nontrivial 
(i.e., the suboptimality diminishes and the high probability approaches one),
one sufficient condition is that
\begin{align}
  \label{eq conjecture}
  \lim_{t\to\infty} \frac{t^{-(1-\epsilon_\beta)} + t^{-\epsilon_\beta} \log^2t + t^{-\epsilon_q}}{\lambda_t \left(\min_{s, a} \pi_{\theta_t}(a|s)\right)^2} = 0.
\end{align}
This requires us to study the decay rate of $\min_{s, a}\pi_{\theta_t}(a|s)$.
We conjecture that when $\lambda_t$ decays slower,
$\min_{s, a}\pi_{\theta_t}(a|s)$ also decays slower.
Consequently, we expect \eqref{eq conjecture} to hold when $\lambda_t$ decays sufficiently slow and the form of the learning rates $\alpha_t$ and $\beta_t$ are adjusted correspondingly according to the form of $\min_{s, a}\pi_{\theta_t}(a|s)$'s decay rate.
We leave the investigation of this rate for future work.

We remark that though Corollary~\ref{cor optimality of actor sac} is only a partial result,
it still advances the state of the art regarding the optimality of soft policy gradient (policy gradient in the maximum entropy RL framework) methods in \citet{mei2020global}.
Theorem~8 of \citet{mei2020global} gives a convergence rate of soft policy gradient methods,
also with a dependence on the rate at which $\min_{s, a}\pi_{\theta_t}(a|s)$ diminishes. They too leave the investigation of the rate as an open problem.
Theorem~8 of \citet{mei2020global}, however,
only considers a bandit setting with the exact soft policy gradient and leaves the general MDP setting for future work.
By contrast,
Corollary~\ref{cor optimality of actor sac} applies to general MDPs with approximate and stochastic update steps.

\section{Related Work}
Our Theorem~\ref{thm sa convergence} regarding the finite sample analysis of stochastic approximation algorithms follows the line of research of \citet{chen2020finite,chen2021lyapunov}.
In particular,
\citet{chen2020finite} consider \eqref{eq sa iterates} 
with an expected operator (i.e., $F_{\theta_t}(w_t, Y_t)$ is replaced by $\bar F(w_t)$).
\citet{chen2021lyapunov} extend \citet{chen2020finite} in that the expected operator is replaced by the stochastic operator $F(w_t, Y_t)$,
though $\qty{Y_t}$ here is a Markov chain with fixed dynamics.
We further extend \citet{chen2021lyapunov} from time-homogeneous stochastic operator and dynamics to time-inhomogeneous stochastic operator and dynamics.
This line of research depends on properties of contraction mappings.
There are also ODE-based analysis for stochastic approximation algorithms \citep{DBLP:books/sp/BenvenisteMP90,kushner2003stochastic,borkar2009stochastic} 
and we refer the reader to \citet{chen2020finite,chen2021lyapunov} for a more detailed review.

In this work,
we focus on the optimality of naive actor critic algorithms that do not use second order information.
With the help of the Fisher information,
the optimality of natural actor critic \citep{kakade2001natural,peters2008natural,bhatnagar2009natural} is also established in both on-policy settings \citep{agarwal2019optimality,wang2019neural,liu2020improved,khodadadian2021finitesample} and off-policy settings \citep{khodadadian2021finite,chen2021finitesample}.
In particular,
\citet{agarwal2019optimality,khodadadian2021finitesample,khodadadian2021finite} establish the optimality of natural actor critic in the tabular setting.
They, however, make synchronous updates to the actor.
In other words,
they update the policy for all states at each time step.
Consequently, the state distribution is not important there.
By contrast, 
the naive actor critic this work considers makes asynchronous updates to the actor.
In other words,
at each time step, 
we only update the policy for the current state.
This asynchronous update is more practical in large scale experiments. 
Moreover,
\citet{xu2021doubly} establish the convergence to stationary points of an off-policy actor critic with density ratio correction and a fixed sampling distribution.
To study the optimality of the stationary points,
\citet{xu2021doubly} also make some assumptions about the Fisher information.
In this work,
we do not use any second order information.
How this work achieves optimality (i.e., vanilla actor critic with decaying KL regularization) is fundamentally different from natural actor critic.

\citet{liu2020improved} improve the results of \citet{agarwal2019optimality} regarding the optimality of policy gradient methods from exact gradient to stochastic and approximate gradient.
\citet{liu2020improved},
however,
work on on-policy settings and require nested loops.
By contrast,
we work on off-policy settings and consider three-timescale updates.

\citet{degris2012off} also study the convergence of an off-policy actor critic without using density ratio to correct the state distribution mismatch.
As noted in the Errata of \citet{degris2012off},
their results also exclusively apply to tabular settings.
Additionally,
\citet{degris2012off} establish asymptotic convergence to only some locally asymptotically stable points of an ODE without any convergence rate. 
And the optimality of those locally asymptotically stable points remains unclear.
Further,
\citet{degris2012off} assume the transitions are identically and independently sampled.
By contrast,
our transitions are obtained by following a time-inhomogeneous behavior policy.

In this paper,
we focus on the tabular setting as a starting point for this line of research. 
When linear function approximation is used for the critic,
compatible features \citep{sutton2000policy,konda2002thesis,zhang2019provably} can be used to eliminate the bias resulting from the limit of the representation capacity.
With the help of compatible features,
\citet{liu2020improved} show the optimality of their on-policy actor critic and
\citet{xu2021doubly} show the optimality of their off-policy actor critic.
We leave the study of linear function approximation in our settings with compatible features for future work.

\section{Experiments}

In this section,
we provide some empirical results in complement to our theoretical analysis. 
The implementation is made publicly available to facilitate future research.\footnote{\url{https://github.com/ShangtongZhang/DeepRL}}
In particular,
we are interested in the following three questions:
\begin{enumerate}[(i).]
  \item Can the claimed convergence and optimality of Algorithm~\ref{alg opac} in Theorem~\ref{thm:optimality-AC} be observed in computational experiments?
  \item Can the claimed convergence of Algorithm~\ref{alg sac} in Theorem~\ref{thm convergence actor sac} and its conjectured optimality from \eqref{eq conjecture} and Corollary~\ref{cor optimality of actor sac} be observed in computational experiments? 
  \item How is the KL-based regularization (cf. Algorithm~\ref{alg opac}) qualitatively different from the entropy-based regularization (cf. Algorithm~\ref{alg sac})?
\end{enumerate}
\begin{figure}[h]
  \centering
  \includegraphics[width=0.6\textwidth]{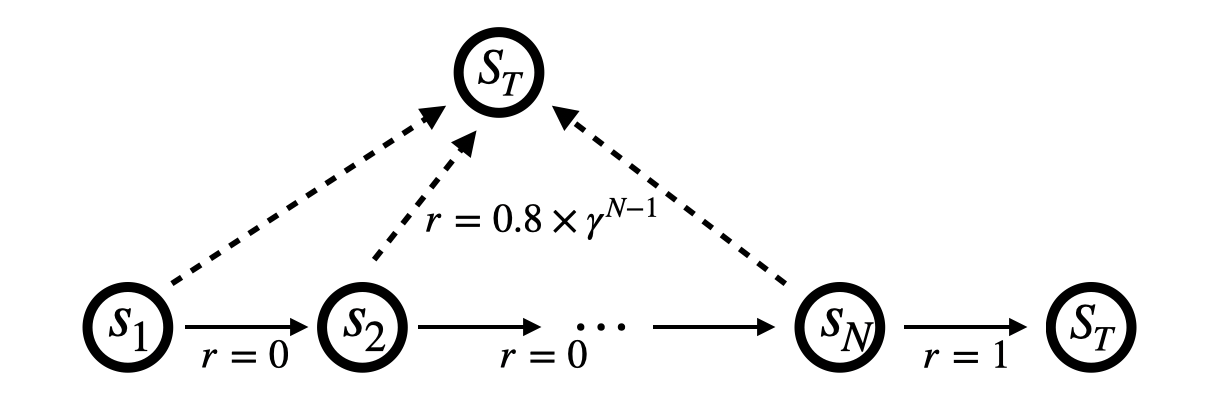}
  \caption{\label{fig chain} The chain domain from \citet{laroche2021dr} with $\gamma = 0.99$.}
\end{figure}

\begin{figure}[t]
  \centering
  \includegraphics[width=\textwidth]{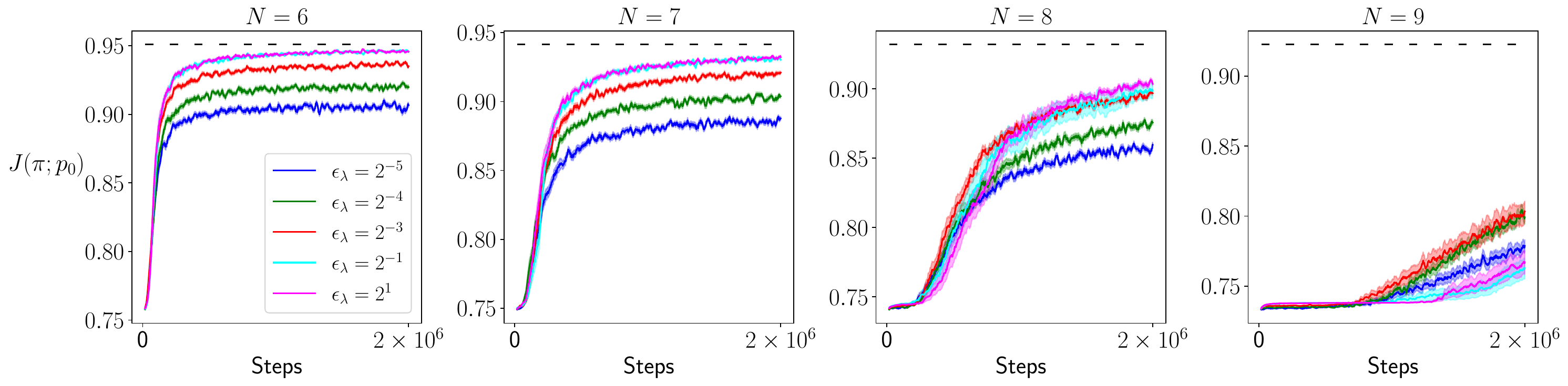}
  \caption{\label{fig opac} Evaluation performance against training steps of Algorithm~\ref{alg opac}}
\end{figure}
\begin{figure}[t]
  \centering
  \includegraphics[width=\textwidth]{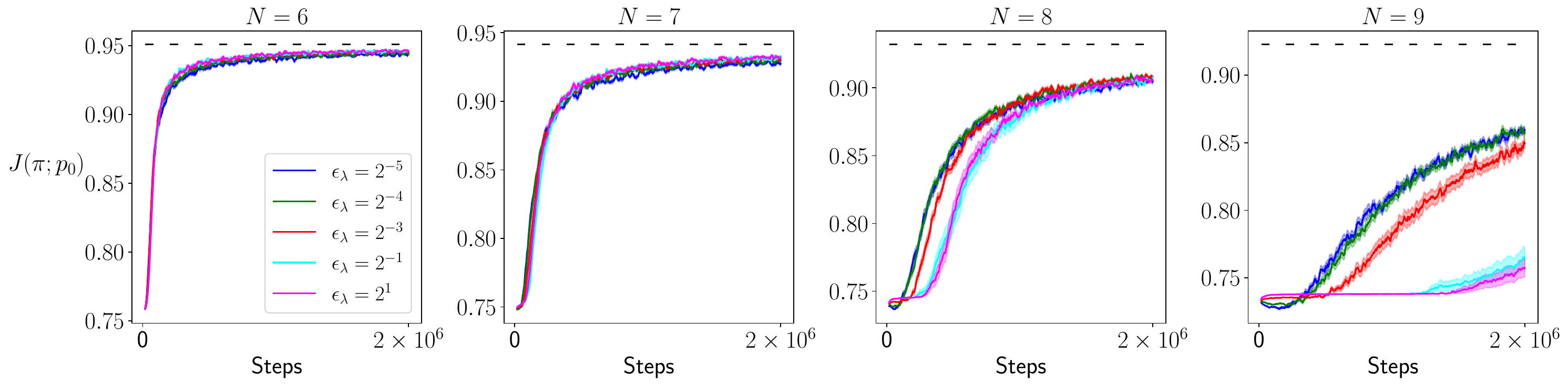}
  \caption{\label{fig sac} Evaluation performance against training steps of Algorithm~\ref{alg sac}}
\end{figure}
We use the chain domain from \citet{laroche2021dr} as our testbed.
As described in Figure~\ref{fig chain},
there are $N$ non-terminal states in the chain and the agent is always initialized at state $s_1$.
There are two actions available in each state.
The \texttt{solid} action leads the agent from $s_i$ to $s_{i+1}$ and yields a reward of $0$ for all $i < N$.
At $s_N$,
the \texttt{solid} action instead leads to the terminal state and yields a reward of $1$.
The \texttt{dotted} action always leads to the terminal state directly and yields a reward $0.8 \times \gamma^{N-1}$.
Trivially,
the optimal policy is to always choose the \texttt{solid} action,
which will yield an episodic return of $\gamma^{N-1}$.
As noted by \citet{laroche2021dr},
the challenge of this chain domain is to overcome the immediate rewards pushing the agent towards suboptimal policies.
We remark that though this chain has a finite horizon,
we can indeed reformalize it into an infinite-horizon chain with transition-dependent discounting.
We refer the reader to \citet{white2017unifying} for more details about this technique and we believe our theoretical results can be easily extended to transition-dependent discounting.

We run Algorithms~\ref{alg opac} and~\ref{alg sac} in the chain domain.
According to \eqref{eq mu example},
we use the behavior policy
\begin{align}
  \mu_\theta(\cdot|s) \doteq& 0.1 \times \frac{1}{2} + 0.9 \times \frac{\exp(0.1 \times \theta_{s,\cdot})}{\exp(0.1 \times \theta_{s,\texttt{solid}}) + \exp(0.1 \times \theta_{s,\texttt{dotted}})}.
\end{align}
According to \eqref{eq optimal lr} and Assumption~\ref{assu three timescales},
we set $\qty{\alpha_t, \beta_t, \lambda_t}$ as
\begin{align}
  \alpha_t &\doteq \frac{100 + 1}{(t + 10^5)^{0.5+0.001}}, \\
  \beta_t &\doteq \frac{100}{(t+10^5)^{0.75 + 0.001}}, \\
  \lambda_t &\doteq \frac{0.025}{(t + 10^5)^{\epsilon_\lambda}},
\end{align}
where we test a range of $\epsilon_\lambda$ from $\qty{2^{-5}, 2^{-4}, 2^{-3}, 2^{-1}, 2}$\footnote{We omit $\epsilon_\lambda = 2^{-2}$ and $\epsilon_\lambda = 1$ to improve the readability of the figures. 
The corresponding curves are similar to $\epsilon_\lambda = 2^{-1}$ and $\epsilon_\lambda = 2$.}.
We run both Algorithms~\ref{alg opac} and~\ref{alg sac} for $2\times 10^6$ steps 
and evaluate the target policy every $2 \times 10^3$ steps,
where 
we execute it for 10 episodes and take the mean episodic return.
The evaluation performance is reported in Figures~\ref{fig opac} and~\ref{fig sac} respectively.
Curves are averaged over 30 independent runs with shaded regions indicating standard errors.
The black dotted lines are the performance of the optimal policy.

As suggested by Figure~\ref{fig opac} with $N \in \qty{6, 7}$,
when $\epsilon_\lambda \in \qty{2^{-1}, 2}$,
the target policy found by Algorithm~\ref{alg opac} is indeed very close to the optimal policy at the end of training,
which gives an affirmative answer to the question (i).
It is important to note that neither $\epsilon_\lambda = 2^{-1}$ nor $\epsilon_\lambda = 2$
is recommended by \eqref{eq optimal reg}.
This is expected as Assumption~\ref{assu three timescales} is only sufficient and
the convergence rate in Theorem~\ref{thm:optimality-AC} can possibly be significantly improved.
Further, with the increase of $N$,
the suboptimality of the target policy at the end of training also increases.
This is expected as increasing $N$ makes the problem more challenging.
We,
however,
remark that though with $N \in \qty{8, 9}$,
the target policy is not close to the optimal policy at the end of training,
all curves are monotonically improving as time progresses.
Similarly,
the results in Figure~\ref{fig sac} give an affirmative answer to the question (ii).
Comparing Figures~\ref{fig opac} and~\ref{fig sac},
it is easy to see that Algorithm~\ref{alg opac} is much more sensitive to $\epsilon_\lambda$ than Algorithm~\ref{alg sac}.
As shown by Figure~\ref{fig opac},
the selection of $\epsilon_\lambda$ significantly affects the rate that the suboptimality diminishes in Algorithm~\ref{alg opac}.
By contrast, 
Figure~\ref{fig sac} suggests that the rate that the suboptimality diminishes is barely affected by $\epsilon_\lambda$ in Algorithm~\ref{alg sac}. 
This comparison gives an intuitive answer the question (iii).
This difference is because the KL regularization is much more aggressive than the entropy regularization.
To be more specific,
the entropy of the policy is always bounded but the KL divergence used here can be unbounded when the policy becomes deterministic.

\section{Conclusion}
In this paper,
we demonstrate the optimality of the off-policy actor critic algorithm even without using a density ratio to correct the state distribution mismatch.
This result is significant in two aspects.
First,
it advances the understanding of the optimality of policy gradient methods in the tabular setting from \citet{agarwal2019optimality,mei2020global,laroche2021dr}.
Second,
it provides,
to certain extent,
a theoretical justification
for the practice of ignoring state distribution mismatch in large scale RL experiments \citep{wang2016sample,espeholt2018impala,vinyals2019grandmaster,schmitt2020off,zahavy2020self}.
One important ingredient of our results is the finite sample analysis of a generic stochastic approximation algorithm with time-inhomogeneous update operators on time-inhomogeneous Markov chains,
which we believe can be used to analyze more RL algorithms and has interest beyond RL.

\acks{Part of this work was done during SZ's internship at Microsoft Research Montreal and SZ's DPhil at the University of Oxford. SZ is also funded by the Engineering and Physical Sciences Research Council (EPSRC) during his DPhil.}

\appendix
\section{Proofs of Section~\ref{sec sa}}
\subsection{Proof of Lemma~\ref{lem uniform mixing}}

\uniformmixing*
\label{sec proof lem uniform mixing}
\begin{proof}
  Theorem 4.9 of \citet{levin2017markov} confirms the geometric mixing for a single ergodic chain.
  Here we adapt its proof to show the uniform mixing.

  For any $P \in \bar \Lambda_P$, 
  define the indicator matrix $\mathbb{I}_P \in \qty{0, 1}^{\ny \times \ny}$ such that 
  \begin{align}
      \mathbb{I}_P[y, y'] = \begin{cases}
          1, & P[y, y'] > 0 \\
          0, & P[y, y'] = 0
      \end{cases}.
  \end{align}
  Consider the stochastic matrix $M_P$ defined as
  \begin{align}
      M_P(i, j) = \frac{\mathbb{I}_P(i, j)}{\sum_{j'} \mathbb{I}_P(i, j')}.
  \end{align}
  Since the chain induced by $P$ is ergodic,
  it is easy to see the chain induced by $M_P$ is also ergodic.
  This is because (1)
  a finite chain is ergodic if and only if it is irreducible and aperiodic;
  (2) the connectivity of the chain induced by $M_P$ is the same as that by $P$;
  and (3) irreducibility and aperiodicity depend only on connectivity, not on the specific probability of each transition. 

  The proof of Proposition~1.7 of \citet{levin2017markov} then asserts that there exists a constant $t_P > 0$ such that for all $t \geq t_P$,
  \begin{align}
      M^t_P(i, j) > 0
  \end{align}
  holds for any $i, j$.
  Hence $P^t(i, j) > 0$ also holds because $M_P$ and $P$ share the same connectivity, so do their powers.
  Formally, it can be proved via induction that 
  \begin{align}
    \label{eq connectivity induction}
    \forall t \geq 1, i, j, \, M_P^t(i, j) > 0 \implies P^t(i, j) > 0.
  \end{align}
  First, \eqref{eq connectivity induction} obviously holds for $t=1$.
  Suppose \eqref{eq connectivity induction} holds for $t \leq k$.
  If
  \begin{align}
    M_P^{k+1}(i, j) = \sum_l M_P^k(i, l) M_P(l, j) > 0,
  \end{align}
  there must exist at least one $l$ such that $M_P^k(i, l) M_P(l, j) > 0$,
  i.e.,
  \begin{align}
    M_P^k(i, l) > 0, M_P(l, j) > 0.
  \end{align}
  Using the induction hypothesis yields
  \begin{align}
    P^k(i, l) > 0, P(l, j) > 0,
  \end{align}
  from which $P^{k+1}(i, j) > 0$ follows easily.
  This completes the induction.
  Since $\fY$ is finite,
  the set $\qty{\mathbb{I}_P | P \in \bar \Lambda_P}$ is also finite (at most $2^{\ny \times \ny}$ elements),
  and so are the sets $\qty{M_P | P \in \bar \Lambda_P}$ and $\qty{t_P | P \in \bar \Lambda_P}$.
  Let 
  \begin{align}
      t_* \doteq \max_{P \in \bar \Lambda_P}\qty{t_P},
  \end{align}
  we then have for any $P \in \bar \Lambda_P$,
  $P^{t_*} (i, j) > 0$ always holds. 
  Importantly, $t_*$ is independent of $P$.
  Then the extreme value theorem implies that 
  \begin{align}
      \delta \doteq \inf_{P \in \bar \Lambda_P, i, j} P^{t_*}(i, j) > 0.
  \end{align}
  Let $d_P$ be the invariant distribution of the chain induced by $P$
  and take any $\delta'$ such that $0 < \delta' < \delta$,
  then
  \begin{align}
      P^{t_*}(i, j) > \delta' \geq \delta' d_P(j) 
  \end{align}
  holds for any $P \in \bar \Lambda_P, i, j$.

  For any $P \in \bar \Lambda_P$,
  let $\Pi$ be a matrix, each row of which is $d_P^\top$, 
  and define
  \begin{align}
      \zeta \doteq 1 - \delta'.
  \end{align}
  We now verify that the matrix
  \begin{align}
      Q \doteq \frac{P^{t_*} + \zeta \Pi - \Pi}{\zeta}
  \end{align}
  is a stochastic matrix.
  First, its row sums are $1$:
  \begin{align}
    (Q\tb{1})(i) \doteq \frac{1 + \zeta - 1}{\zeta} = 1.
  \end{align}
  Second,
  its elements are nonnegative:
  \begin{align}
   Q(i, j) = \frac{P^{t_*}(i, j) + \zeta d_P(j) - d_P(j)}{\zeta} \geq \frac{\delta' d_P(j) + \zeta d_P(j) - d_P(j)}{\zeta} = 0.
  \end{align}
  Rearranging terms yields
  \begin{align}
      \label{eq tmp 1}
      P^{t_*} = (1 - \zeta)\Pi + \zeta Q.
  \end{align}
  We now use induction to show that for any $k \geq 1$,
  \begin{align}
    \label{eq tmp 2}
      P^{t_*k} = (1 - \zeta^k) \Pi + \zeta^k Q^k.
  \end{align}
  For $k=1$, we know \eqref{eq tmp 2} holds from \eqref{eq tmp 1}.
  Suppose \eqref{eq tmp 2} holds for $k=n$,
  then 
  \begin{align}
      P^{t_*(n+1)} &= P^{t_*n}P^{t_*} \\
      &= \left((1 - \zeta^n) \Pi + \zeta^n Q^n \right) P^{t_*} \\
      &= (1 - \zeta^n) \Pi P^{t_*} + \zeta^n Q^n \left((1 - \zeta)\Pi + \zeta Q\right) \\
      &= (1 - \zeta^n) \Pi P^{t_*} + (1 - \zeta)\zeta^n Q^n \Pi +  \zeta^{n+1} Q^{n+1} \\
      &= (1 - \zeta^n) \Pi + (1 - \zeta)\zeta^n Q^n \Pi +  \zeta^{n+1} Q^{n+1} \\
      \intertext{\hfill (Property of invariant distribution)}
      &= (1 - \zeta^n) \Pi + (1 - \zeta)\zeta^n \Pi +  \zeta^{n+1} Q^{n+1} \\
      \intertext{\hfill ($Q\Pi = Q1 d_P^\top = 1 d_P^\top = \Pi$ for any stochastic matrix $Q$)}
      &= (1 - \zeta^{n+1}) \Pi + \zeta^{n+1} Q^{n+1},
  \end{align}
  which completes the induction.
  Consequently,
  for any $l \in \qty{0, 1, \dots, t_* - 1}$,
  multiplying by $P^l$ both sides of \eqref{eq tmp 2} yields
  \begin{align}
      P^{t_*k + l} &= (1 - \zeta^k) \Pi P^l + \zeta^k Q^kP^l \\
      &=(1 - \zeta^k) \Pi + \zeta^k Q^kP^l.
  \end{align}
  Rearranging terms yields
  \begin{align}
      P^{t_*k + l} - \Pi = \zeta^k(Q^kP^l - \Pi),
  \end{align}
  implying for any $i$,
  \begin{align}
      \sum_{j} \abs{P^{t_*k+l}(i, j) - d_P(j)} &= \zeta^k \sum_j \abs{(Q^kP^l)(i, j) - d_P(j)} \\
      &\leq 2 \zeta^k 
      \qq{\hfill (Boundedness of total variation)} \\
      &= 2 \zeta^{-\frac{l}{t_*}} \left(\zeta^{\frac{1}{t_*}}\right)^{t_* k + l} \\
      &\leq 2 \zeta^{-\frac{t_*-1}{t_*}}\left(\zeta^{\frac{1}{t_*}}\right)^{t_* k + l}.
  \end{align}
  Let 
  \begin{align}
    C_0' \doteq 2 \zeta^{-\frac{t_*-1}{t_*}}, \tau \doteq \zeta^{\frac{1}{t_*}}.
  \end{align}
  It is easy to see $C_0' > 0, \tau \in (0, 1)$ and both $C_0$ and $\tau$ are independent of $P$.
  Consequently,
  for any $n \geq t_*$,
  we have
  \begin{align}
    \sum_j \abs{P^n(i, j) - d_P(j)} \leq C_0' \tau^n.
  \end{align}
  By the boundedness of total variation,
  for $n \in \qty{0, 1, \dots, t_* - 1}$,
  we have
  \begin{align}
    \sum_j \abs{P^n(i, j) - d_P(j)} \leq 2 \leq \frac{2}{\tau^{t_*}} \tau^n.
  \end{align}
  Setting $C_0 \doteq \max\qty{C_0', \frac{2}{\tau^{t_*}}}$ completes the proof.
\end{proof}

\subsection{Proof of Theorem \ref{thm sa convergence}}
\label{sec proof thm sa convergence}

\thmsaconvergence*
\begin{proof}
Since the theorem is a generalization of the results in \citet{chen2021lyapunov},
we follow their framework to complete the proof.
In our setting,
the dynamics of the Markov chain changes every time step according to a secondary random sequence $\qty{\theta_t}$.
Consequently,
we have many new error terms which are not controlled by \citet{chen2021lyapunov} 
and that we handle using techniques from \citet{zou2019finite}.

Following \citet{chen2021lyapunov},
we use a Lyapunov method for the proof
with the generalized Moreau envelope of $\frac{1}{2}\norm{\cdot}_c^2$ as the Lyapunov function.
In particular,
we consider the Lyapunov function
\begin{align}
  M(w) \doteq \inf_{u\in\R^K}\qty{\frac{1}{2} \norm{u}_c^2 + \frac{1}{2 \xi} \norm{w - u}_s^2},
\end{align}
where
$\xi > 0$ is a constant to be tuned,
$\norm{\cdot}_c$ is the norm w.r.t. which $\bar F_\theta$ is contractive (cf. Assumption~\ref{assu uniform contraction}),
and $\norm{\cdot}_s$ is an arbitrary norm such that $\frac{1}{2}\norm{\cdot}_s^2$ is $L$-smooth (Lemma~\ref{lem smooth definition}).
It can, e.g., be an $\ell_p$ norm with $p \geq 2$ (Example 5.11 of \citet{beck2017first}).
Due to the equivalence between norms,
there exist positive constants $l_{cs}$ and $u_{cs}$ such that 
\begin{align}
  l_{cs} \norm{w}_s \leq \norm{w}_c \leq u_{cs} \norm{w}_s
\end{align}
holds for any $w$.
The following lemma proved by \citet{chen2021lyapunov} describes some properties of $M$.
\begin{lemma}
    (Proposition A.1 of \citet{chen2021lyapunov})
    \label{lem property of M}
    \begin{enumerate}[(i).]
        \item $M(w)$ is convex, and $\frac{L}{\xi}$-smooth w.r.t. $\norm{\cdot}_s$.
        \item There exists a norm $\norm{\cdot}_m$ such that $M(w) = \frac{1}{2}\norm{w}_m^2$.
        \item Define
        \begin{align}
            &l_{cm} = \sqrt{(1 + \xi l_{cs}^2)} \\
            &u_{cm} = \sqrt{(1 + \xi u_{cs}^2)},
        \end{align}
        then $\forall w$,
        \begin{align}
            l_{cm}\norm{w}_m \leq \norm{w}_c \leq u_{cm} \norm{w}_m.
        \end{align}
    \end{enumerate}
\end{lemma}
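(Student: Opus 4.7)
The plan is to address each of the three claims separately, exploiting the structure of $M$ as the infimal convolution of $f_c(u) := \tfrac{1}{2}\|u\|_c^2$ with $h(u) := \tfrac{1}{2\xi}\|u\|_s^2$. For (i), convexity is immediate from the joint convexity of the function $(u,w) \mapsto f_c(u) + \tfrac{1}{2\xi}\|w-u\|_s^2$ in both arguments, since partial minimization of a jointly convex function preserves convexity. For smoothness, note that $h$ is $(L/\xi)$-smooth w.r.t.\ $\|\cdot\|_s$ because $\tfrac{1}{2}\|\cdot\|_s^2$ is $L$-smooth by construction; then I would invoke the standard infimal-convolution result (provable via conjugate duality: the Legendre conjugate satisfies $M^* = f_c^* + h^*$, and $h^*$ is $(\xi/L)$-strongly convex w.r.t.\ the dual norm, so $M = (M^*)^*$ is $(L/\xi)$-smooth w.r.t.\ $\|\cdot\|_s$).

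For (ii), the key observation is that $M$ is nonnegative, even, and positively $2$-homogeneous: substituting $u \to \lambda u'$ in the infimum and using absolute homogeneity of both norms yields $M(\lambda w) = \lambda^2 M(w)$ for every $\lambda \in \mathbb{R}$. The infimum is attained by coercivity and continuity, and $M(w) = 0$ forces both $\|u^*\|_c = 0$ and $\|w - u^*\|_s = 0$ at the minimizer, hence $w = 0$. I would then define $\|w\|_m := \sqrt{2M(w)}$; absolute homogeneity and positive definiteness of this candidate are immediate from $M$'s properties, while the triangle inequality follows from the standard fact that the square root of any nonnegative, convex, positively $2$-homogeneous function on a real vector space that vanishes only at the origin is a norm (equivalently, its unit sublevel set is a convex, symmetric, absorbing, bounded body, so its Minkowski functional is exactly $\|\cdot\|_m$).

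For (iii), I would bound $M(w)$ above and below by explicit multiples of $\|w\|_c^2$. The upper bound on $\|w\|_c$ (giving $\|w\|_c \leq u_{cm}\|w\|_m$) comes from the triangle inequality $\|w\|_c \leq \|u\|_c + \|w - u\|_c$ combined with $\|\cdot\|_c \leq u_{cs}\|\cdot\|_s$, followed by Cauchy-Schwarz in $\mathbb{R}^2$ with weights $(1, \sqrt{\xi}\,u_{cs})$, which yields $\|w\|_c^2 \leq (1+\xi u_{cs}^2)\bigl(\|u\|_c^2 + \tfrac{1}{\xi}\|w-u\|_s^2\bigr)$ for every $u$; taking the infimum over $u$ closes the argument. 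The lower bound (giving $l_{cm}\|w\|_m \leq \|w\|_c$) is obtained by restricting the infimum to the ray $u = tw$, optimizing the scalar $t$ in closed form to reach $2M(w) \leq \|w\|_c^2 \|w\|_s^2/(\xi \|w\|_c^2 + \|w\|_s^2)$, then using monotonicity of the right-hand side in $\|w\|_s$ together with $\|w\|_s \leq \|w\|_c/l_{cs}$ to conclude.

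The main subtlety I anticipate is the triangle inequality in (ii): it does not fall out of the definition of $\|\cdot\|_m$ and relies on the general equivalence between norms and the square roots of nonnegative convex $2$-homogeneous functions. The rest is largely careful bookkeeping — in particular, the Cauchy-Schwarz weights in (iii) must be calibrated precisely so that the resulting constants land exactly on $l_{cm} = \sqrt{1 + \xi l_{cs}^2}$ and $u_{cm} = \sqrt{1 + \xi u_{cs}^2}$.
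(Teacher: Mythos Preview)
The paper does not prove this lemma at all: it is quoted verbatim as Proposition~A.1 of \citet{chen2021lyapunov} and used as a black box. Your proposal is a correct self-contained argument, so there is nothing in the paper to compare against step by step.

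On the substance of your sketch: all three parts are sound. For (i), the conjugate-duality route ($M^* = f_c^* + h^*$, strong convexity of $h^*$ from smoothness of $h$, then smoothness of $M$ from strong convexity of $M^*$) is the standard and cleanest way to get the $L/\xi$ constant exactly. For (ii), your reduction to the Minkowski-functional characterization of norms via convexity plus $2$-homogeneity is the right level of generality; it is indeed the only nontrivial step. For (iii), both bounds check out with the stated constants: the Cauchy--Schwarz pairing $(1,\sqrt{\xi}\,u_{cs})$ against $(\|u\|_c,\|w-u\|_s/\sqrt{\xi})$ produces exactly $1+\xi u_{cs}^2$, and the scalar optimization along $u=tw$ followed by monotonicity in $\|w\|_s^2$ and the substitution $\|w\|_s\le\|w\|_c/l_{cs}$ lands precisely on $1+\xi l_{cs}^2$. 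Nothing is missing.
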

Lemma~\ref{lem property of M} (i) and Lemma~\ref{lem smooth definition} imply that for any $x, x'$,
\begin{align}
    M(x') \leq M(x) + \indot{\nabla M(x)}{x' - x} + \frac{L}{2\xi} \norm{x - x'}_s^2.
\end{align}
Using $x' = w_{t+1} - w^*_{\theta_{t+1}}$ and $x = w_t - w^*_{\theta_t}$ in the above inequality and the update equation~\eqref{eq sa iterates}:
\begin{align}
    w_{t+1} \doteq w_t + \alpha_t (F_{\theta_t}(w_t, Y_t) - w_t + \epsilon_t),
\end{align} Lemma~\ref{lem property of M} (ii) yields
\begin{align}
  \label{eq expansion of M}
  &\frac{1}{2} \norm{w_{t+1} - w^*_{\theta_{t+1}}}_m^2 \\
  \leq& \frac{1}{2}\norm{w_t - w^*_{\theta_t}}_m^2 + \indot{\nabla M(w_t - w^*_{\theta_t})}{w_{t+1} - w_t + w^*_{\theta_t} - w^*_{\theta_{t+1}}} \\
  &+ \frac{L}{2\xi} \norm{w_{t+1} - w_t + w^*_{\theta_t} - w^*_{\theta_{t+1}}}^2_s \\
  =&\frac{1}{2}\norm{w_t - w^*_{\theta_t}}_m^2 \\
  &+ \underbrace{\indot{\nabla M(w_t - w^*_{\theta_t})}{w^*_{\theta_t} - w^*_{\theta_{t+1}}}}_{T_1} \\
  &+ \alpha_t \underbrace{\indot{\nabla M(w_t - w^*_{\theta_t})}{\bar F_{\theta_t}(w_t) - w_t}}_{T_2} \\
  &+ \alpha_t \underbrace{\indot{\nabla M(w_t - w^*_{\theta_t})}{F_{\theta_t}(w_t, Y_t) - \bar F_{\theta_t}(w_t)} }_{T_3} \\
  &+ \alpha_t \underbrace{\indot{\nabla M(w_t - w^*_{\theta_t})}{\epsilon_t} }_{T_4} \\
  &+ \alpha_t^2\underbrace{\frac{L }{\xi}\norm{F_{\theta_t}(w_t, Y_t) - w_t + \epsilon_t}_s^2}_{T_5} \\
  &+ \underbrace{\frac{L}{\xi} \norm{w^*_{\theta_t} - w^*_{\theta_{t+1}}}_s^2}_{T_6}.
\end{align}
We now bound $T_1$ - $T_6$ one by one. 
$T_1$ and $T_6$ are errors resulting from changing dynamics and are not controlled in \citet{chen2021lyapunov}.
$T_2$, $T_4$, and $T_5$ can be bounded similarly to \citet{chen2021lyapunov}.
To bound $T_3$,
we further decompose it as
\begin{align}
    T_3 =& \indot{\nabla M(w_t - w^*_{\theta_t})}{F_{\theta_t}(w_t, Y_t) - \bar F_{\theta_t}(w_t)} \\
    =&\underbrace{\indot{\nabla M(w_t - w^*_{\theta_t}) - \nabla M(w_{t-\tau_{\alpha_t}} - w^*_{\theta_{t-\tau_{\alpha_t}}})}{F_{\theta_t}(w_t, Y_t) - \bar F_{\theta_t}(w_t)}}_{T_{31}} \\
    &+\underbrace{\indot{\nabla M(w_{t-\tau_{\alpha_t}} - w^*_{\theta_{t-\tau_{\alpha_t}}})}{F_{\theta_t}(w_t, Y_t) - F_{\theta_t}(w_{t- \tau_{\alpha_t}}, Y_t) + \bar F_{\theta_t}(w_{t- \tau_{\alpha_t}}) - \bar F_{\theta_t}(w_t)}}_{T_{32}}\\
    &+\underbrace{\indot{\nabla M(w_{t-\tau_{\alpha_t}} - w^*_{\theta_{t-\tau_{\alpha_t}}})}{F_{\theta_t}(w_{t- \tau_{\alpha_t}}, Y_t) - \bar F_{\theta_t}(w_{t- \tau_{\alpha_t}})}}_{T_{33}},
\end{align}
where
\begin{align}
  \label{eq definition of tau alpha t}
    \tau_{\alpha_t} \doteq \min\qty{n \geq 0 \mid C_0 \tau^n \leq \alpha_t},
\end{align}
and $C_0$ and $\tau$ are defined in Lemma~\ref{lem uniform mixing}.
$\tau_{\alpha_t}$ denotes the number of steps the chain needs to mix to an accuracy of $\alpha_t$.
$T_{31}$ and $T_{32}$ can be bounded similarly to \citet{chen2021lyapunov}.
The bound for $T_{33}$ is however significantly different.
We decompose $T_{33}$ as
\begin{align}
  T_{33} =& \indot{\nabla M(w_{t-\tau_{\alpha_t}} - w^*_{\theta_{t-\tau_{\alpha_t}}})}{F_{\theta_t}(w_{t- \tau_{\alpha_t}}, Y_t) - \bar F_{\theta_t}(w_{t- \tau_{\alpha_t}})} \\
  =& \underbrace{\indot{\nabla M(w_{t-\tau_{\alpha_t}} - w^*_{\theta_{t-\tau_{\alpha_t}}})}{F_{\theta_{t - \tau_{\alpha_t}}}(w_{t- \tau_{\alpha_t}}, \tilde Y_t) - \bar F_{\theta_{t-\tau_{\alpha_t}}}(w_{t- \tau_{\alpha_t}})}}_{T_{331}} + \\
  & \underbrace{\indot{\nabla M(w_{t-\tau_{\alpha_t}} - w^*_{\theta_{t-\tau_{\alpha_t}}})}{F_{\theta_{t - \tau_{\alpha_t}}}(w_{t- \tau_{\alpha_t}}, Y_t) -F_{\theta_{t - \tau_{\alpha_t}}}(w_{t- \tau_{\alpha_t}}, \tilde Y_t)}}_{T_{332}} + \\
  & \underbrace{\indot{\nabla M(w_{t-\tau_{\alpha_t}} - w^*_{\theta_{t-\tau_{\alpha_t}}})}{F_{\theta_{t}}(w_{t- \tau_{\alpha_t}}, Y_t) -F_{\theta_{t - \tau_{\alpha_t}}}(w_{t- \tau_{\alpha_t}}, Y_t)}}_{T_{333}} + \\
  & \underbrace{\indot{\nabla M(w_{t-\tau_{\alpha_t}} - w^*_{\theta_{t-\tau_{\alpha_t}}})}{\bar F_{\theta_{t-\tau_{\alpha_t}}}(w_{t- \tau_{\alpha_t}}) - \bar F_{\theta_{t}}(w_{t- \tau_{\alpha_t}})}}_{T_{334}}.
\end{align}
Here $\qty{\tilde Y_t}$ is an auxiliary chain inspired from \citet{zou2019finite}.
Before time $t - \tau_{\alpha_t} - 1$,
$\qty{\tilde Y_t}$ is exactly the same as $\qty{Y_t}$.
After time $t - \tau_{\alpha_t} - 1$,
$\tilde Y_t$ evolves according to the \emph{fixed} kernel $P_{\theta_{t-\tau_{\alpha_t}}}$
while $Y_t$ evolves according the changing kernel $P_{\theta_{t - \tau_{\alpha_t}}}, P_{\theta_{k - \tau_{\alpha_t} + 1}}, \dots$.
\begin{align}
  \qty{\tilde Y_t}&: \dots \to Y_{t-\tau_{\alpha_t}-1} \underbrace{\to}_{P_{\theta_{t-\tau_{\alpha_t}}}} Y_{t-\tau_{\alpha_t}} \underbrace{\to}_{P_{\theta_{t-\tau_{\alpha_t}}}} \tilde Y_{t-\tau_{\alpha_t}+1} \underbrace{\to}_{P_{\theta_{t-\tau_{\alpha_t}}}} \tilde Y_{t-\tau_{\alpha_t}+2} \to \dots \\
  \qty{Y_t}&: \dots \to Y_{t-\tau_{\alpha_t}-1} \underbrace{\to}_{P_{\theta_{t-\tau_{\alpha_t}}}} Y_{t-\tau_{\alpha_t}} \underbrace{\to}_{P_{\theta_{t-\tau_{\alpha_t}+1}}} Y_{t-\tau_{\alpha_t}+1} \underbrace{\to}_{P_{\theta_{t-\tau_{\alpha_t}+2}}} Y_{t-\tau_{\alpha_t}+2} \to \dots.
\end{align}

We are now ready to present bounds for each of the above terms.
To begin,
we define some shorthand and study their properties:
\begin{align}
  \alpha_{t_1, t_2} &\doteq \sum_{t=t_1}^{t_2} \alpha_t, \quad \beta_{t_1, t_2} \doteq \sum_{t=t_1}^{t_2} \beta_t  \\
  \label{eq shorthand a and b}
  A &\doteq U_\epsilon + L_F + 1, \quad B \doteq U_F + U_\epsilon', \quad C \doteq AU_w + B + A + A(1 + U_F' + U_F'').
\end{align}
\begin{restatable}{lemma}{lemlearningrates}
  \label{lem learning rates}
  For sufficiently large $t_0$,
  \begin{align}
    \tau_{\alpha_t} &= \fO(\log (t+t_0)), \quad \alpha_{t-\tau_{\alpha_t}, t-1} = \fO\left(\frac{\log (t+t_0)}{(t+t_0)^{\epsilon_\alpha}}\right), \\
    \beta_{t-\tau_{\alpha_t}, t-1} &= \fO\left(\frac{\log (t+t_0)}{(t+t_0)^{\epsilon_\alpha}}\right), \quad \frac{\alpha_t \alpha_{t-\tau_{\alpha_t}, t-1}}{\beta_t} = \fO\left(\frac{\log (t+t_0)}{(t+t_0)^{2\epsilon_\alpha - \epsilon_\beta}}\right).
  \end{align}
\end{restatable}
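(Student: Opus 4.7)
The plan is to prove the four bounds sequentially, since each subsequent bound leverages the previous ones. I would proceed essentially by elementary calculus, exploiting the polynomial form of $\alpha_t$ and $\beta_t$ prescribed in Assumption~\ref{assu twotimescale}.

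First, I would bound $\tau_{\alpha_t}$. By the definition in \eqref{eq definition of tau alpha t}, $\tau_{\alpha_t}$ is the smallest nonnegative integer $n$ with $C_0 \tau^n \leq \alpha_t$, equivalently $n \geq \log(C_0/\alpha_t)/\log(1/\tau)$. Substituting $\alpha_t = \alpha/(t+t_0)^{\epsilon_\alpha}$ gives
\begin{align}
\tau_{\alpha_t} \leq 1 + \frac{\log(C_0/\alpha) + \epsilon_\alpha \log(t+t_0)}{\log(1/\tau)} = \fO(\log(t+t_0)),
\end{align}
where the hidden constants depend only on $C_0, \alpha, \epsilon_\alpha, \tau$.

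Second, for $\alpha_{t-\tau_{\alpha_t}, t-1}$, I would use that $\alpha_k$ is decreasing and bound the sum by $\tau_{\alpha_t}$ copies of its largest term $\alpha_{t-\tau_{\alpha_t}}$. The crux is then relating $\alpha_{t-\tau_{\alpha_t}}$ back to $\alpha_t$. Since $\tau_{\alpha_t} = \fO(\log(t+t_0))$, for $t_0$ sufficiently large we have $\tau_{\alpha_t} \leq (t+t_0)/2$, so $(t-\tau_{\alpha_t}+t_0) \geq (t+t_0)/2$, and consequently $\alpha_{t-\tau_{\alpha_t}} \leq 2^{\epsilon_\alpha}\alpha_t$. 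This yields $\alpha_{t-\tau_{\alpha_t},t-1} \leq \tau_{\alpha_t} \cdot 2^{\epsilon_\alpha}\alpha_t = \fO(\log(t+t_0)/(t+t_0)^{\epsilon_\alpha})$. The same argument, applied to $\beta_k$, gives $\beta_{t-\tau_{\alpha_t},t-1} = \fO(\log(t+t_0)/(t+t_0)^{\epsilon_\beta})$; using $\epsilon_\beta > \epsilon_\alpha$ (hence $(t+t_0)^{-\epsilon_\beta} \leq (t+t_0)^{-\epsilon_\alpha}$) produces the stated weaker bound with $\epsilon_\alpha$ in the exponent.

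Third, for the ratio $\alpha_t \alpha_{t-\tau_{\alpha_t}, t-1}/\beta_t$, I would simply multiply the previous estimates and substitute. Plugging in the explicit forms yields a factor $(t+t_0)^{-\epsilon_\alpha} \cdot (t+t_0)^{-\epsilon_\alpha} \cdot (t+t_0)^{\epsilon_\beta} \cdot \log(t+t_0)$, giving the stated $\fO(\log(t+t_0)/(t+t_0)^{2\epsilon_\alpha-\epsilon_\beta})$ bound.

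No step should present a serious obstacle; the only subtlety is choosing $t_0$ large enough that $\tau_{\alpha_t} \leq (t+t_0)/2$ uniformly in $t \geq 0$, which is possible because $\tau_{\alpha_t}$ grows only logarithmically while $t+t_0$ grows at least as $t_0$. I would state explicitly the required lower bound on $t_0$ (in terms of $C_0, \alpha, \epsilon_\alpha, \tau$) for completeness, so that this quantitative requirement can be aggregated with the other sufficient-largeness conditions on $t_0$ used elsewhere in the proof of Theorem~\ref{thm sa convergence}.
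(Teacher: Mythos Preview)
Your proposal is correct and matches the paper's own proof essentially step for step: derive $\tau_{\alpha_t}$ from the definition, bound the sum by $\tau_{\alpha_t}\cdot\alpha_{t-\tau_{\alpha_t}}$, absorb the shift $t\to t-\tau_{\alpha_t}$ using that $\tau_{\alpha_t}$ is only logarithmic, and combine multiplicatively for the last estimate. The only cosmetic difference is that for $\beta_{t-\tau_{\alpha_t},t-1}$ the paper simply invokes $\beta_k<\alpha_k$ (from $\beta<\alpha$ and $\epsilon_\beta>\epsilon_\alpha$) to conclude $\beta_{t-\tau_{\alpha_t},t-1}<\alpha_{t-\tau_{\alpha_t},t-1}$, whereas you rederive the sum bound and then weaken the exponent---both routes are immediate.
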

\noindent
The proof of Lemma~\ref{lem learning rates} is provided in Section~\ref{sec proof lem learning rates}.
Lemma~\ref{lem learning rates} asserts that we can select a $t_0$ sufficiently large such that
\begin{align}
  \alpha_{t-\tau_{\alpha_t}, t-1} \leq \frac{1}{4A}
\end{align}
holds for all $t$.
This condition is crucial for Lemma~\ref{lem bound of xk diff},
which plays an important role in the following bounds.

\begin{restatable}{lemma}{lemboundtone}
  \label{lem bound t1}
  (Bound of $T_1$)
  \begin{align}
      T_1 
      \leq \frac{L_w L_\theta \beta_t}{l_{cm}} \norm{w_t - w^*_{\theta_t}}_m.
  \end{align}
\end{restatable}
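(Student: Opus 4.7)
The plan is to bound $T_1 = \indot{\nabla M(w_t - w^*_{\theta_t})}{w^*_{\theta_t} - w^*_{\theta_{t+1}}}$ by chaining a handful of standard inequalities, each invoking one of the already-established properties of $M$ or one of the stated assumptions. Since every ingredient is in place, this should be one of the easier bounds in the proof of Theorem~\ref{thm sa convergence}; there is no genuine obstacle.

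First I would apply the generalized Cauchy--Schwarz (H\"older) inequality in the $\norm{\cdot}_m / \norm{\cdot}_m^*$ duality to obtain
\begin{align}
  T_1 \leq \norm{\nabla M(w_t - w^*_{\theta_t})}_m^* \, \norm{w^*_{\theta_t} - w^*_{\theta_{t+1}}}_m.
\end{align}
By Lemma~\ref{lem property of M}(ii) we have $M(w) = \tfrac{1}{2}\norm{w}_m^2$, so the standard identity for gradients of squared norms gives $\norm{\nabla M(w)}_m^* = \norm{w}_m$. Hence the first factor is exactly $\norm{w_t - w^*_{\theta_t}}_m$, which is the quantity that appears on the right-hand side of the claimed bound.

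It remains to bound the second factor $\norm{w^*_{\theta_t} - w^*_{\theta_{t+1}}}_m$. I would first pass from $\norm{\cdot}_m$ to $\norm{\cdot}_c$ using Lemma~\ref{lem property of M}(iii), which gives $\norm{\cdot}_m \leq \tfrac{1}{l_{cm}}\norm{\cdot}_c$. Next, the Lipschitz continuity of the fixed point in Assumption~\ref{assu regularization}(v) yields $\norm{w^*_{\theta_t} - w^*_{\theta_{t+1}}}_c \leq L_w \norm{\theta_t - \theta_{t+1}}_c$, and finally Assumption~\ref{assu twotimescale} (condition~\eqref{eq konda actor update}) bounds $\norm{\theta_{t+1} - \theta_t}_c \leq \beta_t L_\theta$ almost surely.

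Multiplying these together produces
\begin{align}
  T_1 \leq \norm{w_t - w^*_{\theta_t}}_m \cdot \frac{L_w L_\theta \beta_t}{l_{cm}},
\end{align}
which is precisely the claimed bound. The only minor care needed is to invoke Assumption~\ref{assu twotimescale} in its almost-sure form so that the inequality holds pathwise, matching the pathwise nature of the expansion~\eqref{eq expansion of M}.
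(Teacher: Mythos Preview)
Your proposal is correct and follows essentially the same approach as the paper. The paper packages the H\"older step and the dual-norm bound together as Lemma~\ref{lem gradient of M}, which directly gives $\indot{\nabla M(x)}{x'} \leq \norm{x}_m \norm{x'}_m$; you unfold that lemma inline by first applying H\"older and then invoking $\norm{\nabla M(w)}_m^* = \norm{w}_m$, and the remainder of the argument (Lemma~\ref{lem property of M}(iii), Assumption~\ref{assu regularization}(v), Assumption~\ref{assu twotimescale}) is identical.
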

\noindent
The proof of Lemma~\ref{lem bound t1} is provided in Section~\ref{sec proof lem bound t1}.

\begin{restatable}{lemma}{lemboundttwo}
  \label{lem bound t2}
(Bound of $T_2$)
\begin{align}
  T_2 \leq -(1 - \kappa\frac{u_{cm}}{l_{cm}}) \norm{w_t - w^*_{\theta_t}}_m^2.
\end{align}
\end{restatable}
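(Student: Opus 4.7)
The plan is to exploit the fact that $w^*_{\theta_t}$ is the fixed point of $\bar F_{\theta_t}$, together with the uniform contraction property of $\bar F_{\theta_t}$ in $\norm{\cdot}_c$ and the explicit quadratic form $M(w)=\tfrac{1}{2}\norm{w}_m^2$.

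First I would split the inner product by writing
\begin{align}
\bar F_{\theta_t}(w_t) - w_t = \bigl(\bar F_{\theta_t}(w_t) - w^*_{\theta_t}\bigr) - \bigl(w_t - w^*_{\theta_t}\bigr),
\end{align}
which is legitimate because $\bar F_{\theta_t}(w^*_{\theta_t}) = w^*_{\theta_t}$ by Assumption~\ref{assu uniform contraction}. Then $T_2$ decomposes as
\begin{align}
T_2 = \indot{\nabla M(w_t - w^*_{\theta_t})}{\bar F_{\theta_t}(w_t) - w^*_{\theta_t}} - \indot{\nabla M(w_t - w^*_{\theta_t})}{w_t - w^*_{\theta_t}}.
\end{align}

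Next I would handle the two pieces using the identity $M(w)=\tfrac{1}{2}\norm{w}_m^2$ from Lemma~\ref{lem property of M}. For the second piece, Euler's identity for $2$-homogeneous functions gives $\indot{\nabla M(v)}{v} = 2M(v) = \norm{v}_m^2$. For the first piece, the standard relationship between the squared-norm function and its dual yields $\norm{\nabla M(v)}_m^* = \norm{v}_m$, so by the generalized Cauchy--Schwarz (Hölder) inequality
\begin{align}
\indot{\nabla M(w_t - w^*_{\theta_t})}{\bar F_{\theta_t}(w_t) - w^*_{\theta_t}} \leq \norm{w_t - w^*_{\theta_t}}_m \, \norm{\bar F_{\theta_t}(w_t) - w^*_{\theta_t}}_m.
\end{align}

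I would then convert the $\norm{\cdot}_m$ norm on the right to $\norm{\cdot}_c$ via Lemma~\ref{lem property of M}(iii), apply uniform contraction of $\bar F_{\theta_t}$ in $\norm{\cdot}_c$ (using $w^*_{\theta_t}$ as the second argument),
\begin{align}
\norm{\bar F_{\theta_t}(w_t) - w^*_{\theta_t}}_c = \norm{\bar F_{\theta_t}(w_t) - \bar F_{\theta_t}(w^*_{\theta_t})}_c \leq \kappa \norm{w_t - w^*_{\theta_t}}_c,
\end{align}
and then convert back to $\norm{\cdot}_m$. The composition of the three inequalities turns the factor $\norm{\bar F_{\theta_t}(w_t) - w^*_{\theta_t}}_m$ into $(\kappa u_{cm}/l_{cm})\norm{w_t - w^*_{\theta_t}}_m$. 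Combining with the Euler-identity term gives
\begin{align}
T_2 \leq \tfrac{\kappa u_{cm}}{l_{cm}}\norm{w_t - w^*_{\theta_t}}_m^2 - \norm{w_t - w^*_{\theta_t}}_m^2,
\end{align}
as claimed.

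The only mild subtlety is justifying the two gradient identities $\norm{\nabla M(v)}_m^* = \norm{v}_m$ and $\indot{\nabla M(v)}{v} = \norm{v}_m^2$; both are routine consequences of $M$ being the squared half of a norm (via Fenchel conjugacy and Euler's theorem on positively homogeneous functions) and can be cited from \citet{beck2017first}. I do not anticipate a genuine obstacle here; the proof is essentially one Cauchy--Schwarz step followed by the uniform contraction plus a norm-equivalence conversion.
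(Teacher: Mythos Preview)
Your proposal is correct and follows essentially the same approach as the paper: the same decomposition via the fixed-point identity, the same Cauchy--Schwarz/H\"older bound on the first piece followed by contraction in $\norm{\cdot}_c$ and norm conversion, and the same treatment of the second piece. The only cosmetic difference is that you justify the two gradient facts $\indot{\nabla M(v)}{v}=\norm{v}_m^2$ and $\norm{\nabla M(v)}_m^*=\norm{v}_m$ via Euler's theorem and Fenchel duality, whereas the paper packages both as inequalities in a separate lemma (Lemma~\ref{lem gradient of M}) proved through the subgradient representation $\nabla M(x)=\norm{x}_m v_x$; the substance is identical.
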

\noindent
The proof of Lemma~\ref{lem bound t2} is provided in Section~\ref{sec proof lem bound t2}.

\begin{restatable}{lemma}{lemboundoftthreeone}
  \label{lem bound of t31}
  (Bound of $T_{31}$)
  \begin{align}
      T_{31} \leq \frac{8L(L_wL_\theta + 1)  \alpha_{t-\tau_{\alpha_t}, t-1}}{\xi l_{cs}^2} \left(u_{cm}^2A^2\norm{w_t - w^*_{\theta_t}}_m^2 + C^2\right).
  \end{align}
\end{restatable}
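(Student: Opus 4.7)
}
The plan is to exploit the $\frac{L}{\xi}$-smoothness of the Lyapunov function $M$ to convert the $\nabla M$-difference in $T_{31}$ into a Lipschitz bound, then apply generalized Cauchy--Schwarz with dual norms, and finally control each resulting factor using the Lipschitz/boundedness assumptions on $F_\theta$ together with an estimate on how much both the iterate $w_k$ and the moving fixed point $w^*_{\theta_k}$ drift over the mixing window $[t-\tau_{\alpha_t},\,t]$.

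The first step is to use Lemma~\ref{lem property of M}(i), which says $\nabla M$ is $\frac{L}{\xi}$-Lipschitz with respect to $\norm{\cdot}_s$; combined with generalized Cauchy--Schwarz this gives
\begin{align}
T_{31}\le \frac{L}{\xi}\,\norm{(w_t-w^*_{\theta_t})-(w_{t-\tau_{\alpha_t}}-w^*_{\theta_{t-\tau_{\alpha_t}}})}_s\cdot\norm{F_{\theta_t}(w_t,Y_t)-\bar F_{\theta_t}(w_t)}_s.
\end{align}
The norm equivalence $\norm{\cdot}_s\le l_{cs}^{-1}\norm{\cdot}_c$ then lets us replace both $\norm{\cdot}_s$ by $\norm{\cdot}_c$ at the cost of the factor $l_{cs}^{-2}$. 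For the operator-difference factor I would apply Assumptions~\ref{assu regularization}(i) and (iii) together with the triangle inequality (and the fact that $\bar F_\theta$ inherits these constants via averaging) to obtain $\norm{F_{\theta_t}(w_t,Y_t)-\bar F_{\theta_t}(w_t)}_c\le 2(L_F\norm{w_t}_c+U_F)$, and then use $\norm{w_t}_c\le\norm{w_t-w^*_{\theta_t}}_c+U_w$ from Assumption~\ref{assu regularization}(vi) to trade absolute values for distances from the fixed point.

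For the iterate-difference factor I would split by the triangle inequality into $\norm{w_t-w_{t-\tau_{\alpha_t}}}_c+\norm{w^*_{\theta_t}-w^*_{\theta_{t-\tau_{\alpha_t}}}}_c$. The first summand is exactly what the auxiliary Lemma~\ref{lem bound of xk diff} controls: iterating the one-step growth bound $\norm{w_{k+1}-w_k}_c\le \alpha_k(A\norm{w_k-w^*_{\theta_k}}_c+AU_w+B)$ over $k\in[t-\tau_{\alpha_t},t-1]$ and invoking the smallness condition $\alpha_{t-\tau_{\alpha_t},t-1}\le\frac{1}{4A}$, which Lemma~\ref{lem learning rates} guarantees for $t_0$ large enough, produces a bound of order $\alpha_{t-\tau_{\alpha_t},t-1}\bigl(\norm{w_t-w^*_{\theta_t}}_c+\text{const}\bigr)$. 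The second summand is telescoped using Assumption~\ref{assu regularization}(v) and the actor-step bound~\eqref{eq konda actor update}:
\begin{align}
\norm{w^*_{\theta_t}-w^*_{\theta_{t-\tau_{\alpha_t}}}}_c\le L_w\sum_{k=t-\tau_{\alpha_t}}^{t-1}\norm{\theta_{k+1}-\theta_k}_c\le L_w L_\theta\,\beta_{t-\tau_{\alpha_t},t-1}\le L_w L_\theta\,\alpha_{t-\tau_{\alpha_t},t-1},
\end{align}
the last inequality using $\beta_t<\alpha_t$ from Assumption~\ref{assu twotimescale}. Adding the two summands produces the combined prefactor $(L_wL_\theta+1)\alpha_{t-\tau_{\alpha_t},t-1}$ visible in the claim.

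Finally, I would multiply the two $\norm{\cdot}_c$ bounds, apply $(x+y)^2\le 2x^2+2y^2$ and the elementary inequality $2xy\le x^2+y^2$ to separate the $\norm{w_t-w^*_{\theta_t}}_c^2$ piece from the additive constants, and convert $\norm{\cdot}_c\le u_{cm}\norm{\cdot}_m$ on the quadratic piece to obtain the $u_{cm}^2A^2\norm{w_t-w^*_{\theta_t}}_m^2$ term. Collecting everything additive into a single $C^2$ with $C$ as defined in~\eqref{eq shorthand a and b} completes the bound. The main obstacle is not analytic but bookkeeping: accurately tracking how the $A+A(1+U_F'+U_F'')$ contributions inside $C$ arise, which I expect requires using Assumptions~\ref{assu regularization}(ii) and (iv) to replace the intermediate $F_{\theta_k}$ and $\bar F_{\theta_k}$ by their $\theta_t$-counterparts when iterating the one-step growth bound, so that a single uniform growth inequality can be applied throughout the mixing window.
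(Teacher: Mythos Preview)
Your approach is correct and matches the paper's proof almost exactly: apply H\"older with dual norms, use the $\frac{L}{\xi}$-smoothness of $M$ on the $\nabla M$-difference, split that difference into the iterate drift (handled by Lemma~\ref{lem bound of xk diff}) plus the fixed-point drift (handled by Assumption~\ref{assu regularization}(v) and~\eqref{eq konda actor update}), bound the $F-\bar F$ factor linearly in $\norm{w_t}_c$, and finish with $(x+y)^2\le 2x^2+2y^2$ and $\norm{\cdot}_c\le u_{cm}\norm{\cdot}_m$.

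One clarification on your closing remark: you will \emph{not} need Assumptions~\ref{assu regularization}(ii) or (iv) here. The one-step growth estimate in Lemma~\ref{lem bound of xk diff} already holds uniformly in $\theta_k$ because the bound $\norm{F_{\theta_k}(w,y)}_c\le U_F+L_F\norm{w}_c$ from parts (i) and (iii) is $\theta$-independent; there is no need to replace $F_{\theta_k}$ by $F_{\theta_t}$ along the window. The constants $U_F'$ and $U_F''$ inside the umbrella constant $C$ in~\eqref{eq shorthand a and b} are there only so that the \emph{same} $C$ can absorb the additive terms arising in $T_{333}$ and $T_{334}$ (where (ii) and (iv) genuinely enter); for $T_{31}$ the additive piece is just $AU_w+B+1\le C$. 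So the bookkeeping is simpler than you anticipate.
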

\noindent
The proof of Lemma~\ref{lem bound of t31} is provided in Section~\ref{sec proof lem bound t31}.

\begin{restatable}{lemma}{lemboundtthreetwo}
  \label{lem bound t32}
  (Bound of $T_{32}$)
  \begin{align}
      T_{32} \leq \frac{32 L  \alpha_{t-\tau_{\alpha_t}, t-1}(1 + L_wL_\theta \beta_{t-\tau_{\alpha_t}, t-1})}{\xi l_{cs}^2} \left( u_{cm}^2A^2\norm{w_t - w^*_{\theta_t}}_m^2 + C^2 \right).
  \end{align}
\end{restatable}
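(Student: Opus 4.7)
The plan is to bound $T_{32}$ by a standard Cauchy–Schwarz + Lipschitz argument, except that the drift $\|w_t - w_{t-\tau_{\alpha_t}}\|_c$ and the shift between the fixed points $w^*_{\theta_t}$ and $w^*_{\theta_{t-\tau_{\alpha_t}}}$ must both be controlled, which is exactly where the factor $(1 + L_w L_\theta \beta_{t-\tau_{\alpha_t}, t-1})$ will emerge. First I would apply the generalized Cauchy–Schwarz inequality in the dual pair $(\|\cdot\|_s, \|\cdot\|_s^*)$, giving
\begin{equation*}
T_{32} \le \bigl\|\nabla M(w_{t-\tau_{\alpha_t}} - w^*_{\theta_{t-\tau_{\alpha_t}}})\bigr\|_s^{*}\; \bigl\|F_{\theta_t}(w_t, Y_t) - F_{\theta_t}(w_{t-\tau_{\alpha_t}}, Y_t) + \bar F_{\theta_t}(w_{t-\tau_{\alpha_t}}) - \bar F_{\theta_t}(w_t)\bigr\|_s.
\end{equation*}
Then I would use the $(L/\xi)$-smoothness of $M$ from Lemma~\ref{lem property of M}(i) together with the norm equivalences $\|\cdot\|_s \le \|\cdot\|_c / l_{cs}$ and $\|\cdot\|_c \le u_{cm}\|\cdot\|_m$ to upper bound the gradient factor by $(L u_{cm})/(\xi l_{cs}^2)\cdot \|w_{t-\tau_{\alpha_t}} - w^*_{\theta_{t-\tau_{\alpha_t}}}\|_m$.

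Next I would bound the second factor. By Assumption~\ref{assu regularization}(i), $F_{\theta_t}(\cdot, Y_t)$ is $L_F$-Lipschitz in $\|\cdot\|_c$, and since $\bar F_{\theta_t}$ is a convex combination of such maps, it too is $L_F$-Lipschitz. Hence by the triangle inequality and one more equivalence $\|\cdot\|_s \le \|\cdot\|_c / l_{cs}$, the second factor is at most $(2L_F/l_{cs})\,\|w_t - w_{t-\tau_{\alpha_t}}\|_c$. At this point the inequality reads
\begin{equation*}
T_{32} \;\le\; \frac{2L L_F u_{cm}}{\xi l_{cs}^2}\,\bigl\|w_{t-\tau_{\alpha_t}} - w^*_{\theta_{t-\tau_{\alpha_t}}}\bigr\|_m\, \bigl\|w_t - w_{t-\tau_{\alpha_t}}\bigr\|_c.
\end{equation*}

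The third step is to invoke the (presumed) small-step estimate of Lemma~\ref{lem bound of xk diff} that controls $\|w_t - w_{t-\tau_{\alpha_t}}\|_c$ by $O(\alpha_{t-\tau_{\alpha_t},t-1})\cdot(\|w_{t-\tau_{\alpha_t}}\|_c + B)$ using the definitions in \eqref{eq shorthand a and b}, and then to rewrite $\|w_{t-\tau_{\alpha_t}}\|_c$ as $\|w_{t-\tau_{\alpha_t}} - w^*_{\theta_{t-\tau_{\alpha_t}}}\|_c + \|w^*_{\theta_{t-\tau_{\alpha_t}}}\|_c \le u_{cm}\|w_{t-\tau_{\alpha_t}} - w^*_{\theta_{t-\tau_{\alpha_t}}}\|_m + U_w$ using Assumption~\ref{assu regularization}(vi). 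Similarly I would pass from $w^*_{\theta_{t-\tau_{\alpha_t}}}$ to $w^*_{\theta_t}$ using Assumption~\ref{assu regularization}(v) and the actor drift bound \eqref{eq konda actor update}, which gives
\begin{equation*}
\bigl\|w^*_{\theta_t} - w^*_{\theta_{t-\tau_{\alpha_t}}}\bigr\|_c \le L_w L_\theta \beta_{t-\tau_{\alpha_t},t-1},
\end{equation*}
and hence $\|w_{t-\tau_{\alpha_t}} - w^*_{\theta_{t-\tau_{\alpha_t}}}\|_m \le \|w_t - w^*_{\theta_t}\|_m + O(\alpha_{t-\tau_{\alpha_t},t-1}) + L_w L_\theta \beta_{t-\tau_{\alpha_t},t-1}/l_{cm}$. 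This is exactly where the factor $(1 + L_w L_\theta \beta_{t-\tau_{\alpha_t},t-1})$ enters.

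Finally I would combine the three estimates, absorb cross-terms via $2ab \le a^2 + b^2$, and use $A = U_\epsilon + L_F + 1 \ge L_F$ together with $C \ge AU_w + B$ from \eqref{eq shorthand a and b} to collect the constants. The $32$ in the stated bound is the budget I would expect to spend: a factor $2$ from the triangle inequality on the $F_{\theta_t}/\bar F_{\theta_t}$ difference, a factor $2$ from passing to the shifted fixed point, and further factors from the square completions on both factors in the product. The main obstacle is bookkeeping: one must ensure that each norm equivalence ($l_{cs}, u_{cm}$) and each Lipschitz constant ($L_F$, $L_w$) lands in the right place and gets replaced by $A$ or $C$ at the final step, and that no additional $\log t$ or $\alpha_{t-\tau_{\alpha_t},t-1}^2$ term survives; all the nontrivial dynamics are already encapsulated in Lemma~\ref{lem bound of xk diff} and Assumption~\ref{assu regularization}, so after that point the proof is an exercise in constant chasing.
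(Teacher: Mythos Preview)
Your proposal is correct and follows essentially the same route as the paper: Cauchy--Schwarz in the dual pair, the $(L/\xi)$-smoothness of $M$ (via $\nabla M(0)=0$) for the gradient factor, the $L_F$-Lipschitz property of both $F_{\theta_t}(\cdot,Y_t)$ and $\bar F_{\theta_t}$ for the second factor, and Lemma~\ref{lem bound of xk diff} together with Assumption~\ref{assu regularization}(v)--(vi) for the drift and fixed-point shift. The only cosmetic difference is that the paper re-centers everything at time $t$ already inside the gradient-norm bound (using the crude estimate \eqref{eq bound of xk diff3}, which carries no $\alpha$ factor, so that $\alpha_{t-\tau_{\alpha_t},t-1}$ enters solely through the second factor), whereas you keep things centered at $t-\tau_{\alpha_t}$ and re-center at the end; both orderings yield the stated bound after the same constant consolidation.
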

\noindent
The proof of Lemma~\ref{lem bound t32} is provided in Section~\ref{sec proof lem bound t32}.

\begin{restatable}{lemma}{lemboundtthreethreeone}
  \label{lem bound t331}
  (Bound of $T_{331}$)
\begin{align}
  \E\left[T_{331}\right]
  \leq & \frac{8L \alpha_t(1 + L_wL_\theta \beta_{t-\tau_{\alpha_t}, t-1})}{A\xi l_{cs}^2}\left(u_{cm}^2 A^2 \E \left[\norm{w_t - w^*_{\theta_t}}_m^2\right] + C^2\right).
\end{align}
\end{restatable}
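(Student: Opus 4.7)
The plan is to exploit that $\{\tilde Y_s\}_{s\geq t-\tau_{\alpha_t}}$ evolves under the \emph{frozen} kernel $P_{\theta_{t-\tau_{\alpha_t}}}$ for $\tau_{\alpha_t}$ steps, so that Lemma~\ref{lem uniform mixing} places its conditional law within total-variation error $C_0\tau^{\tau_{\alpha_t}}\leq\alpha_t$ of the invariant distribution $d_{\theta_{t-\tau_{\alpha_t}}}$, by the very definition of $\tau_{\alpha_t}$ in~\eqref{eq definition of tau alpha t}. First I would condition on $\fF_{t-\tau_{\alpha_t}}$ and invoke the tower property: since $w_{t-\tau_{\alpha_t}}$, $\theta_{t-\tau_{\alpha_t}}$, and $w^*_{\theta_{t-\tau_{\alpha_t}}}$ are all $\fF_{t-\tau_{\alpha_t}}$-measurable, only $\tilde Y_t$ remains random in the conditional expectation, so
\begin{align*}
\E[T_{331}\mid\fF_{t-\tau_{\alpha_t}}]=\indot{\nabla M(w_{t-\tau_{\alpha_t}}-w^*_{\theta_{t-\tau_{\alpha_t}}})}{\E[F_{\theta_{t-\tau_{\alpha_t}}}(w_{t-\tau_{\alpha_t}},\tilde Y_t)\mid\fF_{t-\tau_{\alpha_t}}]-\bar F_{\theta_{t-\tau_{\alpha_t}}}(w_{t-\tau_{\alpha_t}})}.
\end{align*}
Combining the mixing bound with $\sup_y\norm{F_\theta(w,y)}_c\leq L_F\norm{w}_c+U_F$ (from Assumption~\ref{assu regularization}(i) and (iii)) controls the second slot by $2\alpha_t(L_F\norm{w_{t-\tau_{\alpha_t}}}_c+U_F)$ in $\norm{\cdot}_c$.

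Next I would turn the inner product into a product of $c$-norms via smoothness of $M$. By Lemma~\ref{lem property of M}(i), $M$ is $\tfrac{L}{\xi}$-smooth in $\norm{\cdot}_s$ with $\nabla M(0)=0$, so $\norm{\nabla M(x)}^*_s\leq\tfrac{L}{\xi}\norm{x}_s$, and norm equivalence $\norm{\cdot}_s\leq l_{cs}^{-1}\norm{\cdot}_c$ yields $\abs{\indot{\nabla M(x)}{y}}\leq\tfrac{L}{\xi l_{cs}^2}\norm{x}_c\norm{y}_c$. Applying this with $x=w_{t-\tau_{\alpha_t}}-w^*_{\theta_{t-\tau_{\alpha_t}}}$ and $y$ the mixing residual, I would pull everything back from time $t-\tau_{\alpha_t}$ to time $t$: Assumption~\ref{assu regularization}(v) combined with~\eqref{eq konda actor update} gives $\norm{w^*_{\theta_{t-\tau_{\alpha_t}}}-w^*_{\theta_t}}_c\leq L_wL_\theta\beta_{t-\tau_{\alpha_t},t-1}$, which is exactly where the $(1+L_wL_\theta\beta_{t-\tau_{\alpha_t},t-1})$ factor in the statement originates, and the same iterate-drift lemma used for $T_{31}$ and $T_{32}$ (whose applicability is guaranteed by the choice $\alpha_{t-\tau_{\alpha_t},t-1}\leq\tfrac{1}{4A}$ via Lemma~\ref{lem learning rates}) controls $\norm{w_{t-\tau_{\alpha_t}}-w_t}_c$ linearly in $A\alpha_{t-\tau_{\alpha_t},t-1}$ and $\norm{w_t-w^*_{\theta_t}}_c$. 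Converting from $\norm{\cdot}_c$ to $\norm{\cdot}_m$ via Lemma~\ref{lem property of M}(iii) and grouping the bounded constants $U_w, U_F, U_F', U_F''$ into $C$ from~\eqref{eq shorthand a and b} produces a product of the form (linear in $\norm{w_t-w^*_{\theta_t}}_m$) times (linear in $C$), with overall prefactor $\tfrac{\alpha_t(1+L_wL_\theta\beta_{t-\tau_{\alpha_t},t-1})}{\xi l_{cs}^2}$.

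Finally I would take outer expectation and apply an AM-GM step $ab\leq\tfrac{1}{2A}a^2+\tfrac{A}{2}b^2$, with scale chosen so that one factor produces $u_{cm}^2A^2\E\norm{w_t-w^*_{\theta_t}}_m^2$ and the other gives the additive $C^2$, which is what accounts for the $\tfrac{1}{A}$ in the stated prefactor. The main technical obstacle is careful bookkeeping: ensuring that exactly one factor of $\alpha_t$ (the mixing error) escapes the quadratic bracket, while the iterate drift over the window $[t-\tau_{\alpha_t},t]$ contributes only the $\beta_{t-\tau_{\alpha_t},t-1}$ correction; this is what allows the overall contribution of $\alpha_t T_{331}$ in~\eqref{eq expansion of M} to be of order $\alpha_t^2$ per step and ultimately be absorbable against the negative drift from $T_2$.
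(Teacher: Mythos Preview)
Your plan is correct and follows essentially the same route as the paper: condition on the past at time $t-\tau_{\alpha_t}$, use the mixing definition of $\tau_{\alpha_t}$ to make the second slot of the inner product of order $\alpha_t\cdot(L_F\norm{w_{t-\tau_{\alpha_t}}}_c+U_F)$, bound $\norm{\nabla M(\cdot)}_s^*$ via smoothness and $\nabla M(0)=0$, and then shift everything from time $t-\tau_{\alpha_t}$ to time $t$ using the iterate-drift lemma and Assumption~\ref{assu regularization}(v).

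One small discrepancy worth flagging: the $\tfrac{1}{A}$ in the stated prefactor does not arise in the paper from an AM--GM splitting. Both the $\nabla M$ factor and the $F$ factor end up affine in $\norm{w_t-w^*_{\theta_t}}_c$ with comparable constants; the paper simply bounds the product of the two affine factors by the square of the larger one and then applies $(x+y)^2\leq 2x^2+2y^2$. The $\tfrac{1}{A}$ appears because the bound on $\norm{\nabla M(w_{t-\tau_{\alpha_t}}-w^*_{\theta_{t-\tau_{\alpha_t}}})}^*_s$ (reused from the $T_{32}$ computation) carries a $\tfrac{B}{A}$ term from Lemma~\ref{lem bound of xk diff}; multiplying through by $A$ to match the other factor pulls a global $\tfrac{1}{A}$ out front. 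Your AM--GM route would also close the argument, but with possibly different absolute constants, so if you want to reproduce the exact inequality in the statement you should instead square the larger affine factor as the paper does.
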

\noindent
The proof of Lemma~\ref{lem bound t331} is provided in Section~\ref{sec proof lem bound t331}.

\begin{restatable}{lemma}{lemboundtthreethreetwo}
  \label{lem bound t332}
  (Bound of $T_{332}$)
  \begin{align}
      \E\left[T_{332}\right]
      \leq \frac{8 \ny L_P L_\theta \sum_{j=t-\tau_{\alpha_t}}^{t-1}\beta_{t-\tau_{\alpha_t}, j} L(1 + L_wL_\theta \beta_{t-\tau_{\alpha_t}, t-1})}{A \xi l_{cs}^2} \left(u_{cm}^2 A^2 \E\left[ \norm{w_t - w^*_{\theta_t}}_m^2 \right] + C^2\right).
  \end{align}
\end{restatable}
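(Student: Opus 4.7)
The plan is to bound $\mathbb{E}[T_{332}]$ by combining a Cauchy--Schwarz--type estimate with a coupling argument that controls the mismatch between the true chain $\{Y_s\}$ and the auxiliary chain $\{\tilde Y_s\}$, which agree at $s=t-\tau_{\alpha_t}$ but are driven thereafter by different one-step kernels. First I would apply Cauchy--Schwarz in the smooth norm $\norm{\cdot}_s$ together with the norm-equivalence $\norm{x}_m\le \norm{x}_c/l_{cm}\le u_{cs}\norm{x}_s \cdot \text{const}$, obtaining
\begin{align}
T_{332} \le \norm{\nabla M(w_{t-\tau_{\alpha_t}}-w^*_{\theta_{t-\tau_{\alpha_t}}})}_s^{*}\,\bigl\lVert F_{\theta_{t-\tau_{\alpha_t}}}(w_{t-\tau_{\alpha_t}},Y_t)-F_{\theta_{t-\tau_{\alpha_t}}}(w_{t-\tau_{\alpha_t}},\tilde Y_t)\bigr\rVert_s.
\end{align}
The gradient factor is controlled by $\frac{1}{\xi l_{cs}}\norm{w_{t-\tau_{\alpha_t}}-w^*_{\theta_{t-\tau_{\alpha_t}}}}_c$ using the $L/\xi$-smoothness and the fact that $\nabla M(0)=0$, while the second factor is uniformly bounded in $Y_t,\tilde Y_t$ by a linear function of $\norm{w_{t-\tau_{\alpha_t}}}_c$ (using Assumption~\ref{assu regularization}(i),(iii)), since $F_\theta(w,\cdot)$ differs across $y$ only through a $y$-indicator term.

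Next I would exploit the key distributional observation: when $Y_t=\tilde Y_t$, the integrand vanishes. Hence, conditioning on the $\sigma$-algebra at time $t-\tau_{\alpha_t}$, the conditional expectation of the $Y_t$-dependent difference is bounded by the total variation distance between the laws of $Y_t$ and $\tilde Y_t$ (given the common history), times the uniform bound from the previous paragraph. The central step is a telescoping coupling estimate: since the two chains share $Y_{t-\tau_{\alpha_t}}$ and evolve with kernels $P_{\theta_{s+1}}$ versus $P_{\theta_{t-\tau_{\alpha_t}}}$, a standard one-step-at-a-time coupling yields
\begin{align}
\bigl\lVert \mathrm{Law}(Y_t) - \mathrm{Law}(\tilde Y_t)\bigr\rVert_{TV}
\le \sum_{j=t-\tau_{\alpha_t}}^{t-1}\sup_{y,y'}\bigl|P_{\theta_{j+1}}(y,y')-P_{\theta_{t-\tau_{\alpha_t}}}(y,y')\bigr|\cdot\ny,
\end{align}
where the $\ny=|\fY|$ arises from converting an $\ell_\infty$ entrywise bound into a TV (sum) bound. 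Applying Assumption~\ref{assu regularization}(vii) and then iterating Assumption~\ref{assu twotimescale}'s bound $\norm{\theta_{j+1}-\theta_{t-\tau_{\alpha_t}}}_c\le L_\theta \beta_{t-\tau_{\alpha_t},j}$ produces exactly $\ny L_P L_\theta\sum_{j=t-\tau_{\alpha_t}}^{t-1}\beta_{t-\tau_{\alpha_t},j}$.

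The remaining task is to replace $\norm{w_{t-\tau_{\alpha_t}}-w^*_{\theta_{t-\tau_{\alpha_t}}}}_m$ (and $\norm{w_{t-\tau_{\alpha_t}}}_c$) by $\norm{w_t-w^*_{\theta_t}}_m$ plus constants. This is precisely the role of the auxiliary ``drift'' lemma (Lemma~\ref{lem bound of xk diff}) that underpins the earlier bounds on $T_{31}$--$T_{331}$: under the condition $\alpha_{t-\tau_{\alpha_t},t-1}\le 1/(4A)$ ensured by Lemma~\ref{lem learning rates} for large enough $t_0$, the iterates do not deviate too much over a mixing window. The multiplicative factor $(1+L_w L_\theta\beta_{t-\tau_{\alpha_t},t-1})$ in the target bound then absorbs the Lipschitz change of the fixed points $w^*_{\theta_s}$ via Assumption~\ref{assu regularization}(v). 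Squaring and using $2ab\le a^2+b^2$ produces the $u_{cm}^2 A^2\,\mathbb{E}[\norm{w_t-w^*_{\theta_t}}_m^2]+C^2$ structure, with $C$ the composite constant from \eqref{eq shorthand a and b}.

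The main obstacle, and the step requiring the most care, is the coupling/TV bound: one must be careful that the chains have genuinely different histories (so the naive union bound over $j$ is not immediately valid) and must propagate the per-step kernel mismatch correctly to produce a \emph{double} sum $\sum_j \beta_{t-\tau_{\alpha_t},j}$ rather than a single sum, matching the statement. The cleanest way is a synchronous maximal coupling driven by a single uniform source, in which the two chains remain coupled until the first step at which the kernels differ; the coupling-breaking probability at step $j+1$ is at most $\max_y \|P_{\theta_{j+1}}(y,\cdot)-P_{\theta_{t-\tau_{\alpha_t}}}(y,\cdot)\|_{TV}$, and summing over $j$ together with the cumulative $\theta$-drift estimate yields the stated bound. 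All remaining manipulations are routine collections of constants from Assumption~\ref{assu regularization} and Lemma~\ref{lem property of M}.
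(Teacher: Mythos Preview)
Your proposal is essentially correct and follows the same route as the paper: condition on the $\sigma$-algebra at time $t-\tau_{\alpha_t}$, pull out the gradient factor, bound the conditional $Y_t/\tilde Y_t$-mismatch by the total-variation distance between the two chains times a uniform bound on $\|F_\theta(w,\cdot)\|_c$, establish the TV recursion $\text{TV}_{j+1}\le \text{TV}_j + \ny L_P L_\theta \beta_{t-\tau_{\alpha_t},j}$, and finally invoke the drift Lemma~\ref{lem bound of xk diff} together with Assumption~\ref{assu regularization}(v) to pass from time $t-\tau_{\alpha_t}$ back to $t$.

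Two small points of clarification. First, your opening inequality applies H\"older \emph{pointwise} before conditioning; as written this yields $\norm{F(\cdot,Y_t)-F(\cdot,\tilde Y_t)}_s$ rather than $\norm{\E[F(\cdot,Y_t)-F(\cdot,\tilde Y_t)\mid\cdot]}_s$, and only the latter is directly controlled by TV. Your coupling reading fixes this (on $\{Y_t=\tilde Y_t\}$ the integrand is zero, so after conditioning at $t-\tau_{\alpha_t}$ the conditional expectation of the norm is bounded by the uniform bound times $\Pr(Y_t\ne\tilde Y_t\mid\cdot)$, which equals TV under a maximal coupling). The paper instead conditions first and then applies H\"older, so the norm falls on the conditional expectation from the outset; both routes are equivalent here. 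Second, the paper does not construct an explicit coupling but directly derives the TV recursion by comparing marginals (this is Lemma~\ref{lem chain difference bound}); your synchronous-coupling picture is a perfectly valid alternative derivation of the same recursion. The gradient-factor bound you cite should carry the $L$ from the Moreau-envelope smoothness (i.e., $L/(\xi l_{cs})$, cf.\ \eqref{eq gradient bound dual norm}), and in the paper that bound is already expressed in terms of $\norm{w_t-w^*_{\theta_t}}_c$, which is where the $(1+L_wL_\theta\beta_{t-\tau_{\alpha_t},t-1})$ factor arises.
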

\noindent
The proof of Lemma~\ref{lem bound t332} is provided in Section~\ref{sec proof lem bound t332}.

\begin{restatable}{lemma}{lemboundoftthreethreethree}
  \label{lem bound of t333}
  (Bound of $T_{333}$)
  \begin{align}
  T_{333} \leq \frac{8LL_F' L_\theta \beta_{t-\tau_{\alpha_t}, t-1} (1 + L_wL_\theta \beta_{t-\tau_{\alpha_t}, t-1})}{A^2 \xi l_{cs}^2}\left(u_{cm}^2 A^2 \norm{w_t - w^*_{\theta_t}}_m^2 + C^2\right).
  \end{align}
\end{restatable}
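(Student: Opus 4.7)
The plan is to bound $T_{333}$ by duality, splitting it via Cauchy--Schwarz into a smoothness factor and an operator-difference factor, and then exploiting the Lipschitz dependence of $F_\theta$ on $\theta$ from Assumption~\ref{assu regularization}(ii). Relative to $T_{332}$, the argument is conceptually simpler because both operators are evaluated at the same sample $Y_t$ and the same iterate $w_{t-\tau_{\alpha_t}}$, so no secondary Markov-chain coupling is required: only pointwise regularity of $F_\theta$ in $\theta$ is needed.

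First, I would apply the inequality $\indot{u}{v} \leq \norm{u}_s^* \norm{v}_s$ to obtain
\begin{align}
T_{333} \leq \norm{\nabla M(w_{t-\tau_{\alpha_t}} - w^*_{\theta_{t-\tau_{\alpha_t}}})}_s^* \cdot \norm{F_{\theta_{t}}(w_{t-\tau_{\alpha_t}}, Y_t) - F_{\theta_{t-\tau_{\alpha_t}}}(w_{t-\tau_{\alpha_t}}, Y_t)}_s.
\end{align}
By Lemma~\ref{lem property of M}(i), $M$ is $L/\xi$-smooth w.r.t.\ $\norm{\cdot}_s$, and since $M(w) = \tfrac12 \norm{w}_m^2$ we have $\nabla M(0) = 0$; Lemma~\ref{lem smooth definition} then yields $\norm{\nabla M(x)}_s^* \leq (L/\xi)\norm{x}_s \leq L/(\xi l_{cs}) \cdot \norm{x}_c$. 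For the second factor, Assumption~\ref{assu regularization}(ii) combined with $\norm{\cdot}_s \leq \norm{\cdot}_c / l_{cs}$ gives
\begin{align}
\norm{F_{\theta_{t}}(w_{t-\tau_{\alpha_t}}, Y_t) - F_{\theta_{t-\tau_{\alpha_t}}}(w_{t-\tau_{\alpha_t}}, Y_t)}_s \leq \frac{L_F'}{l_{cs}} \norm{\theta_t - \theta_{t-\tau_{\alpha_t}}}_c \bigl( \norm{w_{t-\tau_{\alpha_t}}}_c + U_F' \bigr),
\end{align}
and Assumption~\ref{assu twotimescale} telescoped over $\tau_{\alpha_t}$ consecutive steps yields $\norm{\theta_t - \theta_{t-\tau_{\alpha_t}}}_c \leq L_\theta \beta_{t-\tau_{\alpha_t}, t-1}$.

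It remains to convert the bound, which now involves $\norm{w_{t-\tau_{\alpha_t}} - w^*_{\theta_{t-\tau_{\alpha_t}}}}_c$ and $\norm{w_{t-\tau_{\alpha_t}}}_c$, into the required form in terms of $\norm{w_t - w^*_{\theta_t}}_m$. I would invoke the auxiliary drift bound Lemma~\ref{lem bound of xk diff} (implicitly used in the proofs of Lemmas~\ref{lem bound of t31}, \ref{lem bound t32}, \ref{lem bound t331}, \ref{lem bound t332}), which controls both $\norm{w_{t-\tau_{\alpha_t}}}_c$ and $\norm{w_{t-\tau_{\alpha_t}} - w^*_{\theta_{t-\tau_{\alpha_t}}}}_c$ in terms of $(u_{cm} A \norm{w_t - w^*_{\theta_t}}_m + C)$ up to a factor proportional to $(1 + L_w L_\theta \beta_{t-\tau_{\alpha_t}, t-1})$. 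Substituting these estimates into the Cauchy--Schwarz bound produces a product of two such quantities, and a final application of $2ab \leq a^2 + b^2$ converts this product into the squared form $u_{cm}^2 A^2 \norm{w_t - w^*_{\theta_t}}_m^2 + C^2$. Collecting the constants gives exactly the announced prefactor $8 L L_F' L_\theta \beta_{t-\tau_{\alpha_t}, t-1}(1 + L_w L_\theta \beta_{t-\tau_{\alpha_t}, t-1}) / (A^2 \xi l_{cs}^2)$.

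The main obstacle is purely bookkeeping rather than conceptual. One must carefully track how the factor $1/A^2$ emerges: the constant $A = U_\epsilon + L_F + 1$ from \eqref{eq shorthand a and b} was tailored so that Lemma~\ref{lem bound of xk diff}'s estimate inserts a factor $1/A$ into both of the $w$-norm bounds, and these multiply to give $1/A^2$, matching the denominator. I do not foresee any fresh technical difficulty beyond those already addressed in Lemmas~\ref{lem bound of t31} and~\ref{lem bound t331}; the absence of the auxiliary chain $\qty{\tilde Y_t}$ here in fact simplifies matters relative to the treatment of $T_{332}$.
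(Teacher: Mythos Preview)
Your proposal is correct and follows essentially the same route as the paper: split $T_{333}$ via the dual-norm inequality, bound the gradient factor by the smoothness of $M$ exactly as in \eqref{eq gradient bound dual norm}, bound the operator-difference factor via Assumption~\ref{assu regularization}(ii) and the telescoped $\theta$-drift, and then use Lemma~\ref{lem bound of xk diff} (specifically the step recorded as \eqref{eq tmp 3} in the paper) to rewrite everything in terms of $\norm{w_t - w^*_{\theta_t}}_c$ before applying $(a+b)^2 \leq 2(a^2+b^2)$. Your remark that the $1/A^2$ arises from pulling a factor of $1/A$ out of each of the two $w$-norm brackets is exactly how the constants line up in the paper.
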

\noindent
The proof of Lemma~\ref{lem bound of t333} is provided in Section~\ref{sec proof lem bound t333}.

\begin{restatable}{lemma}{lemboundtthreethreefour}
  \label{lem bound t334}
  (Bound of $T_{334}$)
  \begin{align}
  T_{334} \leq \frac{8LL_F'' L_\theta \beta_{t-\tau_{\alpha_t}, t-1}(1 + L_wL_\theta \beta_{t-\tau_{\alpha_t}, t-1})}{A^2 \xi l_{cs}^2}\left(u_{cm}^2 A^2 \norm{w_t - w^*_{\theta_t}}_m^2 + C^2\right).
  \end{align}
\end{restatable}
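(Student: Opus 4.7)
The proof of Lemma~\ref{lem bound t334} should mirror the proof of Lemma~\ref{lem bound of t333} almost verbatim, because the only structural difference between $T_{333}$ and $T_{334}$ is that the former compares the stochastic operators $F_{\theta_t}$ and $F_{\theta_{t-\tau_{\alpha_t}}}$ at a common point, while $T_{334}$ compares the expected operators $\bar F_{\theta_t}$ and $\bar F_{\theta_{t-\tau_{\alpha_t}}}$ at a common point. The plan is therefore to invoke item (iv) of Assumption~\ref{assu regularization} in place of item (ii), which yields exactly the same template estimate but with $L_F''$, $U_F''$ replacing $L_F'$, $U_F'$.

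Concretely, I would first apply the Cauchy--Schwarz-type inequality between a norm and its dual, combined with Lemma~\ref{lem property of M} and Lemma~\ref{lem smooth definition}, to get
\begin{align}
T_{334} \leq \|\nabla M(w_{t-\tau_{\alpha_t}} - w^*_{\theta_{t-\tau_{\alpha_t}}})\|_s^* \,\bigl\|\bar F_{\theta_{t-\tau_{\alpha_t}}}(w_{t-\tau_{\alpha_t}}) - \bar F_{\theta_t}(w_{t-\tau_{\alpha_t}})\bigr\|_s.
\end{align}
The dual-norm factor is controlled, as in the companion lemmas, by a multiple of $\|w_{t-\tau_{\alpha_t}} - w^*_{\theta_{t-\tau_{\alpha_t}}}\|_m$, using norm equivalence and the smoothness constant $L/\xi$ relative to $\|\cdot\|_s$.

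Next I would apply Assumption~\ref{assu regularization}(iv) to bound the second factor by
\begin{align}
L_F'' \,\|\theta_{t-\tau_{\alpha_t}} - \theta_t\|_c \,\bigl(\|w_{t-\tau_{\alpha_t}}\|_c + U_F''\bigr),
\end{align}
and then invoke Assumption~\ref{assu twotimescale} (telescoping \eqref{eq konda actor update}) to obtain $\|\theta_{t-\tau_{\alpha_t}} - \theta_t\|_c \leq L_\theta \beta_{t-\tau_{\alpha_t}, t-1}$. At this point the only remaining work is to pass from $w_{t-\tau_{\alpha_t}}$ and $w^*_{\theta_{t-\tau_{\alpha_t}}}$ back to $w_t$ and $w^*_{\theta_t}$: I would use Lemma~\ref{lem bound of xk diff} (which is implicitly the ``past iterate vs.\ current iterate'' bound used already in Lemmas~\ref{lem bound of t31}--\ref{lem bound of t333}) together with Assumption~\ref{assu regularization}(v)--(vi) to replace $\|w_{t-\tau_{\alpha_t}}\|_c$ by an expression of the form $u_{cm} A \|w_t - w^*_{\theta_t}\|_m + C$, where $A$, $C$ are the constants defined in~\eqref{eq shorthand a and b}. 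The same substitution inside the dual-norm factor yields the bracket $u_{cm}^2 A^2 \|w_t - w^*_{\theta_t}\|_m^2 + C^2$ after applying $2ab \leq a^2 + b^2$, and collecting constants gives the stated coefficient $\tfrac{8LL_F'' L_\theta}{A^2 \xi l_{cs}^2}$ together with the factor $\beta_{t-\tau_{\alpha_t}, t-1}(1 + L_w L_\theta \beta_{t-\tau_{\alpha_t}, t-1})$.

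There is essentially no new obstacle compared to Lemma~\ref{lem bound of t333}: the deterministic (rather than sample-path) nature of $\bar F_\theta$ means we can bypass the conditional expectation arguments needed for the stochastic terms $T_{331}$ and $T_{332}$, and the only care required is to make sure that the chain $\alpha_{t-\tau_{\alpha_t},t-1} \leq 1/(4A)$ regime (Lemma~\ref{lem learning rates}) has been applied consistently so that the constants from Lemma~\ref{lem bound of xk diff} match those used in the earlier lemmas. The mildly delicate point will be bookkeeping the prefactor so that it comes out exactly as $\tfrac{8}{A^2}$ rather than a larger numerical constant; this matches because the bracket $(1 + L_w L_\theta \beta_{t-\tau_{\alpha_t}, t-1})$ is what arises from the Lipschitz displacement $\|w^*_{\theta_{t-\tau_{\alpha_t}}} - w^*_{\theta_t}\|$ via Assumption~\ref{assu regularization}(v), exactly as in Lemma~\ref{lem bound of t333}.
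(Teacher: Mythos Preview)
Your proposal is correct and follows essentially the same approach as the paper: apply the dual-norm bound \eqref{eq gradient bound dual norm} to the first factor, Assumption~\ref{assu regularization}(iv) together with $\norm{\theta_t-\theta_{t-\tau_{\alpha_t}}}_c\le L_\theta\beta_{t-\tau_{\alpha_t},t-1}$ to the second, and then convert $\norm{w_{t-\tau_{\alpha_t}}}_c$ to $\norm{w_t-w^*_{\theta_t}}_c$ via \eqref{eq tmp 3} (i.e., Lemma~\ref{lem bound of xk diff}) before squaring and collecting constants. The paper's proof is literally two displayed lines plus the sentence ``Using \eqref{eq tmp 3} completes the proof,'' so your reading of it as a verbatim variant of Lemma~\ref{lem bound of t333} with $L_F'',U_F''$ in place of $L_F',U_F'$ is exactly right.
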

\noindent
The proof of Lemma~\ref{lem bound t334} is provided in Section~\ref{sec proof lem bound t334}.

\begin{restatable}{lemma}{lemboundtfour}
  \label{lem bound t4}
  (Bound of $T_4$)
  \begin{align}
    \E\left[T_4\right] = 0.
  \end{align}
\end{restatable}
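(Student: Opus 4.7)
The plan is to obtain this as a direct consequence of the martingale difference structure imposed by Assumption~\ref{assu mds}. The key observation is that the inner product $\indot{\nabla M(w_t - w^*_{\theta_t})}{\epsilon_t}$ factors into a piece that is $\fF_t$-measurable and a piece whose conditional mean is zero.

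More precisely, by the definition of $\fF_t$ in Assumption~\ref{assu mds}, both $w_t$ and $\theta_t$ belong to $\fF_t$. Since $w^*_{\theta_t}$ is a deterministic function of $\theta_t$ (it is the unique fixed point of $\bar F_{\theta_t}$ by Assumption~\ref{assu uniform contraction}), the vector $w_t - w^*_{\theta_t}$ is $\fF_t$-measurable. Applying $\nabla M$, which is a deterministic (indeed continuous) function by Lemma~\ref{lem property of M}(i), preserves measurability, so $\nabla M(w_t - w^*_{\theta_t})$ is $\fF_t$-measurable as well.

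The proof then proceeds by the tower property of conditional expectation: I would write
\begin{align}
\E[T_4] = \E\bigl[\E[\indot{\nabla M(w_t - w^*_{\theta_t})}{\epsilon_t} \mid \fF_t]\bigr] = \E\bigl[\indot{\nabla M(w_t - w^*_{\theta_t})}{\E[\epsilon_t \mid \fF_t]}\bigr] = 0,
\end{align}
where the second equality uses the $\fF_t$-measurability established above to pull the inner product's first argument out of the conditional expectation, and the last equality uses Assumption~\ref{assu mds}(i) which asserts $\E[\epsilon_t \mid \fF_t] = 0$.

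There is no real obstacle here; the lemma is essentially a restatement of the martingale difference assumption, and the only substantive point to verify is the $\fF_t$-measurability of $\nabla M(w_t - w^*_{\theta_t})$, which follows immediately from how $\fF_t$ was defined to include both $w_t$ and $\theta_t$. This is in fact the reason $\theta_t$ (and not merely $w_t$, $Y_t$, $\epsilon_t$) was included in the filtration in the statement of Assumption~\ref{assu mds}.
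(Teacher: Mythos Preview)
Your proposal is correct and follows essentially the same approach as the paper: both use the tower property, pull out the $\fF_t$-measurable factor $\nabla M(w_t - w^*_{\theta_t})$, and invoke Assumption~\ref{assu mds}(i). Your write-up is in fact slightly more careful than the paper's in justifying the $\fF_t$-measurability step.
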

\noindent
The proof of Lemma~\ref{lem bound t4} is provided in Section~\ref{sec proof lem bound t4}.

\begin{restatable}{lemma}{lemboundtfive}
  \label{lem bound t5}
  (Bound of $T_5$)
  \begin{align}
      T_5 
      \leq \frac{2L}{\xi l_{cs}^2}\left(A^2 u_{cm}^2 \norm{w_t - w^*_{\theta_t}}_m^2 + C^2\right).
  \end{align}
\end{restatable}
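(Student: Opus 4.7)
The plan is to bound $\|F_{\theta_t}(w_t, Y_t) - w_t + \epsilon_t\|_s$ in the $c$-norm first, using the assumptions listed in the excerpt, and then convert to the $s$-norm via the norm equivalence constant $l_{cs}$ before squaring. This is a purely algebraic/bookkeeping lemma — no conceptual obstacle, just careful tracking of constants.

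The first step is triangle inequality: $\|F_{\theta_t}(w_t,Y_t) - w_t + \epsilon_t\|_c \le \|F_{\theta_t}(w_t,Y_t)\|_c + \|w_t\|_c + \|\epsilon_t\|_c$. The first summand is bounded using Assumption~\ref{assu regularization}(i) plus (iii): $\|F_{\theta_t}(w_t,Y_t)\|_c \le \|F_{\theta_t}(w_t,Y_t) - F_{\theta_t}(0,Y_t)\|_c + \|F_{\theta_t}(0,Y_t)\|_c \le L_F\|w_t\|_c + U_F$. The noise term is bounded using Assumption~\ref{assu mds}(ii) by $U_\epsilon \|w_t\|_c + U_\epsilon'$. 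Combining and recalling the shorthand $A = U_\epsilon + L_F + 1$ and $B = U_F + U_\epsilon'$ from~\eqref{eq shorthand a and b} gives
\begin{align}
  \|F_{\theta_t}(w_t,Y_t) - w_t + \epsilon_t\|_c \le A\|w_t\|_c + B.
\end{align}

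The second step is to re-express $\|w_t\|_c$ in terms of the Lyapunov quantity $\|w_t - w^*_{\theta_t}\|_m$. Using the triangle inequality again together with Lemma~\ref{lem property of M}(iii) and Assumption~\ref{assu regularization}(vi), we get $\|w_t\|_c \le \|w_t - w^*_{\theta_t}\|_c + \|w^*_{\theta_t}\|_c \le u_{cm}\|w_t - w^*_{\theta_t}\|_m + U_w$. Plugging this back and noting that $AU_w + B \le C$ by the definition of $C$ in~\eqref{eq shorthand a and b}, we obtain $\|F_{\theta_t}(w_t,Y_t) - w_t + \epsilon_t\|_c \le A u_{cm}\|w_t - w^*_{\theta_t}\|_m + C$.

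The final step converts to the $s$-norm using $\|\cdot\|_s \le l_{cs}^{-1}\|\cdot\|_c$, squares the resulting bound using $(a+b)^2 \le 2a^2 + 2b^2$, and multiplies by $L/\xi$:
\begin{align}
  T_5 = \frac{L}{\xi}\|F_{\theta_t}(w_t,Y_t) - w_t + \epsilon_t\|_s^2 \le \frac{2L}{\xi l_{cs}^2}\bigl(A^2 u_{cm}^2 \|w_t - w^*_{\theta_t}\|_m^2 + C^2\bigr),
\end{align}
which is the claim. The only subtlety is making sure the slack in the definition of $C$ (the extra additive terms $A + A(1+U_F'+U_F'')$, which are needed elsewhere in the analysis) is harmless here — and it is, since we only need $C \ge AU_w + B$.
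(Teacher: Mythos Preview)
Your proof is correct and follows essentially the same approach as the paper's own proof: triangle inequality, bound $\|F_{\theta_t}(w_t,Y_t)\|_c$ via Lipschitz continuity and $\|F_{\theta_t}(0,Y_t)\|_c\le U_F$, bound the noise via Assumption~\ref{assu mds}(ii), then re-center around $w^*_{\theta_t}$ and apply $(a+b)^2\le 2a^2+2b^2$. The only cosmetic difference is that the paper converts $\|\cdot\|_s \le l_{cs}^{-1}\|\cdot\|_c$ at the very first step whereas you convert at the end; the constants and logic are otherwise identical.
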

\noindent
The proof of Lemma~\ref{lem bound t5} is provided in Section~\ref{sec proof lem bound t5}.

\begin{restatable}{lemma}{lemboundtsix}
  \label{lem bound t6}
  (Bound of $T_6$)
  \begin{align}
      T_6 = \frac{L}{\xi} \norm{w^*_{\theta_t} - w^*_{\theta_{t+1}}}_s^2 \leq \frac{L L_w^2 L_\theta^2 \beta_t^2}{\xi l_{cs}^2}.
  \end{align}
\end{restatable}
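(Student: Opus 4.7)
The plan is to bound $\norm{w^*_{\theta_t} - w^*_{\theta_{t+1}}}_s$ by chaining three estimates: the norm equivalence between $\norm{\cdot}_c$ and $\norm{\cdot}_s$, the Lipschitz continuity of $\theta \mapsto w^*_\theta$, and the Lipschitz bound on $\theta_{t+1} - \theta_t$. Each of these is already available from the standing assumptions, so the argument should be essentially a one-line chain of inequalities followed by squaring.

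More concretely, I first use the norm-equivalence inequality $\norm{w}_s \leq \frac{1}{l_{cs}} \norm{w}_c$ (which is the left inequality in the equivalence $l_{cs} \norm{w}_s \leq \norm{w}_c \leq u_{cs}\norm{w}_s$) to pass from the $\norm{\cdot}_s$-norm to the $\norm{\cdot}_c$-norm. Next, I apply Assumption~\ref{assu regularization}(v), namely $\norm{w^*_{\theta_t} - w^*_{\theta_{t+1}}}_c \leq L_w \norm{\theta_t - \theta_{t+1}}_c$, to transfer the estimate to a bound on the change in $\theta$. Finally, I invoke the two-timescale condition~\eqref{eq konda actor update} from Assumption~\ref{assu twotimescale}, which gives $\norm{\theta_{t+1} - \theta_t}_c \leq \beta_t L_\theta$ almost surely.

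Concatenating these three bounds yields
\begin{align}
\norm{w^*_{\theta_t} - w^*_{\theta_{t+1}}}_s \leq \frac{1}{l_{cs}} \norm{w^*_{\theta_t} - w^*_{\theta_{t+1}}}_c \leq \frac{L_w}{l_{cs}} \norm{\theta_t - \theta_{t+1}}_c \leq \frac{L_w L_\theta \beta_t}{l_{cs}}.
\end{align}
Squaring both sides and multiplying by $\frac{L}{\xi}$ then gives precisely the claimed bound on $T_6$. There is no substantive obstacle here; the only thing to watch is making sure the inequality between $\norm{\cdot}_c$ and $\norm{\cdot}_s$ is applied in the correct direction (we want to upper-bound $\norm{\cdot}_s$ by $\norm{\cdot}_c$, which requires dividing by $l_{cs}$, not $u_{cs}$). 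The whole proof is therefore a short, direct application of the assumptions, and produces the stated constants exactly.
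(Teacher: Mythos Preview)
Your proposal is correct and matches the paper's approach exactly: the paper simply states that Lemma~\ref{lem bound t6} follows immediately from Assumptions~\ref{assu regularization} and~\ref{assu twotimescale}, and your chain of inequalities (norm equivalence, Lipschitz continuity of $\theta\mapsto w^*_\theta$, and the bound on $\norm{\theta_{t+1}-\theta_t}_c$) is precisely the intended one-line argument.
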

\noindent
Lemma~\ref{lem bound t6} follows immediately from Assumptions~\ref{assu regularization} and \ref{assu twotimescale}.

We now assemble the bounds in Lemmas~\ref{lem bound t1} - \ref{lem bound t6} back into \eqref{eq expansion of M}.
By the definition of $u_{cm}$ and $l_{cm}$ in Lemma~\ref{lem property of M},
we have
\begin{align}
  \lim_{\xi \to 0} \frac{u_{cm}}{l_{cm}} = 1.
\end{align}
Since $\kappa < 1$,
we can select a sufficiently small $\xi > 0$ such that
\begin{align}
  \psi_1 \doteq \frac{2}{9} (1 - \kappa \frac{u_{cm}}{l_{cm}})
\end{align}
satisfies $\psi_1 \in (0, 1)$,
implying 
\begin{align}
  T_2 &\leq -\frac{9}{2} \psi_1 \norm{w_t - w^*_{\theta_t}}_m^2, \\
  \label{eq simple t2}
  \alpha_t T_2 &\leq -\frac{9}{2} \alpha_t \psi_1 \norm{w_t - w^*_{\theta_t}}_m^2.
\end{align}
Let $\psi_2$ be a positive constant to be tuned.
For $T_1$, suppose $\psi_2$ is large enough,
then we have
\begin{align}
  \label{eq simple t1}
  T_1 \leq \frac{1}{2} \beta_t \psi_2 \norm{w_t - w^*_{\theta_t}}_m.
\end{align}
For $T_{31}$,
Lemmas~\ref{lem learning rates} and~\ref{lem bound of t31} assert that
we can select sufficiently large $t_0$ and $\psi_2$ such that
\begin{align}
  T_{31} &\leq \frac{1}{2} \psi_1 \norm{w_t - w^*_{\theta_t}}_m^2 + \frac{1}{2}\alpha_{t-\tau_{\alpha_t}, t-1} \psi_2, \\
  \label{eq simple t31}
  \alpha_t T_{31} &\leq \frac{1}{2}\alpha_t \psi_1 \norm{w_t - w^*_{\theta_t}}_m^2 + \frac{1}{2}\alpha_t \alpha_{t-\tau_{\alpha_t}, t-1} \psi_2.
\end{align}
For $T_{32}$,
Lemma~\ref{lem learning rates} implies that for $t_0$ large enough
\begin{align}
  \beta_{t-\tau_{\alpha_t}, t-1} \leq 1.
\end{align}
Hence, Lemma~\ref{lem bound t32} guarantees that we can select sufficiently large $t_0$ and $\psi_2$
such that
\begin{align}
  T_{32} &\leq \frac{1}{2} \psi_1 \norm{w_t - w^*_{\theta_t}}_m^2 + \frac{1}{2} \alpha_{t-\tau_{\alpha_t}, t-1} \psi_2, \\
  \label{eq simple t32}
  \alpha_t T_{32} &\leq \frac{1}{2}\alpha_t \psi_1 \norm{w_t - w^*_{\theta_t}}_m^2 + \frac{1}{2}\alpha_t \alpha_{t-\tau_{\alpha_t}, t-1} \psi_2.
\end{align}
For $T_{331}$,
similarly,
we can select sufficiently large $t_0$
and $\psi_2$ such that
\begin{align}
  \E\left[T_{331}\right] &\leq \frac{1}{2} \psi_1  \E \left[\norm{w_t - w^*_{\theta_t}}_m^2\right] + \frac{1}{2} \alpha_t \psi_2 \\
  &\leq \frac{1}{2} \psi_1  \E \left[\norm{w_t - w^*_{\theta_t}}_m^2\right] + \frac{1}{2} \alpha_{t-\tau_{\alpha_t}, t-1} \psi_2, \\
  \label{eq simple t331}
  \alpha_t \E\left[T_{331}\right]&\leq \frac{1}{2}\alpha_t \psi_1  \E \left[\norm{w_t - w^*_{\theta_t}}_m^2\right] + \frac{1}{2}\alpha_t \alpha_{t-\tau_{\alpha_t}, t-1} \psi_2.
\end{align}
For $T_{332}$,
we have
\begin{align}
  \frac{\sum_{j=t-\tau_{\alpha_t}}^{t-1}\beta_{t-\tau_{\alpha_t}, j}}{\alpha_{t-\tau_{\alpha_t}, t-1}} &\leq \frac{\tau_{\alpha_t} \tau_{\alpha_t} \beta_{t-\tau_{\alpha_t}}}{\tau_{\alpha_t} \alpha_t} = \frac{\tau_{\alpha_t} \beta_{t-\tau_{\alpha_t}}}{\alpha_t} =\fO\left( \frac{\log (t+t_0)\beta_{t-\tau_{\alpha_t}}}{\alpha_t} \right) \\
  &= \fO\left(\frac{\log (t+t_0) \beta_t}{\alpha_t}\right) \qq{(for $t_0$ sufficiently large).}
\end{align}
Since the RHS of the above inequality approaches $0$ when $t_0$ is sufficiently large,
we can select sufficiently large $t_0$ such that
\begin{align}
  \sum_{j=t-\tau_{\alpha_t}}^{t-1}\beta_{t-\tau_{\alpha_t}, j} \leq \alpha_{t-\tau_{\alpha_t}, t-1}.
\end{align}
Then it is easy to see for sufficiently large $t_0$ and $\psi_2$,
\begin{align}
  \E\left[T_{332}\right]&\leq \frac{1}{2} \psi_1  \E \left[\norm{w_t - w^*_{\theta_t}}_m^2\right] + \frac{1}{2} \alpha_{t-\tau_{\alpha_t}, t-1} \psi_2, \\
  \label{eq simple t332}
  \alpha_t\E\left[T_{332}\right]&\leq \frac{1}{2}\alpha_t \psi_1  \E \left[\norm{w_t - w^*_{\theta_t}}_m^2\right] + \frac{1}{2}\alpha_t \alpha_{t-\tau_{\alpha_t}, t-1} \psi_2.
\end{align}
Similarly, for sufficiently large $t_0$ and $\psi_2$,
\begin{align}
  \label{eq simple t333}
  \alpha_t T_{333}&\leq \frac{1}{2}\alpha_t \psi_1 \norm{w_t - w^*_{\theta_t}}_m^2 + \frac{1}{2}\alpha_t \alpha_{t-\tau_{\alpha_t}, t-1} \psi_2, \\
  \label{eq simple t334}
  \alpha_t T_{334}&\leq \frac{1}{2}\alpha_t \psi_1 \norm{w_t - w^*_{\theta_t}}_m^2 + \frac{1}{2}\alpha_t \alpha_{t-\tau_{\alpha_t}, t-1} \psi_2.
\end{align}
For $T_5$,
it is easy to see for sufficiently large $t_0$ and $\psi_2$,
\begin{align}
  \label{eq simple t5}
  \alpha_t^2 T_5 &\leq \frac{1}{2}\alpha_t \psi_1 \norm{w_t - w^*_{\theta_t}}_m^2 + \frac{1}{2}\alpha_t^2 \psi_2 \\
  &\leq \frac{1}{2}\alpha_t \psi_1 \norm{w_t - w^*_{\theta_t}}_m^2 + \frac{1}{2}\alpha_t \alpha_{t-\tau_{\alpha_t}, t-1} \psi_2.
\end{align}
For $T_6$,
since $\beta_t < \alpha_t$,
we can similarly select sufficiently large $t_0$ and $\psi_2$ such that
\begin{align}
  \label{eq simple t6}
  T_6 &\leq \frac{1}{2}\alpha_t\alpha_{t-\tau_{\alpha_t}, t-1} \psi_2.
\end{align}
Putting \eqref{eq simple t2}, \eqref{eq simple t1}, \eqref{eq simple t31}, \eqref{eq simple t32}, \eqref{eq simple t331}, \eqref{eq simple t332},  \eqref{eq simple t333}, \eqref{eq simple t334}, \eqref{eq simple t5}, and \eqref{eq simple t6} back to \eqref{eq expansion of M} yields
\begin{align}
  \label{eq sa bound recursive}
  &\E\left[\norm{w_{t+1} - w^*_{\theta_{t+1}}}_m^2\right] \\ 
  \leq& (1 - \psi_1\alpha_t) \E\left[\norm{w_{t} - w^*_{\theta_{t}}}_m^2\right] + \beta_t\psi_2\E\left[\norm{w_{t} - w^*_{\theta_{t}}}_m\right] + 8 \alpha_t \alpha_{t-\tau_{\alpha_t}, t-1} \psi_2 \\
  \leq& (1 - \psi_1\alpha_t) \E\left[\norm{w_{t} - w^*_{\theta_{t}}}_m^2\right] + \beta_t\psi_2 \sqrt{\E\left[\norm{w_{t} - w^*_{\theta_{t}}}_m^2\right]} + 8 \alpha_t \alpha_{t-\tau_{\alpha_t}, t-1} \psi_2
  \intertext{\hfill (Jensen's inequality).}
\end{align}
\eqref{eq sa bound recursive} applies only for $t$ such that $t - \tau_{\alpha_t} \geq 0$.
According to Lemma~\ref{lem learning rates},
we can select a sufficiently large $t_0$ such that
for all $t \geq t_0$, 
we have $t - \tau_{\alpha_t} \geq 0$.
We now bound $\E\left[\norm{w_{t} - w^*_{\theta_{t}}}_m^2\right]$ for both $t \leq t_0$ and $t \geq t_0$.
\begin{restatable}{lemma}{lemsaerrorboundone}
  \label{lem sa error bound 1}
  There exists a constant $C_{t_0, w_0}$ such that 
  for all $t \leq t_0$,
  \begin{align}
    \E\left[\norm{w_{t} - w^*_{\theta_{t}}}_m^2\right] \leq C_{{t_0}, w_0}.
  \end{align}
\end{restatable}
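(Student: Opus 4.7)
The plan is to derive an almost-sure recursive bound on $\norm{w_t}_c$ via the update equation, iterate it for the finitely many steps $t \leq t_0$, and then translate back to the $\norm{\cdot}_m$ norm using the equivalence in Lemma~\ref{lem property of M}. Since $t \leq t_0$ is a \emph{fixed finite} horizon, no contraction-based refinement is needed: only controlled growth.

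First I would unwind the recursion~\eqref{eq sa iterates} and bound each piece in the $\norm{\cdot}_c$ norm. By Assumption~\ref{assu regularization}~(i), (iii), the triangle inequality gives
\begin{align}
  \norm{F_{\theta_t}(w_t, Y_t)}_c \leq L_F \norm{w_t}_c + U_F
\end{align}
almost surely, and by Assumption~\ref{assu mds}~(ii), $\norm{\epsilon_t}_c \leq U_\epsilon \norm{w_t}_c + U_\epsilon'$. Substituting into~\eqref{eq sa iterates} and using the triangle inequality yields, with $A$ and $B$ as in~\eqref{eq shorthand a and b},
\begin{align}
  \norm{w_{t+1}}_c \leq (1 + \alpha_t A) \norm{w_t}_c + \alpha_t B
\end{align}
almost surely. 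This is a standard scalar linear recursion.

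Next I would iterate this bound. Since $\alpha_t = \alpha/(t+t_0)^{\epsilon_\alpha}$ is a decreasing positive sequence, $\alpha_t \leq \alpha_0 = \alpha/t_0^{\epsilon_\alpha}$ for every $t$. Hence for any $t \leq t_0$,
\begin{align}
  \norm{w_t}_c \leq \Bigl(\prod_{j=0}^{t-1}(1 + \alpha_j A)\Bigr) \norm{w_0}_c + B \sum_{j=0}^{t-1} \alpha_j \prod_{k=j+1}^{t-1}(1 + \alpha_k A).
\end{align}
Because the horizon $t_0$ is finite and each factor is bounded by $1 + \alpha_0 A$, the right-hand side is bounded uniformly in $t \leq t_0$ by some constant $C_1(t_0, w_0) < \infty$ depending only on $t_0$, $w_0$, and the problem constants $\alpha, A, B$.

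Finally I would convert to $\norm{\cdot}_m$. By Assumption~\ref{assu regularization}~(vi), $\sup_t \norm{w^*_{\theta_t}}_c \leq U_w$, so the triangle inequality gives $\norm{w_t - w^*_{\theta_t}}_c \leq C_1(t_0, w_0) + U_w$ almost surely. Lemma~\ref{lem property of M}~(iii) supplies the equivalence $\norm{x}_m \leq \norm{x}_c / l_{cm}$, whence
\begin{align}
  \norm{w_t - w^*_{\theta_t}}_m^2 \leq \frac{(C_1(t_0, w_0) + U_w)^2}{l_{cm}^2}
\end{align}
almost surely, and taking expectations yields the claim with $C_{t_0, w_0} \doteq (C_1(t_0, w_0) + U_w)^2 / l_{cm}^2$. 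There is no real obstacle here: the argument is essentially a Gronwall-type growth estimate valid on any finite horizon; the heavy lifting (contraction, mixing, etc.) is only needed for the complementary range $t \geq t_0$ handled by the subsequent lemma.
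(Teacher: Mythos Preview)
Your proposal is correct and follows essentially the same approach as the paper: derive the almost-sure linear growth bound $\norm{w_{t+1}}_c \leq (1+\alpha_t A)\norm{w_t}_c + \alpha_t B$ from the update rule and Assumptions~\ref{assu regularization}, \ref{assu mds}, then invoke the finiteness of the horizon $t \leq t_0$. In fact you supply more detail than the paper, which stops at the recursion and simply asserts that the conclusion is ``easy to see''; your explicit use of Assumption~\ref{assu regularization}~(vi) and the norm equivalence in Lemma~\ref{lem property of M}~(iii) fills in exactly what the paper leaves implicit.
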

\noindent
The proof of Lemma~\ref{lem sa error bound 1} is provided in Section~\ref{sec proof lem sa error bound 1}.
We now proceed to the case of $t \geq t_0$.
When $t_0$ is sufficiently large,
Lemma~\ref{lem learning rates} asserts that there exists a constant $\psi_3$ such that
\begin{align}
  8 \alpha_t \alpha_{t-\tau_{\alpha_t}, t-1} \psi_2 \leq \psi_3 \frac{\log (t+t_0)}{(t+t_0)^{2\epsilon_\alpha}}.
\end{align}
Then using 
\begin{align}
  z_t \doteq \sqrt{\E\left[\norm{w_{t} - w^*_{\theta_{t}}}_m^2\right]}
\end{align}
as a shorthand,
we get from \eqref{eq sa bound recursive} that
\begin{align}
  z_{t+1}^2
  \leq& (1-\frac{\alpha\psi_1}{(t+t_0)^{\epsilon_\alpha}})z_t^2 + \frac{\beta \psi_2 {z_t}}{(t+t_0)^{\epsilon_\beta}} +  \psi_3 \frac{\log (t+t_0)}{(t+t_0)^{2\epsilon_\alpha}}.
\end{align}
We now use an induction to show that $\forall t\geq t_0$,
\begin{align}
  \label{eq final sa rate}
  z_t \leq \frac{C_0}{(t+t_0)^\epsilon},
\end{align}
where $C_0 > 1$ and $\epsilon \in (0, 1)$ are constants to be tuned.
Since Lemma~\ref{lem sa error bound 1} asserts that $z_{t_0} \leq C_{t_0, w_0}$,
we can select 
\begin{align}
  C_0 \geq C_{t_0, w_0} (2t_0)^\epsilon
\end{align}
such that 
\eqref{eq final sa rate} holds for $t=t_0$.
Now assume that \eqref{eq final sa rate} holds for $t=n$,
then for $t=n+1$,
we have
\begin{align}
  &z_{n+1}^2 \\
  \leq& (1-\frac{\alpha\psi_1}{(n+t_0)^{\epsilon_\alpha}})z_n^2 + \frac{\beta \psi_2 {z_n}}{(n+t_0)^{\epsilon_\beta}} +  \psi_3 \frac{\log (n+t_0)}{(n+t_0)^{2\epsilon_\alpha}} \\
  \stackrel{(i)}{\leq}& (1-\frac{\alpha\psi_1}{(n+t_0)^{\epsilon_\alpha}})\frac{C_0^2}{(n+t_0)^{2\epsilon}} + \frac{\beta \psi_2 C_0}{(n+t_0)^{\epsilon_\beta + \epsilon}} +  \psi_3 \frac{\log (n+t_0)}{(n+t_0)^{2\epsilon_\alpha}} \\
  =& \frac{C_0^2}{(n+t_0)^{2\epsilon}} - \frac{\alpha\psi_1C_0^2}{(n+t_0)^{\epsilon_\alpha+2\epsilon}} + \frac{\beta \psi_2 C_0}{(n+t_0)^{\epsilon_\beta + \epsilon}} +  \psi_3 \frac{\log (n+t_0)}{(n+t_0)^{2\epsilon_\alpha}} \\
  \stackrel{(ii)}{\leq}& \frac{C_0^2}{(n+1+t_0)^{2\epsilon}} + \frac{2C_0^2}{(n+t_0)^{2\epsilon + 1}} - \frac{\alpha\psi_1C_0^2}{(n+t_0)^{\epsilon_\alpha+2\epsilon}} + \frac{\beta \psi_2 C_0}{(n+t_0)^{\epsilon_\beta + \epsilon}} +  \psi_3 \frac{\log (n+t_0)}{(n+t_0)^{2\epsilon_\alpha}} \\
  \stackrel{}{\leq}& \frac{C_0^2}{(n+1+t_0)^{2\epsilon}} + \underbrace{\left(\frac{2}{(n+t_0)^{2\epsilon + 1}} - \frac{\alpha\psi_1}{(n+t_0)^{\epsilon_\alpha+2\epsilon}} + \frac{\beta \psi_2 }{(n+t_0)^{\epsilon_\beta + \epsilon}} +  \psi_3 \frac{\log (n+t_0)}{(n+t_0)^{2\epsilon_\alpha}}\right)}_{z_n'}C_0^2.
\end{align}
Here (i) results from the inductive hypothesis
and (ii) results from the fact that
\begin{align}
  x^{-2\epsilon} \leq (x+1)^{-2\epsilon} + \frac{2}{x^{2\epsilon+1}}.
\end{align}
To see the above inequality,
consider
\begin{align}
  f(x) = x^{-2\epsilon},
\end{align}
which is convex on $(0, +\infty)$,
implying
\begin{align}
  f(x) - f(x+1) \leq f'(x) \left(x - (x+1)\right).
\end{align}
To complete the induction,
it is sufficient to ensure that $\forall n$,
\begin{align}
  z_n' \leq 0.
\end{align}
One way to achieve this is to select $\epsilon$ such that
\begin{align}
  \begin{cases}
    \epsilon_\alpha + 2\epsilon < 2\epsilon + 1 \\
    \epsilon_\alpha + 2\epsilon < \epsilon_\beta + \epsilon \\
    \epsilon_\alpha + 2\epsilon < 2\epsilon_\alpha \\
  \end{cases}
  \iff
  \begin{cases}
    \epsilon_\alpha < 1 \\
    \epsilon < \epsilon_\beta - \epsilon_\alpha \\
    \epsilon < \frac{\epsilon_\alpha}{2}
  \end{cases}
\end{align}
and pick $t_0$ sufficiently large (depending on the chosen $\epsilon$).

With the induction completed,
\eqref{eq final sa rate} implies that $\forall t\geq t_0$,
\begin{align}
  \label{eq tmp 4}
  \E\left[\norm{w_{t} - w^*_{\theta_{t}}}_m^2\right] \leq \frac{C_0^2}{(t+t_0)^{2\epsilon}}.
\end{align}
Combining \eqref{eq tmp 4} and Lemma~\ref{lem sa error bound 1},
we conclude that
for any
\begin{align}
  \epsilon_w \in (0, \min\qty{2(\epsilon_\beta - \epsilon_\alpha), \epsilon_\alpha}),
\end{align}
if $t_0$ is sufficiently large,
then $\forall t$,
\begin{align}
  \E\left[\norm{w_{t} - w^*_{\theta_{t}}}_c^2\right] =\fO\left(\frac{1}{(t+t_0)^{\epsilon_w}}\right),
\end{align}
which completes the proof.
\end{proof}

\section{Proofs of Section~\ref{sec opac}}
\subsection{Proof of Lemma~\ref{lem uniform contraction opac}}

\lemuniformcontractionopac*
\label{sec proof lem uniform contraction opac}
\begin{proof}
  Assumption~\ref{assu mu uniform ergodicity} implies that for any $\mu \in \bar \Lambda_\mu$,
  we have 
  \begin{align}
    d_\mu(s, a) > 0.
  \end{align}
  Then 
  by the continuity of invariant distribution (Lemma~\ref{lem continuity of ergodic distribution}) and the extreme value theorem,
  we have
  \begin{align}
      d_{\mu, min} \doteq \inf_{\mu \in \bar \Lambda_\mu, s, a} d_\mu(s, a) > 0.
  \end{align}
  Let
  \begin{align}
    \label{eq definition of a theta}
      A_\theta \doteq I - D_{\mu_\theta}(I - \gamma P_{\pi_\theta}),
  \end{align}
  then
  \begin{align}
    \bar F_\theta(q) - \bar F_\theta(q') = A_\theta(q - q').
  \end{align}
  The matrix $A_\theta$ has the following properties
  \begin{enumerate}[(i).]
    \item Each element of $A_\theta$ is always nonnegative
    \item The column sum of $A_\theta$ is always smaller than 2
    \item The row sum of $A_\theta$ is always smaller than $\kappa_0 \doteq 1 - (1-\gamma)d_{\mu, min}$ and greater than 0.
  \end{enumerate}
  To see (i),
  for any diagonal entry, we have
  \begin{align}
    A_\theta(i, i) = 1 - d_{\mu_\theta}(i) + \gamma d_{\mu_\theta}(i) P_{\pi_\theta}(i, i) \geq 0;
  \end{align}
  for any off-diagonal entry, we have
  \begin{align}
    A_\theta(i, j) = \gamma (D_{\mu_\theta}P_{\pi_\theta})(i, j) \geq 0.
  \end{align}
  To see (ii), we have
  \begin{align}
    \tb{1}^\top A_\theta = \tb{1}^\top - d_{\mu_\theta}^\top + \gamma d_{\mu_\theta}^\top P_{\pi_\theta}. 
  \end{align}
  Then (ii) follows immediately from the fact that $d_{\mu_\theta}^\top P_{\pi_\theta}$ is a valid probability distribution.
  To see (iii), we have
  \begin{align}
    A_\theta \tb{1} = \tb{1} - d_{\mu_\theta} + \gamma d_{\mu_\theta} = \tb{1} - (1 - \gamma) d_{\mu_\theta}.
  \end{align}
  Then for each $i$,
  $\left(A_\theta \tb{1}\right)(i) > 0$ and 
  \begin{align}
    \left(A_\theta \tb{1}\right)(i) = 1 - (1-\gamma) d_{\mu_\theta}(i) \leq 1 - (1-\gamma) d_{\mu, min} = \kappa_0 < 1.
  \end{align}
  With those three properties,
  for any $\ell_p$ norm with $p > 1$, we have
  \begin{align}
      &\norm{A_\theta x}_p^p \\
      =& \sum_i \left|\sum_j A_\theta(i, j)x_j\right|^p \\
      =&\sum_i \left(\sum_k A_\theta(i, k)\right)^p \left|\sum_j \frac{A_\theta(i, j)}{\sum_k A_\theta(i, k)}x_j\right|^p \qq{(Row sum of $A_\theta$ is strictly positive)}\\
      \leq& \sum_i \left(\sum_k A_\theta(i, k)\right)^p \sum_j \frac{A_\theta(i, j)}{\sum_k A_\theta(i, k)}\left|x_j\right|^p \\
      \intertext{\hfill (Jensen's inequality and convexity of $\abs{\cdot}^p$)} \\
      =& \sum_i \left(\sum_k A_\theta(i, k)\right)^{p-1} \sum_j {A_\theta(i, j)}\left|x_j\right|^p \\
      \leq& \sum_i \kappa_0^{p-1} \sum_j A_\theta(i, j) \left|x_j\right|^p \qq{(Row sum of $A_\theta$ is smaller than $\kappa_0$)} \\
      =& \kappa_0^{p-1} \sum_j \left|x_j\right|^p \sum_i A_\theta(i, j) \\
      \leq& 2 \kappa_0^{p-1} \sum_j \left|x_j\right|^p,
  \end{align}
  implying
  \begin{align}
      \norm{A_\theta x}_p \leq (2\kappa_0^{p-1})^{\frac{1}{p}} \norm{x}_p.
  \end{align}
  Since $\kappa_0 < 1$,
  for
  sufficiently large $p$, we have
  \begin{align}
    2 \kappa_0^{p-1} < 1,
  \end{align}
  implying
  \begin{align}
    \kappa \doteq (2\kappa_0^{p-1})^{\frac{1}{p}} < 1.
  \end{align} 
  Consequently,
  \begin{align}
    \norm{\bar F_\theta(q) - \bar F_\theta(q')}_p = \norm{A_\theta (q - q')}_p \leq \kappa \norm{q - q'}_p,
  \end{align}
  i.e.,
  $\bar F_\theta$ is a $\kappa$-contraction w.r.t. $\norm{\cdot}_p$ for all $\theta$.
  Further,
  \begin{align}
    \bar F_\theta(q) &= q, \\
    \iff D_{\mu_\theta}(r + \gamma P_{\pi_\theta} q - q) &= 0, \\
    \iff r + \gamma P_{\pi_\theta} q - q &=0, \\
    \iff q &= q_{\pi_\theta},
  \end{align}
  which completes the proof.
\end{proof}

\subsection{Proof of Proposition~\ref{prop critic convergence}}

\propcriticconvergence*
\label{sec proof prop critic convergence}
\begin{proof}
  As previously described, 
  the iterates $\qty{q_t}$ in Algorithm~\ref{alg opac} evolve according to \eqref{eq critic update with F}.
  We, therefore, proceed by verifying Assumptions~\ref{assu makovian} - \ref{assu twotimescale} in order to invoke Theorem~\ref{thm sa convergence}.

  To start with,
  define
  \begin{align}
    \label{eq definition of y}
    \fY &\doteq \qty{(s, a, s') \mid s \in \fS, a \in \fA, s' \in \fS, p(s'|s, a) > 0}, \\
    Y_t &\doteq (S_t, A_t, S_{t+1}), \\
    P_{\theta}((s_1, a_1, s_1'), (s_2, a_2, s_2')) &\doteq \begin{cases}
      0 & s_1' \neq s_2 \\
      \mu_\theta(a_2|s_2) p(s_2'|s_2, a_2) & s_1' = s_2.
    \end{cases}
  \end{align} 
  According to the action selection rule for $A_{t}$ specified in Algorithm~\ref{alg opac},
  we have
  \begin{align}
    \Pr(Y_{t+1} = y) = P_{\theta_{t+1}}(Y_t, y),
  \end{align}
  Assumption~\ref{assu makovian} is then fulfilled.

  Assumption~\ref{assu uniform ergodicity} is immediately implied by Assumption~\ref{assu mu uniform ergodicity}.
  In particular, 
  for any $\theta$,
  the invariant distribution of 
  the chain induced by $P_\theta$ is $d_{\mu_\theta}(s)\mu_\theta(a|s)p(s'|s, a)$.

  Assumption~\ref{assu uniform contraction} is verified by Lemma~\ref{lem uniform contraction opac}.

  We now verify Assumption~\ref{assu regularization}.
  In particular,
  the norm $\norm{\cdot}_c$ in Section~\ref{sec sa} is now realized as the $\ell_p$ norm specified by Lemma~\ref{lem uniform contraction opac}.
  We will repeatedly use the equivalence between 
  $\norm{\cdot}_\infty$, $\norm{\cdot}$, and $\norm{\cdot}_p$,
  i.e.,
  there exist positive constants $l_{\infty, p}, u_{\infty, p}$, $l_{2, p}, u_{2, p}$ such that $\forall x$,
  \begin{align}
    l_{\infty, p} \norm{x}_\infty &\leq \norm{x}_p \leq u_{\infty, p} \norm{x}_\infty \\
    l_{2, p} \norm{x} &\leq \norm{x}_p \leq u_{2, p} \norm{x}.
  \end{align}

  To verify Assumption~\ref{assu regularization} (i), for any $y = (s_0, a_0, s_1)$,
  we have,
  \begin{align}
    \left(F_\theta(q, y) - F_\theta(q', y)\right)(s, a) = \begin{cases}
      q(s, a) - q'(s, a), & (s, a) \neq (s_0, a_0) \\
      \gamma \sum_{a_1}\pi_\theta(a_1| s_1) \left(q(s_1, a_1) - q'(s_1, a_1)\right), &(s, a) = (s_0, a_0)
    \end{cases}.
  \end{align}
  Hence
  \begin{align}
    \norm{F_\theta(q, y) - F_\theta(q', y)}_\infty \leq \norm{q - q'}_\infty,
  \end{align}
  implying
  \begin{align}
    \norm{F_\theta(q, y) - F_\theta(q', y)}_p \leq \frac{u_{\infty, p}}{l_{\infty, p}} \norm{q - q'}_p.
  \end{align}
  Assumption~\ref{assu regularization} (i) is then fulfilled.

  To verify Assumption~\ref{assu regularization} (ii),
  for any $y = (s_0, a_0, s_1)$,
  we have
  \begin{align}
    \left(F_\theta(q, y) - F_{\theta'}(q, y)\right)(s, a) = \begin{cases}
      0, & (s, a) \neq (s_0, a_0) \\
      \gamma \sum_{a_1} \left(\pi_\theta(a_1| s_1) - \pi_{\theta'}(a_1|s_1) \right) q(s_1, a_1), &(s, a) = (s_0, a_0)
    \end{cases}.
  \end{align}
  Hence
  \begin{align}
    \norm{F_\theta(q, y) - F_{\theta'}(q, y)}_\infty \leq \gamma \na L_\pi \norm{\theta - \theta'}\norm{q}_\infty \qq{(using \eqref{eq pi lipschitz}),}
  \end{align}
  implying
  \begin{align}
    \norm{F_\theta(q, y) - F_{\theta'}(q, y)}_p \leq \frac{u_{\infty, p}\gamma \na L_\pi}{l_{\infty, p} l_{2, p}} \norm{\theta - \theta'}_p\norm{q}_p.
  \end{align}
  Assumption~\ref{assu regularization} (ii) is then fulfilled.

  To verify Assumption~\ref{assu regularization} (iii), 
  for any $y = (s_0, a_0, s_1)$,
  we have
  \begin{align}
    \left(F_\theta(0, y)\right)(s, a) = \begin{cases}
      0, & (s, a) \neq (s_0, a_0) \\
      r(s_0, a_0), &(s, a) = (s_0, a_0)
    \end{cases}.
  \end{align}
  Then 
  \begin{align}
    \norm{F_\theta(0, y)}_p \leq u_{\infty, p} \norm{F_\theta(0, y)}_\infty \leq  u_{\infty, p}r_{max}.
  \end{align}
  Assumption~\ref{assu regularization} (iii) is the fulfilled.

  To verify Assumption~\ref{assu regularization} (iv), 
  we have
  \begin{align}
    \bar F_\theta(q) - \bar F_{\theta'}(q)
    = \left(D_{\mu_\theta} - D_{\mu_{\theta'}}\right) r + \gamma \left(D_{\mu_\theta}P_{\pi_\theta} - D_{\mu_{\theta'} } P_{\pi_{\theta'}} \right) q- \left(D_{\mu_\theta} - D_{\mu_{\theta'}}\right) q .
  \end{align}
  Since $D_{\mu_\theta}$ is Lipschitz continuous in $\theta$ (Lemma~\ref{lem continuity of ergodic distribution}) and $\norm{D_{\mu_\theta}}$ is bounded from above,
  and $P_{\pi_\theta}$ is Lipschitz continuous in $\theta$ (see \eqref{eq pi lipschitz}) and $\norm{P_{\pi_\theta}}$ is bounded from the above,
  Lemma~\ref{lem product of lipschitz functions} confirms the Lipschitz continuity of $\bar F_\theta$,
  which completes the verification of Assumption~\ref{assu regularization} (iv).

  To verify Assumption~\ref{assu regularization} (v),
  recall that Lemma~\ref{lem uniform contraction opac} asserts that the fixed point of $\bar F_\theta$ is $q_{\pi_\theta}$.
  We have
  \begin{align}
    \label{eq lipschitz continuity of q}
    q_{\pi_\theta} - q_{\pi_{\theta'}} = \left((I - \gamma P_{\pi_\theta})^{-1} - (I - \gamma P_{\pi_{\theta'}})^{-1}\right) r.
  \end{align}
  Using Lemma~\ref{lem bound of matrix inverse diff} yields
  \begin{align}
    \norm{q_{\pi_\theta} - q_{\pi_{\theta'}}}_p \leq \norm{(I - \gamma P_{\pi_\theta})^{-1}}_p  \norm{\gamma P_{\pi_\theta} - \gamma P_{\pi_{\theta'}}}_p \norm{(I - \gamma P_{\pi_{\theta'}})^{-1}}_p \norm{r}_p.
  \end{align}
  Notice that 
  (1) for any policy $\pi$, $(I - \gamma P_\pi)^{-1}$ is always well-defined;
  (2) $(I - \gamma P_\pi)^{-1}$ is continuous in $\pi$ (this can be seen by writing the inverse explicitly with the adjugate matrix); (3) the space of all policies is compact,
  by the extreme value theorem we conclude that 
  \begin{align}
    \sup_{\theta} \norm{(I - \gamma P_{\pi_\theta})^{-1}}_p < \infty,
  \end{align}
  which together with \eqref{eq pi lipschitz} completes the verification of Assumption~\ref{assu regularization} (v).

  Assumption~\ref{assu regularization} (vi) follows immediately from the fact that
  \begin{align}
    \abs{q_{\pi_\theta}(s, a)} \leq \frac{r_{max}}{1 - \gamma}.
  \end{align}

  Assumption~\ref{assu regularization} (vii) follows immediately from Assumption~\ref{assu lipschitz mu}.

  Assumption~\ref{assu mds} is automatically fulfilled since in our setting we have $\epsilon_t \equiv 0$.

  Assumption~\ref{assu twotimescale} is identical to Assumption~\ref{assu three timescales} except for \eqref{eq konda actor update}.
  To verify~\eqref{eq konda actor update},
  we first establish the boundedness of $\qty{q_t}$.
  It can be easily seen that
  \begin{align}
    \abs{q_{t+1}(S_t, A_t)} \leq& (1 - \alpha_t) \abs{q_t(S_t, A_t)} + \alpha_t (r_{\max} + \gamma \norm{q_t}_\infty) \\
    \leq& (1 - \alpha_t) \norm{q_t}_\infty + \alpha_t (r_{\max} + \gamma \norm{q_t}_\infty).
  \end{align}
  Suppose $t_0$ is sufficiently large such that $\alpha_t < 1$, 
  then we have
  \begin{align}
    &\norm{q_t}_\infty \leq \frac{r_{max}}{1 - \gamma} \implies \abs{q_{t+1}(S_t, A_t)} \leq \frac{r_{max}}{1 - \gamma} \implies \norm{q_{t+1}}_\infty \leq \frac{r_{max}}{1 - \gamma}, \\
    &\norm{q_t}_\infty \geq \frac{r_{max}}{1 - \gamma} \implies \abs{q_{t+1}(S_t, A_t)} \leq \norm{q_t}_\infty \implies \norm{q_{t+1}}_\infty \leq \norm{q_t}_\infty.
  \end{align}
  It is then trivial to see that for any $t$
  \begin{align}
    \label{eq boundedness q}
    \norm{q_t}_\infty \leq C_q \doteq \max\qty{\frac{r_{max}}{1 - \gamma}, \norm{q_0}_\infty}.
  \end{align}
  According to the updates of $\qty{\theta_t}$ in Algorithm~\ref{alg opac},
  we have
  \begin{align}
    &\norm{\theta_{t+1} - \theta_t} \\
    =& \beta_t \norm{\rho_t  \nabla_\theta \log \pi_{\theta_t}(A_t | S_t) q_t(S_t, A_t) - \lambda_t \nabla_{\theta} \kl{\fU_\fA}{\pi_{\theta_t}(\cdot | S_t)}} \\
    \leq& \beta_t \left( \norm{\rho_t} \norm{\nabla_\theta \log \pi_{\theta_t}(A_t | S_t)} C_q + \lambda_t \norm{\nabla_{\theta} \kl{\fU_\fA}{\pi_{\theta_t}(\cdot | S_t)}} \right).
  \end{align}
  Assumption~\ref{assu mu uniform ergodicity} and the extreme value theorem ensures that
  \begin{align}
    \inf_{\theta, s, a} \mu_{\theta}(a|s) > 0.
  \end{align}
  Hence
  \begin{align}
    \rho_{max} \doteq \sup_{\theta, s, a} \frac{\pi_\theta(a|s)}{\mu_\theta(a|s)} < \infty,
  \end{align}
  implying $\forall t,$
  \begin{align}
    \norm{\rho_t} < \infty.
  \end{align}
  Assumption~\ref{assu three timescales} ensures 
  \begin{align}
    \lambda_t \leq \lambda.
  \end{align}
  Lemma~\ref{lem softmax policy gradient} ensures the boundedness of $\norm{\nabla_\theta \log \pi_{\theta_t}(A_t | S_t)}$ and $\norm{\nabla_{\theta} \kl{\fU_\fA}{\pi_{\theta_t}(\cdot | S_t)}}$,
  from which it is easy to see that there exists a constant $L_\theta$ 
  such that 
  \begin{align}
    \label{eq ltheta}
    \norm{\theta_{t+1} - \theta_t}_p \leq \beta_t L_\theta,
  \end{align}
  completing the verification of Assumption~\ref{assu twotimescale}.

  With Assumptions \ref{assu makovian} - \ref{assu twotimescale} satisfied,
  invoking Theorem~\ref{thm sa convergence} completes the proof.
\end{proof}

\subsection{Proof of Theorem~\ref{thm:optimality-AC}}
\optimalityAC*
\label{sec proof thm:optimality-AC}
\noindent \textbf{Proof Sketch} 
We start with a proof sketch and then proceed to the full proof.
We first define a KL regularized objective 
\begin{align}
  J_\eta(\pi; p_0) \doteq J(\pi; p_0) - \eta \E_{s \sim \fU_\fS} \left[\kl{\fU_\fA}{\pi(\cdot |s)}\right],
\end{align}
where $\fU_\mathcal{X}$ denotes the uniform distribution on the set $\mathcal{X}$.
Key to our proof is the following lemma:
\begin{lemma}
  (Theorem 5.2 of \citet{agarwal2019optimality})
  For any state distribution $d$ and $d'$,
  if 
  \begin{align}
    \norm{\nabla J_\eta(\pi_\theta; d')} \leq \frac{\eta}{2\nsa},
  \end{align}
  then
  \begin{align}
    J(\pi_\theta; d) \geq J(\pi_*; d) - \frac{2\eta}{1 - \gamma} \max_s {\frac{d_{\pi_*, \gamma, d}(s)}{d'(s)}},
  \end{align}
  where $\pi_*$ can be any optimal policy in \eqref{eq optimal policy}. 
\end{lemma}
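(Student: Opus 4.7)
The plan is to follow the Agarwal-style analysis: first convert the stationarity condition into a pointwise advantage bound by differentiating the softmax-plus-KL objective, and then invoke the performance difference lemma. To begin, I would write out $\nabla J_\eta$ explicitly. Using the policy gradient theorem for $J$ together with the softmax identity $\partial_{\theta_{s,a}}\log\pi_\theta(a'|s) = \mathbb{I}\{a=a'\} - \pi_\theta(a|s)$ for the KL term, a direct computation gives, for every $(s,a)$,
\begin{align}
\frac{\partial J_\eta(\pi_\theta; d')}{\partial \theta_{s,a}} = \frac{d_{\pi_\theta, \gamma, d'}(s)\, \pi_\theta(a|s)\, A^{\pi_\theta}(s,a)}{1-\gamma} - \frac{\eta}{\ns}\!\left(\pi_\theta(a|s) - \frac{1}{\na}\right),
\end{align}
where $A^{\pi_\theta}(s,a) \doteq q_{\pi_\theta}(s,a) - v_{\pi_\theta}(s)$. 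Since $\norm{\nabla J_\eta} \leq \eta/(2\nsa)$ forces each coordinate to have magnitude at most $\eta/(2\nsa)$, multiplying through by $\ns$ and keeping the upper-bound direction yields
\begin{align}
\frac{\ns\, d_{\pi_\theta, \gamma, d'}(s)\, \pi_\theta(a|s)\, A^{\pi_\theta}(s,a)}{1-\gamma} \leq \eta\,\pi_\theta(a|s) - \frac{\eta}{2\na}.
\end{align}

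Next, I would extract a pointwise upper bound on the advantage by dividing the previous display by the strictly positive factor $\frac{\ns}{1-\gamma}\, d_{\pi_\theta, \gamma, d'}(s)\, \pi_\theta(a|s)$, which gives
\begin{align}
A^{\pi_\theta}(s,a) \leq \frac{(1-\gamma)\eta}{\ns\, d_{\pi_\theta, \gamma, d'}(s)}\!\left(1 - \frac{1}{2\na\,\pi_\theta(a|s)}\right).
\end{align}
The key observation is that this forces the universal bound $A^{\pi_\theta}(s,a) \leq \frac{(1-\gamma)\eta}{\ns\, d_{\pi_\theta, \gamma, d'}(s)}$ on every $(s,a)$: when $\pi_\theta(a|s) \geq 1/(2\na)$ the parenthetical factor lies in $[0,1]$ and the claim follows directly, whereas when $\pi_\theta(a|s) < 1/(2\na)$ the parenthetical factor is strictly negative, so $A^{\pi_\theta}(s,a) < 0$ and the nonnegative bound still holds vacuously. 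This sign bookkeeping is the only delicate step of the proof; it is what makes the advantage bound work uniformly rather than only on the ``well-explored'' subset of actions whose probability is bounded below.

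Finally, I would apply the performance difference lemma,
\begin{align}
J(\pi_*; d) - J(\pi_\theta; d) = \frac{1}{1-\gamma} \sum_s d_{\pi_*, \gamma, d}(s) \sum_a \pi_*(a|s)\, A^{\pi_\theta}(s,a),
\end{align}
substitute the uniform advantage bound above, apply $d_{\pi_\theta, \gamma, d'}(s) \geq (1-\gamma)\, d'(s)$ (immediate from retaining the $t=0$ term of the discounted occupancy measure), and crudely upper-bound $\sum_s d_{\pi_*, \gamma, d}(s)/d'(s) \leq \ns \max_s d_{\pi_*, \gamma, d}(s)/d'(s)$. This yields $J(\pi_*; d) - J(\pi_\theta; d) \leq \frac{\eta}{1-\gamma} \max_s \frac{d_{\pi_*, \gamma, d}(s)}{d'(s)}$, which is in fact tighter than the claimed $\frac{2\eta}{1-\gamma}\max_s \frac{d_{\pi_*, \gamma, d}(s)}{d'(s)}$ bound; the extra factor of $2$ comfortably absorbs the slack from the two triangle-inequality steps above. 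Rearranging gives the stated lower bound on $J(\pi_\theta; d)$.
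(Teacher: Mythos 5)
The paper itself gives no proof of this lemma; it is imported by citation as Theorem~5.2 of \citet{agarwal2019optimality}, so your argument can only be compared against that original source. Your proof is correct, and it follows the same strategy as the cited original: compute $\nabla J_\eta$ coordinate-wise from the softmax policy gradient and the KL gradient, turn the stationarity condition into a uniform advantage bound, and conclude with the performance difference lemma together with $d_{\pi_\theta,\gamma,d'}(s)\geq(1-\gamma)d'(s)$ and the crude sum-to-max bound, whose factor $\ns$ exactly cancels the $1/\ns$ carried by the advantage bound. The one substantive difference is in the middle step, and it is in your favor: \citet{agarwal2019optimality} first use stationarity to show $\pi_\theta(a|s)\geq 1/(2\na)$ whenever $\adv_{\pi_\theta}(s,a)\geq 0$, and only then bound the advantage, which is what produces their factor of $2$; you instead retain the negative term $-\eta/(2\na)$ and observe that $1-1/\bigl(2\na\,\pi_\theta(a|s)\bigr)\leq 1$ always, so the uniform bound $\adv_{\pi_\theta}(s,a)\leq(1-\gamma)\eta/\bigl(\ns\,d_{\pi_\theta,\gamma,d'}(s)\bigr)$ follows in one stroke and you end with the tighter constant $\eta/(1-\gamma)$, which of course implies the stated $2\eta/(1-\gamma)$ bound. (Your two-case sign discussion is actually more than needed: positivity of the prefactor plus the parenthetical being at most $1$ already give the claim.) One technicality you should state explicitly: dividing by $d_{\pi_\theta,\gamma,d'}(s)\,\pi_\theta(a|s)$ requires $d_{\pi_\theta,\gamma,d'}(s)>0$. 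At any state where this fails one necessarily has $d'(s)=0$, so either $d_{\pi_*,\gamma,d}(s)=0$ and that state contributes nothing to the performance difference sum, or the right-hand side of the lemma is $-\infty$ and the claim is vacuous; this is precisely the caveat the paper records immediately after the lemma, namely that $d'(s)>0$ is needed for the result to be nontrivial.
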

The above lemma establishes the suboptimality of the stationary points of the KL regularized objective.
If we can find those stationary points and decay the weight of the KL regularization ($\eta$) properly,
optimality is then expected.

There are, however, two caveats.
First, for the above lemma to be nontrivial,
we have to ensure $\forall s, d'(s) > 0$. 
Consequently, 
we cannot simply set $d=d'=p_0$ because we do not make any assumption about $p_0$.
Instead, we consider an artificial state distribution $p_0'$ such that $\forall s, p_0'(s) > 0$ and set $d=p_0, d'=p_0'$.
The second caveat is the following.
To use the above lemma,
we now have to optimize $J_\eta(\pi_\theta; p_0')$ to find its stationary points.
This objective involves state distributions $p_0'$ and $\fU_\fS$.
We, however, only have access to samples from 
\begin{align}
  d_t(s) \doteq \Pr(S_t = s | p_0, \mu_{\theta_0}, \dots, \mu_{\theta_{t-1}}).
\end{align}
We, therefore, would need to reweight them using 
\begin{align}
  \frac{d_{\pi_{\theta}, \gamma, p_0'}(s)}{d_t(s)} \qq{and} \frac{\fU_\fS(s)}{d_t(s)}.
\end{align}
Obviously we do not know those quantities but fortunately we can bound them.
As a consequence,
the reweightings can be properly accounted for even without knowing them exactly (see in particular Lemma~\ref{lem bound of m11}).
With those two caveats addressed,
we are now ready to present the full proof.

\begin{proof}
  This proof borrows ideas from \citet{wu2020finite} but is much more convoluted since we have the additional decaying KL regularization and our algorithm is off-policy without using density ratio for correcting the state distribution mismatch.
  Define the KL regularized objective
  \begin{align}
    \label{eq definition of new objective}
    J_\eta(\pi; p_0) \doteq J(\pi; p_0) - \eta \E_{s \sim \fU_\fS} \left[\kl{\fU_\fA}{\pi(\cdot |s)}\right],
  \end{align}
  where $\fU_\mathcal{X}$ denotes the uniform distribution on the set $\mathcal{X}$.
  Let $p_0'$ be an arbitrary distribution on $\fS$ such that $p_0'(s) > 0$ holds for all $s \in \fS$.
  In the rest of this proof,
  we use as shorthand
  \begin{align}
    \label{eq actor convergence shorthard}
    J(\theta) &\doteq J(\pi_\theta; p_0'), \\
    J_\eta(\theta) &\doteq J_\eta(\pi_\theta; p_0'), \\
    d_{\pi, \gamma}(s) &\doteq d_{\pi, \gamma, p_0'}(s),
  \end{align}
  i.e.,
  we work on the initial distribution $p_0'$ (instead of $p_0$) by default.
  Note that the sampling is still done with respect to $p_0$, $p_0'$ is simply an auxiliary distribution used for the proof. 
  Similarly, the KL regularized objective is built with a uniform distribution that does not correspond to what the algorithm implements. 
  This too is a proof artefact. 
  Both mismatches are accounted for, in particular in Lemma~\ref{lem bound of m11}.

  According to Lemma 7 of \citet{mei2020global}, $J(\theta)$ is $L_J$-smoothness for some positive constant $L_J$ w.r.t $\norm{\cdot}$.
  Consequently,
  the Hessian of $J(\theta)$ is bounded from above by $L_J$.
  From Lemma~\ref{lem softmax policy gradient},
  it is easy to see the Hessian of $\E_{s \sim \fU_\fS} \left[\kl{\fU_\fA}{\pi_\theta(\cdot |s)}\right]$ is also bounded from above by some positive constant $L_{\text{KL}}$.
  Consequently,
  the Hessian of $J_\eta(\theta)$ is bounded from above by $L_J + \eta L_{\text{KL}}$,
  i.e.,
  $J_\eta(\theta)$ is $(L_J + \eta L_\text{KL})$-smooth.
  With $\eta = \lambda_t$,
  Lemma~\ref{lem smooth definition} then implies
  \begin{align}
    \label{eq j lambda smooth inequality}
    J_{\lambda_t}(\theta_{t+1}) \geq& J_{\lambda_t}(\theta_t) + \indot{\nabla J_{\lambda_t}(\theta_t)}{\theta_{t+1} - \theta_t} - (L_J + \lambda_t L_{\text{KL}}) \norm{\theta_{t+1}-\theta_t}^2 \\
    \geq& J_{\lambda_t}(\theta_t) + \underbrace{\indot{\nabla J_{\lambda_t}(\theta_t)}{\theta_{t+1} - \theta_t}}_{M_1} - L_J' \underbrace{\norm{\theta_{t+1}-\theta_t}^2}_{M_2},
  \end{align}
  where $L_J' \doteq L_J + \lambda L_\text{KL}$.
  Using \eqref{eq ltheta} to bound $M_2$ yields
  \begin{align}
    M_2 \leq \frac{1}{l_{2,p}} \beta_t^2 L_\theta^2.
  \end{align}
  To bound $M_1$, 
  let 
  \begin{align}
    \label{eq actor yt}
    Y_t \doteq (S_t, A_t).
  \end{align}
  Here different from \eqref{eq definition of y},
  we redefine $Y_t$ to consider only state action pairs to ease presentation.
  For $y = (s, a)$, we define
  \begin{align}
    \label{eq actor helper function}
    \Lambda(\theta, y, \eta) &\doteq \frac{\pi_{\theta}(a|s)}{\mu_{\theta}(a|s)} \nabla \log \pi_\theta(a|s) q_{\pi_\theta}(s, a) + \frac{\eta}{\na} \sum_{\bar a} \nabla \log \pi_\theta(\bar a | s) \\
    \bar \Lambda(\theta, \eta) &\doteq \sum_{s, a} d_{\mu_\theta}(s) \mu_\theta(a|s) \Lambda(\theta, y, \eta).
  \end{align}
  Then we have
  \begin{align}
    M_1 = &\indot{\nabla J_{\lambda_t}(\theta_t)}{\theta_{t+1} - \theta_t} \\
    =&\beta_t \indot{\nabla J_{\lambda_t}(\theta_t)}{ \rho_t  \nabla \log \pi_{\theta_t}(A_t | S_t) q_t(S_t, A_t) - \lambda_t \nabla \kl{\fU_\fA}{\pi_{\theta_t}(\cdot | S_t)} } \\
    =&\beta_t \indot{\nabla J_{\lambda_t}(\theta_t)}{ \rho_t  \nabla \log \pi_{\theta_t}(A_t | S_t) q_t(S_t, A_t) + \frac{\lambda_t}{\na} \sum_a \nabla \log \pi_{\theta_t}(a|S_t) } \\
    =&\beta_t \indot{\nabla J_{\lambda_t}(\theta_t)}{ \rho_t  \nabla \log \pi_{\theta_t}(A_t | S_t) q_{\pi_{\theta_t}}(S_t, A_t) + \frac{\lambda_t}{\na} \sum_a \nabla \log \pi_{\theta_t}(a|S_t) } \\
    &+ \beta_t \indot{\nabla J_{\lambda_t}(\theta_t)}{ \rho_t  \nabla \log \pi_{\theta_t}(A_t | S_t) \left(q_t(S_t, A_t) - q_{\pi_{\theta_t}}(S_t, A_t) \right) } \\
    =&\beta_t \underbrace{\indot{\nabla J_{\lambda_t}(\theta_t)}{\bar \Lambda(\theta_t, \lambda_t)}}_{M_{11}} + \beta_t \underbrace{\indot{\nabla J_{\lambda_t}(\theta_t)}{\Lambda(\theta_t, Y_t, \lambda_t) - \bar \Lambda(\theta_t, \lambda_t)}}_{M_{12}} \\
    &+ \beta_t \underbrace{\indot{\nabla J_{\lambda_t}(\theta_t)}{ \rho_t  \nabla \log \pi_{\theta_t}(A_t | S_t) \left(q_t(S_t, A_t) - q_{\pi_{\theta_t}}(S_t, A_t) \right) }}_{M_{13}}.
  \end{align}
  To bound $M_{12}$,
  define 
  \begin{align}
    \label{eq actor helper function 2}
    \Lambda'(\theta, y, \eta) \doteq \indot{\nabla J_{\eta}(\theta)}{\Lambda(\theta, y, \eta) - \bar \Lambda(\theta, \eta)}.
  \end{align}
  Assumption~\ref{assu mu uniform ergodicity} and Lemma~\ref{lem uniform mixing} assert that 
  there exist constants $C_0 > 0$ and $\tau \in (0, 1)$,
  independent of $\theta$,
  such that for any $n > 0$,
  \begin{align}
    \sup_{s, a, \theta}\sum_{s', a'} \abs{P_{\mu_\theta}^n((s, a), (s', a')) - d_{\mu_\theta}(s')\mu_{\theta}(a'|s')} \leq C_0 \tau^n,
  \end{align} 
  which allows us to define 
  \begin{align}
    \label{eq definition of beta t}
    \tau_{\beta_t} \doteq \min\qty{n \mid \sup_{s, a, \theta}\sum_{s', a'} \abs{P_{\mu_\theta}^n((s, a), (s', a')) - d_{\mu_\theta}(s')\mu_{\theta}(a'|s')}  \leq \beta_t}.
  \end{align}
  We then decompose $M_{12}$ as
  \begin{align}
    M_{12} =& \Lambda'(\theta_t, Y_t, \lambda_t) \\
    =& \underbrace{\Lambda'(\theta_t, Y_t, \lambda_t) - \Lambda'(\theta_{t-\tau_{\beta_t}}, Y_t, \lambda_t)}_{M_{121}} \\
    &+ \underbrace{\Lambda'(\theta_{t-\tau_{\beta_t}}, Y_t, \lambda_t) - \Lambda'(\theta_{t-\tau_{\beta_t}}, \tilde Y_t, \lambda_t)}_{M_{122}} \\
    &+ \underbrace{\Lambda'(\theta_{t-\tau_{\beta_t}}, \tilde Y_t, \lambda_t)}_{M_{123}}.
  \end{align}
  Here $\tilde Y_t$ is an auxiliary chain akin to \citet{zou2019finite} and the one used in the proof of Theorem \ref{thm sa convergence} in \ref{sec proof thm sa convergence} (for $\beta_t$ instead of $\alpha_t$). 
  Before time $t - \tau_{\beta_t} - 1$,
  $\qty{\tilde Y_t}$ is exactly the same as $\qty{Y_t}$.
  After time $t - \tau_{\beta_t} - 1$,
  $\qty{\tilde Y_t}$ evolves according to the fixed behavior policy $\mu_{\theta_{t-\tau_{\beta_t}}}$
  while $\qty{Y_t}$ evolves according to the changing behavior policy $\mu_{\theta_{t-\tau_{\beta_t}}}$, $\mu_{\theta_{t-\tau_{\beta_t} + 1}}$, \dots.
  \begin{align}
    \label{eq actor tilde yt}
    \qty{\tilde Y_t}&: \dots \to Y_{t-\tau_{\beta_t}-1} \underbrace{\to}_{\mu_{\theta_{t-\tau_{\beta_t}}}} Y_{t-\tau_{\beta_t}} \underbrace{\to}_{\mu_{\theta_{t-\tau_{\beta_t}}}} \tilde Y_{t-\tau_{\beta_t}+1} \underbrace{\to}_{\mu_{\theta_{t-\tau_{\beta_t}}}} \tilde Y_{t-\tau_{\beta_t}+2} \to \dots \\
    \qty{Y_t}&: \dots \to Y_{t-\tau_{\beta_t}-1} \underbrace{\to}_{\mu_{\theta_{t-\tau_{\beta_t}}}} Y_{t-\tau_{\beta_t}} \underbrace{\to}_{\mu_{\theta_{t-\tau_{\beta_t}+1}}} Y_{t-\tau_{\beta_t}+1} \underbrace{\to}_{\mu_{\theta_{t-\tau_{\beta_t}+2}}} Y_{t-\tau_{\beta_t}+2} \to \dots.
\end{align}
Let us proceed to bounding each term defined above:

\begin{restatable}{lemma}{lemboundofmoneone}
  \label{lem bound of m11}
(Bound of $M_{11}$)
There exists a constant $\chi_{11} > 0$ such that,
\begin{align}
  M_{11} \geq \chi_{11} \norm{\nabla J_{\lambda_t}(\theta_t)}^2.
\end{align}
\end{restatable}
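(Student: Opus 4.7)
The plan is to expose a positive-definite structure of $M_{11}$ by writing $\nabla J_{\lambda_t}(\theta_t)$ and $\bar\Lambda(\theta_t,\lambda_t)$ in a common template via the softmax gradient identity, and then comparing them through positive diagonal importance-weight matrices. Using $[\nabla\log\pi_\theta(a|s)](s',a')=\mathbb{I}_{s=s'}(\mathbb{I}_{a=a'}-\pi_\theta(a'|s))$ together with $\sum_a\nabla\pi_\theta(a|s)=0$ (which lets one replace $q_{\pi_\theta}$ by the advantage $A_{\pi_\theta}(s,a)\doteq q_{\pi_\theta}(s,a)-v_{\pi_\theta}(s)$), both objects decompose coordinate-wise at each $(s,a')$ into the same template $c_{\mathrm{PG}}(s)\,\pi_\theta(a'|s)A_{\pi_\theta}(s,a') + c_{\mathrm{KL}}(s)\big(\tfrac{1}{\na}-\pi_\theta(a'|s)\big)$, with coefficients $(c_{\mathrm{PG}}(s),c_{\mathrm{KL}}(s)) = \big(\tfrac{d_{\pi_\theta,\gamma,p_0'}(s)}{1-\gamma},\,\lambda_t\fU_\fS(s)\big)$ for $\nabla J_{\lambda_t}$ and $\big(d_{\mu_\theta}(s),\,\lambda_t d_{\mu_\theta}(s)\big)$ for $\bar\Lambda$. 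Decomposing $\nabla J_{\lambda_t}=\Phi_{\mathrm{PG}}+\Phi_{\mathrm{KL}}$ along the two summands gives $\bar\Lambda = D_1\Phi_{\mathrm{PG}}+D_2\Phi_{\mathrm{KL}}$, where $D_1,D_2\in\R^{\nsa\times\nsa}$ are positive diagonal matrices with entries $(1-\gamma)d_{\mu_\theta}(s)/d_{\pi_\theta,\gamma,p_0'}(s)$ and $\ns\,d_{\mu_\theta}(s)$ respectively.

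Next I would derive uniform two-sided positive bounds on the diagonals of $D_1$ and $D_2$, independent of $\theta_t$ and $\lambda_t$. The lower bound $d_{\mu_\theta}(s)\geq d_{\mu,\min}>0$ follows from Assumption~\ref{assu mu uniform ergodicity} together with Lemma~\ref{lem continuity of ergodic distribution} and the extreme value theorem applied on the compact closure $\bar\Lambda_\mu$, while unrolling the discounted occupancy yields $d_{\pi_\theta,\gamma,p_0'}(s)\geq (1-\gamma)\min_s p_0'(s)>0$ from the $t=0$ term---this is exactly where the freedom to pick the auxiliary initial distribution $p_0'$, independently of the MDP's own $p_0$, pays off. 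Trivial upper bounds then give $c_\ast I\preceq D_i\preceq C_\ast I$ for $i\in\{1,2\}$ and some uniform constants $0<c_\ast\leq C_\ast<\infty$.

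The core and hardest step is to upgrade this to $M_{11}\geq\chi_{11}\norm{\nabla J_{\lambda_t}(\theta_t)}^2$. Because the softmax gradient has no cross-state support, both sides split into a sum over $s\in\fS$ of quadratic forms in two per-state vectors $V_s\doteq(\pi_\theta(a|s)A_{\pi_\theta}(s,a))_{a\in\fA}$ and $W_s\doteq(\tfrac{1}{\na}-\pi_\theta(a|s))_{a\in\fA}$. The main obstacle I foresee is that the per-state cross term $\indot{V_s}{W_s}$ can take either sign, and a direct $2\times 2$ block-PSD analysis at each $s$ yields a non-positive discriminant: it fails unless the two importance ratios $(1-\gamma)/d_{\pi_\theta,\gamma,p_0'}(s)$ and $\ns$ happen to coincide, which is not generally the case. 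I plan to circumvent this by a weighted Cauchy--Schwarz/AM--GM bound $|\indot{V_s}{W_s}|\leq\tfrac{\epsilon}{2}\norm{V_s}^2+\tfrac{1}{2\epsilon}\norm{W_s}^2$, with the tuning parameter $\epsilon$ chosen so that after absorbing the cross term the residual coefficients of $\norm{V_s}^2$ and $\norm{W_s}^2$ remain strictly positive; then picking $\chi_{11}>0$ sufficiently small in terms of $c_\ast$, $C_\ast$, $\ns$, $\gamma$, and the upper bound $\lambda$ on $\lambda_t$ closes the argument, yielding a $\theta$- and $t$-independent constant $\chi_{11}$ as the statement demands.
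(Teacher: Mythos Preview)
Your structural setup---writing both $\nabla J_{\lambda_t}(\theta_t)$ and $\bar\Lambda(\theta_t,\lambda_t)$ in the common template $c_{\mathrm{PG}}(s)\,\pi_\theta(a|s)\adv_{\pi_\theta}(s,a)+c_{\mathrm{KL}}(s)\bigl(\tfrac{1}{\na}-\pi_\theta(a|s)\bigr)$ via Lemma~\ref{lem softmax policy gradient}, and then recognising $\bar\Lambda=D_1\Phi_{\mathrm{PG}}+D_2\Phi_{\mathrm{KL}}$ with positive diagonal weights---is exactly the paper's route. The paper calls the coordinate versions of your $\Phi_{\mathrm{PG}},\Phi_{\mathrm{KL}}$ by $M_{111},M_{112}$ and arrives at the identical per-$(s,a)$ product $(aM_{111}+bM_{112})(M_{111}+M_{112})$ with $a=(1-\gamma)d_{\mu_{\theta_t}}(s)/d_{\pi_{\theta_t},\gamma}(s)$ and $b=\ns\,d_{\mu_{\theta_t}}(s)$; your uniform positivity bounds on the diagonals via Assumption~\ref{assu mu uniform ergodicity} and the $t=0$ term of the discounted occupancy coincide with the paper's constants $\chi_{11},\chi_{12}$.

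The gap is your proposed fix for the cross term. The AM--GM bound $|\langle V_s,W_s\rangle|\leq\tfrac{\epsilon}{2}\|V_s\|^2+\tfrac{1}{2\epsilon}\|W_s\|^2$ cannot produce strictly positive residual coefficients on both squares: with $d_1,d_2$ the per-state entries of $D_1,D_2$, after absorbing the cross term you need $\epsilon<\tfrac{2d_1}{d_1+d_2}$ and $\epsilon>\tfrac{d_1+d_2}{2d_2}$ simultaneously, and this interval is nonempty only if $(d_1+d_2)^2<4d_1d_2$, i.e.\ $(d_1-d_2)^2<0$. So AM--GM runs into \emph{exactly} the same obstruction you already diagnosed for the direct $2\times 2$ PSD test---unsurprisingly, since taking the absolute value of the cross term discards its sign and therefore cannot beat the worst-case PSD bound. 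Shrinking the target constant afterwards does not help, because the obstruction occurs before any comparison to $\|\nabla J_{\lambda_t}\|^2$ is made. The paper does not use AM--GM here; instead it works coordinate-by-coordinate in $(s,a)$ and splits on the sign of the scalar $M_{111}M_{112}$: when $M_{111}M_{112}\geq 0$ the cross term is nonnegative and all three coefficients can be simultaneously lowered to $\chi_{11},\chi_{12},\chi_{11}+\chi_{12}$, yielding $\geq\min\{\chi_{11},\chi_{12}\}(M_{111}+M_{112})^2$; the case $M_{111}M_{112}<0$ is treated separately. Replace your AM--GM step with this coordinate-wise sign analysis.
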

\noindent
The proof of Lemma~\ref{lem bound of m11} is provided in Section~\ref{sec proof lem bound of m11}.

\begin{restatable}{lemma}{lemboundofmonetwoone}
  \label{lem bound of m121}
  (Bound of $M_{121}$)
  There exist constants $L_{\Lambda'}^* > 0$ such that
  \begin{align}
    \norm{M_{121}} \leq \frac{L_{\Lambda'}^* L_\theta}{l_{2, p}} \beta_{t-\tau_{\beta_t}, t-1}.
  \end{align}
\end{restatable}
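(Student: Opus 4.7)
The plan is to obtain the bound from two independent ingredients: a Lipschitz estimate on $\Lambda'(\cdot, y, \eta)$ in $\theta$ that is uniform in $y \in \fY$ and $\eta \in [0, \lambda]$, and a telescoped version of the per-step bound \eqref{eq ltheta} on $\|\theta_{j+1} - \theta_j\|$.

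For the Lipschitz estimate, I would expand
\begin{align}
\Lambda'(\theta, y, \eta) = \indot{\nabla J_\eta(\theta)}{\Lambda(\theta, y, \eta) - \bar\Lambda(\theta, \eta)}
\end{align}
and verify that each factor is uniformly bounded and Lipschitz in $\theta$. The gradient $\nabla J_\eta$ is Lipschitz in $\theta$ by the $(L_J + \eta L_{\text{KL}})$-smoothness argument used just above \eqref{eq j lambda smooth inequality} and is uniformly bounded (softmax scores are bounded by Lemma~\ref{lem softmax policy gradient}, $q_{\pi_\theta}$ is bounded by $r_{\max}/(1-\gamma)$, and the KL-gradient contribution is bounded since $\eta \in [0, \lambda]$). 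The inner quantity $\Lambda(\theta, y, \eta)$ from \eqref{eq actor helper function} decomposes into: the importance ratio $\pi_\theta(a|s)/\mu_\theta(a|s)$, which is Lipschitz because $\pi_\theta$ is Lipschitz by \eqref{eq pi lipschitz}, $\mu_\theta$ is Lipschitz by Assumption~\ref{assu lipschitz mu}, and $\mu_\theta$ is uniformly bounded away from zero as an extreme value theorem consequence of Assumption~\ref{assu mu uniform ergodicity} (already used in the proof of Proposition~\ref{prop critic convergence}); the score $\nabla \log \pi_\theta$, Lipschitz and bounded by Lemma~\ref{lem softmax policy gradient}; and $q_{\pi_\theta}$, Lipschitz by \eqref{eq lipschitz continuity of q} and bounded by $r_{\max}/(1-\gamma)$. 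The expected operator $\bar\Lambda(\theta, \eta)$ inherits these properties by averaging against $d_{\mu_\theta}(s)\mu_\theta(a|s)$, which is Lipschitz in $\theta$ by Lemma~\ref{lem continuity of ergodic distribution}. Invoking the product rule for bounded Lipschitz functions (cf. Lemma~\ref{lem product of lipschitz functions}) produces a constant $L_{\Lambda'}^*$ such that $|\Lambda'(\theta, y, \eta) - \Lambda'(\theta', y, \eta)| \leq L_{\Lambda'}^* \norm{\theta - \theta'}$, uniformly in $y \in \fY$ and $\eta \in [0, \lambda]$.

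For the second ingredient, telescoping \eqref{eq ltheta} yields
\begin{align}
\norm{\theta_t - \theta_{t-\tau_{\beta_t}}}_p \leq \sum_{j=t-\tau_{\beta_t}}^{t-1} \norm{\theta_{j+1} - \theta_j}_p \leq L_\theta \beta_{t-\tau_{\beta_t}, t-1},
\end{align}
which combined with $l_{2, p}\norm{\cdot} \leq \norm{\cdot}_p$ gives $\norm{\theta_t - \theta_{t-\tau_{\beta_t}}} \leq L_\theta \beta_{t-\tau_{\beta_t}, t-1}/l_{2, p}$. Plugging this into the Lipschitz estimate (applied at $\eta = \lambda_t \in [0, \lambda]$) proves the claim. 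The main obstacle is purely bookkeeping; the only part that is genuinely delicate is the Lipschitz continuity of the importance ratio, which hinges on the uniform lower bound on $\mu_\theta$ obtained from Assumption~\ref{assu mu uniform ergodicity} via closure-and-compactness. The $\eta$-dependence of $\Lambda'$ enters only linearly and is controlled by the upper bound $\lambda$.
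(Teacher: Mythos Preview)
Your proposal is correct and follows essentially the same approach as the paper: establish that $\Lambda'(\cdot, y, \eta)$ is Lipschitz in $\theta$ uniformly over $y$ and $\eta \in [0,\lambda]$ by checking boundedness and Lipschitz continuity of each factor (importance ratio, score, $q_{\pi_\theta}$, $\nabla J_\eta$, and $d_{\mu_\theta}$), then telescope \eqref{eq ltheta} and convert norms via $l_{2,p}$. The only cosmetic difference is that the paper first obtains an $\eta$-dependent Lipschitz constant $L_{\Lambda'}(\eta)$ and then maximizes it over $[0,\lambda]$ via the extreme value theorem, whereas you argue uniformity in $\eta$ directly from the linear dependence and the bound $\eta \leq \lambda$; both are equivalent.
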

\noindent
The proof of Lemma~\ref{lem bound of m121} is provided in Section~\ref{sec proof lem bound of m121}

\begin{restatable}{lemma}{lemboundofmonetwotwo}
  \label{lem bound of m122}
  (Bound of $M_{122}$)
  There exists a constant $U_{\Lambda'}^* > 0$ such that
  \begin{align}
      \norm{\E\left[M_{122}\right]} \leq U_{\Lambda'}^* \ns \na L_\mu L_\theta \sum_{j=t-\tau_{\beta_t}}^{t-1} \beta_{t-\tau_{\beta_t}, j}.
  \end{align}
\end{restatable}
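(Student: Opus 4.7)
The plan is to exploit the fact that $Y_t$ and $\tilde Y_t$ share the same history up through $Y_{t-\tau_{\beta_t}}$ and differ only through the transition kernels applied afterwards: $Y$ uses $P_{\mu_{\theta_{t-\tau_{\beta_t}+1}}}, \dots, P_{\mu_{\theta_t}}$ while $\tilde Y$ uses the single frozen kernel $P_{\mu_{\theta_{t-\tau_{\beta_t}}}}$ at every step. First I would show that $\Lambda'(\theta,y,\eta)$ is uniformly bounded by some constant $U_{\Lambda'}^\ast$. This follows from combining: (i) the boundedness of $q_{\pi_\theta}$ by $r_{max}/(1-\gamma)$; (ii) the boundedness of $\rho$, using Assumption~\ref{assu mu uniform ergodicity} with the extreme value theorem to get $\inf_{\theta,s,a}\mu_\theta(a|s)>0$; (iii) the boundedness of $\nabla \log \pi_\theta(a|s)$ from Lemma~\ref{lem softmax policy gradient}; and (iv) the boundedness of $\nabla J_{\lambda_t}(\theta_t)$, which inherits boundedness from $\nabla J$ and $\lambda_t \leq \lambda$.

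Next, condition on the $\sigma$-algebra generated by the history up to time $t-\tau_{\beta_t}$. Since $\theta_{t-\tau_{\beta_t}}$ is measurable w.r.t.\ this $\sigma$-algebra and $Y_{t-\tau_{\beta_t}}=\tilde Y_{t-\tau_{\beta_t}}$, we have
\begin{align}
\norm{\E[M_{122}\mid \fF_{t-\tau_{\beta_t}}]} \leq U_{\Lambda'}^\ast \sum_y \abs{\Pr(Y_t=y\mid \fF_{t-\tau_{\beta_t}}) - \Pr(\tilde Y_t=y\mid \fF_{t-\tau_{\beta_t}})}.
\end{align}
The remaining task is therefore to bound the total variation distance between the two chain distributions at time $t$, starting from the common state $Y_{t-\tau_{\beta_t}}$.

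The main step is a telescoping/coupling argument across the $\tau_{\beta_t}$ intermediate steps. Writing the two distributions as products of transition kernels and inserting a standard telescoping decomposition gives
\begin{align}
\Pr(Y_t=\cdot) - \Pr(\tilde Y_t=\cdot) = \sum_{j=t-\tau_{\beta_t}+1}^{t} \Big(\prod_{i=j+1}^{t} P_{\mu_{\theta_i}}\Big)\bigl(P_{\mu_{\theta_j}} - P_{\mu_{\theta_{t-\tau_{\beta_t}}}}\bigr)\Big(\prod_{i=t-\tau_{\beta_t}+1}^{j-1} P_{\mu_{\theta_{t-\tau_{\beta_t}}}}\Big)\delta_{Y_{t-\tau_{\beta_t}}},
\end{align}
where the products act as linear operators on distributions. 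Each factor $\prod P_{\mu_{\theta_i}}$ is a stochastic matrix and hence nonexpansive in $\ell_1$, so the total variation of the left-hand side is bounded by $\sum_j \sup_y \sum_{y'}\abs{P_{\mu_{\theta_j}}(y,y') - P_{\mu_{\theta_{t-\tau_{\beta_t}}}}(y,y')}$. For each $(s,a)\to(s',a')$ transition, $P_{\mu_\theta}((s,a),(s',a')) = p(s'|s,a)\mu_\theta(a'|s')$, so Assumption~\ref{assu lipschitz mu} gives
\begin{align}
\abs{P_{\mu_{\theta_j}}(y,y') - P_{\mu_{\theta_{t-\tau_{\beta_t}}}}(y,y')} \leq L_\mu \norm{\theta_j - \theta_{t-\tau_{\beta_t}}}.
\end{align}
Summing over $y'$ contributes the factor $\ns\na$ in the bound. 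Then invoking Assumption~\ref{assu twotimescale} (verified earlier via \eqref{eq ltheta}) gives $\norm{\theta_j - \theta_{t-\tau_{\beta_t}}} \leq L_\theta \beta_{t-\tau_{\beta_t},j-1}$. Assembling these estimates, taking a further expectation, and re-indexing produces the claimed bound
\begin{align}
\norm{\E[M_{122}]} \leq U_{\Lambda'}^\ast \ns\na L_\mu L_\theta \sum_{j=t-\tau_{\beta_t}}^{t-1} \beta_{t-\tau_{\beta_t},j}.
\end{align}

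The main obstacle I anticipate is bookkeeping of the telescoping decomposition and ensuring each intermediate product of kernels is correctly handled as a nonexpansive operator in total variation; the probabilistic content is standard (a single-step Lipschitz bound on the kernel together with the coupling lemma that TV distance of mixture chains telescopes), but the accurate accounting of indices in $\beta_{t-\tau_{\beta_t},j}$ requires care to match the stated bound exactly.
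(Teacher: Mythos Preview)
Your proposal is correct and follows essentially the same route as the paper. The paper also conditions on the history at time $t-\tau_{\beta_t}$, bounds $\Lambda'$ uniformly by a constant $U_{\Lambda'}^*$ (established in the proof of Lemma~\ref{lem bound of m121} via the extreme value theorem over $\eta\in[0,\lambda]$), and then controls the total variation distance between the laws of $Y_t$ and $\tilde Y_t$ by exactly the telescoping you describe; in the paper this TV bound is obtained as a one-step recursion unrolled over $\tau_{\beta_t}$ steps (see \eqref{eq y difference two chains} in the proof of Lemma~\ref{lem chain difference bound}), which is the same computation as your explicit product-of-kernels telescoping.
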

\noindent
The proof of Lemma~\ref{lem bound of m122} is provided in Section~\ref{sec proof lem bound of m122}

\begin{restatable}{lemma}{lemboundofmonetwothree}
  \label{lem bound of m123}
  (Bound of $M_{123}$)
  \begin{align}
    \norm{\E\left[M_{123}\right]} \leq U_{\Lambda'}^* \beta_t.
  \end{align}
\end{restatable}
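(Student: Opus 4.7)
The plan is to exploit the defining property of the auxiliary chain $\{\tilde Y_t\}$: from time $t-\tau_{\beta_t}$ onward it evolves under the \emph{fixed} kernel $P_{\mu_{\theta_{t-\tau_{\beta_t}}}}$, so after exactly $\tau_{\beta_t}$ steps its conditional distribution is within total variation $\beta_t$ of the invariant distribution $d_{\mu_{\theta_{t-\tau_{\beta_t}}}}(s)\mu_{\theta_{t-\tau_{\beta_t}}}(a|s)$, by the very definition of $\tau_{\beta_t}$ in \eqref{eq definition of beta t}. Since $\bar\Lambda(\theta,\eta)$ is exactly the expectation of $\Lambda(\theta,\cdot,\eta)$ against that invariant distribution, the inner product in $M_{123}$ will vanish up to an $\fO(\beta_t)$ mixing error once we condition on $\fF_{t-\tau_{\beta_t}}$.

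The concrete steps I would carry out are as follows. First, I would establish uniform bounds: there exist constants $U_\Lambda, U_J$ (independent of $\theta, y, t$) such that $\sup_{\theta, y} \norm{\Lambda(\theta, y, \lambda_t) - \bar\Lambda(\theta, \lambda_t)} \le U_\Lambda$ and $\sup_t \norm{\nabla J_{\lambda_t}(\theta_t)} \le U_J$. Both follow from (i) the boundedness of the importance sampling ratio $\rho_t$, guaranteed by Assumption~\ref{assu mu uniform ergodicity} together with the extreme value theorem (as already used in the verification of \eqref{eq ltheta}); (ii) the boundedness of $q_{\pi_\theta}$ by $r_{\max}/(1-\gamma)$; (iii) the uniform boundedness of $\nabla \log \pi_\theta(a|s)$ and of $\sum_a \nabla \log \pi_\theta(a|s)$ provided by Lemma~\ref{lem softmax policy gradient}; and (iv) the boundedness of $\lambda_t \le \lambda$ from Assumption~\ref{assu three timescales}.

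Second, I would condition on the $\sigma$-algebra $\fF_{t-\tau_{\beta_t}}$ generated by the history up to time $t-\tau_{\beta_t}$. On this event $\theta_{t-\tau_{\beta_t}}$ and $Y_{t-\tau_{\beta_t}}$ are determined, so that
\begin{align}
\E\bigl[\Lambda(\theta_{t-\tau_{\beta_t}}, \tilde Y_t, \lambda_t) \,\big|\, \fF_{t-\tau_{\beta_t}}\bigr] = \sum_{y'} P^{\tau_{\beta_t}}_{\mu_{\theta_{t-\tau_{\beta_t}}}}(Y_{t-\tau_{\beta_t}}, y')\,\Lambda(\theta_{t-\tau_{\beta_t}}, y', \lambda_t).
\end{align}
Subtracting $\bar\Lambda(\theta_{t-\tau_{\beta_t}}, \lambda_t) = \sum_{y'} d_{\mu_{\theta_{t-\tau_{\beta_t}}}}(y')\Lambda(\theta_{t-\tau_{\beta_t}}, y', \lambda_t)$ and applying Cauchy--Schwarz together with the definition of $\tau_{\beta_t}$ yields
\begin{align}
\bigl\|\E\bigl[\Lambda(\theta_{t-\tau_{\beta_t}}, \tilde Y_t, \lambda_t) - \bar\Lambda(\theta_{t-\tau_{\beta_t}}, \lambda_t) \,\big|\, \fF_{t-\tau_{\beta_t}}\bigr]\bigr\| \le U_\Lambda \, \beta_t.
\end{align}

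Third, I would note that $\nabla J_{\lambda_t}(\theta_t)$ depends on $\theta_t$, not on $\theta_{t-\tau_{\beta_t}}$, so to pair it with the mixing bound we need to move it inside the appropriate conditional expectation. The cleanest route is to take the full expectation and use Cauchy--Schwarz on the deterministic bound: applying the tower property and the previous step gives
\begin{align}
\bigl\|\E[M_{123}]\bigr\| = \bigl\|\E\bigl[\indot{\nabla J_{\lambda_t}(\theta_t)}{\Lambda(\theta_{t-\tau_{\beta_t}}, \tilde Y_t, \lambda_t) - \bar\Lambda(\theta_{t-\tau_{\beta_t}}, \lambda_t)}\bigr]\bigr\| \le U_J \cdot U_\Lambda \, \beta_t,
\end{align}
so setting $U^*_{\Lambda'} \doteq U_J U_\Lambda$ completes the proof. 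The only subtlety to watch is that $\nabla J_{\lambda_t}(\theta_t)$ is not $\fF_{t-\tau_{\beta_t}}$-measurable; this is why I take deterministic sup bounds on $\|\nabla J_{\lambda_t}\|$ rather than trying to pull it out of the conditional expectation. I expect this measurability bookkeeping, combined with assembling all the uniform boundedness constants, to be the only real nuisance; the mixing estimate itself is immediate from \eqref{eq definition of beta t}.
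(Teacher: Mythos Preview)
Your approach is essentially the paper's: condition on the history at time $t-\tau_{\beta_t}$, use that the auxiliary chain $\{\tilde Y_t\}$ has run $\tau_{\beta_t}$ steps under the fixed kernel $P_{\mu_{\theta_{t-\tau_{\beta_t}}}}$, and invoke the definition of $\tau_{\beta_t}$ in \eqref{eq definition of beta t} to bound the total-variation distance to stationarity by $\beta_t$.

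However, you have misread the definition of $M_{123}$. By \eqref{eq actor helper function 2},
\[
M_{123}=\Lambda'(\theta_{t-\tau_{\beta_t}},\tilde Y_t,\lambda_t)
=\bigl\langle \nabla J_{\lambda_t}(\theta_{t-\tau_{\beta_t}}),\,\Lambda(\theta_{t-\tau_{\beta_t}},\tilde Y_t,\lambda_t)-\bar\Lambda(\theta_{t-\tau_{\beta_t}},\lambda_t)\bigr\rangle,
\]
so the gradient is evaluated at $\theta_{t-\tau_{\beta_t}}$, not at $\theta_t$. Consequently the entire scalar $\Lambda'(\theta_{t-\tau_{\beta_t}},y,\lambda_t)$ is measurable with respect to $(\theta_{t-\tau_{\beta_t}},Y_{t-\tau_{\beta_t}})$ for each fixed $y$, and the ``subtlety'' you flag in your third step simply does not arise. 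The paper therefore never splits the inner product: it bounds $\sup_{\theta,y}\abs{\Lambda'(\theta,y,\eta)}\le U_{\Lambda'}(\eta)\le U_{\Lambda'}^*$ directly (this constant is introduced in the proof of Lemma~\ref{lem bound of m122}, cf.\ \eqref{eq lambda prime constants}), writes the conditional expectation as $\sum_{s,a}\bigl(\Pr(\tilde S_t=s,\tilde A_t=a)-d_{\mu_{\theta_{t-\tau_{\beta_t}}}}(s)\mu_{\theta_{t-\tau_{\beta_t}}}(a|s)\bigr)\Lambda'(\theta_{t-\tau_{\beta_t}},(s,a),\lambda_t)$, and applies the mixing bound. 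Your decomposition into $U_J\cdot U_\Lambda$ would also work, but you should not redefine $U_{\Lambda'}^*$; it is the same constant already fixed in the preceding lemma.
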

\noindent
The proof of Lemma~\ref{lem bound of m123} is provided in Section~\ref{sec proof lem bound of m123}.

\begin{restatable}{lemma}{lemboundofmonethree}
  \label{lem bound of m13}
  (Bound of $M_{13}$)
  There exists a constant $\rho_{max} > 0$ such that
  \begin{align}
    \norm{\E\left[M_{13}\right]}\leq& 2\rho_{max} \sqrt{\nsa}\sqrt{\E\left[\norm{q_t - q_{\pi_{\theta_t}}}_\infty^2 \right]} \sqrt{\E\left[\norm{\nabla J_{\lambda_t}(\theta_t)}^2\right]}.
  \end{align}
\end{restatable}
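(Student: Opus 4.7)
The plan is to bound $|M_{13}|$ pointwise by a product of three scalar factors, take expectations, and then use Cauchy--Schwarz to separate the factor involving the critic error from the factor involving $\nabla J_{\lambda_t}(\theta_t)$. Concretely, I will first apply Cauchy--Schwarz inside the inner product defining $M_{13}$ to obtain
\begin{align}
|M_{13}| \leq \norm{\nabla J_{\lambda_t}(\theta_t)}\cdot |\rho_t|\cdot \norm{\nabla \log \pi_{\theta_t}(A_t|S_t)}\cdot \bigl|\Pi(q_t(S_t,A_t)) - q_{\pi_{\theta_t}}(S_t,A_t)\bigr|.
\end{align}

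Next I would handle each of the three remaining factors in turn. For the importance ratio, Assumption~\ref{assu mu uniform ergodicity} together with the extreme value theorem (as used already in the proof of Proposition~\ref{prop critic convergence}) yields $\rho_t \leq \rho_{max}$ almost surely. For the score, the softmax parameterization gives the standard identity $\partial \log \pi_\theta(a|s)/\partial \theta_{s',a'} = \mathbb{I}_{s'=s}(\mathbb{I}_{a'=a} - \pi_\theta(a'|s))$, so that $\norm{\nabla \log \pi_{\theta_t}(A_t|S_t)}\leq \sqrt{2}$ uniformly (this is precisely the bound behind Lemma~\ref{lem softmax policy gradient}). For the critic deviation, I would exploit that $\Pi$ is a nonexpansive projection onto $[-C_\Pi, C_\Pi]$ and that $|q_{\pi_{\theta_t}}(s,a)|\leq r_{max}/(1-\gamma)=C_\Pi$, so $\Pi(q_{\pi_{\theta_t}}(S_t,A_t))=q_{\pi_{\theta_t}}(S_t,A_t)$; nonexpansiveness then gives
\begin{align}
\bigl|\Pi(q_t(S_t,A_t)) - q_{\pi_{\theta_t}}(S_t,A_t)\bigr| \leq |q_t(S_t,A_t)-q_{\pi_{\theta_t}}(S_t,A_t)| \leq \norm{q_t-q_{\pi_{\theta_t}}}_\infty.
\end{align}

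Combining these three uniform bounds leaves $|M_{13}| \leq \sqrt{2}\,\rho_{max}\,\norm{\nabla J_{\lambda_t}(\theta_t)}\,\norm{q_t-q_{\pi_{\theta_t}}}_\infty$. Taking expectations and then applying the Cauchy--Schwarz inequality in $L^2(\Omega)$ separates the two random factors:
\begin{align}
\norm{\E[M_{13}]} \leq \E[|M_{13}|] \leq \sqrt{2}\,\rho_{max}\sqrt{\E\bigl[\norm{\nabla J_{\lambda_t}(\theta_t)}^2\bigr]}\sqrt{\E\bigl[\norm{q_t-q_{\pi_{\theta_t}}}_\infty^2\bigr]}.
\end{align}
Since $\sqrt{2} \leq 2\sqrt{\nsa}$ whenever $\nsa \geq 1$, the claimed bound follows. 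There is no real obstacle here: the proof is a short sequence of uniform a.s.~bounds followed by Cauchy--Schwarz, and the only point worth care is checking that $q_{\pi_{\theta_t}}$ already lies in the projection range so that nonexpansiveness of $\Pi$ can legitimately absorb the projection, which is immediate from the choice $C_\Pi = r_{max}/(1-\gamma)$.
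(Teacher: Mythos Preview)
Your proof is correct and in fact tighter than the paper's, but it takes a genuinely different route. The paper expands the inner product coordinatewise as $\sum_{s,a}$, applies Cauchy--Schwarz in $L^2(\Omega)$ separately to each term of the sum, uses the componentwise bound $\bigl|\partial \log \pi_\theta(A_t|S_t)/\partial \theta_{s,a}\bigr| < 2$, and then applies a second (discrete) Cauchy--Schwarz over the $(s,a)$ index to convert $\sum_{s,a}\sqrt{\E[(\partial J_{\lambda_t}/\partial\theta_{s,a})^2]}$ into $\sqrt{\nsa}\sqrt{\E[\norm{\nabla J_{\lambda_t}(\theta_t)}^2]}$. This is where the $\sqrt{\nsa}$ factor originates.

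You instead apply Cauchy--Schwarz once to the Euclidean inner product defining $M_{13}$, bound the full score norm $\norm{\nabla\log\pi_{\theta_t}(A_t|S_t)}\leq\sqrt{2}$, and then apply Cauchy--Schwarz once in $L^2(\Omega)$. This avoids the coordinatewise detour entirely and yields the constant $\sqrt{2}\,\rho_{max}$ rather than $2\rho_{max}\sqrt{\nsa}$; the stated bound then follows from the crude inequality $\sqrt{2}\leq 2\sqrt{\nsa}$, as you note. Your argument is shorter and loses less in constants; the paper's approach has the minor advantage that the componentwise bound $\abs{\partial\log\pi/\partial\theta_{s,a}}<2$ is already recorded verbatim in Lemma~\ref{lem softmax policy gradient}, whereas the $\sqrt{2}$ bound on the full score norm requires the small additional computation you sketched. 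Both treatments handle $\rho_t$, the projection $\Pi$, and the final Cauchy--Schwarz in expectation identically.
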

\noindent
The proof of Lemma~\ref{lem bound of m13} is provided in Section~\ref{sec proof lem bound of m13}.

We now assemble the bounds of $M_{11}, M_{121}, M_{122}, M_{123}, M_{12}$ and $M_2$ back to \eqref{eq j lambda smooth inequality}.
Similar to Lemma~\ref{lem learning rates},
it is easy to see for sufficiently large $t_0$,
\begin{align}
  \label{eq asymptotic learning rates beta}
  \tau_{\beta_t} &= \fO\left(\log (t+t_0)\right), \\
  \beta_{t-\tau_{\beta_t}, t-1} &= \fO\left(\frac{\log (t+t_0)}{(t+t_0)^{\epsilon_\beta}}\right), \\
  \sum_{j=t-\tau_{\beta_t}}^{t-1} \beta_{t-\tau_{\beta_t}, j} &= \fO\left(\frac{\log^2(t+t_0)}{(t+t_0)^{\epsilon_\beta}}\right).
\end{align}
Hence if $t_0$ is sufficiently large,
there exist positive constants $\chi_{12}, \chi_{13}, \chi_2$ such that
\begin{align}
  \E\left[M_{121} + M_{122} + M_{123}\right] &\geq -\chi_{12} \frac{\log^2(t+t_0)}{(t+t_0)^{\epsilon_\beta}}, \\
  \E\left[M_{13}\right] &\geq -\chi_{13}\sqrt{\E\left[\norm{q_t - q_{\pi_{\theta_t}}}_p^2 \right]} \sqrt{\E\left[\norm{\nabla J_{\lambda_t}(\theta_t)}^2\right]}, \\
  \E\left[M_2\right] &\leq \beta_t \chi_2 \frac{1}{(t+t_0)^{\epsilon_\beta}},
\end{align}
where the $\ell_p$ norm is defined by Proposition~\ref{prop critic convergence}.
Then, from \eqref{eq j lambda smooth inequality}, we get
\begin{align}
  \label{eq actor perf recursive bound}
  \E\left[J_{\lambda_t}(\theta_{t+1})\right] \geq& \E\left[J_{\lambda_t}(\theta_t)\right] + \beta_t \chi_{11} \E\left[\norm{\nabla J_{\lambda_t}(\theta_t)}^2\right] \\
  &-\beta_t \chi_{12} \frac{\log^2(t+t_0)}{(t+t_0)^{\epsilon_\beta}} \\
  &-\beta_t \chi_{13}\sqrt{\E\left[\norm{q_t - q_{\pi_{\theta_t}}}_p^2 \right]} \sqrt{\E\left[\norm{\nabla J_{\lambda_t}(\theta_t)}^2\right]} \\
  &- \beta_t \chi_2 \frac{1}{(t+t_0)^{\epsilon_\beta}}.
\end{align}
Rearranging terms yields
\begin{align}
  \E\left[\norm{\nabla J_{\lambda_t}(\theta_t)}^2\right] \leq& \frac{1}{\chi_{11}} \frac{1}{\beta_t} \left(\E\left[J_{\lambda_t}(\theta_{t+1})\right] - \E\left[J_{\lambda_t}(\theta_t)\right]\right) \\
  &+ \frac{\chi_{13}}{\chi_{11}} \sqrt{\E\left[\norm{q_t - q_{\pi_{\theta_t}}}_p^2 \right]} \sqrt{\E\left[\norm{\nabla J_{\lambda_t}(\theta_t)}^2\right]} \\
  &+ \frac{\chi_{12} + \chi_2}{\chi_{11}} \frac{\log^2(t+t_0)}{(t+t_0)^{\epsilon_\beta}}.
\end{align}
Defining
\begin{align}
  \chi_3 \doteq \frac{1}{\chi_{11}}, \, \chi_4 \doteq \frac{\chi_{13}}{\chi_{11}}, \, \chi_5 \doteq \frac{\chi_{12} + \chi_2}{\chi_{11}}
\end{align}
and telescoping the above inequality from $\ceil{\frac{t}{2}}$ to $t$ yields
\begin{align}
  \label{eq tmp 6}
  \sum_{k=\ceil{\frac{t}{2}}}^t \E\left[\norm{ \nabla J_{\lambda_k}(\theta_k)}^2\right] \leq& \chi_3 \sum_{k=\ceil{\frac{t}{2}}}^t \frac{1}{\beta_k} \left(\E\left[J_{\lambda_k}(\theta_{k+1})\right] - \E\left[J_{\lambda_k}(\theta_k)\right]\right) \\
  &+ \chi_4 \sum_{k=\ceil{\frac{t}{2}}}^t \sqrt{\E\left[\norm{q_k - q_{\pi_{\theta_k}}}_p^2 \right]} \sqrt{\E\left[\norm{\nabla J_{\lambda_k}(\theta_k)}^2\right]} \\
  &+ \chi_5 \sum_{k=\ceil{\frac{t}{2}}}^t \frac{\log^2(k+t_0)}{(k+t_0)^{\epsilon_\beta}}. 
\end{align}
We now bound the right terms of the above inequality.
\begin{restatable}{lemma}{lemboundjlambda}
  \label{lem bound j lambda} 
  There exists a constant $U_{J, \lambda}$ such that for all $t$,
  \begin{align}
    \abs{\E\left[J_{\lambda_t}(\theta_{t})\right]} \leq U_{J, \lambda}, \, \abs{\E\left[J_{\lambda_t}(\theta_{t+1})\right]} \leq U_{J, \lambda}.
  \end{align}
\end{restatable}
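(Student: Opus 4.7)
The upper half of the inequality is essentially free: since $\kl{\fU_\fA}{\pi_\theta(\cdot|s)}\geq 0$ and $\lambda_t\geq 0$, we have $J_{\lambda_t}(\theta)\leq J(\pi_\theta;p_0')\leq r_{max}/(1-\gamma)$ pointwise in $\theta$, and this bound passes to the expectation at both $\theta_t$ and $\theta_{t+1}$, giving the upper half of the claim with any $U_{J,\lambda}\geq r_{max}/(1-\gamma)$. The nontrivial direction is the lower bound, which reduces to showing that the expected regularization term $\lambda_t\cdot\E\bigl[\E_{s\sim\fU_\fS}[\kl{\fU_\fA}{\pi_{\theta_t}(\cdot|s)}]\bigr]$ is bounded uniformly in $t$.

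My plan for the lower bound is to first invoke the almost-sure per-step control $\norm{\theta_{t+1}-\theta_t}\leq\beta_t L_\theta$ already established in equation~\eqref{eq ltheta}, which gives the trajectory bound $\norm{\theta_t}\leq\norm{\theta_0}+L_\theta\sum_{k=0}^{t-1}\beta_k$. Second, I would apply the softmax log-sum-exp identity $\kl{\fU_\fA}{\pi_\theta(\cdot|s)}=\log\sum_a\exp(\theta_{s,a}-\bar\theta_{s,\cdot})-\log\na$, with $\bar\theta_{s,\cdot}\doteq\na^{-1}\sum_a\theta_{s,a}$, together with the observation that both the advantage term and the KL term in the actor update sum to zero over actions at a fixed state, so $\bar\theta_{s,\cdot}$ is an invariant of the dynamics. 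This yields $\kl{\fU_\fA}{\pi_{\theta_t}(\cdot|s)}\leq 2\norm{\theta_{t,s,\cdot}-\bar\theta_{0,s,\cdot}}_\infty$, which I would then multiply by $\lambda_t$ and bound using the rate prescriptions of Assumptions~\ref{assu three timescales}~and~\ref{assu lr decay speed}.

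The \emph{principal obstacle} is that a direct composition of the above steps only gives $\lambda_t\cdot\E[\kl{\fU_\fA}{\pi_{\theta_t}(\cdot|s)}]=\fO\bigl(\lambda_t\sum_{k<t}\beta_k\bigr)=\fO(t^{1-\epsilon_\beta-\epsilon_\lambda})$; since Assumption~\ref{assu lr decay speed} only requires $\epsilon_\lambda<(1-\epsilon_\beta)/2<1-\epsilon_\beta$, this is \emph{not} uniformly bounded in $t$. To close the gap I would exploit the pullback effect of the KL regularizer in the actor update: the $(s,a)$-component of $\nabla_\theta\kl{\fU_\fA}{\pi_\theta(\cdot|s)}$ equals $\pi_\theta(a|s)-1/\na$, which acts as a restoring force whenever $\pi_\theta(\cdot|s)$ drifts away from uniform. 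A Lyapunov-style analysis of $\Phi_t\doteq\lambda_t\cdot\E_{s\sim\fU_\fS}[\kl{\fU_\fA}{\pi_{\theta_t}(\cdot|s)}]$, leveraging the smoothness of $J_{\lambda_t}$ with constant $L_J+\lambda L_{\text{KL}}$ (cf.\ the discussion preceding \eqref{eq j lambda smooth inequality}) and the balance between $\beta_t$ and $\lambda_t$ dictated by Assumption~\ref{assu lr decay speed}, should produce a recursion of the form $\E[\Phi_{t+1}]\leq(1-c\beta_t\lambda_t)\E[\Phi_t]+o(1)$; unrolling this inductively would yield uniform boundedness of $\Phi_t$ and hence the desired uniform lower bound on $\E[J_{\lambda_t}(\theta_t)]$, with the argument for $\theta_{t+1}$ following by one additional application of the per-step inequality.
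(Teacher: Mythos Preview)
Your upper bound argument is fine and matches the paper. The lower bound, however, has a genuine gap, and the paper takes a completely different route.

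\textbf{The gap in your Lyapunov proposal.} The restoring force you invoke is $\nabla_\theta\kl{\fU_\fA}{\pi_\theta(\cdot|s)}$, whose $(s,a)$-component is $\pi_\theta(a|s)-1/\na$ (Lemma~\ref{lem softmax policy gradient}). This gradient is \emph{uniformly bounded} by $1$ regardless of how large the KL value is, so the one-step change in KL at the sampled state is at most $O(\beta_t\lambda_t)$ in magnitude, not $O(\beta_t\lambda_t\cdot\Phi_t)$. There is no Polyak--\L{}ojasiewicz-type inequality relating $\norm{\nabla\text{KL}}^2$ back to $\text{KL}$ here: as $\pi_\theta(\cdot|s)$ approaches a vertex of the simplex, $\text{KL}\to\infty$ while $\norm{\nabla\text{KL}}$ stays below $\sqrt{\na}$. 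Consequently the recursion you posit, $\E[\Phi_{t+1}]\leq(1-c\beta_t\lambda_t)\E[\Phi_t]+o(1)$, cannot be derived; at best you get an \emph{additive} bound $\E[\Phi_{t+1}]\leq\E[\Phi_t]+O(\beta_t\lambda_t)$, which telescopes to the same divergent $\fO(t^{1-\epsilon_\beta-\epsilon_\lambda})$ you already identified as insufficient.

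\textbf{How the paper proceeds instead.} The paper never attempts to bound the KL term directly. It works with the full objective $J_{\lambda_t}(\theta)$ and uses the performance recursion~\eqref{eq actor perf recursive bound}, already established earlier in the proof of Theorem~\ref{thm:optimality-AC}, which gives $\E[J_{\lambda_t}(\theta_{t+1})]\geq\E[J_{\lambda_t}(\theta_t)]-z_t$ for a sequence $z_t$ that is summable under Assumption~\ref{assu lr decay speed} (this is where the condition $\epsilon_q>2(1-\epsilon_\beta)$ is used). The key trick is then to shift the regularization weight: since $J_{\lambda_{t+1}}(\theta_{t+1})-J_{\lambda_t}(\theta_{t+1})=(\lambda_t-\lambda_{t+1})\E_{s\sim\fU_\fS}[\kl{\fU_\fA}{\pi_{\theta_{t+1}}(\cdot|s)}]\geq 0$ (monotonicity of $\lambda_t$ and nonnegativity of KL), one obtains $\E[J_{\lambda_{t+1}}(\theta_{t+1})]\geq\E[J_{\lambda_t}(\theta_t)]-z_t$. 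Telescoping gives $\E[J_{\lambda_t}(\theta_t)]\geq\E[J_{\lambda_0}(\theta_0)]-\sum_{k\geq 0}z_k>-\infty$, and the argument for $\E[J_{\lambda_t}(\theta_{t+1})]$ is analogous. This sidesteps any need to control the KL term on its own.
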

\noindent
The proof of Lemma~\ref{lem bound j lambda} is provided in Section~\ref{sec proof lem bound j lambda}.

\begin{restatable}{lemma}{lemboundjlambdadiff}
  \label{lem bound j lambda diff}
  \begin{align}
  \E\left[\sum_{k=\ceil{\frac{t}{2}}}^t \frac{1}{\beta_k}\left(J_{\lambda_k}(\theta_{k+1}) - J_{\lambda_k}(\theta_k) \right) \right]\leq \frac{2U_{J, \lambda}}{\beta} (t+t_0)^{\epsilon_\beta}
  \end{align}
\end{restatable}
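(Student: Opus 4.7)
The plan is to convert the sum into a telescoping one via summation by parts, with a side term that turns out to have the favourable sign. Concretely, I will split
\begin{align}
  J_{\lambda_k}(\theta_{k+1}) - J_{\lambda_k}(\theta_k) = \Big(J_{\lambda_k}(\theta_{k+1}) - J_{\lambda_{k+1}}(\theta_{k+1})\Big) + \Big(J_{\lambda_{k+1}}(\theta_{k+1}) - J_{\lambda_k}(\theta_k)\Big),
\end{align}
so that the second bracket is of the form $u_{k+1} - u_k$ with $u_k \doteq J_{\lambda_k}(\theta_k)$, while the first bracket isolates only the change caused by decreasing the regularization weight.

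The first bracket is the easy part. By the definition of $J_\eta$ in \eqref{eq definition of new objective},
\begin{align}
  J_{\lambda_k}(\theta_{k+1}) - J_{\lambda_{k+1}}(\theta_{k+1}) = -(\lambda_k - \lambda_{k+1})\, \E_{s\sim \fU_\fS}\!\left[\kl{\fU_\fA}{\pi_{\theta_{k+1}}(\cdot|s)}\right].
\end{align}
Because $\qty{\lambda_k}$ is decreasing under Assumption~\ref{assu three timescales} and the KL divergence is nonnegative, this quantity is nonpositive almost surely. Hence its contribution to the sum, after dividing by $\beta_k>0$ and taking expectation, is bounded above by $0$. This is the crucial observation that side-steps the unboundedness of $\kl{\fU_\fA}{\pi_\theta(\cdot|s)}$ as $\pi_\theta$ becomes deterministic, which would otherwise be the main obstacle.

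For the second bracket I apply summation by parts with $v_k \doteq 1/\beta_k$:
\begin{align}
  \sum_{k=\ceil{t/2}}^{t} v_k (u_{k+1} - u_k) = v_t u_{t+1} - v_{\ceil{t/2}} u_{\ceil{t/2}} - \sum_{k=\ceil{t/2}+1}^{t}(v_k - v_{k-1}) u_k.
\end{align}
Taking expectation, I use $|\E[u_k]|=|\E[J_{\lambda_k}(\theta_k)]| \leq U_{J,\lambda}$ from Lemma~\ref{lem bound j lambda} term-by-term; since $\qty{v_k}$ is nondecreasing (because $\qty{\beta_k}$ is nonincreasing), the absolute value of the last sum collapses via telescoping to $v_t - v_{\ceil{t/2}}$. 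Combining the three pieces yields a bound of $U_{J,\lambda}\bigl(v_t + v_{\ceil{t/2}} + v_t - v_{\ceil{t/2}}\bigr) = 2U_{J,\lambda}/\beta_t = \frac{2U_{J,\lambda}}{\beta}(t+t_0)^{\epsilon_\beta}$.

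Adding the two contributions gives precisely the claimed bound, since the first bracket only helps (contributes $\leq 0$) and the second contributes exactly $\frac{2U_{J,\lambda}}{\beta}(t+t_0)^{\epsilon_\beta}$. The only subtle point is ensuring the first bracket's expectation is well-defined despite the potentially unbounded KL — but this is handled by noting that its sign is deterministic, so monotone convergence / Tonelli lets us interchange expectation and summation and conclude that its expectation is $\leq 0$ regardless of whether it is finite, whereas the bound on the second bracket is finite, yielding a finite overall bound as stated.
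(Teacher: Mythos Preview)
Your proposal is correct and follows essentially the same approach as the paper: isolate the KL contribution (which is nonpositive because $\qty{\lambda_k}$ is decreasing), then handle the remaining telescoping part via summation by parts together with Lemma~\ref{lem bound j lambda}. The only cosmetic difference is that the paper shifts indices backward (pairing $\lambda_{k-1}$ with $\theta_k$) rather than forward as you do, yielding the same final bound.
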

\noindent The proof of Lemma~\ref{lem bound j lambda diff} is provided in Section~\ref{sec proof lem bound j lambda diff}.
Using Lemma~\ref{lem bound j lambda diff},
the Cauchy-Schwarz inequality,
and 
\begin{align}
  \sum_{k=\ceil{\frac{t}{2}}}^t \frac{\log^2(k+t_0)}{(k+t_0)^{\epsilon_\beta}} \leq \log^2(t+t_0) \int_{x=\ceil{\frac{t}{2}}-1}^t \frac{1}{(x+t_0)^{\epsilon_\beta}} dx \leq  \frac{\log^2(t+t_0)}{1-\epsilon_\beta} (t+t_0)^{1-\epsilon_\beta}
\end{align}
to bound the RHS of \eqref{eq tmp 6} yields
\begin{align}
  \sum_{k=\ceil{\frac{t}{2}}}^t \E\left[\norm{ \nabla J_{\lambda_k}(\theta_k)}^2\right] \leq& \frac{2\chi_3 U_{J, \lambda}}{\beta} (t+t_0)^{\epsilon_\beta}  + \chi_5 \frac{\log^2(t+t_0)}{1-\epsilon_\beta} (t+t_0)^{1-\epsilon_\beta}\\
  &+ \chi_4 \sqrt{\sum_{k=\ceil{\frac{t}{2}}}^t \E\left[\norm{q_k - q_{\pi_{\theta_k}}}_p^2 \right]} \sqrt{\sum_{k=\ceil{\frac{t}{2}}}^t \E\left[\norm{\nabla J_{\lambda_k}(\theta_k)}^2\right]}.
\end{align}
Multiplying $\frac{1}{t-\ceil{\frac{t}{2}}+1}$ in both sides yields
\begin{align}
  \underbrace{\frac{\sum_{k=\ceil{\frac{t}{2}}}^t \E\left[\norm{ \nabla J_{\lambda_k}(\theta_k)}^2\right]}{t-\ceil{\frac{t}{2}}+1}}_{z_t} \leq& \frac{2\chi_3 U_{J, \lambda}}{\beta} \frac{(t+t_0)^{\epsilon_\beta}}{t - \ceil{\frac{t}{2}} + 1} + \chi_5 \frac{\log^2(t+t_0)}{1-\epsilon_\beta} \frac{(t+t_0)^{1-\epsilon_\beta}}{t - \ceil{\frac{t}{2}} + 1}\\
  &+ \chi_4 \sqrt{\underbrace{\frac{\sum_{k=\ceil{\frac{t}{2}}}^t \E\left[\norm{q_k - q_{\pi_{\theta_k}}}_p^2 \right]}{t-\ceil{\frac{t}{2}}+1}}_{e_t}} \sqrt{\frac{\sum_{k=\ceil{\frac{t}{2}}}^t \E\left[\norm{\nabla J_{\lambda_k}(\theta_k)}^2\right]}{t-\ceil{\frac{t}{2}}+1}}.
\end{align}
It is then easy to see that there exist positive constants $E_1, E_2, E_3, E_4$ such that
\begin{align}
  z_t &\leq \frac{E_1}{t^{1 - \epsilon_\beta}} + \frac{E_2 \log^2 t}{t^{\epsilon_\beta}} + 2E_3 \sqrt{e_t}\sqrt{z_t} \\
  \implies \left(\sqrt{z_t} - E_3 \sqrt{e_t} \right)^2 &\leq \frac{E_1}{t^{1 - \epsilon_\beta}} + \frac{E_2 \log^2 t}{t^{\epsilon_\beta}} + E_3^2 e_t \\
  \implies \sqrt{z_t} - E_3 \sqrt{e_t} &\leq \sqrt{\frac{E_1}{t^{1 - \epsilon_\beta}} + \frac{E_2 \log^2 t}{t^{\epsilon_\beta}} + E_3^2 e_t} \\
  &\leq \sqrt{\frac{E_1}{t^{1 - \epsilon_\beta}} + \frac{E_2 \log^2 t}{t^{\epsilon_\beta}}}  + E_3 \sqrt{e_t} \\ 
  \implies z_t &\leq \frac{2E_1}{t^{1 - \epsilon_\beta}} + \frac{2E_2 \log^2 t}{t^{\epsilon_\beta}} + 8E_3^2 e_t.
\end{align} 
Proposition~\ref{prop critic convergence} implies that there exists a constant $E_5 > 0$ such that 
\begin{align}
  e_t = \frac{\sum_{k=\ceil{\frac{t}{2}}}^t \frac{E_5}{k^{\epsilon_q}}}{t-\ceil{\frac{t}{2}}+1} \leq \frac{E_5 t^{1-\epsilon_q} }{(1 - \epsilon_q) (t - \ceil{\frac{t}{2}} +1)}.
\end{align}
It is then easy to see
\begin{align}
  e_t = \fO\left(\frac{1}{t^{\epsilon_q}}\right),
\end{align}
implying
\begin{align}
  \label{eq actor convergence stationary points}
  \frac{\sum_{k=\ceil{\frac{t}{2}}}^t \E\left[\norm{ \nabla J_{\lambda_k}(\theta_k)}^2\right] }{t-\ceil{\frac{t}{2}}+1} = \fO\left(\frac{1}{t^{1-\epsilon_\beta}} + \frac{\log^2t}{t^{\epsilon_\beta}} + \frac{1}{t^{\epsilon_q}}\right).
\end{align}
The above inequality establishes the convergence to stationary points,
with which we now study the optimality of the sequence $\qty{\theta_t}$.
We rely on the following lemma.
\begin{lemma}
  \label{lem agarwal}
  (Theorem 5.2 of \citet{agarwal2019optimality})
  For any state distribution $d$ and $d'$,
  if 
  \begin{align}
    \norm{\nabla J_\eta(\theta; d')} \leq \frac{\eta}{2\nsa},
  \end{align}
  then
  \begin{align}
    J(\theta; d) \geq J(\pi_*; d) - \frac{2\eta}{1 - \gamma} \max_s {\frac{d_{\pi_*, \gamma, d}(s)}{d'(s)}},
  \end{align}
  where $\pi_*$ can be any optimal policy in \eqref{eq optimal policy}. 
\end{lemma}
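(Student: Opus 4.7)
My plan is to follow the argument of Theorem~5.2 of \citet{agarwal2019optimality} essentially verbatim, since the regularizer in \eqref{eq definition of new objective} is structurally identical to theirs. The argument proceeds in four short steps.

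First, I would compute $\nabla J_\eta(\theta; d')$ componentwise. The policy gradient theorem applied to $J(\theta; d')$ together with the softmax identity $\partial_{\theta_{s',a'}} \log \pi_\theta(a|s) = \mathbb{I}_{s=s'}\bigl(\mathbb{I}_{a=a'} - \pi_\theta(a'|s)\bigr)$ applied to the KL penalty yields
\begin{align}
\frac{\partial J_\eta(\theta; d')}{\partial \theta_{s,a}} = \frac{d_{\pi_\theta,\gamma,d'}(s)\,\pi_\theta(a|s)}{1-\gamma}\,A^{\pi_\theta}(s,a) - \frac{\eta}{\ns}\Bigl(\pi_\theta(a|s) - \tfrac{1}{\na}\Bigr),
\end{align}
where $A^{\pi_\theta}(s,a) \doteq q_{\pi_\theta}(s,a) - v_{\pi_\theta}(s)$ is the advantage function.

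Second, since $\norm{\cdot}_\infty \leq \norm{\cdot}$, the hypothesis becomes a componentwise bound $\bigl|\partial_{\theta_{s,a}} J_\eta(\theta; d')\bigr| \leq \eta/(2\nsa)$ valid for every $(s,a)$. Rearranging, dividing by the strictly positive factor $d_{\pi_\theta,\gamma,d'}(s)\pi_\theta(a|s)/(1-\gamma)$ (positivity of $\pi_\theta(a|s)$ comes from the softmax parameterization; positivity of $d_{\pi_\theta,\gamma,d'}(s)$ at states with $d'(s)>0$ from occupancy dominating the initial distribution), and dropping a nonpositive residual term, I obtain
\begin{align}
A^{\pi_\theta}(s,a) \leq \frac{(1-\gamma)\eta}{\ns\, d_{\pi_\theta,\gamma,d'}(s)} \qquad \text{for every } (s,a).
\end{align}

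Third, I would apply the performance difference lemma,
\begin{align}
J(\pi_*; d) - J(\theta; d) = \frac{1}{1-\gamma}\sum_s d_{\pi_*,\gamma,d}(s) \sum_a \pi_*(a|s)\, A^{\pi_\theta}(s,a),
\end{align}
bound the inner sum by $\max_a A^{\pi_\theta}(s,a)$, substitute the pointwise estimate from step two, and use the elementary domination $d_{\pi_\theta,\gamma,d'}(s) \geq (1-\gamma) d'(s)$ (obtained by retaining only the $t=0$ term in the discounted occupancy series). Combined, these give
\begin{align}
J(\pi_*; d) - J(\theta; d) \leq \frac{\eta}{\ns(1-\gamma)}\sum_s \frac{d_{\pi_*,\gamma,d}(s)}{d'(s)} \leq \frac{\eta}{1-\gamma}\max_s \frac{d_{\pi_*,\gamma,d}(s)}{d'(s)},
\end{align}
which implies the claimed bound (the stated factor of $2$ leaves slack). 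The only delicate point is step two: the precise constant $1/(2\nsa)$ in the hypothesis is exactly what ensures the advantage-dependent term dominates the constant offset $\eta/(\ns\na)$ uniformly, so that the bound on $A^{\pi_\theta}(s,a)$ has no dependence on $\min_{s,a}\pi_\theta(a|s)$; everything else is a mechanical assembly of standard identities, and the argument is agnostic to the off-policy nature of the algorithm because the lemma is a purely deterministic statement about $\nabla J_\eta$.
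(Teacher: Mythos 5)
Your proof is correct. One thing to be aware of: the paper never proves this lemma at all --- it is imported verbatim (in the paper's notation) as Theorem 5.2 of \citet{agarwal2019optimality}, so the only proof to compare against is the one in that cited work, which is indeed the argument you set out to reproduce. Your reconstruction follows its skeleton (componentwise gradient formula, pointwise advantage bound, performance difference lemma), and every step checks out: your gradient expression agrees with the paper's softmax-gradient lemma, and the componentwise hypothesis gives
\begin{align}
  \frac{d_{\pi_\theta,\gamma,d'}(s)\,\pi_\theta(a|s)}{1-\gamma}\adv_{\pi_\theta}(s,a)
  \leq \frac{\eta}{2\nsa} + \frac{\eta}{\ns}\Bigl(\pi_\theta(a|s)-\frac{1}{\na}\Bigr)
  = \frac{\eta}{\ns}\pi_\theta(a|s) - \frac{\eta}{2\nsa}
  \leq \frac{\eta}{\ns}\pi_\theta(a|s),
\end{align}
so dividing by the positive factor and using $d_{\pi_\theta,\gamma,d'}(s)\geq(1-\gamma)d'(s)$ yields $\adv_{\pi_\theta}(s,a)\leq \eta/\bigl(\ns\, d'(s)\bigr)$, after which the performance difference lemma finishes exactly as you describe. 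The one genuine deviation from \citet{agarwal2019optimality} is in your step two: their proof first uses the gradient condition to establish $\pi_\theta(a|s)\geq \frac{1}{2\na}$ for any action with nonnegative advantage and only then bounds the advantage, which is where their factor of $2$ originates; your direct cancellation of $\pi_\theta(a|s)$ skips that intermediate step and yields the tighter constant $1$, so the stated bound follows with slack, as you note. The only point you should state explicitly rather than parenthetically is that the division by $d_{\pi_\theta,\gamma,d'}(s)$ requires $d'(s)>0$; when $d'(s)=0$ the conclusion is vacuous anyway, since $\max_s d_{\pi_*,\gamma,d}(s)/d'(s)=\infty$, which is precisely the degenerate case the paper flags when it invokes the lemma, so nothing is lost.
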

Obviously,
for Lemma~\ref{lem agarwal} to be nontrivial,
we have to ensure $d'(s) > 0$.

Fix any $t > 0$.
Then select a $k$ uniformly randomly from $\qty{\ceil{\frac{t}{2}}, \ceil{\frac{t}{2}} + 1, \dots, t-1, t}$.
Now the random variable $\norm{\nabla J_{\lambda_k}(\theta_k)}$ has randomness from both the random selection of $k$ and the learning of $\theta_k$.
Using Markov's inequality yields
\begin{align}
  \Pr(\norm{\nabla J_{\lambda_k}(\theta_k)} \leq \frac{\lambda_t}{2 \nsa}) =&\Pr(\norm{\nabla J_{\lambda_k}(\theta_k)}^2 \leq \frac{\lambda_t^2}{4 \nsa^2}) \\
  \geq& 1 - \frac{4\nsa^2}{\lambda_t^2} \E\left[\norm{\nabla J_{\lambda_k}(\theta_k)}^2\right] \\
  =& 1 - \frac{4\nsa^2}{\lambda_t^2} \E\left[ \E\left[\norm{\nabla J_{\lambda_k}(\theta_k)}^2 \mid k\right] \right] \\
  =& 1 - \frac{4\nsa^2}{\lambda_t^2} \frac{\sum_{i=\ceil{\frac{t}{2}}}^t \E\left[\norm{ \nabla J_{\lambda_k}(\theta_k)}^2 \mid k =i\right] }{t-\ceil{\frac{t}{2}}+1} \\
  \geq & 1 - \frac{1}{\lambda_t^2}\fO\left(\frac{1}{t^{1-\epsilon_\beta}} + \frac{\log^2 t}{t^{\epsilon_\beta}} + \frac{1}{t^{\epsilon_q }}\right)  \qq{(Using \eqref{eq actor convergence stationary points})} \\
  \geq& 1 - C_t,
\end{align}
where 
\begin{align}
  C_t \doteq \fO\left(\frac{1}{t^{1-\epsilon_\beta - 2\epsilon_\lambda}} + \frac{\log^2 t}{t^{\epsilon_\beta- 2\epsilon_\lambda}} + \frac{1}{t^{\epsilon_q- 2\epsilon_\lambda }}\right).
\end{align}
Since $\lambda_k \geq \lambda_t$,
we have
\begin{align}
  \norm{\nabla J_{\lambda_k}(\theta_k)} \leq \frac{\lambda_t}{2 \nsa} \implies \norm{\nabla J_{\lambda_k}(\theta_k)} \leq \frac{\lambda_k}{2 \nsa}.
\end{align}
Consequently,
\begin{align}
  \Pr(\norm{\nabla J_{\lambda_k}(\theta_k)} \leq \frac{\lambda_k}{2 \nsa}) \geq \Pr(\norm{\nabla J_{\lambda_k}(\theta_k)} \leq \frac{\lambda_t}{2 \nsa}) \geq 1 - C_t.
\end{align}
Let $d=p_0, d' = p_0', \eta = \lambda_k$ in Lemma~\ref{lem agarwal} and recall \eqref{eq actor convergence shorthard},
we get 
\begin{align}
  J(\theta_k; p_0) \geq J(\pi_*; p_0) - 2\frac{\lambda_k}{1 - \gamma} \max_s {\frac{d_{\pi_*, \gamma, p_0}(s)}{p_0'(s)}}.
\end{align}
holds with at least probability
\begin{align}
  1 - C_t,
\end{align}
which completes the proof.

\end{proof} 

\section{Proofs of Section~\ref{sec sac}}
\subsection{Proof of Proposition~\ref{prop critic convergence sac}}
\label{sec proof prop critic convergence sac}
\propcriticconvergencesac*
\begin{proof}
  The proof is similar to the proof of Proposition~\ref{prop critic convergence}.
  To start with,
  define
  \begin{align}
    \fY &\doteq \qty{(s, a, s') \mid s \in \fS, a \in \fA, s' \in \fS, p(s'|s, a) > 0}, \\
    Y_t &\doteq (S_t, A_t, S_{t+1}), \\
    P_{\zeta}((s_1, a_1, s_1'), (s_2, a_2, s_2')) &\doteq \begin{cases}
      0 & s_1' \neq s_2 \\
      \mu_\theta(a_2|s_2) p(s_2'|s_2, a_2) & s_1' = s_2.
    \end{cases}
  \end{align} 
  According to the action selection rule for $A_{t}$ specified in Algorithm~\ref{alg sac},
  we have
  \begin{align}
    \Pr(Y_{t+1} = y) = P_{\zeta_{t+1}}(Y_t, y),
  \end{align}
  Assumption~\ref{assu makovian} is then fulfilled.

  Assumption~\ref{assu uniform ergodicity} is immediately implied by Assumption~\ref{assu mu uniform ergodicity}.
  In particular, 
  for any $\zeta$,
  the invariant distribution of 
  the chain induced by $P_\zeta$ is $d_{\mu_\theta}(s)\mu_\theta(a|s)p(s'|s, a)$.

  To verify Assumption~\ref{assu uniform contraction},
  first notice that
  \begin{align}
    \bar F_\zeta(q) =& \sum_{s, a, s'} d_{\mu_\theta}(s) \mu_\theta(a|s)p(s'|s, a) F_\zeta(q, s, a, s') \\
    &= D_{\mu_\theta}\left(r + \gamma P_{\pi_\theta} \left(q - \eta \log \pi_\theta\right)  -  q\right) + q \\
    &= (I - D_{\mu_\theta}(I - \gamma P_{\pi_\theta})) q + D_{\mu_\theta}(r - \eta \gamma P_{\pi_\theta} \log \pi_\theta),
  \end{align}
  where $\pi_\theta$ denotes a vector in $\R^{\nsa}$ whose $(s, a)$-indexed element is $\pi_\theta(a|s)$ and $\log \pi_\theta$ is the elementwise logarithm of $\pi_\theta$.
  Then, we have
  \begin{align}
    \bar F_{\zeta}(q) - \bar F_{\zeta}(q') = A_{\theta} (q - q'),
  \end{align}
  where $A_{\theta}$ is defined as in \eqref{eq definition of a theta}:
  \begin{align*}
      A_\theta \doteq I - D_{\mu_\theta}(I - \gamma P_{\pi_\theta}).
  \end{align*}
  According to the proof of Lemma~\ref{lem uniform contraction opac},
  there exist a $\kappa \in (0, 1)$ and an $\ell_p$ norm such that $\forall x$,
  \begin{align}
    \norm{A_\theta x}_p \leq \kappa \norm{x}_p,
  \end{align}
  implying
  \begin{align}
    \norm{\bar F_{\zeta}(q) - \bar F_{\zeta}(q')}_p \leq \kappa \norm{q - q'}_p.
  \end{align}
  Further,
  \begin{align}
    \bar F_\zeta(q) &=q \\
    \iff r + \gamma P_{\pi_\theta}(q - \eta \log \pi_\theta) - q &= 0 \\
    \iff q &= \tilde q_{\pi_\theta, \eta} \qq{(Lemma 1 of \citet{haarnoja2018soft}),}
  \end{align}
  which completes the verification of Assumption~\ref{assu uniform contraction}.

  We now verify Assumption~\ref{assu regularization}.
  In particular,
  the norm $\norm{\cdot}_c$ in Section~\ref{sec sa} is now realized as the $\ell_p$ norm above.

  To verify Assumption~\ref{assu regularization} (i), for any $y = (s_0, a_0, s_1)$,
  we have
  \begin{align}
    \left(F_\zeta(q, y) - F_\zeta(q', y)\right)(s, a) = \begin{cases}
      q(s, a) - q'(s, a), & (s, a) \neq (s_0, a_0) \\
      \gamma \sum_{a_1}\pi_\theta(a_1| s_1) \left(q(s_1, a_1) - q'(s_1, a_1)\right), &(s, a) = (s_0, a_0)
    \end{cases}.
  \end{align}
  Hence
  \begin{align}
    \norm{F_\zeta(q, y) - F_\zeta(q', y)}_\infty \leq \norm{q - q'}_\infty,
  \end{align}
  implying
  \begin{align}
    \norm{F_\zeta(q, y) - F_\zeta(q', y)}_p \leq \frac{u_{\infty, p}}{l_{\infty, p}} \norm{q - q'}_p.
  \end{align}
  Assumption~\ref{assu regularization} (i) is then fulfilled.

  To verify Assumption~\ref{assu regularization} (ii),
  for any $y = (s_0, a_0, s_1)$,
  we have
  \begin{align}
    &\left(F_{\zeta_t}(q, y) - F_{\zeta_k}(q, y)\right)(s, a) \\
    =& \begin{cases}
      0, & (s, a) \neq (s_0, a_0) \\
      \gamma \sum_{a_1} \left(\pi_{\theta_t}(a_1| s_1) - \pi_{\theta_k}(a_1|s_1) \right) q(s_1, a_1) + \lambda_t \ent{\pi_{\theta_t}(\cdot|s_1)} - \lambda_k \ent{\pi_{\theta_k}(\cdot|s_1)}, &(s, a) = (s_0, a_0)
    \end{cases}.
  \end{align}
  Since 
  \begin{align}
    &\abs{\lambda_t \ent{\pi_{\theta_t}(\cdot|s_1)} - \lambda_k \ent{\pi_{\theta_k}(\cdot|s_1)}} \\
    \leq & \abs{\lambda_t - \lambda_k} \ent{\pi_{\theta_t}(\cdot | s_1)} + \lambda_k \abs{\ent{\pi_{\theta_t}(\cdot | s_1)} - \ent{\pi_{\theta_k}(\cdot | s_1)}} \\
    \leq & \abs{\lambda_t - \lambda_k} \log \na + \lambda_k \left(\log \na + e^{-1}\right) \norm{\theta_t - \theta_k} \qq{(Lemma \ref{lem softmax policy gradient})} \\
    \leq & \left(\log \na + \lambda \log \na + \lambda e^{-1}\right) \norm{\zeta_t - \zeta_k},
  \end{align}
  we have
  \begin{align}
    &\norm{F_{\zeta_k}(q, y) - F_{\zeta_t}(q, y)}_\infty \\ 
    \leq& \gamma \na L_\pi \norm{\theta_t - \theta_k}\norm{q}_\infty + \left(\log \na + \lambda \log \na + \lambda e^{-1}\right) \norm{\zeta_t - \zeta_k} \\
    \leq& \gamma \na L_\pi \norm{\zeta_t - \zeta_k}\norm{q}_\infty + \left(\log \na + \lambda \log \na + \lambda e^{-1}\right) \norm{\zeta_t - \zeta_k} \\
    \leq& \frac{\gamma \na L_\pi}{l_{2, p} l_{\infty, p}} \norm{\zeta_t - \zeta_k}_p \norm{q}_p + \frac{(\log \na + \lambda \log \na + \lambda e^{-1})}{l_{2, p}} \norm{\zeta_t - \zeta_k}_p
  \end{align}
  Assumption~\ref{assu regularization} (ii) is then fulfilled. 

  To verify Assumption~\ref{assu regularization} (iii), 
  for any $y = (s_0, a_0, s_1)$,
  we have
  \begin{align}
    \left(F_{\zeta_t}(0, y)\right)(s, a) = \begin{cases}
      0, & (s, a) \neq (s_0, a_0) \\
      r(s_0, a_0) + \gamma \lambda_t \ent{\pi_{\theta_t}(\cdot | s_1)}, &(s, a) = (s_0, a_0)
    \end{cases}.
  \end{align}
  Then 
  \begin{align}
    \norm{F_{\zeta_t}(0, y)}_p \leq u_{\infty, p} \norm{F_\theta(0, y)}_\infty \leq  u_{\infty, p}\left(r_{max} + \gamma \lambda \log \na \right).
  \end{align}
  Assumption~\ref{assu regularization} (iii) is then fulfilled.

  To verify Assumption~\ref{assu regularization} (iv), 
  we have
  \begin{align}
    \bar F_{\zeta_t}(q) - \bar F_{\zeta_k}(q)
    = \bar F_{\theta_t}(q) - \bar F_{\theta_k}(q) - \gamma \lambda_t D_{\mu_{\theta_t}} P_{\pi_{\theta_t}} \log \pi_{\theta_t} + \gamma \lambda_k D_{\mu_{\theta_k}} P_{\pi_{\theta_k}} \log \pi_{\theta_k},
  \end{align}
  where $\bar F_\theta$ is defined in \eqref{eq definition of bar f theta}.
  In the proof of Proposition~\ref{prop critic convergence},
  we already show that
  there exist constants $C_1$ and $C_2$ such that
  \begin{align}
    \norm{\bar F_{\theta_t}(q) - \bar F_{\theta_k}(q)}_p \leq C_1 \norm{\theta_t - \theta_k}_p \left(\norm{q}_p + C_2\right) \leq C_1 \norm{\zeta_t - \zeta_k}_p \left(\norm{q}_p + C_2\right).
  \end{align}
  We now bound the remaining parts $- \gamma \lambda_t D_{\mu_{\theta_t}} P_{\pi_{\theta_t}} \log \pi_{\theta_t} + \gamma \lambda_k D_{\mu_{\theta_k}} P_{\pi_{\theta_k}} \log \pi_{\theta_k}$.
  First, notice that
  \begin{align}
    \left(P_{\pi_\theta} \log \pi_\theta\right)(s, a) =& \sum_{s'} p(s'|s, a) \sum_{a'} \pi_\theta(a'|s') \log \pi_\theta(a'|s') \\
    =& -\sum_{s'}p(s'|s,a)\ent{\pi_{\theta}(\cdot |s')}.
  \end{align}
  It is easy to see $\ent{\pi_\theta(\cdot|s')}$ is Lipschitz continuous in $\theta$ (Lemma~\ref{lem softmax policy gradient}) 
  and is bounded by $\log \na$.
  We, therefore, conclude that $P_{\pi_\theta} \log \pi_\theta$ is Lipschitz continuous in $\theta$ and is bounded from above.
  Since $D_{\mu_\theta}$ is also Lipschitz continuous in $\theta$ (Lemma~\ref{lem continuity of ergodic distribution}) and is bounded from above,
  Lemma~\ref{lem product of lipschitz functions} asserts that there exists constants $C_3$ and $C_4$ such that
  \begin{align}
    \norm{D_{\mu_\theta} P_{\pi_\theta} \log \pi_\theta} &\leq C_3, \\
    \norm{D_{\mu_\theta} P_{\pi_\theta} \log \pi_\theta - D_{\mu_{\theta'}} P_{\pi_{\theta'}} \log \pi_{\theta'}} &\leq C_4 \norm{\theta - \theta'},
  \end{align}
  implying
  \begin{align}
    &\norm{\lambda_k D_{\mu_{\theta_k}} P_{\pi_{\theta_k}} \log \pi_{\theta_k} - \lambda_t D_{\mu_{\theta_t}} P_{\pi_{\theta_t}} \log \pi_{\theta_t}} \\
    \leq & \norm{\lambda_k - \lambda_t} \norm{D_{\mu_{\theta_k}} P_{\pi_{\theta_k}} \log \pi_{\theta_k}} + \lambda_t \norm{D_{\mu_{\theta_k}} P_{\pi_{\theta_k}} \log \pi_{\theta_k} - D_{\mu_{\theta_t}} P_{\pi_{\theta_t}} \log \pi_{\theta_t}} \\
    \leq &C_3 \norm{\lambda_k - \lambda_t} + \lambda C_4 \norm{\theta_t - \theta_k} \\
    \leq &(C_3 + \lambda C_4) \norm{\zeta_t - \zeta_k},
  \end{align}
  which completes the verification of Assumption~\ref{assu regularization} (iv).

  To verify Assumptions~\ref{assu regularization} (v), 
  it suffices to show that
  \begin{align}
    \norm{\tilde q_{\pi_{\theta_t}, \lambda_t} - \tilde q_{\pi_{\theta_k}, \lambda_k}}_p \leq C_5 \norm{\zeta_t - \zeta_k}_p
  \end{align}
  holds for some positive constant $C_5$.
  According to \eqref{eq definition of soft q},
  it suffices to show that for some positive constant $C_6$,
  \begin{align}
    \norm{\tilde v_{\pi_{\theta_t}, \lambda_t} - \tilde v_{\pi_{\theta_k}, \lambda_k}}_p \leq C_6 \norm{\zeta_t - \zeta_k}_p.
  \end{align}
  Recall by definition
  \begin{align}
    \label{eq tmp 12}
    \tilde v_{\pi_{\theta_t}, \lambda_t}(s) =& v_{\pi_{\theta_t}}(s) + \lambda_t \underbrace{\E_{\pi_{\theta_t}}\left[\sum_{i=0}^\infty \gamma^{i}  \ent{\pi_{\theta_t}(\cdot | S_{t+i})}   \mid S_t = s\right]}_{\mathbb{H}_{\theta_t}(s)}.
  \end{align}
  Clearly,
  \begin{align}
    \abs{\mathbb{H}_{\theta}(s)} \leq \frac{\log \na}{1 - \gamma}.
  \end{align}
  We now show that $\mathbb{H}_{\theta}(s)$ is Lipschitz continuous in $\theta$.
  Let $p_{0, s}$ denote the distribution on $\fS$ such that all its mass concentrates on the state $s$, i.e.,
  $p_{0, s}(s) = 1$.
  We can then express $\mathbb{H}_\theta(s)$ as
  \begin{align}
    \mathbb{H}_{\theta}(s) = \frac{1}{1-\gamma} \sum_{s} d_{\pi_\theta, \gamma, p_{0, s}}(s) \ent{\pi_\theta(\cdot |s)}.
  \end{align}
  It is easy to see that
  \begin{align}
    &d_{\pi_\theta, \gamma, p_{0, s}}(s'') \\
    =& (1 - \gamma) \sum_{t=0}^\infty \gamma^t \Pr(S_t=s'' | S_0 \sim p_{0, s}) \\
    =& (1 - \gamma) p_{0, s}(s'') + (1 - \gamma) \sum_{t=1}^{\infty} \gamma^t \Pr(S_t = s'' | S_0 \sim p_{0, s}) \\
    =& (1 - \gamma) p_{0, s}(s'') + (1 - \gamma) \sum_{t=0}^{\infty} \gamma^{t+1} \Pr(S_{t+1} = s'' | S_0 \sim p_{0, s}) \\
    =& (1 - \gamma) p_{0, s}(s'') + \gamma (1 - \gamma) \sum_{t=0}^{\infty} \gamma^t \sum_{s'} \Pr(S_t = s' | S_0 \sim p_{0, s}) \Pr(S_{t+1} = s'' | S_t = s') \\
    =& (1 - \gamma) p_{0, s}(s'') + \gamma \sum_{s'}\Pr(S_{t+1} = s'' | S_t = s') d_{\pi_\theta, \gamma, p_{0,s}}(s').
  \end{align}
  In a matrix form,
  we have
  \begin{align}
    d_{\pi_\theta, \gamma, p_{0, s}} =& (1 - \gamma) p_{0, s} + \gamma P_{\pi_\theta}^\top d_{\pi_\theta, \gamma, p_{0, s}} \\
    \implies d_{\pi_\theta, \gamma, p_{0, s}} =& (1 - \gamma) (I - \gamma P_{\pi_\theta}^\top)^{-1} p_{0, s},
  \end{align}
  where we have abused the notation a bit to use $P_{\pi_\theta}$ to also denote the state transition matrix under the policy $\pi_\theta$.
  Similar to \eqref{eq lipschitz continuity of q},
  we conclude that $d_{\pi_\theta, \gamma, p_{0, s}}$ is Lipschitz continuous in $\theta$.
  Lemma~\ref{lem softmax policy gradient} confirms that 
  $\ent{\pi_\theta(\cdot |s)}$ is Lipschitz continuous in $\theta$.
  Hence Lemma~\ref{lem product of lipschitz functions} asserts that $\mathbb{H}_{\theta}(s)$ is Lipschitz continuous in $\theta$, 
  i.e.,
  there exists a positive constant such that
  \begin{align}
    \label{eq tmp 9}
    \abs{\mathbb{H}_\theta(s) - \mathbb{H}_{\theta'}(s)} \leq C_7 \norm{\theta - \theta'}.
  \end{align}
  Similar to \eqref{eq lipschitz continuity of q},
  we can also show that there exists a constant $C_8$ such that
  \begin{align}
    \label{eq tmp 10}
    \abs{v_{\pi_\theta}(s) - v_{\pi_{\theta'}}(s)} \leq C_8 \norm{\theta - \theta'}.
  \end{align}
  We, therefore, have
  \begin{align}
    &\abs{{\tilde v_{\pi_{\theta_t}, \lambda_t}(s) - \tilde v_{\pi_{\theta_k}, \lambda_k}}(s)} \\
    \leq& \abs{v_{\pi_{\theta_t}}(s) - v_{\pi_{\theta_k}}(s)} + \abs{\lambda_t - \lambda_k} \abs{\mathbb{H}_{\theta_t}(s)} + \lambda_k \abs{\mathbb{H}_{\theta_t}(s) - \mathbb{H}_{\theta_k}(s)} \\
    \leq& C_8 \norm{\theta_t - \theta_k} + \abs{\lambda_t - \lambda_k} \frac{\log \na}{1 - \gamma} + \lambda C_7 \norm{\theta_t - \theta_k} \\
    \leq& \left(C_8 + \frac{\log \na}{1 - \gamma} + \lambda C_7\right) \norm{\zeta_t - \zeta_k},
  \end{align}
  which completes the verification of Assumption~\ref{assu regularization} (v).

  For Assumption~\ref{assu regularization} (vi),
  first notice that
  the soft action value function $\tilde q_{\pi, \eta}$ can be regarded as the normal action value function $q_\pi$ w.r.t. to the reward
  \begin{align}
    r(s, a) + \eta \sum_{s'}p(s'|s,a) \ent{\pi(\cdot|s')}
  \end{align}
  Then it is easy to see 
  \begin{align}
    \label{eq soft q bound}
    \abs{\tilde q_{\pi_{\theta_t, \lambda_t}}(s, a)} \leq U_{\tilde J} \doteq \frac{r_{max} + \lambda \log \na}{1 - \gamma},
  \end{align}
  which completes the verification of Assumption~\ref{assu regularization} (vi).

  For Assumption~\ref{assu regularization} (vii),
  we have
  \begin{align}
    \abs{P_{\zeta_t}(y, y') - P_{\zeta_k}(y, y')} \leq C_9 \norm{\theta_t - \theta_k} \leq  C_9 \norm{\zeta_t - \zeta_k},
  \end{align}
  where the existence of the positive constant $C_9$ is ensured by Assumption~\ref{assu lipschitz mu}.

  Assumption~\ref{assu mds} is automatically fulfilled since in our setting we have $\epsilon_t \equiv 0$.

  Assumption~\ref{assu twotimescale} is automatically implied by Assumption~\ref{assu three timescales} except for \eqref{eq konda actor update}.
  Ssimilar to~\eqref{eq boundedness q},
  we can show that
  \begin{align}
    \norm{q_t}_\infty \leq \tilde C_q \doteq \max\qty{\norm{q_0}_\infty, \frac{r_{max} + \lambda \log \na}{1 - \gamma}}.
  \end{align}
  According to the updates of $\qty{\theta_t}$ in Algorithm~\ref{alg sac},
  we have
  \begin{align}
    \label{eq ltheta sac detailed}
    &\norm{\theta_{t+1} - \theta_t} \\
    =& \beta_t \norm{\sum_a \pi_{\theta_t}(a|S_t)\nabla_\theta \log \pi_{\theta_t}(a | S_t) \Big( q_t(S_t, a) - \lambda_t \log \pi_{\theta_t}(a | S_t)\Big)}  \\
    =& \beta_t \norm{\sum_a \nabla \pi_{\theta_t}(a|S_t) \Big( q_t(S_t, a) - \lambda_t \log \pi_{\theta_t}(a | S_t)\Big)}  \\
    \leq& \beta_t \norm{\sum_a \nabla \pi_{\theta_t}(a|S_t) q_t(S_t, a)} + \beta_t \lambda_t \norm{\sum_a \nabla \pi_{\theta_t}(a|S_t)\log \pi_{\theta_t}(a | S_t)}  \\
    \leq& \beta_t \na \sqrt{\na} \tilde C_q + \beta_t \lambda_t \norm{\sum_a \nabla \pi_{\theta_t}(a|S_t)\log \pi_{\theta_t}(a | S_t)} \qq{(Lemma~\ref{lem softmax policy gradient})}  \\
    =& \beta_t \na \sqrt{\na} \tilde C_q  + \beta_t \lambda_t \norm{\sum_a \nabla \pi_{\theta_t}(a|S_t)\log \pi_{\theta_t}(a | S_t) + \pi_{\theta_t}(a|S_t) \nabla \log \pi_{\theta_t}(a|S_t)} \\
    =& \beta_t \na \sqrt{\na} \tilde C_q + \beta_t \lambda_t \norm{\nabla \ent{\pi_{\theta_t}(\cdot|S_t)}} \\
    \leq& \beta_t \na \sqrt{\na} \tilde C_q  + \beta_t \lambda_t \left(\log \na + e^{-1}\right) \qq{(Lemma~\ref{lem softmax policy gradient})} \\
    \label{eq ltheta sac}
    \leq& \beta_t \underbrace{\left( \na \sqrt{\na} \tilde C_q  + \lambda \log \na + \lambda e^{-1}\right)}_{L_\theta}.
  \end{align}
  Further,
  \begin{align}
    &\abs{\lambda_{t+1} - \lambda_t} \\
    =& \lambda\left(\frac{1}{(t+t_0)^{\epsilon_\lambda}} - \frac{1}{(t+t_0+1)^{\epsilon_\lambda}}\right) \\
    =& \lambda\frac{ (t+t_0+1)^{\epsilon_\lambda} - (t+t_0)^{\epsilon_\lambda} }{(t+t_0)^{\epsilon_\lambda} (t+t_0+1)^{\epsilon_\lambda}} \\
    =& \lambda\frac{ (t+t_0+1)^{\epsilon_\lambda} (t+t_0)^{1-\epsilon_\lambda} - (t+t_0) }{(t+t_0) (t+t_0+1)^{\epsilon_\lambda}} \\
    \leq & \lambda\frac{ (t+t_0+1)^{\epsilon_\lambda} (t+t_0+1)^{1-\epsilon_\lambda} - (t+t_0) }{(t+t_0) (t+t_0)^{\epsilon_\lambda}} \\
    = & \frac{ \lambda }{(t+t_0)^{1 + \epsilon_\lambda} } \\
    = & \beta_t \frac{ \lambda (t+t_0)^{\epsilon_\beta} }{ \beta (t+t_0)^{1 + \epsilon_\lambda} } \\
    \leq &\beta_t \frac{\lambda}{\beta}.
  \end{align}
  We, therefore, conclude that
  that there exists a constant $L_\zeta$ 
  such that 
  \begin{align}
    \label{eq lzeta sac}
    \norm{\zeta_{t+1} - \zeta_t}_p \leq \beta_t L_\zeta,
  \end{align}
  which completes the verification of Assumption~\ref{assu twotimescale}.

  With Assumptions \ref{assu makovian} - \ref{assu twotimescale} satisfied,
  invoking Theorem~\ref{thm sa convergence} completes the proof. 
\end{proof}

\subsection{Proof of Theorem~\ref{thm convergence actor sac}}
\thmconvergenceactorsac*
\label{sec proof thm convergence actor sac}
\begin{proof}
  In this proof,
  we use as shorthand
  \begin{align}
    \label{eq actor convergence shorthard sac}
    J(\theta) &\doteq J(\pi_\theta; p_0'), \\
    \tilde J_\eta(\theta) &\doteq \tilde J_\eta(\pi_\theta; p_0'), \\
    d_{\pi, \gamma}(s) &\doteq d_{\pi, \gamma, p_0'}(s),
  \end{align}
  i.e.,
  we work on the initial distribution $p_0'$ (instead of $p_0$).
  Recall the entropy regularized discounted total rewards is defined as 
  \begin{align}
    \tilde J_\eta(\pi_\theta;p_0') = J(\pi_\theta;p_0') + \eta \underbrace{\frac{1}{1-\gamma} \sum_{s} d_{\pi_\theta, \gamma, p_0'}(s) \ent{\pi_\theta(\cdot | s)}}_{\mH(\pi_\theta)}.
  \end{align}
  According to Lemma 7 of \citet{mei2020global}, $J(\theta)$ is $L_J$-smooth for some positive constant $L_J$ w.r.t $\norm{\cdot}$.
  According to Lemma 14 of \citet{mei2020global}, $\mH(\pi_\theta)$ is $L_H$-smooth for some positive constant $L_H$ w.r.t. $\norm{\cdot}$.
  Hence $\tilde J_\eta(\theta)$ is $(L_J + \eta L_H)$-smooth.
  With $\eta = \lambda_t$,
  Lemma~\ref{lem smooth definition} then implies
  \begin{align}
    \tilde J_{\lambda_t}(\theta_{t+1}) \geq& \tilde J_{\lambda_t}(\theta_t) + \indot{\nabla \tilde J_{\lambda_t}(\theta_t)}{\theta_{t+1} - \theta_t} - (L_J + \lambda_t L_H) \norm{\theta_{t+1}-\theta_t}^2 \\
    \geq& \tilde J_{\lambda_t}(\theta_t) + \underbrace{\indot{\nabla \tilde J_{\lambda_t}(\theta_t)}{\theta_{t+1} - \theta_t}}_{\tilde M_1} - \tilde L_J' \underbrace{\norm{\theta_{t+1}-\theta_t}^2}_{\tilde M_2},
  \end{align}
  where $\tilde L_J' \doteq L_J + \lambda L_H$.
  Using \eqref{eq ltheta sac} to bound $\tilde M_2$ yields
  \begin{align}
    \tilde M_2 \leq \frac{1}{l_{2,p}} \beta_t^2 L_\theta^2.
  \end{align}
  To bound $\tilde M_1$, 
  we reuse the $Y_t$ and $\tilde Y_t$ defined in \eqref{eq actor yt} and \eqref{eq actor tilde yt}.
  For any $s$, we define
  \begin{align}
    \label{eq actor helper function sac}
    \Lambda_1(\theta, s, \eta) &\doteq \sum_a \pi_\theta(s, a)  \nabla \log  \pi_\theta(a|s) \left(\tilde q_{\pi_\theta, \eta}(s, a) - \eta \log \pi_\theta(a | s) \right), \\
    \bar \Lambda_1(\theta, \eta) &\doteq \sum_{s} d_{\mu_\theta}(s) \Lambda(\theta, s, \eta).
  \end{align}
  Then we have
  \begin{align}
    &\tilde M_1 \\
    = &\indot{\nabla \tilde J_{\lambda_t}(\theta_t)}{\theta_{t+1} - \theta_t} \\
    =&\beta_t \indot{\nabla \tilde J_{\lambda_t}(\theta_t)}{ \sum_a \pi_{\theta_t}(a|S_t) \nabla \log \pi_{\theta_t}(a | S_t) \Big(q_t(S_t, a) - \lambda_t \log \pi_{\theta_t}(a |S_t) \Big)} \\
    =&\beta_t \underbrace{\indot{\nabla \tilde J_{\lambda_t}(\theta_t)}{\bar \Lambda_1(\theta_t, \lambda_t)}}_{\tilde M_{11}} \\
    &+ \beta_t \underbrace{\indot{\nabla \tilde J_{\lambda_t}(\theta_t)}{\sum_a \nabla \pi_{\theta_t}(a|S_t) \Big(\tilde q_{\pi_{\theta_t}, \lambda_t}(S_t, a) - \lambda_t \log \pi_{\theta_t}(a |S_t) \Big)  - \bar \Lambda_1(\theta_t, \lambda_t)}}_{\tilde M_{12}} \\
    &+ \beta_t \underbrace{\indot{\nabla \tilde J_{\lambda_t}(\theta_t)}{\sum_a \nabla \pi_{\theta_t}(a | S_t) \left(q_t(S_t, a) -  \tilde q_{\pi_{\theta_t}, \lambda_t}(S_t, a) \right)}}_{\tilde M_{13}}.
  \end{align}
  To bound $\tilde M_{12}$,
  define 
  \begin{align}
    \label{eq actor helper function 2 sac}
    \Lambda_1'(\theta, s, \eta) \doteq \indot{\nabla \tilde J_{\eta}(\theta)}{ \Lambda_1(\theta, s, \eta) - \bar \Lambda_1(\theta, \eta)}.
  \end{align}
  We then decompose $\tilde M_{12}$ as
  \begin{align}
    &\tilde M_{12} = \Lambda_1'(\theta_t, S_t, \lambda_t)  \\
    =& \underbrace{\Lambda_1'(\theta_t, S_t, \lambda_t) - \Lambda_1'(\theta_{t-\tau_{\beta_t}}, S_t, \lambda_t)}_{\tilde M_{121}} + \underbrace{\Lambda_1'(\theta_{t-\tau_{\beta_t}}, S_t, \lambda_t) - \Lambda_1'(\theta_{t-\tau_{\beta_t}}, \tilde S_t, \lambda_t)}_{\tilde M_{122}} \\
    &+ \underbrace{\Lambda_1'(\theta_{t-\tau_{\beta_t}}, \tilde S_t, \lambda_t)}_{\tilde M_{123}},
  \end{align}
  where we recall that $\tilde S_t$ is defined as part of $\tilde Y_t$ in \eqref{eq actor tilde yt}.
  Let us proceed to bounding each term defined above.

\begin{restatable}{lemma}{lemboundofmoneonesac}
  \label{lem bound of m11 sac}
(Bound of $\tilde M_{11}$)
There exists a constant $\chi_{11} > 0$ such that,
\begin{align}
  \tilde M_{11} \geq \chi_{11} \norm{\nabla \tilde J_{\lambda_t}(\theta_t)}^2.
\end{align}
\end{restatable}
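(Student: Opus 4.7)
The plan is to exploit the state-local support structure of the softmax gradient, which makes the inner product $\tilde M_{11}$ decompose state-by-state with no cross terms. For each state $s$, introduce the per-state vector
\begin{align}
  g(\theta, s) \doteq \sum_a \nabla \pi_\theta(a|s) \Big( \tilde q_{\pi_\theta, \lambda_t}(s, a) - \lambda_t \log \pi_\theta(a|s) \Big),
\end{align}
so that $\bar \Lambda_1(\theta, \lambda_t) = \sum_s d_{\mu_\theta}(s)\, g(\theta, s)$ and, by the soft policy gradient expression stated in Section~\ref{sec sac}, $\nabla \tilde J_{\lambda_t}(\theta) = \frac{1}{1-\gamma} \sum_s d_{\pi_\theta, \gamma, p_0'}(s)\, g(\theta, s)$. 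Because $\pi_\theta$ is a tabular softmax policy, each vector $\nabla \pi_\theta(a|s) \in \R^\nsa$ has nonzero entries only in the coordinate block $\{\theta_{s, a'}\}_{a'}$, so $g(\theta, s)$ is supported on that block and inner products across distinct states vanish.

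This support property yields the diagonal decompositions
\begin{align}
  \tilde M_{11} &= \frac{1}{1-\gamma} \sum_s d_{\pi_{\theta_t}, \gamma, p_0'}(s)\, d_{\mu_{\theta_t}}(s)\, \norm{g(\theta_t, s)}^2, \\
  \norm{\nabla \tilde J_{\lambda_t}(\theta_t)}^2 &= \frac{1}{(1-\gamma)^2} \sum_s d_{\pi_{\theta_t}, \gamma, p_0'}(s)^2\, \norm{g(\theta_t, s)}^2.
\end{align}
Since $d_{\pi_{\theta_t}, \gamma, p_0'}(s) \leq 1$, the desired inequality reduces to the termwise comparison $(1-\gamma)\, d_{\mu_{\theta_t}}(s) \geq \chi_{11}\, d_{\pi_{\theta_t}, \gamma, p_0'}(s)$, and hence to obtaining a uniform positive lower bound on $d_{\mu_{\theta_t}}(s)$.

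For this, Assumption~\ref{assu mu uniform ergodicity} guarantees that the invariant distribution of every $\mu \in \bar \Lambda_\mu$ is strictly positive on $\fS$; combining the continuity of the invariant distribution with the extreme value theorem on the compact set $\bar \Lambda_\mu$ (exactly as in the proof of Lemma~\ref{lem uniform contraction opac}) gives $d_{\mu, \min} \doteq \inf_{\mu \in \bar \Lambda_\mu, s} d_\mu(s) > 0$. Taking $\chi_{11} \doteq (1-\gamma)\, d_{\mu, \min}$ and chaining these estimates then delivers $\tilde M_{11} \geq \chi_{11} \norm{\nabla \tilde J_{\lambda_t}(\theta_t)}^2$.

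The only non-algebraic ingredient is the uniform lower bound on $d_{\mu_\theta}(s)$, which is precisely where uniform ergodicity (rather than pointwise ergodicity of each $\mu_\theta$) is needed. Everything else is transparent because Algorithm~\ref{alg sac} uses an \emph{expected} actor update, so no importance sampling ratio appears inside $\bar \Lambda_1$; this is what makes the state-by-state diagonalization clean, in contrast to the corresponding argument behind Lemma~\ref{lem bound of m11}.
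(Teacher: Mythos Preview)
Your proof is correct and is essentially the same argument as the paper's: both exploit that for the tabular softmax parameterization, $\bar \Lambda_1(\theta_t,\lambda_t)$ and $\nabla \tilde J_{\lambda_t}(\theta_t)$ are state-block-diagonal combinations of the same per-state vectors, so the inner product $\tilde M_{11}$ is a weighted sum of squared gradient components with weight $\frac{(1-\gamma)\,d_{\mu_{\theta_t}}(s)}{d_{\pi_{\theta_t},\gamma,p_0'}(s)}$, which is uniformly bounded below via Assumption~\ref{assu mu uniform ergodicity} and the extreme value theorem. The only cosmetic difference is that the paper carries out the computation coordinate-by-coordinate using Lemma~\ref{lem softmax policy gradient} and takes $\chi_{11}=\inf_{\theta,s}\frac{(1-\gamma)d_{\mu_\theta}(s)}{d_{\pi_\theta,\gamma,p_0'}(s)}$, whereas you phrase it via the block-support/orthogonality of the $g(\theta,s)$ and take the (slightly smaller but still valid) constant $\chi_{11}=(1-\gamma)d_{\mu,\min}$.
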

\noindent
The proof of Lemma~\ref{lem bound of m11 sac} is provided in Section~\ref{sec proof lem bound of m11 sac}.

\begin{restatable}{lemma}{lemboundofmonetwoonesac}
  \label{lem bound of m121 sac}
  (Bound of $\tilde M_{121}$)
  There exist constants $L_{\Lambda_1'}^* > 0$ such that
  \begin{align}
    \norm{\tilde M_{121}} \leq L_{\Lambda_1'}^* L_\theta \beta_{t-\tau_{\beta_t}, t-1},
  \end{align}
  where $L_\theta$ is defined in \eqref{eq ltheta sac}.
\end{restatable}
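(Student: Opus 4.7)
The plan is to reduce the bound to (i) a uniform Lipschitz estimate for $\Lambda_1'(\cdot, s, \eta)$ in $\theta$ and (ii) the telescoping estimate $\|\theta_t - \theta_{t-\tau_{\beta_t}}\| \le L_\theta \beta_{t-\tau_{\beta_t}, t-1}$. The second ingredient follows immediately from \eqref{eq ltheta sac}, since by the triangle inequality
\begin{align}
\norm{\theta_t - \theta_{t-\tau_{\beta_t}}} \le \sum_{k=t-\tau_{\beta_t}}^{t-1}\norm{\theta_{k+1}-\theta_k} \le L_\theta \sum_{k=t-\tau_{\beta_t}}^{t-1}\beta_k = L_\theta\, \beta_{t-\tau_{\beta_t},t-1}.
\end{align}
So once Lipschitzness of $\Lambda_1'$ is in place, writing $\tilde M_{121} = \Lambda_1'(\theta_t, S_t, \lambda_t) - \Lambda_1'(\theta_{t-\tau_{\beta_t}}, S_t, \lambda_t)$ and applying the Lipschitz estimate at the fixed second and third arguments yields the stated bound.

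To establish that $\Lambda_1'(\cdot, s, \eta)$ is Lipschitz in $\theta$ uniformly in $s$ and in $\eta \in [0, \lambda]$, I would unpack the definition $\Lambda_1'(\theta, s, \eta) = \indot{\nabla \tilde J_\eta(\theta)}{\Lambda_1(\theta, s, \eta) - \bar \Lambda_1(\theta, \eta)}$ and verify, one by one, that $\nabla \tilde J_\eta(\theta)$, $\Lambda_1(\theta, s, \eta)$, and $\bar \Lambda_1(\theta, \eta)$ are bounded and Lipschitz. For $\nabla \tilde J_\eta(\theta)$, I would use that $\tilde J_\eta$ is $(L_J + \eta L_H)$-smooth (as invoked at the start of the proof of Theorem~\ref{thm convergence actor sac}) and that $\tilde J_\eta$ itself is bounded (cf. \eqref{eq soft q bound}), which bounds the gradient uniformly. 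For $\Lambda_1(\theta, s, \eta) = \sum_a \nabla \pi_\theta(a|s)\bigl(\tilde q_{\pi_\theta,\eta}(s,a) - \eta \log \pi_\theta(a|s)\bigr)$, I would rewrite the troublesome $-\eta \sum_a \nabla \pi_\theta(a|s) \log \pi_\theta(a|s)$ as $\eta \nabla \ent{\pi_\theta(\cdot|s)}$, which is uniformly bounded and Lipschitz in $\theta$ by Lemma~\ref{lem softmax policy gradient}; the remaining piece $\sum_a \nabla \pi_\theta(a|s)\, \tilde q_{\pi_\theta,\eta}(s,a)$ is a product of bounded Lipschitz factors (boundedness and Lipschitzness of $\nabla \pi_\theta$ follow from the softmax parameterization, and those of $\tilde q_{\pi_\theta, \eta}$ were verified as Assumption~\ref{assu regularization} (v)--(vi) inside the proof of Proposition~\ref{prop critic convergence sac}). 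Lipschitzness and boundedness of $\bar \Lambda_1(\theta, \eta) = \sum_s d_{\mu_\theta}(s)\Lambda_1(\theta, s, \eta)$ then follow from Lemma~\ref{lem continuity of ergodic distribution} (Lipschitzness of $d_{\mu_\theta}$ in $\theta$) combined with Lemma~\ref{lem product of lipschitz functions}.

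Applying Lemma~\ref{lem product of lipschitz functions} once more to the inner product of the two bounded Lipschitz quantities yields the desired constant $L_{\Lambda_1'}^*$, independent of $s$, $\eta$, and $t$. Combining this with the telescoping estimate gives $\norm{\tilde M_{121}} \le L_{\Lambda_1'}^* L_\theta\, \beta_{t-\tau_{\beta_t}, t-1}$. The only non-routine step is the treatment of $\eta \log \pi_\theta(a|s)$, which is unbounded as $\pi_\theta(a|s)\to 0$; the key observation that resolves this, and which I would highlight, is that this term never appears in isolation but always paired with $\nabla \pi_\theta(a|s)$, so it can be recognized as an entropy gradient and handled via the bounds in Lemma~\ref{lem softmax policy gradient}. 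Everything else is a mechanical application of product/chain Lipschitz rules.
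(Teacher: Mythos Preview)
Your proposal is correct and follows essentially the same route as the paper: rewrite $\Lambda_1$ so that the problematic $-\eta\sum_a \nabla \pi_\theta(a|s)\log\pi_\theta(a|s)$ becomes $\eta\,\nabla\ent{\pi_\theta(\cdot|s)}$, establish Lipschitz continuity and boundedness of each factor in $\Lambda_1'$, and combine with the telescoping bound $\|\theta_t-\theta_{t-\tau_{\beta_t}}\|\le L_\theta\,\beta_{t-\tau_{\beta_t},t-1}$; the paper also makes the $\eta$-dependence of the Lipschitz constants explicit and then takes a supremum over $\eta\in[0,\lambda]$ via the extreme value theorem. One small correction: boundedness of $\tilde J_\eta$ does not by itself bound $\nabla\tilde J_\eta$; the paper instead reads the gradient bound off the explicit formula in Lemma~\ref{lem softmax policy gradient}, where the $\eta\log\pi_\theta$ term is again controlled by its $\pi_\theta$ prefactor.
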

\noindent
The proof of Lemma~\ref{lem bound of m121 sac} is provided in Section~\ref{sec proof lem bound of m121 sac}

\begin{restatable}{lemma}{lemboundofmonetwotwosac}
  \label{lem bound of m122 sac}
  (Bound of $\tilde M_{122}$)
  There exists a constant $U_{\Lambda_1'}^* > 0$ such that
  \begin{align}
      \norm{\E\left[\tilde M_{122}\right]} \leq U_{\Lambda'}^* \ns \na L_\mu L_\theta \sum_{j=t-\tau_{\beta_t}}^{t-1} \beta_{t-\tau_{\beta_t}, j}.
  \end{align}
\end{restatable}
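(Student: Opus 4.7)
The plan is to mirror the proof of Lemma~\ref{lem bound of m122} from the off-policy actor-critic setting, adapted to the SAC auxiliary chain $\qty{\tilde S_t}$ defined just before the statement. The key observation is that in $\tilde M_{122}$ the first two arguments $\theta_{t-\tau_{\beta_t}}$ and the regularization $\lambda_t$ are identical in both terms, so the expected difference is purely a distributional mismatch between $S_t$ (evolving under the changing behavior policies $\mu_{\theta_{t-\tau_{\beta_t}+1}},\mu_{\theta_{t-\tau_{\beta_t}+2}},\dots$) and $\tilde S_t$ (evolving under the frozen kernel $\mu_{\theta_{t-\tau_{\beta_t}}}$). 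Conditioning on $\fF_{t-\tau_{\beta_t}}$ (which fixes $\theta_{t-\tau_{\beta_t}}$ and $Y_{t-\tau_{\beta_t}}=\tilde Y_{t-\tau_{\beta_t}}$), I will write
\begin{align}
\E\left[\tilde M_{122}\mid\fF_{t-\tau_{\beta_t}}\right]=\sum_s\Big(\Pr(S_t=s\mid\fF_{t-\tau_{\beta_t}})-\Pr(\tilde S_t=s\mid\fF_{t-\tau_{\beta_t}})\Big)\Lambda_1'(\theta_{t-\tau_{\beta_t}},s,\lambda_t),
\end{align}
and upper-bound this by $U_{\Lambda_1'}^*$ (a uniform bound on $|\Lambda_1'|$) times the total variation distance between the two state distributions.

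For the uniform bound, I will combine the softmax gradient bounds of Lemma~\ref{lem softmax policy gradient}, the uniform bound \eqref{eq soft q bound} on $\tilde q_{\pi_\theta,\eta}$, and $\lambda_t\leq\lambda$ to see that $\|\nabla\tilde J_{\lambda_t}(\theta)\|$, $\|\Lambda_1(\theta,s,\eta)\|$, and hence $\|\bar\Lambda_1(\theta,\eta)\|$ are all bounded in $(\theta,s,\eta)$; this yields $U_{\Lambda_1'}^*$ (I can reuse the same constant produced in Lemma~\ref{lem bound of m123 sac}'s proof). For the total variation, I will run a standard hybrid/telescoping argument: introducing intermediate chains which use the true kernels $P_{\theta_{t-\tau_{\beta_t}+1}},\dots,P_{\theta_j}$ up to step $j$ and the frozen kernel $P_{\theta_{t-\tau_{\beta_t}}}$ thereafter, and using $P_{\theta}((s,a),(s',a'))=p(s'\mid s,a)\mu_\theta(a'\mid s')$, I get
\begin{align}
\Big\|\Pr(Y_t=\cdot\mid\fF_{t-\tau_{\beta_t}})-\Pr(\tilde Y_t=\cdot\mid\fF_{t-\tau_{\beta_t}})\Big\|_{TV}\leq\sum_{j=t-\tau_{\beta_t}+1}^{t}\E\Big[\sup_{(s,a)}\|P_{\theta_j}((s,a),\cdot)-P_{\theta_{t-\tau_{\beta_t}}}((s,a),\cdot)\|_{TV}\mid\fF_{t-\tau_{\beta_t}}\Big].
\end{align}
Assumption~\ref{assu lipschitz mu} bounds each single-step TV by a constant times $\|\theta_j-\theta_{t-\tau_{\beta_t}}\|$ (picking up factors of $\ns$ and/or $\na$ from the sums over $s'$ and $a'$), and \eqref{eq ltheta sac} iterated gives $\|\theta_j-\theta_{t-\tau_{\beta_t}}\|\leq L_\theta\beta_{t-\tau_{\beta_t},j-1}$. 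Summing over $j$ and marginalizing from $(S_t,A_t)$ to $S_t$ (which only decreases total variation) yields the advertised factor $\ns\na L_\mu L_\theta\sum_{j=t-\tau_{\beta_t}}^{t-1}\beta_{t-\tau_{\beta_t},j}$ after re-indexing; taking expectations over $\fF_{t-\tau_{\beta_t}}$ and absorbing $U_{\Lambda_1'}^*$ as the prefactor completes the proof.

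The main technical obstacle is the hybrid construction itself: since the sequence $\qty{\theta_j}_{j>t-\tau_{\beta_t}}$ depends on the realized trajectory of $\qty{Y_j}$, the ``intermediate chains'' are not Markov chains with a fixed kernel, so one cannot naively invoke a TV recursion. I will handle this exactly as in the proof of Lemma~\ref{lem bound of m122}, by conditioning step-by-step on the history and bounding the one-step TV contribution in expectation, using Assumption~\ref{assu lipschitz mu} pointwise and the almost-sure bound \eqref{eq ltheta sac} on $\norm{\theta_j-\theta_{t-\tau_{\beta_t}}}$ to avoid concentration arguments. The remaining routine steps (ensuring measurability of $\Lambda_1'(\theta_{t-\tau_{\beta_t}},\cdot,\lambda_t)$ relative to $\fF_{t-\tau_{\beta_t}}$, and checking that $S_t$ versus $(S_t,A_t)$ marginalization does not enlarge the TV bound) follow as in the OPAC proof.
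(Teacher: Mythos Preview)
Your proposal is correct and follows essentially the same approach as the paper: the paper explicitly states that the proof of Lemma~\ref{lem bound of m122 sac} is identical to that of Lemma~\ref{lem bound of m122} up to change of notations, and your plan---condition on the history at time $t-\tau_{\beta_t}$, reduce to a TV distance between $S_t$ and $\tilde S_t$, bound the TV by the recursive one-step argument of Lemma~\ref{lem chain difference bound} using Assumption~\ref{assu lipschitz mu} and \eqref{eq ltheta sac}, and use a uniform bound $U_{\Lambda_1'}^*$ on $\abs{\Lambda_1'}$---is precisely that. Your observation that $\Lambda_1'$ is a function of $s$ rather than $(s,a)$ and that marginalization can only shrink TV is the correct notational adaptation.
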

\noindent
The proof of Lemma~\ref{lem bound of m122 sac} is identical to the proof of Lemma~\ref{lem bound of m122} 
in Section~\ref{sec proof lem bound of m122}
up to change of notations 
and is thus omitted.

\begin{restatable}{lemma}{lemboundofmonetwothreesac}
  \label{lem bound of m123 sac}
  (Bound of $\tilde M_{123}$)
  \begin{align}
    \norm{\E\left[\tilde M_{123}\right]} \leq U_{\Lambda_1'}^* \beta_t.
  \end{align}
\end{restatable}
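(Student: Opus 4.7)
The plan is to mirror the argument used for $M_{123}$ in the actor-critic proof (Lemma~\ref{lem bound of m123}), taking advantage of the fact that the auxiliary chain $\{\tilde S_t\}$ is driven by a \emph{fixed} behavior policy $\mu_{\theta_{t-\tau_{\beta_t}}}$ after time $t-\tau_{\beta_t}$. Condition on the $\sigma$-algebra $\fF_{t-\tau_{\beta_t}}$ generated by $\{(w_i, Y_i, \epsilon_i, \theta_i)\}_{i \leq t-\tau_{\beta_t} - 1} \cup \{w_{t-\tau_{\beta_t}}, \theta_{t-\tau_{\beta_t}}\}$. Under this conditioning, $\theta_{t-\tau_{\beta_t}}$, $\lambda_t$, and $S_{t-\tau_{\beta_t}}$ are determined, and $\tilde S_t$ is a marginal of $\tilde Y_t$, whose distribution equals $P_{\mu_{\theta_{t-\tau_{\beta_t}}}}^{\tau_{\beta_t}}(Y_{t-\tau_{\beta_t}}, \cdot)$. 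By the definition of $\tau_{\beta_t}$ in \eqref{eq definition of beta t}, this distribution is within total-variation distance $\beta_t$ of the invariant distribution $d_{\mu_{\theta_{t-\tau_{\beta_t}}}}$ (marginalizing over actions preserves the bound).

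The second ingredient is that $\bar\Lambda_1$ is exactly the expectation of $\Lambda_1$ under the invariant distribution, so
\begin{align}
\E\!\left[\Lambda_1'(\theta_{t-\tau_{\beta_t}}, S, \lambda_t) \,\middle|\, \fF_{t-\tau_{\beta_t}}\right] = 0 \quad \text{when } S \sim d_{\mu_{\theta_{t-\tau_{\beta_t}}}}.
\end{align}
Combining these two facts with the standard coupling/total-variation inequality will give
\begin{align}
\norm{\E\!\left[\Lambda_1'(\theta_{t-\tau_{\beta_t}}, \tilde S_t, \lambda_t) \,\middle|\, \fF_{t-\tau_{\beta_t}}\right]} \leq \left(\sup_{\theta, s, \eta \in [0,\lambda]} \norm{\Lambda_1'(\theta, s, \eta)}\right) \beta_t,
\end{align}
after which the tower property yields the unconditional bound $\norm{\E[\tilde M_{123}]} \leq U_{\Lambda_1'}^* \beta_t$ with $U_{\Lambda_1'}^*$ equal to twice that supremum.

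What remains is to verify the uniform boundedness of $\Lambda_1'(\theta, s, \eta)$ on $\eta \in [0, \lambda]$. Since $\Lambda_1' = \indot{\nabla \tilde J_\eta(\theta)}{\Lambda_1 - \bar\Lambda_1}$, by Cauchy--Schwarz it suffices to bound $\norm{\nabla \tilde J_\eta(\theta)}$ and $\norm{\Lambda_1(\theta, s, \eta)}$ uniformly. For $\Lambda_1$, I would rewrite $\pi_\theta(a|s)\nabla\log\pi_\theta(a|s) = \nabla\pi_\theta(a|s)$ and then use the identity $\sum_a \pi_\theta(a|s)\nabla\log\pi_\theta(a|s)\log\pi_\theta(a|s) = -\nabla \ent{\pi_\theta(\cdot|s)}$ (which follows from $\sum_a \nabla\pi_\theta(a|s)=0$) so that the potentially unbounded factor $\log\pi_\theta(a|s)$ never appears in isolation; the boundedness of $\nabla\pi_\theta$, $\nabla \ent{\pi_\theta}$ (Lemma~\ref{lem softmax policy gradient}), and $\tilde q_{\pi_\theta, \eta}$ (\eqref{eq soft q bound}) then delivers $\norm{\Lambda_1} \leq C$. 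For $\nabla \tilde J_\eta(\theta)$, the explicit policy-gradient formula together with the same bounds gives uniform control on $[0,\lambda]$.

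The main obstacle, and the reason this lemma is stated separately from $\tilde M_{121}$ and $\tilde M_{122}$, is exactly the $\pi_\theta \log \pi_\theta \nabla \log \pi_\theta$ term: a naive bound via triangle inequality and $|\log \pi_\theta|$ fails because $\log \pi_\theta$ can blow up as the policy becomes deterministic. The algebraic rewriting above (using that $\sum_a \pi_\theta \nabla \log \pi_\theta = 0$) is what makes the bound hold uniformly in $\theta$, and is why expected (rather than stochastic) actor updates were chosen in Algorithm~\ref{alg sac}, as already noted in the discussion preceding the algorithm.
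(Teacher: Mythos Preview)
Your proposal is correct and follows essentially the same route as the paper: condition on $(\theta_{t-\tau_{\beta_t}}, Y_{t-\tau_{\beta_t}})$, use that the invariant-distribution expectation of $\Lambda_1'$ vanishes, apply the mixing bound from the definition of $\tau_{\beta_t}$, and invoke the uniform bound $U_{\Lambda_1'}^*$ established in Lemma~\ref{lem bound of m121 sac}. One small remark: the constant $U_{\Lambda_1'}^*$ is already fixed (as $\sup_{\eta\in[0,\lambda]} U_{\Lambda_1'}(\eta)$) in the preceding lemmas, and since the definition of $\tau_{\beta_t}$ in \eqref{eq definition of beta t} bounds the $\ell_1$ distance (not the total-variation distance) by $\beta_t$, no extra factor of two is needed---your bound holds with $U_{\Lambda_1'}^*$ itself rather than twice it.
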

\noindent
The proof of Lemma~\ref{lem bound of m123 sac} is identical to the proof of Lemma~\ref{lem bound of m123} in Section~\ref{sec proof lem bound of m123} up to change of notations and is thus omitted.

\begin{restatable}{lemma}{lemboundofmonethreesac}
  \label{lem bound of m13 sac}
  (Bound of $\tilde M_{13}$)
  The exists a constant $\chi_{13} > 0$ such that
  \begin{align}
    \norm{\E\left[\tilde M_{13}\right]}\leq& \chi_{13} \sqrt{\E\left[\norm{q_t - \tilde q_{\pi_{\theta_t, \lambda_t}}}_\infty^2 \right]} \sqrt{\E\left[\norm{\nabla \tilde J_{\lambda_t}(\theta_t)}^2\right]}.
  \end{align}
\end{restatable}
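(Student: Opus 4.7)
The plan is to follow the same template as the proof of Lemma~\ref{lem bound of m13}, with the importance-sampling-based asynchronous update replaced by the expected-update sum over actions. First I would apply Cauchy--Schwarz to factor the inner product:
\begin{align}
|\tilde M_{13}| \leq \norm{\nabla \tilde J_{\lambda_t}(\theta_t)} \cdot \Bigl\| \sum_a \nabla \pi_{\theta_t}(a|S_t)\bigl(\Pi(q_t(S_t,a)) - \tilde q_{\pi_{\theta_t},\lambda_t}(S_t,a)\bigr) \Bigr\|.
\end{align}
The second factor is what requires care. I would bound the norm by $\sum_a \norm{\nabla \pi_{\theta_t}(a|S_t)} \cdot |\Pi(q_t(S_t,a)) - \tilde q_{\pi_{\theta_t},\lambda_t}(S_t,a)|$ and control each piece separately.

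Next I would handle the softmax gradient term using the standard identity $\nabla \pi_\theta(a|s) = \pi_\theta(a|s) \nabla \log \pi_\theta(a|s)$, together with the uniform boundedness of $\nabla \log \pi_\theta(a|s)$ for the softmax parameterization (cf.\ Lemma~\ref{lem softmax policy gradient}). Combined with $\pi_\theta(a|s) \leq 1$, this gives $\norm{\nabla \pi_{\theta_t}(a|S_t)} \leq C_\pi$ for some absolute constant $C_\pi$, so the sum over $a$ contributes at most a factor of $\na C_\pi$.

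For the value-difference term, the key observation is that $\Pi$ is a $1$-Lipschitz projection onto $[-C_\Pi, C_\Pi]$ and $C_\Pi$ was chosen in Algorithm~\ref{alg sac} so that $|\tilde q_{\pi_{\theta_t},\lambda_t}(s,a)| \leq C_\Pi$ (see \eqref{eq soft q bound}). Hence $\Pi(\tilde q_{\pi_{\theta_t},\lambda_t}(s,a)) = \tilde q_{\pi_{\theta_t},\lambda_t}(s,a)$, and
\begin{align}
\bigl|\Pi(q_t(S_t,a)) - \tilde q_{\pi_{\theta_t},\lambda_t}(S_t,a)\bigr| \leq \bigl|q_t(S_t,a) - \tilde q_{\pi_{\theta_t},\lambda_t}(S_t,a)\bigr| \leq \norm{q_t - \tilde q_{\pi_{\theta_t},\lambda_t}}_\infty.
\end{align}
Combining the above yields a pointwise bound $|\tilde M_{13}| \leq \na C_\pi \, \norm{\nabla \tilde J_{\lambda_t}(\theta_t)} \cdot \norm{q_t - \tilde q_{\pi_{\theta_t},\lambda_t}}_\infty$. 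Taking expectations, using $|\E[\tilde M_{13}]| \leq \E[|\tilde M_{13}|]$ and applying the Cauchy--Schwarz inequality in expectation then gives the claimed bound with $\chi_{13} \doteq \na C_\pi$.

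I do not expect any genuine obstacle here; the proof is essentially a clean simplification of the proof of Lemma~\ref{lem bound of m13} because we no longer have an importance sampling ratio $\rho_t$ to control (expected updates remove that term), and because the projection level $C_\Pi$ in Algorithm~\ref{alg sac} has been chosen precisely to dominate the soft action-value function. The only point to double-check is that the particular norm $\norm{\cdot}_p$ appearing in Proposition~\ref{prop critic convergence sac} is equivalent to $\norm{\cdot}_\infty$ (with constants depending only on $\nsa$), which is standard and lets the final bound be stated in either norm without loss.
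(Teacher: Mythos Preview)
Your proposal is correct and follows essentially the same approach the paper intends: the paper explicitly states that the proof of Lemma~\ref{lem bound of m13 sac} is identical to that of Lemma~\ref{lem bound of m13} up to change of notations, and your argument uses precisely the same ingredients (boundedness of softmax gradients from Lemma~\ref{lem softmax policy gradient}, non-expansiveness of $\Pi$ combined with \eqref{eq soft q bound}, and Cauchy--Schwarz in expectation). The only cosmetic difference is that you apply Cauchy--Schwarz once at the vector level before passing to expectations, whereas the paper's template for Lemma~\ref{lem bound of m13} works component-by-component; both yield the same bound up to constants.
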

\noindent
The proof of Lemma~\ref{lem bound of m13 sac} is identical to the proof Lemma~\ref{lem bound of m13} in Section~\ref{sec proof lem bound of m13} 
up to change of notations and is thus omitted.

Now using exactly the same routine as the proof of Theorem~\ref{thm:optimality-AC} in Section~\ref{sec proof thm:optimality-AC},
we obtain that there exists some positive constants $\chi_3, \chi_4$ and $\chi_5$ such that
\begin{align}
  \sum_{k=\ceil{\frac{t}{2}}}^t \E\left[\norm{\nabla \tilde J_{\lambda_k}(\theta_k)}^2\right] \leq& \chi_3 \sum_{k=\ceil{\frac{t}{2}}}^t \frac{1}{\beta_k} \left(\E\left[\tilde J_{\lambda_k}(\theta_{k+1})\right] - \E\left[\tilde J_{\lambda_k}(\theta_k)\right]\right) \\
  &+ \chi_4 \sum_{k=\ceil{\frac{t}{2}}}^t \sqrt{\E\left[\norm{q_k - \tilde q_{\pi_{\theta_k}, \lambda_k}}_p^2 \right]} \sqrt{\E\left[\norm{\nabla \tilde J_{\lambda_k}(\theta_k)}^2\right]} \\
  &+ \chi_5 \sum_{k=\ceil{\frac{t}{2}}}^t \frac{\log^2(k+t_0)}{(k+t_0)^{\epsilon_\beta}},
\end{align}
where the $\ell_p$ norm is defined in Proposition~\ref{prop critic convergence sac}.
To continue mimicing the proof of Theorem~\ref{thm:optimality-AC},
we need to establish counterparts of Lemmas~\ref{lem bound j lambda} and~\ref{lem bound j lambda diff} to bound the first summation in the RHS of the above inequality.
The counterpart of Lemma~\ref{lem bound j lambda} is trivial since 
by the definition of $\tilde J_\eta(\theta)$ we have $\forall t, \theta$
\begin{align}
  \abs{\tilde J_{\lambda_t}(\theta)} \leq U_{\tilde J},
\end{align}
where $U_{\tilde J}$ is defined in \eqref{eq soft q bound}.
This simplification is because that $\ent{\pi(\cdot|s)}$ is always bounded by $\log \na$ but $\kl{\fU_{\fA}}{\pi(\cdot|s)}$ can be unbounded.
Then we have
\begin{restatable}{lemma}{lemboundjlambdadiffsac}
  \label{lem bound j lambda diff sac}
  \begin{align}
  \sum_{k=\ceil{\frac{t}{2}}}^t \frac{1}{\beta_k}\left(\tilde J_{\lambda_k}(\theta_{k+1}) - \tilde J_{\lambda_k}(\theta_k) \right) \leq \frac{3\lambda\beta \log \na}{1-\gamma} + \frac{2U_{\tilde J}}{\beta} (t+t_0)^{\epsilon_\beta}
  \end{align}
\end{restatable}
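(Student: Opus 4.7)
The plan mirrors the argument for Lemma~\ref{lem bound j lambda diff}, with an added step to absorb the variation of $\lambda_k$ inside the objective. Decompose $\tilde J_\eta(\theta) = J(\theta;p_0') + \eta H(\theta)$, where $H(\theta) \doteq \frac{1}{1-\gamma}\sum_s d_{\pi_\theta,\gamma,p_0'}(s)\ent{\pi_\theta(\cdot|s)}$ is uniformly bounded by $U_H \doteq \log\na/(1-\gamma)$ (each per-state entropy is at most $\log\na$), and $|\tilde J_{\lambda_k}(\theta)| \leq U_{\tilde J}$ uniformly (cf.\ \eqref{eq soft q bound}). Use the identity
\begin{align}
\tilde J_{\lambda_k}(\theta_{k+1}) - \tilde J_{\lambda_k}(\theta_k) = \bigl[\tilde J_{\lambda_{k+1}}(\theta_{k+1}) - \tilde J_{\lambda_k}(\theta_k)\bigr] + (\lambda_k - \lambda_{k+1})\,H(\theta_{k+1})
\end{align}
to split the target sum as $S_1 + S_2$, where $S_1$ collects the one-step telescoping piece in $c_k \doteq \tilde J_{\lambda_k}(\theta_k)$ and $S_2$ collects the $\lambda$-variation piece.

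For $S_1 = \sum_{k=\ceil{t/2}}^t \frac{1}{\beta_k}(c_{k+1}-c_k)$, apply Abel summation. Since $\qty{1/\beta_k}$ is nondecreasing, the inner shifted-difference sum $\sum_{k=\ceil{t/2}+1}^t c_k(1/\beta_{k-1} - 1/\beta_k)$ has a constant sign, so after bounding $|c_k| \leq U_{\tilde J}$ the boundary contributions $c_{t+1}/\beta_t - c_{\ceil{t/2}}/\beta_{\ceil{t/2}}$ and the interior telescoping combine to give $S_1 \leq 2U_{\tilde J}/\beta_t = \frac{2U_{\tilde J}}{\beta}(t+t_0)^{\epsilon_\beta}$, which is exactly the second summand of the claim.

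For $S_2$, bound $|H(\theta_{k+1})| \leq U_H$ to reduce to $S_2 \leq U_H \sum_{k=\ceil{t/2}}^t \frac{1}{\beta_k}(\lambda_k-\lambda_{k+1})$. I apply Abel summation a second time, now with weights $1/\beta_k$ nondecreasing and $\lambda_k$ nonincreasing: this collapses the sum to boundary terms of order $\lambda_{\ceil{t/2}}/\beta_t$ plus a sign-fixed telescoping interior, which in turn is controlled using the explicit estimate $|\lambda_k - \lambda_{k+1}| \leq \beta_k\lambda/\beta$ derived in the proof of Proposition~\ref{prop critic convergence sac}. Combining with $|\lambda_{\ceil{t/2}}| \leq \lambda$ gives a $t$-independent constant of order $\lambda\beta U_H$, matching the $\frac{3\lambda\beta\log\na}{1-\gamma}$ term.

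The main technical obstacle is $S_2$: applying the termwise bound $|\lambda_k - \lambda_{k+1}|/\beta_k \leq \lambda/\beta$ naively would give $\sum_{k=\ceil{t/2}}^t \lambda/\beta$, a linear-in-$t$ contribution that would destroy the claim. The key idea is that $\sum_k(\lambda_k - \lambda_{k+1})$ telescopes globally to $\lambda_{\ceil{t/2}} - \lambda_{t+1}$, and pairing this telescoping structure with Abel summation against the monotone weights $1/\beta_k$ is precisely what recovers a $t$-independent bound.
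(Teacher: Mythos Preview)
Your treatment of $S_1$ is correct and matches the paper. The gap is in $S_2$. Carry out your second Abel step explicitly: with $m=\ceil{t/2}$,
\begin{align}
T \doteq \sum_{k=m}^t \frac{1}{\beta_k}(\lambda_k-\lambda_{k+1})
= \frac{\lambda_m}{\beta_m} - \frac{\lambda_{t+1}}{\beta_t} + \sum_{k=m+1}^t \lambda_k\Bigl(\tfrac{1}{\beta_k}-\tfrac{1}{\beta_{k-1}}\Bigr).
\end{align}
The interior sum contains differences of $1/\beta_k$, not of $\lambda_k$, so the estimate $|\lambda_k-\lambda_{k+1}|\le \beta_k\lambda/\beta$ you invoke simply does not apply to it. Bounding $\lambda_k\le\lambda$ in the interior and telescoping gives $T\le \frac{\lambda_m}{\beta_m}+\lambda\bigl(\tfrac{1}{\beta_t}-\tfrac{1}{\beta_m}\bigr)\le \lambda/\beta_t$, and your ``boundary term of order $\lambda_{\ceil{t/2}}/\beta_t$'' is likewise $\le \lambda/\beta_t=\tfrac{\lambda}{\beta}(t+t_0)^{\epsilon_\beta}$. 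This is exactly the same order you would get from the crude bound $T\le\frac{1}{\beta_t}\sum_k(\lambda_k-\lambda_{k+1})\le\lambda_m/\beta_t$; the second Abel summation buys nothing, and replacing $\lambda_{\ceil{t/2}}$ by $\lambda$ cannot produce a $t$-independent constant because the growth is carried by $1/\beta_t$, not by $\lambda_{\ceil{t/2}}$.

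The paper does \emph{not} apply a second Abel step. It bounds the $\lambda$-variation piece termwise using the sharper decay $\lambda_{k-1}-\lambda_k\le \lambda/(k-1+t_0)^{1+\epsilon_\lambda}$ (rather than the weaker $\le \beta_{k-1}\lambda/\beta$ you cite), which after dividing by $\beta_{k-1}$ yields a summand $\asymp (k+t_0)^{-(1+\epsilon_\lambda-\epsilon_\beta)}$; this series is then summed directly. The extra $\epsilon_\lambda$ in the exponent is what the argument actually uses, and it is lost as soon as you pass to $|\lambda_k-\lambda_{k+1}|\le \beta_k\lambda/\beta$. Your last paragraph correctly identifies that the naive termwise bound is too weak, but the remedy is the sharper pointwise estimate on $\lambda_k-\lambda_{k+1}$, not a second summation by parts.
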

\noindent The proof of Lemma~\ref{lem bound j lambda diff sac} is provided in Section~\ref{sec proof lem bound j lambda diff sac}.
Using the same routine as the proof of Theorem~\ref{thm:optimality-AC} yields
\begin{align}
  \label{eq actor convergence stationary points sac}
  \frac{\sum_{k=\ceil{\frac{t}{2}}}^t \E\left[\norm{ \nabla \tilde J_{\lambda_k}(\theta_k)}^2\right] }{t-\ceil{\frac{t}{2}}+1} = \underbrace{\fO\left(\frac{1}{t^{1-\epsilon_\beta}} + \frac{\log^2t}{t^{\epsilon_\beta}} + \frac{1}{t^{\epsilon_q}}\right)}_{\tilde C_t}.
\end{align}
We now analyze the above equality from a probabilistic perspective.
Consider a positive non-increasing sequence $\qty{\delta_t}$ to be tuned.
Fix any $t > 0$.
Then select a $k$ uniformly randomly from $\qty{\ceil{\frac{t}{2}}, \ceil{\frac{t}{2}} + 1, \dots, t-1, t}$.
Now the random variable $\norm{\nabla \tilde J_{\lambda_k}(\theta_k)}$ has randomness from both the random selection of $k$ and the learning of $\theta_k$.
Using Markov's inequality
yields
\begin{align}
  \Pr(\norm{\nabla \tilde J_{\lambda_k}(\theta_k)}^2 \leq \delta_t)
  \geq& 1 - \frac{1}{\delta_t} \E\left[\norm{\nabla \tilde J_{\lambda_k}(\theta_k)}^2\right] \\
  =& 1 - \frac{1}{\delta_t} \E\left[ \E\left[\norm{\nabla \tilde J_{\lambda_k}(\theta_k)}^2 \mid k \right] \right]\\
  =& 1 - \frac{1}{\delta_t} \frac{\sum_{i=\ceil{\frac{t}{2}}}^t \E\left[\norm{ \nabla \tilde J_{\lambda_k}(\theta_k)}^2 \mid k=i \right] }{t-\ceil{\frac{t}{2}}+1} \\
  \geq & 1 - \frac{1}{\delta_t} \tilde C_t. \qq{(Using \eqref{eq actor convergence stationary points sac})}
\end{align}
Since $\delta_k \geq \delta_t$,
we have
\begin{align}
  \norm{\nabla \tilde J_{\lambda_k}(\theta_k)}^2 \leq \delta_t \implies \norm{\nabla \tilde J_{\lambda_k}(\theta_k)}^2 \leq \delta_k.
\end{align}
Consequently,
\begin{align}
  \Pr(\norm{\nabla \tilde J_{\lambda_k}(\theta_k)}^2 \leq \delta_k) \geq \Pr(\norm{\nabla \tilde J_{\lambda_k}(\theta_k)}^2 \leq \delta_t) \geq 1 - \frac{1}{\delta_t} \tilde C_t.
\end{align}
Letting  
\begin{align}
  \delta_t \doteq \frac{1}{t^{\epsilon_0}}
\end{align}
then completes the proof.
\end{proof}

\subsection{Proof of Corollary~\ref{cor optimality of actor sac}}
\label{sec proof cor optimality of actor sac}
\coroptimalityofactorsac*
\begin{proof}
  Fix any state distribution $p_0'$ satisfying $\forall s, p_0'(s) > 0$. 
  Then, from the proof of Theorem~\ref{thm convergence actor sac} in Section~\ref{sec proof thm convergence actor sac}, we conclude that
  \begin{align}
    \label{eq tmp 16}
    \norm{\nabla \tilde J_{\lambda_k}(\pi_{\theta_k}; p_0')}^2 \leq \delta_k
  \end{align}
  holds with probability at least
  \begin{align}
    1 - \frac{\tilde C_t}{\delta_t}.
  \end{align}
  With the convergence to stationary points established in \eqref{eq tmp 16}, 
  we now use the following lemma from \citet{mei2020global} to study the optimality. 
  Let $\pi_{*, \eta}$ be the optimal policy w.r.t. the soft value function, i.e., $\forall \pi, s$,
  \begin{align}
    \tilde v_{\pi, \eta}(s) \leq \tilde v_{\pi_{*, \eta}, \eta}(s),
  \end{align}
  then we have
  \begin{lemma}
    \label{lem mei}
    (Lemma 15 of \citet{mei2020global})
    For any state distribution $d$ and $d'$,
    \begin{align}
      \tilde J_{\eta}(\pi_\theta; d) \geq \tilde J_{\eta}(\pi_{*, \eta}; d) - \frac{\ns \norm{\nabla \tilde J_\eta(\pi_\theta; d')}^2}{2 \eta \min_s d'(s) \left(\min_{s,a} \pi_\theta(a|s)\right)^2} \max_s \frac{d_{\pi_{*,\eta}, \gamma, d}(s)}{d_{\pi_\theta, \gamma, d'}(s)}.
    \end{align} 
  \end{lemma}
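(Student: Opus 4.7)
The plan is to establish this non-uniform Lojasiewicz inequality for the entropy-regularized objective in three stages: a performance-difference decomposition localized to states, a per-state bound that exploits the softmax gradient structure, and an assembly step that introduces the distribution-ratio factor.

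First, I would apply the soft performance-difference lemma to $\pi_{*,\eta}$ and the current $\pi_\theta$ under initial distribution $d$, which yields
\begin{align}
\tilde J_\eta(\pi_{*,\eta}; d) - \tilde J_\eta(\pi_\theta; d) = \frac{1}{1-\gamma}\sum_s d_{\pi_{*,\eta},\gamma,d}(s)\,\Delta_s
\end{align}
for a per-state gap $\Delta_s$. Using the Boltzmann form $\pi_{*,\eta}(a|s)\propto \exp(\tilde q_{\pi_{*,\eta},\eta}(s,a)/\eta)$ together with the soft Bellman equations, $\Delta_s$ can be rewritten so that its leading contribution is a KL-type discrepancy between $\pi_{*,\eta}(\cdot|s)$ and $\pi_\theta(\cdot|s)$ scaled by $\eta$.

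Second, I would prove a per-state non-uniform Lojasiewicz bound of the shape
\begin{align}
\Delta_s \leq \frac{1}{2\eta\,d_{\pi_\theta,\gamma,d'}(s)\,(\min_{s,a}\pi_\theta(a|s))^2}\,\norm{\nabla_{\theta_{s,\cdot}}\tilde J_\eta(\pi_\theta; d')}^2.
\end{align}
The softmax gradient satisfies
\begin{align}
\partial_{\theta_{s,a}} \tilde J_\eta(\pi_\theta; d') = \frac{d_{\pi_\theta,\gamma,d'}(s)}{1-\gamma}\,\pi_\theta(a|s)\bigl(\tilde q_{\pi_\theta,\eta}(s,a) - \eta\log\pi_\theta(a|s) - b(s)\bigr)
\end{align}
for an appropriate baseline $b(s)$, and entropy regularization gives $\eta$-strong concavity in $\pi(\cdot|s)$ relative to KL on the simplex; combining these two facts via Cauchy--Schwarz and the floor $\pi_\theta(a|s)\geq \min_{s,a}\pi_\theta(a|s)$ delivers the displayed per-state bound.

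Finally, I would sum the per-state bounds weighted by $d_{\pi_{*,\eta},\gamma,d}(s)/(1-\gamma)$, factor out $\max_s d_{\pi_{*,\eta},\gamma,d}(s)/d_{\pi_\theta,\gamma,d'}(s)$, use $d_{\pi_\theta,\gamma,d'}(s)\geq (1-\gamma)d'(s)\geq (1-\gamma)\min_s d'(s)$ to replace $d_{\pi_\theta,\gamma,d'}$ in the denominator by $\min_s d'(s)$ (absorbing $(1-\gamma)$ into the prefactor), and assemble $\sum_s \norm{\nabla_{\theta_{s,\cdot}}\tilde J_\eta(\pi_\theta; d')}^2 = \norm{\nabla\tilde J_\eta(\pi_\theta; d')}^2$; the $\ns$ multiplier enters from upper-bounding the state sum by $\ns$ times its maximum summand after the per-state Lojasiewicz bound. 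The main obstacle is the second step: producing a genuinely quadratic (not merely linear) lower bound on the softmax gradient norm in terms of $\Delta_s$ with the sharp $(\min_{s,a}\pi_\theta(a|s))^2$ dependence is the delicate softmax-specific calculation, and it is exactly this step that forces the $\ns$ factor and the ratio of discounted occupancies to appear in the final inequality.
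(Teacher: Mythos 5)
First, a point of comparison: the paper never proves this statement — it is imported verbatim as Lemma~15 of \citet{mei2020global} and used as a black box in the proof of Corollary~\ref{cor optimality of actor sac}, so the only ``paper proof'' is the citation. Your reconstruction does follow the skeleton of the actual argument in \citet{mei2020global} (soft performance difference, a per-state {\L}ojasiewicz step exploiting the softmax gradient, then occupancy-measure bookkeeping), but two of your three steps contain genuine errors.

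The main gap is that your per-state bound in the second step is false as stated. By the soft policy gradient formula (the paper's Lemma~\ref{lem softmax policy gradient}, \eqref{eq softmax pg 7}),
\begin{align}
  \norm{\nabla_{\theta_{s,\cdot}}\tilde J_\eta(\pi_\theta;d')}^2 = \frac{\left(d_{\pi_\theta,\gamma,d'}(s)\right)^2}{(1-\gamma)^2}\sum_a \pi_\theta(a|s)^2\,\tilde \adv_{\pi_\theta,\eta}(s,a)^2,
\end{align}
so after dividing by the single power of $d_{\pi_\theta,\gamma,d'}(s)$ in your denominator, the right-hand side of your claimed inequality still scales linearly with $d_{\pi_\theta,\gamma,d'}(s)$, whereas $\Delta_s$ does not depend on $d'$ at all. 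Choosing $d'$ with arbitrarily small (positive) mass on the states leading to $s$ drives your right-hand side toward $0$ while $\Delta_s$ stays fixed, so the per-state inequality cannot hold for all $d'$. The same misallocation then breaks your assembly: you spend that single factor of $d_{\pi_\theta,\gamma,d'}(s)$ twice, once to form $\max_s d_{\pi_{*,\eta},\gamma,d}(s)/d_{\pi_\theta,\gamma,d'}(s)$ and once to extract $1/\min_s d'(s)$. Both factors do appear in the target bound, but that requires \emph{two} powers of $d_{\pi_\theta,\gamma,d'}(s)$ in the denominator — exactly the two powers the squared gradient supplies. The correct bookkeeping is to prove a per-state bound containing no occupancy measures, of the form $\Delta_s \leq \frac{C}{\eta\left(\min_{s,a}\pi_\theta(a|s)\right)^2}\sum_a\pi_\theta(a|s)^2\tilde \adv_{\pi_\theta,\eta}(s,a)^2$, and then write $d_{\pi_{*,\eta},\gamma,d}(s)\,\Delta_s = \frac{d_{\pi_{*,\eta},\gamma,d}(s)}{d_{\pi_\theta,\gamma,d'}(s)}\cdot\frac{1}{d_{\pi_\theta,\gamma,d'}(s)}\cdot\left(d_{\pi_\theta,\gamma,d'}(s)\right)^2\Delta_s$, bounding the first factor by the max ratio, the second via $d_{\pi_\theta,\gamma,d'}(s)\geq(1-\gamma)\min_s d'(s)$, and absorbing the third into the gradient norm.

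A second, subtler issue is your first step. Under the $d_{\pi_{*,\eta},\gamma,d}$-weighted performance difference, the per-state gap is \emph{not} a KL divergence between $\pi_{*,\eta}(\cdot|s)$ and $\pi_\theta(\cdot|s)$; the correct identity is
\begin{align}
  \Delta_s = \eta\left(\kl{\pi_\theta(\cdot|s)}{\bar\pi_\theta(\cdot|s)} - \kl{\pi_{*,\eta}(\cdot|s)}{\bar\pi_\theta(\cdot|s)}\right),
  \qquad \bar\pi_\theta(a|s)\propto\exp\left(\tilde q_{\pi_\theta,\eta}(s,a)/\eta\right),
\end{align}
after which one drops the negative term. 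The distinction is not cosmetic: the gradient of $\tilde J_\eta(\pi_\theta;d')$ involves only $\pi_\theta$'s own soft advantage, so it can control the discrepancy between $\pi_\theta$ and its soft-greedy improvement $\bar\pi_\theta$, but it cannot directly control a discrepancy to the unknown $\pi_{*,\eta}$. (The clean identity $\tilde J_\eta(\pi_{*,\eta};d)-\tilde J_\eta(\pi_\theta;d)=\frac{\eta}{1-\gamma}\sum_s d_{\pi_\theta,\gamma,d}(s)\,\kl{\pi_\theta(\cdot|s)}{\pi_{*,\eta}(\cdot|s)}$ that you seem to have in mind does hold, but it carries the occupancy of $\pi_\theta$, which would produce the wrong ratio in the final statement.) With these two repairs — the soft-greedy rewriting in step one and the corrected distribution of occupancy powers — your outline becomes essentially the proof of \citet{mei2020global}.
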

  Obviously, for Lemma~\ref{lem mei} to be nontrivial, 
  we have to ensure $\forall s, d'(s) > 0$.

  Letting $d=p_0, d'=p_0'$ in Lemma~\ref{lem mei} and using \eqref{eq tmp 16} yield that
  \begin{align}
    \tilde J_{\lambda_k}(\pi_{\theta_k}; p_0) \geq& \tilde J_{\lambda_k}(\pi_{*, \lambda_k}; p_0) - \frac{\ns \norm{\nabla \tilde J_{\lambda_k}(\pi_{\theta_k}; p_0')}^2}{2\lambda_k \min_s p_0'(s) \left(\min_{s, a} \pi_{\theta_k}(a|s)\right)^2} \max_s \frac{d_{\pi_{*, \lambda_k}, \gamma, p_0}(s)}{d_{\pi_{\theta_k}, \gamma, p_0'}(s)} \\
    \label{eq tmp 15}
    \geq& \tilde J_{\lambda_k}(\pi_{*, \lambda_k}; p_0) - \frac{\ns \delta_k}{2\lambda_k \min_s p_0'(s)\left( \min_{s, a} \pi_{\theta_k}(a|s)\right)^2} \frac{1}{\frac{\min_s p_0'(s)}{1-\gamma}}
  \end{align}
  holds with probability at least
  \begin{align}
    1 - \frac{\tilde C_t}{\delta_t}.
  \end{align}
  According to Proposition 2 of \citet{dai2017sbeed},
  we have 
  \begin{align}
    \max_s \abs{\tilde v_{\pi_{*, \eta}, \eta}(s) - v_{\pi_*}(s)} \leq \frac{\eta \log \na}{1 - \gamma},
  \end{align}
  implying
  \begin{align}
    \abs{\tilde J_\eta(\pi_{*, \eta};p_0) - J(\pi_*; p_0)} \leq& \frac{\eta \log \na}{1 - \gamma}, 
  \end{align}
  i.e.,
  \begin{align}
    \label{eq tmp 13}
    \tilde J_\eta(\pi_{*, \eta}; p_0) \geq J(\pi_*; p_0) - \frac{\eta \log \na}{1 - \gamma}.
  \end{align}
  From \eqref{eq relationship J tilde J},
  it is easy to see
  \begin{align}
    \label{eq tmp 14}
    \tilde J_\eta(\pi; p_0) \leq J(\pi; p_0) + \frac{\eta \log \na}{1 - \gamma}.
  \end{align}
  Putting \eqref{eq tmp 13} and \eqref{eq tmp 14} back to \eqref{eq tmp 15} yields
  \begin{align}
    J(\pi_{\theta_k};p_0) \geq J(\pi_*; p_0) - \frac{2\lambda_k \log\na}{1 - \gamma} - \frac{(1 - \gamma)\ns \delta_k}{2\lambda_k \left(\min_s p_0'(s) \min_{s, a} \pi_{\theta_k}(a|s)\right)^2},
  \end{align}
  which completes the proof.
\end{proof}

\section{Technical Lemmas}
\begin{lemma}
  \label{lem product of lipschitz functions}
  Let $f_1(x), f_2(x)$ be two Lipschitz continuous functions with Lipschitz constants $L_1, L_2$.
  Assume $\norm{f_1(x)} \leq U_1, \norm{f_2(x)} \leq U_2$,
  then
  $L_1U_2 + L_2U_1$ is a Lipschitz constant of $f(x) \doteq f_1(x)f_2(x)$. 
\end{lemma}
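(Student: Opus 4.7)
The plan is to prove this via the standard add-and-subtract trick followed by the triangle inequality. First I would fix two arbitrary points $x, y$ in the domain and consider the difference $f(x) - f(y) = f_1(x)f_2(x) - f_1(y)f_2(y)$. The key manipulation is to insert the mixed term $\pm f_1(x)f_2(y)$ so as to decouple the variations in $f_1$ and $f_2$, rewriting the difference as
\begin{align}
  f_1(x)f_2(x) - f_1(y)f_2(y) = f_1(x)\bigl(f_2(x) - f_2(y)\bigr) + \bigl(f_1(x) - f_1(y)\bigr)f_2(y).
\end{align}

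Next I would apply the triangle inequality followed by submultiplicativity of the norm, yielding
\begin{align}
  \norm{f(x) - f(y)} \leq \norm{f_1(x)}\,\norm{f_2(x) - f_2(y)} + \norm{f_1(x) - f_1(y)}\,\norm{f_2(y)}.
\end{align}
The uniform bounds $\norm{f_1(x)} \leq U_1$ and $\norm{f_2(y)} \leq U_2$ together with the Lipschitz continuity of $f_1$ and $f_2$ (with constants $L_1$ and $L_2$, respectively) then immediately give
\begin{align}
  \norm{f(x) - f(y)} \leq U_1 L_2 \norm{x - y} + L_1 U_2 \norm{x - y} = (L_1 U_2 + L_2 U_1)\norm{x - y},
\end{align}
which is the desired conclusion.

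There is no genuine obstacle here; the argument is textbook and relies only on the submultiplicativity of the norm used to express the Lipschitz property (which is consistent with how this lemma is invoked in the paper, e.g., for products of matrix-valued maps such as $D_{\mu_\theta} P_{\pi_\theta}$ in the proof of Proposition~\ref{prop critic convergence}, where operator norms satisfy $\norm{AB} \leq \norm{A}\norm{B}$). The only subtlety worth flagging is that the statement is formulated abstractly enough to cover both scalar products and matrix/vector products encountered elsewhere in the paper, so the proof should simply rely on the generic norm identity $\norm{AB} \leq \norm{A}\norm{B}$ without specializing to one setting.
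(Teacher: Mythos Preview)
Your proposal is correct and follows exactly the same approach as the paper: the paper inserts $\pm f_1(x)f_2(y)$, applies the triangle inequality and submultiplicativity to obtain $\norm{f_1(x)}\norm{f_2(x)-f_2(y)} + \norm{f_2(y)}\norm{f_1(x)-f_1(y)}$, and then uses the bounds and Lipschitz constants to conclude. Your added remark about submultiplicativity covering the matrix-valued applications is accurate but not made explicit in the paper's two-line proof.
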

\begin{proof}
  \begin{align}
    &\norm{f_1(x)f_2(x) - f_1(y)f_2(y)} \\
    \leq& \norm{f_1(x)} \norm{f_2(x) - f_2(y)} + \norm{f_2(y)}\norm{f_1(x) - f_1(y)} \\
    \leq& (U_1 L_2 + U_2L_1) \norm{x-y}.
  \end{align}
\end{proof}

\begin{lemma}
  \label{lem smooth definition}
  The following statements about a differentiable function $f(x)$ are equivalent:
  \begin{enumerate} [(i).]
    \item $f(x)$ is $L$-smooth w.r.t. a norm $\norm{\cdot}_s$.
    \item $\norm{\nabla f(x) - \nabla f(y)}_s^* \leq L \norm{x - y}_s$.
    \item $\abs{f(y) - f(x) - \indot{\nabla f(x)}{y - x}} \leq \frac{L}{2}\norm{x - y}^2_s$.
  \end{enumerate}
\end{lemma}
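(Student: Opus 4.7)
The plan is to treat this as a standard equivalence in convex/nonconvex analysis, with (i) $\Leftrightarrow$ (ii) handled as the definition of $L$-smoothness with respect to a general norm (namely, that the gradient is $L$-Lipschitz when the gradient is measured in the dual norm $\|\cdot\|_s^*$). The substance of the proof is in the two directions (ii) $\Rightarrow$ (iii) and (iii) $\Rightarrow$ (ii).

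For (ii) $\Rightarrow$ (iii), I would invoke the fundamental theorem of calculus along the segment from $x$ to $y$:
\begin{align}
f(y) - f(x) - \indot{\nabla f(x)}{y-x} = \int_0^1 \indot{\nabla f(x + t(y-x)) - \nabla f(x)}{y-x}\, dt.
\end{align}
Bounding the integrand by the generalized Cauchy--Schwarz inequality $|\indot{u}{v}| \leq \|u\|_s^* \|v\|_s$, applying (ii) to get $\|\nabla f(x + t(y-x)) - \nabla f(x)\|_s^* \leq L t \|y-x\|_s$, and integrating $\int_0^1 L t\, dt = \tfrac{L}{2}$ gives the desired two-sided bound $|f(y) - f(x) - \indot{\nabla f(x)}{y-x}| \leq \tfrac{L}{2}\|y-x\|_s^2$. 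This is the classical descent lemma, and is the easier direction.

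For (iii) $\Rightarrow$ (ii), which I expect to be the main obstacle, the plan is to first derive the weaker monotonicity estimate by applying (iii) twice, at $(x,y)$ and at $(y,x)$, and adding appropriately. Specifically, one obtains
\begin{align}
|\indot{\nabla f(y) - \nabla f(x)}{y-x}| \leq L\|y-x\|_s^2,
\end{align}
which is only a directional bound along $y-x$. To upgrade to the full dual-norm bound, I would fix $x, y$, introduce an auxiliary direction $v$ with $\|v\|_s = 1$ that approximately attains the supremum in $\|\nabla f(y) - \nabla f(x)\|_s^*$, and then apply (iii) at the two pairs $(y, y+tv)$ and $(x, x+tv)$ and at $(x, y+tv)$ with $t$ to be optimized. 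Combining these inequalities yields, after canceling common $O(t^2)$ terms and optimizing over $t$, a bound of the form $\indot{\nabla f(y) - \nabla f(x)}{v} \leq L\|y-x\|_s$, which by choice of $v$ gives (ii).

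The main obstacle is the second direction: the two-sided inequality (iii) constrains $f$ only through inner products with specific vectors, whereas (ii) is a dual-norm statement that effectively ranges over all directions. The $v$-and-$t$ optimization step is delicate and is where one must take care that the constants recover exactly the factor $L$ (or, at worst, yield Lipschitzness up to a universal multiplicative constant, which is all that the applications of the lemma in the paper actually require).
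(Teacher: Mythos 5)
Your handling of (i) $\Leftrightarrow$ (ii) (definitional) and of (ii) $\Rightarrow$ (iii) (fundamental theorem of calculus, generalized Cauchy--Schwarz, then integrating $\int_0^1 Lt\,dt = \tfrac{L}{2}$) is correct and is the standard argument. For comparison, the paper does not prove this lemma at all: it cites Definition 5.1 and Lemma 5.7 of \citet{beck2017first}, which cover exactly the definitional equivalence and the descent-lemma direction you prove, and nothing more.

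The gap is in (iii) $\Rightarrow$ (ii), and it is not merely that your $v$-and-$t$ optimization is delicate: the implication with the \emph{same} constant $L$ is false for general norms, so no refinement of your scheme can close it. Take $f(x) = \tfrac{1}{2}(x_1^2 - x_2^2)$ on $\R^2$ with $\norm{\cdot}_s = \norm{\cdot}_\infty$, whose dual norm is $\norm{\cdot}_1$. Since $f$ is quadratic,
\begin{align}
  \abs{f(y) - f(x) - \indot{\nabla f(x)}{y - x}} = \tfrac{1}{2}\abs{(y_1 - x_1)^2 - (y_2 - x_2)^2} \leq \tfrac{1}{2} \norm{y - x}_\infty^2,
\end{align}
so (iii) holds with $L = 1$; yet $\nabla f(x) = (x_1, -x_2)^\top$, so taking $x - y = (1, 1)^\top$ gives $\norm{\nabla f(x) - \nabla f(y)}_1 = 2 = 2 \norm{x - y}_\infty$, and (ii) fails for $L = 1$. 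Carrying out your three-point scheme explicitly (apply (iii) at $(x,y)$, $(y, y + tv)$, and $(x, y + tv)$, bound $\norm{y - x + tv}_s \leq \norm{y - x}_s + t$, and set $t = \norm{y - x}_s$) yields $\indot{\nabla f(y) - \nabla f(x)}{v} \leq 3L \norm{y - x}_s$, i.e., constant $3L$; a polarization argument at the Hessian level (for $C^2$ functions) improves this to $2L$, which the example above shows is tight. So your fallback position --- Lipschitzness up to a universal multiplicative constant --- is the most that can be proved, and, as you suspected, it is all the paper needs: every invocation of this lemma in the paper is either (i) $\Leftrightarrow$ (ii) or (ii) $\Rightarrow$ (iii); the direction (iii) $\Rightarrow$ (ii) is never used. (The same-constant equivalence does hold for \emph{convex} $f$ --- that is Theorem 5.8 of \citet{beck2017first}, and the Moreau-envelope Lyapunov function $M$ to which the paper applies the lemma is convex --- but neither your argument nor the lemma as stated assumes convexity.)
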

\begin{proof}
  See e.g. Definition 5.1 and Lemma 5.7 of \citet{beck2017first}.
\end{proof}

\begin{lemma}
  \label{lem gradient of M}
  For any $x, x'$,
  \begin{align}
      \indot{\nabla M(x)}{x'} &\leq \norm{x}_m \norm{x'}_m, \\
      \indot{\nabla M(x)}{x} &\geq \norm{x}_m^2.
  \end{align}
\end{lemma}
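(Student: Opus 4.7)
The plan is to exploit Lemma~\ref{lem property of M}(ii), which gives $M(x) = \tfrac{1}{2}\norm{x}_m^2$, so that both inequalities reduce to standard convex-analytic facts about the squared norm. The function $M$ is positively homogeneous of degree $2$, i.e. $M(tx) = t^2 M(x)$ for all $t \geq 0$, and is differentiable by Lemma~\ref{lem property of M}(i). Differentiating the homogeneity identity in $t$ at $t = 1$ gives Euler's relation $\indot{\nabla M(x)}{x} = 2 M(x) = \norm{x}_m^2$, which is exactly the second inequality (in fact with equality).

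For the first inequality, I would show that $\norm{\nabla M(x)}_m^* = \norm{x}_m$, where $\norm{\cdot}_m^*$ is the dual of $\norm{\cdot}_m$. One clean route is Fenchel duality: the convex conjugate of $\tfrac{1}{2}\norm{\cdot}_m^2$ is $\tfrac{1}{2}(\norm{\cdot}_m^*)^2$ (a textbook fact, e.g. Theorem 4.14 of \citet{beck2017first}), and for a differentiable convex function the Fenchel--Young inequality holds with equality at $\nabla M(x)$, so
\begin{align}
  \indot{\nabla M(x)}{x} \;=\; M(x) + M^*(\nabla M(x)) \;=\; \tfrac{1}{2}\norm{x}_m^2 + \tfrac{1}{2}\bigl(\norm{\nabla M(x)}_m^*\bigr)^2.
\end{align}
Combining with Step 1 gives $\bigl(\norm{\nabla M(x)}_m^*\bigr)^2 = \norm{x}_m^2$, hence $\norm{\nabla M(x)}_m^* = \norm{x}_m$. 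The first inequality then follows from the generalized Cauchy--Schwarz inequality for dual norms:
\begin{align}
  \indot{\nabla M(x)}{x'} \;\leq\; \norm{\nabla M(x)}_m^*\, \norm{x'}_m \;=\; \norm{x}_m \norm{x'}_m.
\end{align}

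The main obstacle, if any, is to rigorously justify that $\norm{\cdot}_m$ is a genuine norm (so that its dual norm and Fenchel conjugate are well-defined in the usual way); this is already contained in Lemma~\ref{lem property of M}(ii), so the whole argument should be essentially two lines of convex analysis once that lemma is invoked. If one wants to avoid invoking Fenchel conjugation, an alternative is to observe directly from $\norm{x}_m = \sup_{\norm{y}_m^* \leq 1}\indot{x}{y}$ and the envelope theorem that the supremum is attained at $y = \nabla M(x)/\norm{x}_m$, yielding the same dual-norm identity; either route is routine.
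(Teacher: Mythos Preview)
Your argument is correct, but it proceeds differently from the paper. The paper writes $\nabla M(x) = \norm{x}_m v_x$ with $v_x \in \partial \norm{\cdot}_m(x)$ (via Theorem~3.47 of \citet{beck2017first}), and then uses two elementary facts about subgradients of norms: (a) $\norm{v_x}_m^* \leq 1$, which together with H\"older's inequality yields the first bound; and (b) convexity of $\norm{\cdot}_m$ at the origin, which gives $\indot{v_x}{x} \geq \norm{x}_m$ and hence the second bound. By contrast, you get the second bound with \emph{equality} directly from Euler's relation for the $2$-homogeneous function $M$, and then feed that equality into the Fenchel--Young identity to obtain the exact dual-norm relation $\norm{\nabla M(x)}_m^* = \norm{x}_m$, from which the first bound follows by H\"older. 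Your route buys a sharper intermediate statement (equality rather than inequality, and the precise dual norm of the gradient) at the cost of invoking the conjugate of the squared norm; the paper's route stays entirely at the level of subgradient calculus for $\norm{\cdot}_m$ and is closer to the presentation in \citet{chen2020finite}, which it cites.
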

\begin{proof}
  The proof is taken from 
  Section A.2 of \citet{chen2020finite} and we include it here for completeness.
  Since $M(x) = \frac{1}{2} \norm{x}_m^2$,
  by Theorem 3.47 of \citet{beck2017first},
  \begin{align}
      \nabla M(x) = \norm{x}_m v_{x},
  \end{align}
  where $v_{x}$ is a subgradient of $\norm{x}_m$ at $x$.
  Consequently,
  \begin{align}
      \indot{\nabla M(x)}{x'} &= \norm{x}_m \indot{v_x}{x'} \\
      &\leq \norm{x}_m \norm{v_x}_m^* \norm{x'}_m \\
      &\leq \norm{x}_m \norm{x'}_m,
  \end{align}
  where the first inequality results from Holder's inequality and the last inequality results from the fact that $\norm{v_x}_m^* \leq 1$ (Lemma A.1 of \citet{chen2020finite}).

  Further, notice that $\norm{x}_m$ is convex, we thus have
  \begin{align}
      \norm{x}_m \leq \norm{0}_m + \indot{v_{x}}{x-0},
  \end{align}
  implying
  \begin{align}
      \indot{\nabla M(x)}{x} &= \norm{x}_m \indot{v_x}{x} \geq \norm{x}_m^2.
  \end{align}
\end{proof}

\begin{lemma}
  \label{lem bound of xk diff}
  Given positive integers $t_1 < t_2$ satisfying
  \begin{align}
      \alpha_{t_1, t_2 - 1} \leq \frac{1}{4A},
  \end{align}
  we have, for any $t \in [t_1, t_2]$,
  \begin{align}
      \label{eq bound of xk diff1}
      \norm{w_t - w_{t_1}}_c &\leq 2 \alpha_{t_1, t_2 - 1}(A\norm{w_{t_1}}_c + B), \\
      \label{eq bound of xk diff2}
      \norm{w_t - w_{t_1}}_c &\leq 4 \alpha_{t_1, t_2 - 1}(A \norm{w_{t_2}}_c + B), \\
      \label{eq bound of xk diff3}
      \norm{w_t - w_{t_1}}_c &\leq \min \qty{\norm{w_{t_1}}_c, \norm{w_{t_2}}_c } + \frac{B}{A}.
  \end{align}
\end{lemma}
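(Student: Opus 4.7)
The plan is to derive the three inequalities in sequence by first establishing a one-step bound on $\norm{w_{t+1}-w_t}_c$, then applying a discrete Gronwall-type argument, and finally juggling the endpoints to get each of the three stated forms.

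First I would write down the one-step increment. From the update rule~\eqref{eq sa iterates},
\begin{align}
  \norm{w_{t+1} - w_t}_c \leq \alpha_t \bigl( \norm{F_{\theta_t}(w_t, Y_t)}_c + \norm{w_t}_c + \norm{\epsilon_t}_c \bigr).
\end{align}
Applying Assumption~\ref{assu regularization}(i) and (iii) yields $\norm{F_{\theta_t}(w_t, Y_t)}_c \leq L_F \norm{w_t}_c + U_F$, and Assumption~\ref{assu mds}(ii) gives $\norm{\epsilon_t}_c \leq U_\epsilon \norm{w_t}_c + U_\epsilon'$. Plugging in and recalling the shorthand $A = U_\epsilon + L_F + 1$, $B = U_F + U_\epsilon'$ from~\eqref{eq shorthand a and b}, I get the clean bound
\begin{align}
  \norm{w_{t+1} - w_t}_c \leq \alpha_t \bigl( A \norm{w_t}_c + B \bigr).
\end{align}

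For~\eqref{eq bound of xk diff1}, I telescope from $t_1$ to $t-1$ and split $\norm{w_i}_c \leq \norm{w_i - w_{t_1}}_c + \norm{w_{t_1}}_c$, which gives
\begin{align}
  \norm{w_t - w_{t_1}}_c \leq A \alpha_{t_1, t_2-1}\!\!\max_{t_1 \leq i \leq t_2}\! \norm{w_i - w_{t_1}}_c + \alpha_{t_1, t_2-1}(A \norm{w_{t_1}}_c + B).
\end{align}
Taking the supremum over $t \in [t_1, t_2]$ on the left and using the hypothesis $A\alpha_{t_1,t_2-1} \leq 1/4$, I absorb the max term into the left and obtain~\eqref{eq bound of xk diff1} (the prefactor $4/3$ gets rounded up to $2$). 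For~\eqref{eq bound of xk diff2}, I use~\eqref{eq bound of xk diff1} specialized to $t=t_2$ together with the triangle inequality to express $\norm{w_{t_1}}_c$ in terms of $\norm{w_{t_2}}_c$: this gives $\norm{w_{t_1}}_c \leq 2\norm{w_{t_2}}_c + 4B\alpha_{t_1,t_2-1}$ after using the smallness hypothesis again. Substituting into~\eqref{eq bound of xk diff1} and simplifying (using $4A\alpha_{t_1,t_2-1} \leq 1$) produces~\eqref{eq bound of xk diff2}. Finally, for~\eqref{eq bound of xk diff3}, I plug $\alpha_{t_1,t_2-1} \leq 1/(4A)$ directly into~\eqref{eq bound of xk diff1} and~\eqref{eq bound of xk diff2}, yielding $\norm{w_t-w_{t_1}}_c \leq \tfrac{1}{2}\norm{w_{t_1}}_c + \tfrac{B}{2A} \leq \norm{w_{t_1}}_c + \tfrac{B}{A}$ and likewise $\norm{w_t-w_{t_1}}_c \leq \norm{w_{t_2}}_c + \tfrac{B}{A}$; taking the minimum completes the proof.

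There is no real obstacle here, only careful bookkeeping of constants. The only mildly delicate point is the Gronwall absorption for~\eqref{eq bound of xk diff1}: one has to bound the sum by the maximum over the same range and then rely on $\alpha_{t_1,t_2-1} \leq 1/(4A)$ to turn a $(1-A\alpha_{t_1,t_2-1})$ factor into something bounded below by $3/4$. Once this is done, \eqref{eq bound of xk diff2} and \eqref{eq bound of xk diff3} follow by elementary manipulation of the first bound.
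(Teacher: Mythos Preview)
Your proposal is correct and follows the same overall architecture as the paper: establish the one-step bound $\norm{w_{t+1}-w_t}_c \leq \alpha_t(A\norm{w_t}_c + B)$, derive~\eqref{eq bound of xk diff1} from it, then obtain~\eqref{eq bound of xk diff2} by swapping endpoints via the triangle inequality, and finally read off~\eqref{eq bound of xk diff3} by plugging in $\alpha_{t_1,t_2-1}\leq 1/(4A)$.

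The only substantive difference is in how you establish~\eqref{eq bound of xk diff1}. The paper first turns the one-step bound into the multiplicative recursion $\norm{w_{t+1}}_c + B/A \leq (1+\alpha_t A)(\norm{w_t}_c + B/A)$, iterates it, and controls the product via $1+x\leq e^x\leq 1+2x$ on $[0,1/2]$ to get $\norm{w_t}_c \leq (1+2A\alpha_{t_1,t-1})\norm{w_{t_1}}_c + 2B\alpha_{t_1,t-1}$; this is then fed back into the one-step increment before telescoping. Your max-absorption (discrete Gronwall) argument bypasses the exponential step entirely and is arguably cleaner, landing on the same constant~$2$ after rounding $4/3$ up. Both devices are standard; yours is a bit more direct, the paper's gives an explicit intermediate control on $\norm{w_t}_c$ itself (which is not needed elsewhere). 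For~\eqref{eq bound of xk diff2} and~\eqref{eq bound of xk diff3} your steps match the paper's essentially line for line.
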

\begin{proof}
  Notice that 
  \begin{align}
      &\norm{w_{t+1}}_c - \norm{w_t}_c \\
      \leq& \norm{w_{t+1} - w_t}_c \\
      \leq& \alpha_t \norm{F_{\theta_t}(w_t, Y_t) - w_t + \epsilon_t}_c  \\
      \leq& \alpha_t \left( \norm{F_{\theta_t}(w_t, Y_t)}_c + \norm{w_t}_c + \norm{\epsilon_t}_c \right) \\
      \leq&\alpha_t (U_F + (L_F + 1)\norm{w_t}_c + \norm{\epsilon_t}_c) \qq{(Lemma~\ref{lem bound of fxy})} \\
      \leq&\alpha_t (U_F + U_\epsilon' + (U_\epsilon + L_F + 1)\norm{w_t}_c) \qq{(Assumption~\ref{assu mds})} \\
      \label{eq tmp 8}
      \leq& \alpha_t (A\norm{w_t}_c + B) \qq{(Using \eqref{eq shorthand a and b})}
  \end{align}
  The rest of the proof is exactly the same as the proof of Lemma A.2 of \citet{chen2021lyapunov} up to changes of notations. 
  We include it for completeness.
  Rearranging terms of the above inequality yields 
  \begin{align}
    \norm{w_{t+1}}_c + \frac{B}{A} \leq (1 + \alpha_t A)\left(\norm{w_t}_c + \frac{B}{A}\right),
  \end{align}
  implying that for any $t \in (t_1, t_2]$,
  \begin{align}
    \norm{w_t}_c + \frac{B}{A} \leq \prod_{j=t_1}^{t-1} (1 + A\alpha_j) \left(\norm{w_{t_1}}_c + \frac{B}{A}\right) .
  \end{align}
  Notice that for any $x \in [0, \frac{1}{2}]$,
  $1 + x \leq \exp(x) \leq 1 + 2x$ always hold.
  Hence 
  \begin{align}
    \alpha_{t_1, t_2-1} \leq \frac{1}{4A} 
  \end{align}
  implies
  \begin{align}
    \prod_{j=t_1}^{t-1} (1 + A\alpha_j) \leq \exp(A\alpha_{t_1, t-1}) \leq 1 + 2A \alpha_{t_1, t-1}.
  \end{align}
  Consequently, for any $t \in (t_1, t_2]$,
  we have
  \begin{align}
    \norm{w_t}_c + \frac{B}{A} &\leq \left(1 + 2A \alpha_{t_1, t-1}\right) \left(\norm{w_{t_1}}_c + \frac{B}{A} \right) \\
    \implies \norm{w_t}_c & \leq \left(1 + 2A \alpha_{t_1, t-1}\right) \norm{w_{t_1}}_c + 2B\alpha_{t_1, t-1},
  \end{align}
  which together with \eqref{eq tmp 8} yields that for any $t \in (t_1, t_2 - 1]$
  \begin{align}
    \norm{w_{t+1} - w_t}_c &\leq \alpha_t \left(A \norm{w_t}_c + B\right) \\
    &\leq \alpha_t \left(A \left(1 + 2A \alpha_{t_1, t-1}\right) \norm{w_{t_1}}_c + 2AB\alpha_{t_1, t-1} + B\right) \\
    &\leq 2 \alpha_t(A\norm{w_{t_1}}_c + B) \qq{(Using $\alpha_{t_1, t-1} \leq \frac{1}{4A}$)}.
  \end{align}
  Consequently, for any $t \in (t_1, t_2]$, we have
  \begin{align}
    \norm{w_t - w_{t_1}}_c &\leq \sum_{j=t_1}^{t-1} \norm{w_{j+1} - w_j}_c \leq \sum_{j=t_1}^{t-1} 2 \alpha_j (A \norm{w_{t_1}}_c + B) \\
    &= 2 \alpha_{t_1, t-1}(A \norm{w_{t_1}}_c + B) \leq 2 \alpha_{t_1, t_2-1} (A\norm{w_{t_1}}_c +B),
  \end{align}
  which completes the proof of \eqref{eq bound of xk diff1}. 
  For \eqref{eq bound of xk diff2},
  we have
  \begin{align}
    \norm{w_{t_2} - w_{t_1}}_c \leq& 2\alpha_{t_1, t_2-1} (A\norm{w_{t_1}}_c + B) \\
    \leq &2\alpha_{t_1, t_2-1} (A\norm{w_{t_1} - w_{t_2}}_c + A\norm{w_{t_2}}_c + B) \\
    \leq& \frac{1}{2}\norm{w_{t_1} - w_{t_2}}_c + 2 \alpha_{t_1, t_2 -1} (A\norm{w_{t_2}}_c + B),
  \end{align}
  implying
  \begin{align}
    \norm{w_{t_2} - w_{t_1}}_c \leq 4\alpha_{t_1, t_2 -1} (A\norm{w_{t_2}}_c + B).
  \end{align}
  Consequently, for any $t \in [t_1, t_2]$,
  \begin{align}
    \norm{w_t - w_{t_1}}_c\leq& 2\alpha_{t_1, t_2-1} (A\norm{w_{t_1}}_c + B) \\
    \leq &2\alpha_{t_1, t_2-1} (A\norm{w_{t_1} - w_{t_2}}_c + A\norm{w_{t_2}}_c + B) \\
    \leq&2\alpha_{t_1, t_2-1} \left(A4\alpha_{t_1, t_2 -1} (A\norm{w_{t_2}}_c + B) + A\norm{w_{t_2}}_c + B\right) \\
    \leq& 4\alpha_{t_1, t_2-1}(A\norm{w_{t_2}}_c + B) \qq{(Using $\alpha_{t_1, t_2-1} \leq \frac{1}{4A}$)},
  \end{align}
  which completes the proof of \eqref{eq bound of xk diff2}.
  \eqref{eq bound of xk diff1} implies
  \begin{align}
    \norm{w_t - w_{t_1}} \leq \norm{w_{t_1}}_c + \frac{B}{A},
  \end{align}
  \eqref{eq bound of xk diff2} implies
  \begin{align}
    \norm{w_t - w_{t_1}} \leq \norm{w_{t_2}}_c + \frac{B}{A},
  \end{align}
  then \eqref{eq bound of xk diff3} follows immediately,
  which completes the proof.
\end{proof}

\begin{lemma}
  \label{lem continuity of ergodic distribution}
  Let Assumptions \ref{assu lipschitz mu} and \ref{assu mu uniform ergodicity} hold.
  Then there exists a constant $L_{\mu}'$ such that $\forall \theta, \theta', a, s$,
  \begin{align}
    \abs{d_{\mu_\theta}(s, a) - d_{\mu_{\theta'}}(s, a)} \leq L_\mu' \norm{\theta - \theta'}.
  \end{align}
\end{lemma}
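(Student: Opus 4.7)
The plan is to exploit the invariance equation $d_{\mu_\theta}^\top = d_{\mu_\theta}^\top P_{\mu_\theta}$ together with Lemma~\ref{lem uniform mixing} to realize $d_{\mu_\theta} - d_{\mu_{\theta'}}$ as the image of $P_{\mu_\theta} - P_{\mu_{\theta'}}$ under a bounded linear operator, and then transfer Lipschitz continuity from $\mu_\theta$ to $d_{\mu_\theta}$ via that operator. Here $P_{\mu_\theta}$ denotes the state-action transition matrix $P_{\mu_\theta}((s,a),(s',a'))\doteq p(s'|s,a)\mu_\theta(a'|s')$. Since $d_{\mu_\theta}(s,a) = d_{\mu_\theta}(s)\mu_\theta(a|s)$ and both $d_{\mu_\theta}(s)$ and $\mu_\theta(a|s)$ will be shown to be Lipschitz and bounded in $\theta$, the product rule (Lemma~\ref{lem product of lipschitz functions}) then gives the claim.

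First I would check the easy Lipschitz bound on the kernel itself: for each fixed $(s,a)$, $\sum_{s',a'}|P_{\mu_\theta}((s,a),(s',a')) - P_{\mu_{\theta'}}((s,a),(s',a'))| \le \sum_{s',a'} p(s'|s,a)|\mu_\theta(a'|s')-\mu_{\theta'}(a'|s')| \le \na L_\mu \norm{\theta-\theta'}$, so $\norm{P_{\mu_\theta}-P_{\mu_{\theta'}}}_\infty \le \na L_\mu \norm{\theta-\theta'}$, where $\norm{\cdot}_\infty$ denotes the maximum absolute row sum. Next, from $d_{\mu_\theta}^\top = d_{\mu_\theta}^\top P_{\mu_\theta}$ and its counterpart at $\theta'$, subtracting and rearranging yields the identity $d_{\mu_\theta}^\top - d_{\mu_{\theta'}}^\top = (d_{\mu_\theta}^\top - d_{\mu_{\theta'}}^\top)P_{\mu_\theta} + d_{\mu_{\theta'}}^\top(P_{\mu_\theta}-P_{\mu_{\theta'}})$, which iterated $n$ times becomes
\begin{align}
  d_{\mu_\theta}^\top - d_{\mu_{\theta'}}^\top = (d_{\mu_\theta}^\top - d_{\mu_{\theta'}}^\top)P_{\mu_\theta}^n + d_{\mu_{\theta'}}^\top(P_{\mu_\theta}-P_{\mu_{\theta'}})\sum_{k=0}^{n-1}P_{\mu_\theta}^k.
\end{align}

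The key step is to pass to the limit $n\to\infty$. Let $\Pi_\theta$ denote the rank-one matrix with every row equal to $d_{\mu_\theta}^\top$. Lemma~\ref{lem uniform mixing} gives $\norm{P_{\mu_\theta}^k - \Pi_\theta}_\infty \le C_0\tau^k$ uniformly in $\theta$, so the first term vanishes since $(d_{\mu_\theta}-d_{\mu_{\theta'}})^\top \Pi_\theta = 0$ (the difference of two probability vectors has zero total mass), and the partial sum decomposes as $\sum_{k=0}^{n-1}P_{\mu_\theta}^k = n\Pi_\theta + Z_{\theta,n}$ with $Z_{\theta,n}\to Z_\theta\doteq\sum_{k=0}^{\infty}(P_{\mu_\theta}^k-\Pi_\theta)$, the latter converging absolutely with $\norm{Z_\theta}_\infty \le C_0/(1-\tau)$ uniformly in $\theta$. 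The potentially problematic growing piece $n\Pi_\theta$ is killed by the left factor because $(P_{\mu_\theta}-P_{\mu_{\theta'}})\Pi_\theta = 0$: each column of $\Pi_\theta$ is constant and both $P_{\mu_\theta}$ and $P_{\mu_{\theta'}}$ are row-stochastic. We therefore obtain the closed form $d_{\mu_\theta}^\top - d_{\mu_{\theta'}}^\top = d_{\mu_{\theta'}}^\top(P_{\mu_\theta}-P_{\mu_{\theta'}})Z_\theta$.

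Combining this with the submultiplicative bound $\norm{v^\top M}_1 \le \norm{v}_1\norm{M}_\infty$ and the kernel Lipschitz bound above yields $\norm{d_{\mu_\theta}-d_{\mu_{\theta'}}}_1 \le \na L_\mu C_0(1-\tau)^{-1}\norm{\theta-\theta'}$, so the state-action invariant probabilities are componentwise Lipschitz in $\theta$. Finally, $d_{\mu_\theta}(s,a) = d_{\mu_\theta}(s)\mu_\theta(a|s)$ (marginalizing over $a$ recovers the state-only case), both factors are bounded by $1$ and Lipschitz by what we just proved and by Assumption~\ref{assu lipschitz mu}, so Lemma~\ref{lem product of lipschitz functions} yields the lemma with $L_\mu' \doteq \na L_\mu C_0/(1-\tau) + L_\mu$. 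The main obstacle is the singularity of $I-P_{\mu_\theta}$, which naively blocks inversion of the iterated identity; the clean resolution is the observation that $(P_{\mu_\theta}-P_{\mu_{\theta'}})\Pi_\theta=0$, which lets the growing $n\Pi_\theta$ piece drop out and leaves a bounded, uniformly Lipschitz expression courtesy of Lemma~\ref{lem uniform mixing}.
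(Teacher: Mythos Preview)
Your proof is correct. The paper does not give its own argument for this lemma; it simply cites Lemma~9 of \citet{zhang2021breaking}. Your approach is genuinely different in that it is self-contained and relies only on machinery already developed in the paper, namely Lemma~\ref{lem uniform mixing}, to bound the deviation matrix $Z_\theta=\sum_{k\ge 0}(P_{\mu_\theta}^k-\Pi_\theta)$ uniformly in $\theta$. The identity you derive, $d_{\mu_\theta}^\top-d_{\mu_{\theta'}}^\top=d_{\mu_{\theta'}}^\top(P_{\mu_\theta}-P_{\mu_{\theta'}})Z_\theta$, is essentially the classical fundamental-matrix representation of the perturbation of a stationary distribution; your observation that $(P_{\mu_\theta}-P_{\mu_{\theta'}})\Pi_\theta=0$ is exactly what lets you bypass the singularity of $I-P_{\mu_\theta}$. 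What your route buys is an explicit constant $\na L_\mu C_0/(1-\tau)$ tied to the uniform mixing parameters, whereas the paper's citation leaves the constant implicit.

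One minor redundancy: once you have $\norm{d_{\mu_\theta}-d_{\mu_{\theta'}}}_1 \le \na L_\mu C_0(1-\tau)^{-1}\norm{\theta-\theta'}$ for the state-action invariant distribution, the lemma is already proved componentwise; the final detour through the factorization $d_{\mu_\theta}(s,a)=d_{\mu_\theta}(s)\mu_\theta(a|s)$ and Lemma~\ref{lem product of lipschitz functions} is unnecessary and only inflates the constant.
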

\begin{proof}
  See, e.g., Lemma 9 of \citet{zhang2021breaking}.
\end{proof}

\begin{lemma}
  \label{lem bound of matrix inverse diff}
  For any $\norm{\cdot}$,
  we have
  \begin{align}
    \norm{X^{-1} - Y^{-1}} \leq \norm{X^{-1}} \norm{X-Y} \norm{Y^{-1}}.
  \end{align}
\end{lemma}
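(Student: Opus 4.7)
The plan is to reduce the bound to the classical identity $X^{-1}-Y^{-1}=X^{-1}(Y-X)Y^{-1}$ and then apply submultiplicativity of the (matrix) norm twice. The algebraic identity itself is derived by inserting the identity matrix in two ways: write $X^{-1}=X^{-1}YY^{-1}$ and $Y^{-1}=X^{-1}XY^{-1}$, subtract, and factor $X^{-1}(\cdot)Y^{-1}$ out of the middle. This gives $X^{-1}-Y^{-1}=X^{-1}(Y-X)Y^{-1}$, and the sign can be absorbed into $\norm{Y-X}=\norm{X-Y}$.

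With the identity in hand, I would take norms on both sides and apply the submultiplicative property $\norm{AB}\leq\norm{A}\norm{B}$ twice, obtaining
\begin{align}
\norm{X^{-1}-Y^{-1}}=\norm{X^{-1}(Y-X)Y^{-1}}\leq\norm{X^{-1}}\norm{Y-X}\norm{Y^{-1}}=\norm{X^{-1}}\norm{X-Y}\norm{Y^{-1}},
\end{align}
which is precisely the stated bound.

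There is essentially no technical obstacle here: the one subtlety worth flagging in the proof is that the statement ``for any $\norm{\cdot}$'' implicitly requires the norm to be a submultiplicative matrix norm (e.g., any induced operator norm, such as the $\ell_p$ norms used elsewhere in the paper); otherwise the submultiplicativity step fails. All the induced norms used throughout Sections~\ref{sec sa}--\ref{sec opac} satisfy this property, so the lemma is applicable in each place it is invoked (in particular in the proof of Assumption~\ref{assu regularization} (v) via \eqref{eq lipschitz continuity of q}). No induction, no limits, no spectral arguments are needed; the proof is two lines.
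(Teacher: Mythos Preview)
Your proposal is correct and follows exactly the same approach as the paper: the paper's proof is the one-line identity $X^{-1}-Y^{-1}=X^{-1}YY^{-1}-X^{-1}XY^{-1}$ followed by submultiplicativity, which is precisely what you do. Your remark that the norm must be submultiplicative is a valid caveat that the paper leaves implicit.
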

\begin{proof}
  \begin{align}
    \norm{X^{-1} - Y^{-1}} &= \norm{X^{-1}YY^{-1} - X^{-1}XY^{-1}} \leq \norm{X^{-1}} \norm{X-Y} \norm{Y^{-1}}.
  \end{align}
\end{proof}

\begin{lemma}
  \label{lem softmax policy gradient}
  With softmax parameterization,
  \begin{align}
    \label{eq softmax pg 1}
    \dv{\pi_\theta(a|s)}{\theta_{s',a'}} &= \mathbb{I}_{s=s'} \pi_\theta(a|s) \left(\mathbb{I}_{a = a'} - \pi_\theta(a' | s)\right), \\
    \label{eq softmax pg 2}
    \dv{\log \pi_\theta(a|s)}{\theta_{s', a'}} &= \mathbb{I}_{s = s'}\left(\mathbb{I}_{a = a'} - \pi_\theta(a' | s)\right), \\
    \label{eq softmax pg 3}
    \dv{\kl{\fU_\fA}{\pi_\theta(\cdot | s)}}{\theta_{s', a'}} &= \mathbb{I}_{s=s'}(\pi_\theta(a' | s) - \frac{1}{\na}), \\
    \label{eq softmax pg 3.5}
    \sum_a \dv{\pi_\theta(a|s)}{\theta_{s',a'}} q_{\pi_\theta}(s, a) &= \mathbb{I}_{s=s'} \pi_\theta(a'|s){\adv_{\pi_\theta}(s, a')}, \\
    \label{eq softmax pg 4}
    \dv{J(\pi_\theta; p_0)}{\theta_{s, a}} &= \frac{1}{1 - \gamma} d_{\pi_\theta, \gamma, p_0}(s) \pi_\theta(a|s) {\adv_{\pi_\theta}(s, a)}, \\
    \label{eq softmax pg 5}
    \norm{\nabla \ent{\pi_\theta(\cdot | s)}} &\leq \log \na + e^{-1}, \\
    \label{eq softmax pg 6}
    \sum_a \dv{\pi_\theta(a|s)}{\theta_{s',a'}} \left( \tilde q_{\pi_\theta, \eta}(s, a) - \eta \log \pi_\theta(a|s) \right) &= \mathbb{I}_{s=s'} \pi_\theta(a'|s) \tilde \adv_{\pi_\theta, \eta}(s, a)  \\
    \label{eq softmax pg 7}
    \dv{\tilde J_\eta(\pi_\theta; p_0)}{\theta_{s, a}} &= \frac{1}{1 - \gamma} d_{\pi_\theta, \gamma, p_0}(s) \pi_\theta(a|s) {\tilde \adv_{\pi_\theta, \eta}(s, a)},
  \end{align}
  where
  \begin{align}
    \tilde \adv_{\pi_\theta, \eta}(s, a) &\doteq \tilde q_{\pi_\theta, \eta}(s, a') - \eta \log \pi_{\theta}(a'|s) -  \tilde v_{\pi_\theta, \eta}(s), \\
    \adv_{\pi_\theta, \eta}(s, a) &\doteq q_{\pi_\theta}(s, a) - v_{\pi_\theta}(s).
  \end{align}
  Further, for any $s$, $\ent{\pi_\theta(\cdot | s)}$ is $(4 + 8 \log \na)$-smooth.
\end{lemma}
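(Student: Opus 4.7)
Every item except the last smoothness claim reduces to direct differentiation of the softmax form, followed by algebraic consolidation using the identity $\sum_{a}\partial_{\theta_{s',a'}}\pi_\theta(a\mid s)=0$, which is just $\partial_{\theta_{s',a'}}\!\sum_a\pi_\theta(a\mid s)=\partial 1=0$. I would organize the proof in the same order the identities are stated.

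First I would establish \eqref{eq softmax pg 1} and \eqref{eq softmax pg 2} by applying the quotient rule to $\pi_\theta(a\mid s)=\exp(\theta_{s,a})/\sum_{a''}\exp(\theta_{s,a''})$: when $s=s'$ the numerator contributes $\mathbb{I}_{a=a'}\exp(\theta_{s,a})$ and the denominator contributes $\exp(\theta_{s,a'})$, giving the familiar factor $\mathbb{I}_{a=a'}-\pi_\theta(a'\mid s)$; when $s\neq s'$ the derivative vanishes. Identity \eqref{eq softmax pg 2} follows immediately by dividing by $\pi_\theta(a\mid s)$. Identity \eqref{eq softmax pg 3} comes from writing $\kl{\fU_\fA}{\pi_\theta(\cdot\mid s)}=-\log\na-\tfrac{1}{\na}\sum_a\log\pi_\theta(a\mid s)$ and substituting \eqref{eq softmax pg 2}. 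Identity \eqref{eq softmax pg 3.5} is obtained by using \eqref{eq softmax pg 1}, subtracting any $a$-independent quantity freely (because $\sum_a\partial\pi=0$), and in particular subtracting $v_{\pi_\theta}(s)$ from $q_{\pi_\theta}(s,a)$; this produces $\mathbb{I}_{s=s'}\pi_\theta(a'\mid s)\,\adv_{\pi_\theta}(s,a')$. Identity \eqref{eq softmax pg 4} is then the standard policy-gradient theorem combined with \eqref{eq softmax pg 3.5}; here I would simply cite $\nabla J(\pi_\theta;p_0)=\tfrac{1}{1-\gamma}\sum_s d_{\pi_\theta,\gamma,p_0}(s)\sum_a q_{\pi_\theta}(s,a)\nabla\pi_\theta(a\mid s)$ from the body of the paper. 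For the soft counterparts \eqref{eq softmax pg 6} and \eqref{eq softmax pg 7}, the key algebraic observation is that $\sum_a\pi_\theta(a\mid s)\bigl(\tilde q_{\pi_\theta,\eta}(s,a)-\eta\log\pi_\theta(a\mid s)\bigr)=\tilde v_{\pi_\theta,\eta}(s)$, which follows by expanding $\tilde v_{\pi_\theta,\eta}$ one step using \eqref{eq definition of soft q} and the definition of entropy; then the same "$\sum\partial\pi=0$" trick produces the soft advantage, and applying the soft policy-gradient identity of \citet{levine2018reinforcement} (already quoted just above the lemma) yields \eqref{eq softmax pg 7}.

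The first genuinely analytic step is \eqref{eq softmax pg 5}. Starting from
\[
\partial_{\theta_{s,a'}}\mathbb{H}(\pi_\theta(\cdot\mid s))
=-\pi_\theta(a'\mid s)\bigl(\log\pi_\theta(a'\mid s)+\mathbb{H}(\pi_\theta(\cdot\mid s))\bigr),
\]
derived by chain rule and the fact that $\sum_a\partial\pi=0$ kills the extra $-\partial\pi$ term, I would split the gradient into two vectors $u_{a'}=-\pi_\theta(a'\mid s)\log\pi_\theta(a'\mid s)$ and $v_{a'}=-\pi_\theta(a'\mid s)\mathbb{H}(\pi_\theta(\cdot\mid s))$, apply the triangle inequality, and use $\norm{u}_2\le\norm{u}_1=\mathbb{H}(\pi_\theta(\cdot\mid s))\le\log\na$ together with the pointwise bound $|x\log x|\le e^{-1}$ for $x\in(0,1]$ and $\sum_{a'}\pi_\theta(a'\mid s)^2\le1$ to bound $\norm{v}_2$. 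A careful accounting (using $\sum_{a'}\pi_\theta(a'\mid s)^2(\log\pi_\theta(a'\mid s))^2\le e^{-1}\mathbb{H}(\pi_\theta(\cdot\mid s))$ via $x(\log x)^2=|x\log x|\cdot|\log x|$ together with $|x\log x|\le e^{-1}$) yields the stated constant $\log\na+e^{-1}$.

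The main obstacle, and the step I expect to be the most tedious, is verifying the $(4+8\log\na)$-smoothness of $\mathbb{H}(\pi_\theta(\cdot\mid s))$. The plan is to differentiate the formula above once more to obtain the Hessian entry
\[
\partial^2_{\theta_{s,a'}\theta_{s,a''}}\mathbb{H}(\pi_\theta(\cdot\mid s))
=-\pi_\theta(a'\mid s)\bigl(\mathbb{I}_{a'=a''}-\pi_\theta(a''\mid s)\bigr)\bigl(\log\pi_\theta(a'\mid s)+\mathbb{H}\bigr)-\pi_\theta(a'\mid s)\bigl(\mathbb{I}_{a'=a''}-\pi_\theta(a''\mid s)-\pi_\theta(a''\mid s)(\log\pi_\theta(a''\mid s)+\mathbb{H})\bigr),
\]
and then to invoke Lemma~\ref{lem smooth definition} by bounding the operator norm of this Hessian. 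Grouping terms as a diagonal part and a rank-one (Gram-type) part and using the pointwise bounds $|\pi_\theta(a\mid s)\log\pi_\theta(a\mid s)|\le e^{-1}$, $\mathbb{H}\le\log\na$, and $\sum_a\pi_\theta(a\mid s)=1$, I would separately bound the diagonal term (contributing an $O(\log\na)$ factor) and the two rank-one contributions (each controlled by products of vectors with $\ell_1$ norm at most $\log\na+O(1)$), arriving at the stated constant $4+8\log\na$. This is routine but bookkeeping-heavy; since only the existence of a finite smoothness constant matters elsewhere in the paper, I would state the bound and present only the structure of the Hessian together with the key norm inequalities rather than the full expansion.
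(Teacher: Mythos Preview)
Your proposal is correct and, for all the derivative identities \eqref{eq softmax pg 1}--\eqref{eq softmax pg 7} as well as the gradient bound \eqref{eq softmax pg 5}, follows essentially the same route as the paper: direct differentiation of the softmax, repeated use of $\sum_a\partial_{\theta_{s',a'}}\pi_\theta(a\mid s)=0$, and citation of the (soft) policy-gradient theorem for \eqref{eq softmax pg 4} and \eqref{eq softmax pg 7}.

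The one genuine difference is the $(4+8\log\na)$-smoothness of $\ent{\pi_\theta(\cdot\mid s)}$. The paper does not compute the Hessian at all; it simply observes that $\ent{\pi_\theta(\cdot\mid s)}$ is exactly the discounted-entropy objective $\mH(\pi_\theta)$ of \citet{mei2020global} in the degenerate case $\gamma=0$ with the initial distribution a point mass at $s$, and then reads off the constant from Lemma~14 of that paper. Your plan---write out the Hessian, decompose it into a diagonal piece and rank-one pieces, and bound each with $|x\log x|\le e^{-1}$ and $\mathbb{H}\le\log\na$---is valid and more self-contained, but it is precisely the ``bookkeeping-heavy'' computation that Mei et al.\ already carried out; the paper's one-line specialization buys the constant for free. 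If you prefer to keep the argument self-contained, your Hessian formula is correct and the decomposition will go through, but be aware that matching the exact constant $4+8\log\na$ requires tracking the terms carefully rather than just asserting an $O(\log\na)$ bound.
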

\begin{proof}
  \eqref{eq softmax pg 1} is well-known.
  For \eqref{eq softmax pg 2}, we have
  \begin{align}
    \dv{\log \pi_\theta(a|s)}{\theta_{s', a'}} = \frac{1}{\pi_\theta(a|s)} \dv{\pi_\theta(a|s)}{\theta_{s',a'}} = \mathbb{I}_{s = s'}\left(\mathbb{I}_{a = a'} - \pi_\theta(a' | s)\right).
  \end{align}
  For \eqref{eq softmax pg 3}, we have
  \begin{align}
    \dv{\kl{\fU_\fA}{\pi_\theta(\cdot | s)}}{\theta_{s', a'}} =& - \frac{\mathbb{I}_{s=s'}}{\na} \sum_a \dv{\log \pi_\theta(a|s)}{\theta_{s', a'}}  \\
    =& - \frac{\mathbb{I}_{s=s'}}{\na} \sum_a \left(\mathbb{I}_{a = a'} - \pi_\theta(a' | s)\right).
  \end{align}
  Since
  \begin{align}
    \sum_a \left(\mathbb{I}_{a = a'} - \pi_\theta(a' | s)\right) = \left(\sum_a \left(0 - \pi_\theta(a' | s)\right) \right) + 1 = 1 - \na \pi_{\theta}(a'|s),
  \end{align}
  we have
  \begin{align}
    \dv{\kl{\fU}{\pi_\theta(\cdot | s)}}{\theta_{s', a'}} = \mathbb{I}_{s=s'}(\pi(a' | s) - \frac{1}{\na}).
  \end{align}
  For \eqref{eq softmax pg 3.5},
  \begin{align}
    &\sum_a \dv{\pi_\theta(a|s)}{\theta_{s',a'}} q_{\pi_\theta}(s, a) \\
    =& \sum_a \mathbb{I}_{s=s'} \pi_\theta(a|s) \left(\mathbb{I}_{a = a'} - \pi_\theta(a' | s)\right) q_{\pi_\theta}(s, a) \\
    =& \mathbb{I}_{s=s'} \left(\pi_\theta(a'|s)q_{\pi_\theta}(s, a') + \sum_a \pi_\theta(a|s) \left(0 - \pi_\theta(a' | s)\right) q_{\pi_\theta}(s, a) \right) \\
    =& \mathbb{I}_{s=s'} \left(\pi_\theta(a'|s)q_{\pi_\theta}(s, a') - \pi_\theta(a'|s) v_{\pi_\theta}(s) \right).
  \end{align}
  For \eqref{eq softmax pg 4}, see, e.g., Lemma C.1 of \citet{agarwal2019optimality}. 
  For \eqref{eq softmax pg 5}, we have
  \begin{align}
    &\dv{\ent{\pi_\theta(\cdot |s)}}{\theta_{s', a'}} \\
    =& -\mathbb{I}_{s=s'} \sum_a \dv{\pi_\theta(a|s)}{\theta_{s',a'}} \log \pi_\theta(a|s) + 0 \\
    =& -\mathbb{I}_{s=s'} \sum_a \pi_\theta(a|s) \left(\mathbb{I}_{a = a'} - \pi_\theta(a' | s)\right) \log \pi_\theta(a|s) \\
    =& -\mathbb{I}_{s=s'} \left(\pi_\theta(a'|s) \ent{\pi_\theta(\cdot | s)} + \pi_\theta(a'|s) \log \pi_\theta(a'|s) \right),
  \end{align}
  implying
  \begin{align}
    \norm{\nabla \ent{\pi_\theta(\cdot | s)}} \leq \log \na + e^{-1}.
  \end{align}
  By setting $\gamma = 0$ and putting all the mass of $\rho$ (initial distribution) in $s$ in Lemma 14 of \citet{mei2020global},
  we obtain that $\ent{\pi_\theta(\cdot |s)}$ is $(4+8\log \na)$-smooth.
  For \eqref{eq softmax pg 6}, we have
  \begin{align}
    &\sum_a \dv{\pi_\theta(a|s)}{\theta_{s',a'}} \left( \tilde q_{\pi_\theta, \eta}(s, a) - \eta \log \pi_\theta(a|s) \right)\\
    =& \sum_a \mathbb{I}_{s=s'} \pi_\theta(a|s) \left(\mathbb{I}_{a = a'} - \pi_\theta(a' | s)\right) \left( \tilde q_{\pi_\theta, \eta}(s, a) - \eta \log \pi_\theta(a|s) \right) \\
    =& \mathbb{I}_{s=s'} \left(\pi_\theta(a'|s) \left( \tilde q_{\pi_\theta, \eta}(s, a') - \eta \log \pi_\theta(a'|s) \right) -\sum_a \pi_\theta(a|s)  \pi_\theta(a' | s) \left( \tilde q_{\pi_\theta, \eta}(s, a) - \eta \log \pi_\theta(a|s) \right) \right) \\
    =& \mathbb{I}_{s=s'} \pi_\theta(a'|s) \left(\tilde q_{\pi_\theta, \eta}(s, a') - \eta \log \pi_{\theta}(a'|s) -  \tilde v_{\pi_\theta, \eta}(s) \right).
  \end{align}
  Since \eqref{eq softmax pg 7} is identical to Lemma 10 of \citet{mei2020global},
  we have completed the proof.
\end{proof}

\section{Proof of Auxiliary Lemmas}
\subsection{Proof of Lemma~\ref{lem bound t1}}
\label{sec proof lem bound t1}
\lemboundtone*
\begin{proof}
  \begin{align}
      T_1 =&\indot{\nabla M(w_t - w^*_{\theta_t})}{w^*_{\theta_t} - w^*_{\theta_{t+1}}} \\
      \leq & \norm{w_t - w^*_{\theta_t}}_m \norm{w^*_{\theta_t} - w^*_{\theta_{t+1}}}_m  \qq{(Lemma \ref{lem gradient of M})}\\
      \leq & \norm{w_t - w^*_{\theta_t}}_m \frac{L_w L_\theta \beta_t}{l_{cm}} \qq{(Assumptions \ref{assu regularization}, \ref{assu twotimescale} and Lemma \ref{lem property of M})}.
  \end{align}
\end{proof}

\subsection{Proof of Lemma~\ref{lem bound t2}}
\label{sec proof lem bound t2}
\lemboundttwo*
\begin{proof}
\begin{align}
  T_2 = &\indot{\nabla M(w_t - w^*_{\theta_t})}{\bar F_{\theta_t}(w_t) - w_t} \\
  =& \indot{\nabla M (w_t - w^*_{\theta_t})}{\bar F_{\theta_t}(w_t) - \bar F_{\theta_t}(w^*_{\theta_t})} - \indot{\nabla M(w_t - w^*_{\theta_t})}{w_t - w^*_{\theta_t}}
  \intertext{\hfill ($w^*_{\theta_t}$ is the fixed point).}
\end{align}
To bound the first inner product, we have
\begin{align}
  &\indot{\nabla M (w_t - w^*_{\theta_t})}{\bar F_{\theta_t}(w_t) - \bar F_{\theta_t}(w^*_{\theta_t})} \\ 
  \leq & \norm{w_t - w^*_{\theta_t}}_m  \norm{\bar F_{\theta_t}(w_t) - \bar F_{\theta_t}(w^*_{\theta_t})}_m \qq{(Lemma \ref{lem gradient of M})} \\
  \leq & \norm{w_t - w^*_{\theta_t}}_m \frac{1}{l_{cm}} \kappa \norm{w_t - w^*_{\theta_t}}_c \\
  \leq & \frac{u_{cm} \kappa}{l_{cm}} \norm{w_t - w^*_{\theta_t}}_m^2 
\end{align}
For the second inner product,
Lemma \ref{lem gradient of M} implies that
\begin{align}
  \indot{\nabla M(w_t - w^*_{\theta_t})}{w_t - w^*_{\theta_t}} \geq \norm{w_t - w^*_{\theta_t}}^2_m.
\end{align}
Putting the bounds for the two inner products together completes the proof.
\end{proof}

\subsection{Proof of Lemma~\ref{lem bound of t31}}
\label{sec proof lem bound t31}
\lemboundoftthreeone*
\begin{proof}
  \begin{align}
      T_{31} = &\indot{\nabla M(w_t - w^*_{\theta_t}) - \nabla M(w_{t-\tau_{\alpha_t}} - w^*_{\theta_{t-\tau_{\alpha_t}}})}{F_{\theta_t}(w_t, Y_t) - \bar F_{\theta_t}(w_t)} \\
      \leq &\norm{\nabla M(w_t - w^*_{\theta_t}) - \nabla M(w_{t-\tau_{\alpha_t}} - w^*_{\theta_{t-\tau_{\alpha_t}}})}_s^*\norm{F_{\theta_t}(w_t, Y_t) - \bar F_{\theta_t}(w_t)}_s.
  \end{align}
To bound the first term,
\begin{align}
  &\norm{\nabla M(w_t - w^*_{\theta_t}) - \nabla M(w_{t-\tau_{\alpha_t}} - w^*_{\theta_{t-\tau_{\alpha_t}}})}_s^* \\
  \leq& \frac{L}{\xi} \norm{w_t - w_{t-\tau_{\alpha_t}} + w_{\theta_{t-\tau_{\alpha_t}}}^* - w_{\theta_{t}}^*}_s \qq{(Lemmas \ref{lem property of M} and \ref{lem smooth definition})} \\
  \leq& \frac{L}{\xi} \norm{w_t - w_{t-\tau_{\alpha_t}}}_s + \frac{L}{\xi} \norm{w^*_{\theta_t} - w_{\theta_{t-\tau_{\alpha_t}}}^*}_s \\
  \leq& \frac{L}{\xi l_{cs}} \norm{w_t - w_{t-\tau_{\alpha_t}}}_c + \frac{L}{\xi l_{cs}} L_w L_\theta \beta_{t-\tau_{\alpha_t}, t-1} \\
  \leq& \frac{4L \alpha_{t-\tau_{\alpha_t}, t-1}}{\xi l_{cs}} (A\norm{w_t}_c + B)+ \frac{L}{\xi l_{cs}} L_w L_\theta \beta_{t-\tau_{\alpha_t}, t-1} \qq{(Lemma \ref{lem bound of xk diff})}\\
  \leq& \frac{4L \alpha_{t-\tau_{\alpha_t}, t-1}}{\xi l_{cs}} (A\norm{w_t - w^*_{\theta_t}}_c + A\norm{w^*_{\theta_t}}_c + B)+ \frac{L}{\xi l_{cs}} L_w L_\theta \beta_{t-\tau_{\alpha_t}, t-1} \\
  \leq& \frac{4L \alpha_{t-\tau_{\alpha_t}, t-1}}{\xi l_{cs}} (A\norm{w_t - w^*_{\theta_t}}_c + A\norm{w^*_{\theta_t}}_c + B)+ \frac{4L}{\xi l_{cs}} (L_w L_\theta + 1) \alpha_{t-\tau_{\alpha_t}, t-1} \\
  \leq& \frac{4L (L_wL_\theta+1) \alpha_{t-\tau_{\alpha_t}, t-1}}{\xi l_{cs}} (A\norm{w_t - w^*_{\theta_t}}_c + AU_w + B + 1).
\end{align}
To bound the second term,
\begin{align}
  &\norm{F_{\theta_t}(w_t, Y_t) - \bar F_{\theta_t}(w_t)}_s \\
  \leq& \frac{1}{l_{cs}}\norm{F_{\theta_t}(w_t, Y_t) - \bar F_{\theta_t}(w_t)}_c \\
  \leq& \frac{1}{l_{cs}}\left(\norm{F_{\theta_t}(w_t, Y_t)}_c + \norm{\bar F_{\theta_t}(w_t) - \bar F_{\theta_t}(w^*_{\theta_t})}_c + \norm{w^*_{\theta_t}}_c \right)\\
  \leq&\frac{1}{l_{cs}}  \left( U_F + L_F \norm{w_t}_c + \norm{w_t - w^*_{\theta_t}}_c + \norm{w^*_{\theta_t}}_c \right) \qq{(Lemma~\ref{lem bound of fxy})} \\
  \leq&\frac{1}{l_{cs}} \left( U_F + L_F \norm{w_t - w^*_{\theta_t}}_c + L_F \norm{w^*_{\theta_t}}_c +  \norm{w_t - w^*_{\theta_t}}_c + \norm{w^*_{\theta_t}}_c \right) \\
  \leq&\frac{1}{l_{cs}} \left(A\norm{w_t - w^*_{\theta_t}}_c + A \norm{w^*_{\theta_t}}_c + B\right).
\end{align}
Combining the two inequalities together yields
\begin{align}
  &\indot{\nabla M(w_t - w^*_{\theta_t}) - \nabla M(w_{t-\tau_{\alpha_t}} - w^*_{\theta_t})}{F_{\theta_t}(w_t, Y_t) - \bar F_{\theta_t}(w_t)} \\
  \leq& \frac{4L (L_wL_\theta + 1)  \alpha_{t-\tau_{\alpha_t}, t-1}}{\xi l_{cs}^2} (A\norm{w_t - w^*_{\theta_t}}_c + C)^2 \\
  \leq& \frac{8L (L_wL_\theta + 1)  \alpha_{t-\tau_{\alpha_t}, t-1}}{\xi l_{cs}^2} (A^2 u_{cm}^2\norm{w_t - w^*_{\theta_t}}_m^2 + C^2),
\end{align}
which completes the proof.
\end{proof}

\subsection{Proof of Lemma~\ref{lem bound t32}}
\label{sec proof lem bound t32}
\lemboundtthreetwo*
\begin{proof}
  \begin{align}
T_{32} = &\indot{\nabla M(w_{t-\tau_{\alpha_t}} - w^*_{\theta_{t-\tau_{\alpha_t}}})}{F_{\theta_t}(w_t, Y_t) - F_{\theta_t}(w_{t- \tau_{\alpha_t}}, Y_t) + \bar F_{\theta_t}(w_{t- \tau_{\alpha_t}}) - \bar F_{\theta_t}(w_t)} \\
\leq & \norm{\nabla M(w_{t-\tau_{\alpha_t}} - w^*_{\theta_{t-\tau_{\alpha_t}}})}_s^*\norm{F_{\theta_t}(w_t, Y_t) - F_{\theta_t}(w_{t- \tau_{\alpha_t}}, Y_t) + \bar F_{\theta_t}(w_{t- \tau_{\alpha_t}}) - \bar F_{\theta_t}(w_t)}_s \\
\leq & \frac{1}{l_{cs}} \norm{\nabla M(w_{t-\tau_{\alpha_t}} - w^*_{\theta_{t-\tau_{\alpha_t}}})}_s^*\norm{F_{\theta_t}(w_t, Y_t) - F_{\theta_t}(w_{t- \tau_{\alpha_t}}, Y_t) + \bar F_{\theta_t}(w_{t- \tau_{\alpha_t}}) - \bar F_{\theta_t}(w_t)}_c
  \end{align}
For the first term,
\begin{align}
  \label{eq gradient bound dual norm}
  &\norm{\nabla M(w_{t-\tau_{\alpha_t}} - w^*_{\theta_{t-\tau_{\alpha_t}}})}_s^* \\
  =&\norm{\nabla M(w_{t-\tau_{\alpha_t}} - w^*_{\theta_{t-\tau_{\alpha_t}}}) - \nabla M(w^*_{\theta_t} - w^*_{\theta_t})}_s^*  \\
  \intertext{\hfill (Using $\nabla M(0) = 0$, see the proof of Lemma~\ref{lem gradient of M})}
  \leq&\frac{L}{\xi}\norm{(w_{t-\tau_{\alpha_t}} - w^*_{\theta_{t-\tau_{\alpha_t}}}) - (w^*_{\theta_t} - w^*_{\theta_t})}_s \qq{(Lemmas~\ref{lem property of M} and \ref{lem smooth definition})}\\
  \leq& \frac{L}{\xi} \norm{w_{t-\tau_{\alpha_t}} - w^*_{\theta_t}}_s + \frac{L}{\xi} \norm{w^*_{\theta_t} - w^*_{\theta_{t-\tau_{\alpha_t}}}}_s  \\
  \leq& \frac{L }{\xi l_{cs}} \norm{w_{t-\tau_{\alpha_t}} - w^*_{\theta_t}}_c + \frac{L}{\xi l_{cs}} L_w L_\theta \beta_{t-\tau_{\alpha_t}, t-1} \\
  \leq& \frac{L }{\xi l_{cs}}\left(\norm{w_{t-\tau_{\alpha_t}} - w_t}_c + \norm{w_t - w^*_{\theta_t}}_c \right)+ \frac{L}{\xi l_{cs}} L_w L_\theta \beta_{t-\tau_{\alpha_t}, t-1} \\
  \leq& \frac{L}{\xi l_{cs}}\left(\norm{w_t}_c + \frac{B}{A} + \norm{w_t - w^*_{\theta_t}}_c \right)+ \frac{L}{\xi l_{cs}} L_w L_\theta \beta_{t-\tau_{\alpha_t}, t-1} \qq{(Lemma \ref{lem bound of xk diff})} \\
  \leq& \frac{L (1 + L_wL_\theta \beta_{t-\tau_{\alpha_t}, t-1})}{\xi l_{cs}}\left(\norm{w^*_{\theta_t}}_c + \norm{w_t - w^*_{\theta_t}}_c + \frac{B}{A} + \norm{w_t - w^*_{\theta_t}}_c + 1\right) \\
  \leq& \frac{2L (1 + L_wL_\theta \beta_{t-\tau_{\alpha_t}, t-1})}{\xi l_{cs}}\left(U_w + \frac{B}{A} + \norm{w_t - w^*_{\theta_t}}_c + 1\right).
\end{align}
For the second term,
\begin{align}
  &\norm{F_{\theta_t}(w_t, Y_t) - F_{\theta_t}(w_{t- \tau_{\alpha_t}}, Y_t) + \bar F_{\theta_t}(w_{t- \tau_{\alpha_t}}) - \bar F_{\theta_t}(w_t)}_c \\
  \leq &\norm{F_{\theta_t}(w_t, Y_t) - F_{\theta_t}(w_{t- \tau_{\alpha_t}}, Y_t)}_c + \norm{\bar F_{\theta_t}(w_{t- \tau_{\alpha_t}}) - \bar F_{\theta_t}(w_t)}_c \\
  \leq& L_F \norm{w_{t-\tau_{\alpha_t}} - w_t}_c + \norm{\sum_{y} d_{\theta_t}(y) \left(F_{\theta_t}(w_{t-\tau_{\alpha_t}}, y) - F_{\theta_t}(w_t, y)\right)}_c \\
  \leq& 2L_F \norm{w_{t-\tau_{\alpha_t}} - w_t}_c \\
  \leq& 2A \norm{w_{t-\tau_{\alpha_t}} - w_t}_c \\
  \leq& 8A \alpha_{t-\tau_{\alpha_t}, t-1} \left(A \norm{w_t}_c + B\right) \qq{(Lemma~\ref{lem bound of xk diff})} \\
  \leq& 8A \alpha_{t-\tau_{\alpha_t}, t-1} (A\norm{w_t - w^*_{\theta_t}}_c + A \norm{w^*_{\theta_t}}_c + B).
\end{align}
Combining the two inequalities together yields
\begin{align}
  &\indot{\nabla M(w_{t-\tau_{\alpha_t}} - w^*_{\theta_t})}{F_{\theta_t}(w_t, Y_t) - F_{\theta_t}(w_{t- \tau_{\alpha_t}}, Y_t) + \bar F_{\theta_t}(w_{t- \tau_{\alpha_t}}) - \bar F_{\theta_t}(w_t)} \\
  \leq&\frac{16L  \alpha_{t-\tau_{\alpha_t}, t-1}(1 + L_wL_\theta \beta_{t-\tau_{\alpha_t}, t-1})}{\xi l_{cs}^2}(A\norm{w_t - w^*_{\theta_t}}_c + A U_w + B + A)^2 \\
  \leq&\frac{32L  \alpha_{t-\tau_{\alpha_t}, t-1}(1 + L_wL_\theta \beta_{t-\tau_{\alpha_t}, t-1})}{\xi l_{cs}^2}(u_{cm}^2A^2 \norm{w_t - w^*_{\theta_t}}_m^2 + C^2) 
\end{align}
which completes the proof.
\end{proof}

\subsection{Proof of Lemma~\ref{lem bound t331}}
\label{sec proof lem bound t331}
\lemboundtthreethreeone*
\begin{proof}
  \begin{align}
      \label{eq conditional independence t331}
      &\E\left[T_{331}\right] \\
      = &\E\left[\indot{\nabla M(w_{t-\tau_{\alpha_t}} - w^*_{\theta_{t-\tau_{\alpha_t}}})}{F_{\theta_{t - \tau_{\alpha_t}}}(w_{t- \tau_{\alpha_t}}, \tilde Y_t) - \bar F_{\theta_{t-\tau_{\alpha_t}}}(w_{t- \tau_{\alpha_t}})}\right] \\
      =&\E \left[ \E\left[\indot{\nabla M(w_{t-\tau_{\alpha_t}} - w^*_{\theta_{t-\tau_{\alpha_t}}})}{F_{\theta_{t - \tau_{\alpha_t}}}(w_{t- \tau_{\alpha_t}}, \tilde Y_t) - \bar F_{\theta_{t-\tau_{\alpha_t}}}(w_{t- \tau_{\alpha_t}})} \mid \substack{\theta_{t-\tau_{\alpha_t}} \\ w_{t-\tau_{\alpha_t}} \\ Y_{t-\tau_{\alpha_t}}} \right] \right]\\
      =&\E \left[ \indot{\nabla M(w_{t-\tau_{\alpha_t}} - w^*_{\theta_{t-\tau_{\alpha_t}}})}{\E\left[F_{\theta_{t - \tau_{\alpha_t}}}(w_{t- \tau_{\alpha_t}}, \tilde Y_t) - \bar F_{\theta_{t-\tau_{\alpha_t}}}(w_{t- \tau_{\alpha_t}})\mid \substack{\theta_{t-\tau_{\alpha_t}} \\ w_{t-\tau_{\alpha_t}} \\ Y_{t-\tau_{\alpha_t}}} \right] }\right] \\
      \leq&\E \left[ \norm{\nabla M(w_{t-\tau_{\alpha_t}} - w^*_{\theta_{t-\tau_{\alpha_t}}})}_s^* \norm{\E\left[F_{\theta_{t - \tau_{\alpha_t}}}(w_{t- \tau_{\alpha_t}}, \tilde Y_t) - \bar F_{\theta_{t-\tau_{\alpha_t}}}(w_{t- \tau_{\alpha_t}})\mid \substack{\theta_{t-\tau_{\alpha_t}} \\ w_{t-\tau_{\alpha_t}} \\ Y_{t-\tau_{\alpha_t}}} \right] }_s \right] \\
      \leq&\frac{1}{l_{cs}}\E \left[ \norm{\nabla M(w_{t-\tau_{\alpha_t}} - w^*_{\theta_{t-\tau_{\alpha_t}}})}_s^* \norm{\E\left[F_{\theta_{t - \tau_{\alpha_t}}}(w_{t- \tau_{\alpha_t}}, \tilde Y_t) - \bar F_{\theta_{t-\tau_{\alpha_t}}}(w_{t- \tau_{\alpha_t}})\mid \substack{\theta_{t-\tau_{\alpha_t}} \\ w_{t-\tau_{\alpha_t}} \\ Y_{t-\tau_{\alpha_t}}} \right] }_c \right]
  \end{align}
We now bound the inner expectation.
\begin{align}
  &\norm{\E\left[F_{\theta_{t - \tau_{\alpha_t}}}(w_{t- \tau_{\alpha_t}}, \tilde Y_t) - \bar F_{\theta_{t-\tau_{\alpha_t}}}(w_{t- \tau_{\alpha_t}}) \mid \substack{\theta_{t-\tau_{\alpha_t}} \\w_{t-\tau_{\alpha_t}} \\ Y_{t-\tau_{\alpha_t}}}\right]}_c \\
  =&\norm{\sum_y \left(\Pr(\tilde Y_t = y \mid \substack{\theta_{t-\tau_{\alpha_t}} \\w_{t-\tau_{\alpha_t}} \\ Y_{t-\tau_{\alpha_t}}}) - d_{\theta_{t-\tau_{\alpha_t}}}(y) \right) F_{\theta_{t - \tau_{\alpha_t}}}(w_{t- \tau_{\alpha_t}}, y) }_c \\
  \leq &\max_y \norm{ F_{\theta_{t - \tau_{\alpha_t}}}(w_{t- \tau_{\alpha_t}}, y) }_c \sum_y \left|\Pr(\tilde Y_t = y \mid \substack{\theta_{t-\tau_{\alpha_t}} \\w_{t-\tau_{\alpha_t}} \\ Y_{t-\tau_{\alpha_t}}}) - d_{\theta_{t-\tau_{\alpha_t}}}(y) \right| \\
  \label{eq bound of xk-tk}
  \leq &\max_y \norm{ F_{\theta_{t - \tau_{\alpha_t}}}(w_{t- \tau_{\alpha_t}}, y) }_c \alpha_t \qq{(Definition of $\tau_{\alpha_t}$)} \\
  \leq & \alpha_t (U_F + L_F \norm{w_{t-\tau_{\alpha_t}}}_c) \qq{(Lemma \ref{lem bound of fxy})} \\
  \leq & \alpha_t (U_F + L_F \norm{w_{t-\tau_{\alpha_t}} - w_t}_c + L_F \norm{w_t}_c) \\
  \leq & \alpha_t (B + A (\norm{w_t}_c + \frac{B}{A}) + A \norm{w_t}_c) \qq{(Lemma \ref{lem bound of xk diff})} \\
  \leq & \alpha_t (2B + (A + 1) \norm{w_t}_c) \\
  \leq & 2\alpha_t (B + A \norm{w_t}_c) \\
  \leq & 2\alpha_t (B + A \norm{w_t - w^*_{\theta_t}}_c + A \norm{w^*_{\theta_t}}_c).
\end{align}
Using the above inequality and \eqref{eq gradient bound dual norm} yields
\begin{align}
  &\E\left[T_{331}\right] \\
  \leq & \E \left[\frac{4L  \alpha_t(1 + L_wL_\theta \beta_{t-\tau_{\alpha_t}, t-1})}{A\xi l_{cs}^2}\left(AU_w + B + A\norm{w_t - w^*_{\theta_t}}_c + A\right)^2\right] \\
  \leq & \E \left[\frac{8L \alpha_t(1 + L_wL_\theta \beta_{t-\tau_{\alpha_t}, t-1})}{A\xi l_{cs}^2}\left(A^2 u_{cm}^2 \norm{w_t - w^*_{\theta_t}}_m^2 +C^2\right)\right],
\end{align}
which completes the proof.
\end{proof}

\subsection{Proof of Lemma~\ref{lem bound t332}}
\label{sec proof lem bound t332}
\lemboundtthreethreetwo*
\begin{proof}
  \begin{align}
      &\E\left[T_{332}\right] \\
      = &\E\left[\indot{\nabla M(w_{t-\tau_{\alpha_t}} - w^*_{\theta_{t-\tau_{\alpha_t}}})}{F_{\theta_{t - \tau_{\alpha_t}}}(w_{t- \tau_{\alpha_t}}, Y_t) -F_{\theta_{t - \tau_{\alpha_t}}}(w_{t- \tau_{\alpha_t}}, \tilde Y_t)}\right] \\
      \leq & \frac{1}{l_{cs}} \E\left[\norm{\nabla M(w_{t-\tau_{\alpha_t}} - w^*_{\theta_{t-\tau_{\alpha_t}}})}_s^* \norm{ \E \left[{F_{\theta_{t - \tau_{\alpha_t}}}(w_{t- \tau_{\alpha_t}}, Y_t) -F_{\theta_{t - \tau_{\alpha_t}}}(w_{t- \tau_{\alpha_t}}, \tilde Y_t)} \mid \substack{w_{t-\tau_{\alpha_t}} \\ \theta_{t-\tau_{\alpha_t}} \\ Y_{t-\tau_{\alpha_t}}} \right]}_s\right] \\
      \intertext{\hfill (Similar to \eqref{eq conditional independence t331})}
      \leq& \E \Bigg[\frac{2L(1 + L_wL_\theta \beta_{t-\tau_{\alpha_t}, t-1})}{\xi l_{cs}^2}\left(\norm{w^*_{\theta_t}}_c + \frac{B}{A} + \norm{w_t - w^*_{\theta_t}}_c + 1\right) \\
      &\times 2\ny L_PL_\theta \sum_{j=t-\tau_{\alpha_t}}^{t-1}\beta_{t-\tau_{\alpha_t}, j}(B + A \norm{w_t - w^*_{\theta_t}}_c + A \norm{w^*_{\theta_t}}_c) \Bigg] \\
      \intertext{\hfill (Using \eqref{eq gradient bound dual norm} and Lemma \ref{lem chain difference bound})}
      \leq & \frac{8 \ny L_P L_\theta \sum_{j=t-\tau_{\alpha_t}}^{t-1}\beta_{t-\tau_{\alpha_t}, j} L(1 + L_wL_\theta \beta_{t-\tau_{\alpha_t}, t-1})}{A \xi l_{cs}^2} \left(u_{cm}^2 A^2 \E\left[\norm{w_t - w^*_{\theta_t}}_m^2\right] + C^2\right),
  \end{align}
  which completes the proof.
\end{proof}

\subsection{Proof of Lemma~\ref{lem bound of t333}}
\label{sec proof lem bound t333}
\lemboundoftthreethreethree*
\begin{proof}
  \begin{align}
      T_{333} = &\indot{\nabla M(w_{t-\tau_{\alpha_t}} - w^*_{\theta_{t-\tau_{\alpha_t}}})}{F_{\theta_{t}}(w_{t- \tau_{\alpha_t}}, Y_t) -F_{\theta_{t - \tau_{\alpha_t}}}(w_{t- \tau_{\alpha_t}}, Y_t)}  \\
      \leq &\norm{\nabla M(w_{t-\tau_{\alpha_t}} - w^*_{\theta_{t-\tau_{\alpha_t}}})}_s^* \norm{F_{\theta_{t}}(w_{t- \tau_{\alpha_t}}, Y_t) -F_{\theta_{t - \tau_{\alpha_t}}}(w_{t- \tau_{\alpha_t}}, Y_t)}_s  \\
      \leq & \frac{2L(1 + L_wL_\theta \beta_{t-\tau_{\alpha_t}, t-1})}{\xi l_{cs}^2}\left(\norm{w^*_{\theta_t}}_c + \frac{B}{A} + \norm{w_t - w^*_{\theta_t}}_c + 1\right) \\
      &\times L_F' L_\theta \beta_{t-\tau_{\alpha_t}, t-1} \left(\norm{w_{t-\tau_{\alpha_t}}}_c + U_F' \right) \qq{(Using \eqref{eq gradient bound dual norm} and Assumption \ref{assu regularization})}.
  \end{align}
Since 
\begin{align}
  \label{eq tmp 3}
  &\norm{w_{t-\tau_{\alpha_t}}}_c \\
  \leq & \norm{w_{t-\tau_{\alpha_t}} - w_t}_c + \norm{w_t}_c \\
  \leq & 2\norm{w_t}_c + \frac{B}{A} \qq{(Lemma \ref{lem bound of xk diff})} \\
  \leq & 2\norm{w_t - w^*_{\theta_t}}_c + 2\norm{w^*_{\theta_t}}_c + \frac{B}{A},
\end{align}
we have
\begin{align}
  T_{333} \leq \frac{8LL_F' L_\theta \beta_{t-\tau_{\alpha_t}, t-1} (1 + L_wL_\theta \beta_{t-\tau_{\alpha_t}, t-1})}{A^2 \xi l_{cs}^2}\left(u_{cm}^2 A^2 \norm{w_t - w^*_{\theta_t}}_m^2 + (AU_x + A + B + AU_F')^2\right),
\end{align}
which completes the proof.
\end{proof}

\subsection{Proof of Lemma~\ref{lem bound t334}}
\label{sec proof lem bound t334}
\lemboundtthreethreefour*
\begin{proof}
  \begin{align}
      T_{334} = &\indot{\nabla M(w_{t-\tau_{\alpha_t}} - w^*_{\theta_{t-\tau_{\alpha_t}}})}{\bar F_{\theta_{t - \tau_{\alpha_t}}}(w_{t- \tau_{\alpha_t}}) - \bar F_{\theta_{t}}(w_{t- \tau_{\alpha_t}})}  \\
      \leq &\norm{\nabla M(w_{t-\tau_{\alpha_t}} - w^*_{\theta_{t-\tau_{\alpha_t}}})}_s^* \norm{\bar F_{\theta_{t}}(w_{t- \tau_{\alpha_t}}) - \bar F_{\theta_{t - \tau_{\alpha_t}}}(w_{t- \tau_{\alpha_t}})}_s  \\
      \leq & \frac{2L(1 + L_wL_\theta \beta_{t-\tau_{\alpha_t}, t-1})}{\xi l_{cs}^2}\left(\norm{w^*_{\theta_t}}_c + \frac{B}{A} + \norm{w_t - w^*_{\theta_t}}_c + 1\right) \\
      &\times L_F'' L_\theta \beta_{t-\tau_{\alpha_t}, t-1} \left(\norm{w_{t-\tau_{\alpha_t}}}_c + U_F''\right) \qq{(Using \eqref{eq gradient bound dual norm} and Assumption \ref{assu regularization})}.
  \end{align}
  Using \eqref{eq tmp 3} completes the proof.
\end{proof}

\subsection{Proof of Lemma~\ref{lem bound t4}}
\label{sec proof lem bound t4}
\lemboundtfour*
\begin{proof}
  \begin{align}
    &\E\left[T_4\right] \\
    =& \E\left[\indot{\nabla M(w_t - w^*_{\theta_t})}{\epsilon_t} \right] \\
    =& \E\left[\E\left[\indot{\nabla M(w_t - w^*_{\theta_t})}{\epsilon_t} \mid \fF_t \right] \right] \qq{(Tower law of expectation)} \\
    =& \E\left[\indot{\nabla M(w_t - w^*_{\theta_t})}{\E\left[\epsilon_t \mid \fF_t \right]} \right] \qq{(Conditional independence)} \\
    =&0\qq{(Assumption~\ref{assu mds})}
  \end{align}
\end{proof}

\subsection{Proof of Lemma~\ref{lem bound t5}}
\label{sec proof lem bound t5}
\lemboundtfive*
\begin{proof}
  \begin{align}
      T_5 =&\frac{L }{\xi}\norm{F_{\theta_t}(w_t, Y_t) - w_t + \epsilon_t}_s^2 \\
      \leq& \frac{L }{\xi l_{cs}^2}\norm{F_{\theta_t}(w_t, Y_t) - w_t + \epsilon_t}_c^2  \\
      \leq& \frac{L^2 }{\xi l_{cs}}\left(\norm{F_{\theta_t}(w_t, Y_t)}_c + \norm{w_t}_c + \norm{\epsilon_t}_c \right)^2  \\
      \leq& \frac{L }{\xi l_{cs}^2}\left(U_F + (L_F + 1) \norm{w_t}_c + U_\epsilon \norm{w_t}_c + U_\epsilon'\right)^2 \qq{(Lemma~\ref{lem bound of fxy} and Assumption~\ref{assu mds})} \\
      \leq& \frac{L }{\xi l_{cs}^2}\left(B + A \norm{w_t}_c\right)^2  \\
      \leq& \frac{L }{\xi l_{cs}^2}\left(B + A \norm{w_t - w^*_{\theta_t}}_c + A\norm{w^*_{\theta_t}}_c\right)^2  \\
      \leq& \frac{2L}{\xi l_{cs}^2}\left(A^2 u_{cm}^2 \norm{w_t - w^*_{\theta_t}}_m^2 + C^2\right)
  \end{align}
\end{proof}

\begin{lemma}
  \label{lem bound of fxy}
  For any time step $t$, almost surely,
  \begin{align}
      \norm{F_{\theta_t}(w, y)}_c \leq U_F + L_F \norm{w}_c
  \end{align}
\end{lemma}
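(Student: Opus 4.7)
The plan is to derive this bound as an immediate consequence of Assumption~\ref{assu regularization} parts (i) and (iii), using the triangle inequality to reduce the general case to the behavior at $w = 0$. The strategy is standard: write $F_{\theta_t}(w, y) = \left(F_{\theta_t}(w, y) - F_{\theta_t}(0, y)\right) + F_{\theta_t}(0, y)$ and bound each piece separately.

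First, I would apply the triangle inequality with respect to $\norm{\cdot}_c$ to split the norm into the Lipschitz-type increment from $0$ to $w$ plus the value at zero, yielding
\begin{align}
    \norm{F_{\theta_t}(w, y)}_c \leq \norm{F_{\theta_t}(w, y) - F_{\theta_t}(0, y)}_c + \norm{F_{\theta_t}(0, y)}_c.
\end{align}
Next, I would apply Assumption~\ref{assu regularization}(i) with $w' = 0$ to bound the first term by $L_F \norm{w}_c$, and Assumption~\ref{assu regularization}(iii) to bound the second term by $U_F$. Combining the two gives the claim. Since both assumptions hold almost surely at every time step $t$ (they are stated for the random sequence $\qty{\theta_t}$ used in the update), the resulting bound is likewise almost sure.

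There is no real obstacle here; the lemma is essentially a restatement of a well-known fact that a Lipschitz function is bounded at any point by its value at a reference point plus the Lipschitz constant times the distance. The only thing to verify is that the constants $U_F$ and $L_F$ appearing in the statement are precisely those from Assumption~\ref{assu regularization}, which they are by construction. Because the proof is two lines, I would present it inline rather than break it into subclaims.
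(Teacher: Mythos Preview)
Your proposal is correct and essentially identical to the paper's proof: both use the triangle (or reverse triangle) inequality together with Assumption~\ref{assu regularization}(i) at $w'=0$ and Assumption~\ref{assu regularization}(iii). There is nothing to add.
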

\begin{proof}
  Assumption \ref{assu regularization} implies that
  \begin{align}
      \norm{F_{\theta_t}(w, y)}_c - \norm{F_{\theta_t}(0, y)}_c  &\leq \norm{F_{\theta_t}(0, y) - F_{\theta_t}(w, y)}_c \\
      &\leq L_F \norm{w - 0}_c,
  \end{align}
  which completes the proof.
\end{proof}

\begin{lemma}
  \label{lem chain difference bound}
  \begin{align}
      &\norm{ \E \left[{F_{\theta_{t - \tau_{\alpha_t}}}(w_{t - \tau_{\alpha_t}}, Y_t) -F_{\theta_{t - \tau_{\alpha_t}}}(w_{t - \tau_{\alpha_t}}, \tilde Y_t)} \mid \substack{w_{t-\tau_{\alpha_t}} \\ \theta_{t-\tau_{\alpha_t}} \\ Y_{t-\tau_{\alpha_t}}} \right]} \\
      \leq &2\ny L_PL_\theta \sum_{j=t-\tau_{\alpha_t}}^{t-1}\beta_{t-\tau_{\alpha_t}, j}(B + A \norm{w_t - w^*_{\theta_t}}_c + A \norm{w^*_{\theta_t}}_c)
  \end{align}
\end{lemma}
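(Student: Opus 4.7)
The plan is to reduce the expectation inside the norm to an inner product between a signed measure (the difference of two conditional laws of $Y_t$ vs.\ $\tilde Y_t$) and the vector $F_{\theta_{t-\tau_{\alpha_t}}}(w_{t-\tau_{\alpha_t}},\cdot)$, and then to bound the total-variation-type mass of that signed measure by iterating the Lipschitz property of $P_\theta$ in $\theta$. Concretely, conditioned on $(w_{t-\tau_{\alpha_t}},\theta_{t-\tau_{\alpha_t}},Y_{t-\tau_{\alpha_t}})$, let $\pi_j$ and $\tilde\pi_j$ denote the conditional laws of $Y_j$ and $\tilde Y_j$ respectively, so that $\pi_{t-\tau_{\alpha_t}}=\tilde\pi_{t-\tau_{\alpha_t}}=\delta_{Y_{t-\tau_{\alpha_t}}}$, while $\pi_{j+1}=\pi_j P_{\theta_{j+1}}$ and $\tilde\pi_{j+1}=\tilde\pi_j P_{\theta_{t-\tau_{\alpha_t}}}$. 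The quantity to bound then equals $\|\sum_y(\pi_t(y)-\tilde\pi_t(y))F_{\theta_{t-\tau_{\alpha_t}}}(w_{t-\tau_{\alpha_t}},y)\|_c$.

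The first main step is to bound $\sum_y|\pi_t(y)-\tilde\pi_t(y)|$ by a telescoping argument. Using the triangle inequality,
\begin{align}
\|\pi_{j+1}-\tilde\pi_{j+1}\|_1 \le \|\pi_j(P_{\theta_{j+1}}-P_{\theta_{t-\tau_{\alpha_t}}})\|_1 + \|(\pi_j-\tilde\pi_j)P_{\theta_{t-\tau_{\alpha_t}}}\|_1.
\end{align}
The second term is at most $\|\pi_j-\tilde\pi_j\|_1$ by stochasticity of $P_{\theta_{t-\tau_{\alpha_t}}}$; the first is bounded using Assumption~\ref{assu regularization}(vii) by $\ny L_P\|\theta_{j+1}-\theta_{t-\tau_{\alpha_t}}\|_c$, which in turn, using Assumption~\ref{assu twotimescale}, is at most $\ny L_P L_\theta\beta_{t-\tau_{\alpha_t},j}$. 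Summing from $j=t-\tau_{\alpha_t}$ to $t-1$ and using $\pi_{t-\tau_{\alpha_t}}=\tilde\pi_{t-\tau_{\alpha_t}}$ gives
\begin{align}
\sum_y|\pi_t(y)-\tilde\pi_t(y)| \le \ny L_P L_\theta\sum_{j=t-\tau_{\alpha_t}}^{t-1}\beta_{t-\tau_{\alpha_t},j}.
\end{align}

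The second main step is to control the pointwise $c$-norm of $F_{\theta_{t-\tau_{\alpha_t}}}(w_{t-\tau_{\alpha_t}},y)$. Lemma~\ref{lem bound of fxy} gives $\|F_{\theta_{t-\tau_{\alpha_t}}}(w_{t-\tau_{\alpha_t}},y)\|_c\le U_F+L_F\|w_{t-\tau_{\alpha_t}}\|_c$, and Lemma~\ref{lem bound of xk diff}(iii) applied on $[t-\tau_{\alpha_t},t]$ (which is legitimate because Lemma~\ref{lem learning rates} lets us take $t_0$ large enough that $\alpha_{t-\tau_{\alpha_t},t-1}\le 1/(4A)$) yields $\|w_{t-\tau_{\alpha_t}}\|_c\le 2\|w_t\|_c+B/A$. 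Combining these with the inequalities $L_F<A$ and $U_F<B$ built into the definitions in \eqref{eq shorthand a and b} produces
\begin{align}
\|F_{\theta_{t-\tau_{\alpha_t}}}(w_{t-\tau_{\alpha_t}},y)\|_c \le 2\bigl(B+A\|w_t\|_c\bigr) \le 2\bigl(B+A\|w_t-w^*_{\theta_t}\|_c+A\|w^*_{\theta_t}\|_c\bigr).
\end{align}

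Finally, pulling the conditional expectation inside the sum and applying the triangle inequality together with the previous two bounds gives the claimed estimate. The only delicate point is the telescoping bound in the first step: one must be careful that the coupling aligns correctly at the splitting time $t-\tau_{\alpha_t}$ and that the Lipschitz bound on the transition kernels is applied uniformly over the state variable $y$, which follows from Assumption~\ref{assu regularization}(vii). The rest is essentially accounting.
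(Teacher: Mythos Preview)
Your proposal is correct and follows essentially the same two-step strategy as the paper: recursively bound the $\ell_1$-distance between the conditional laws of $Y_t$ and $\tilde Y_t$ via the Lipschitz property of $P_\theta$ and Assumption~\ref{assu twotimescale}, then control $\max_y\|F_{\theta_{t-\tau_{\alpha_t}}}(w_{t-\tau_{\alpha_t}},y)\|_c$ through Lemma~\ref{lem bound of fxy} and Lemma~\ref{lem bound of xk diff}. The only imprecision is the relation $\pi_{j+1}=\pi_j P_{\theta_{j+1}}$: conditioned on $(w_{t-\tau_{\alpha_t}},\theta_{t-\tau_{\alpha_t}},Y_{t-\tau_{\alpha_t}})$ the parameter $\theta_{j+1}$ is still random and correlated with $Y_j$, so one must integrate over their joint law (as the paper does explicitly); your telescoping bound nonetheless survives because the estimate $\|\theta_{j+1}-\theta_{t-\tau_{\alpha_t}}\|_c\le L_\theta\beta_{t-\tau_{\alpha_t},j}$ holds almost surely.
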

\begin{proof}
  In this proof, all $\Pr$ and $\E$ are implicitly conditioned on $w_{t-\tau_{\alpha_t}}, \theta_{t-\tau_{\alpha_t}}, Y_{t-\tau_{\alpha_t}}$. 
  We use $\Theta_t$ to denote the set of all possible $\theta_t$.
  \begin{align}
      &\Pr(Y_t = y') \\
      =& \sum_{y} \int_{\Theta_{t}} \Pr(Y_t = y' , Y_{t-1} = y, \theta_{t} = z) dz \\
      =& \sum_{y} \int_{\Theta_{t}} \Pr(Y_t = y' \mid Y_{t-1} = y, \theta_{t} = z) \Pr(Y_{t-1} = y, \theta_{t} = z) dz \\
      =& \sum_{y} \int_{\Theta_{t}} P_{z}(y, y') \Pr(Y_{t-1} = y) \Pr(\theta_{t} = z | Y_{t-1} = y) dz
  \end{align}
  \begin{align}
      &\Pr(\tilde Y_t = y') \\
      =& \sum_{y} \Pr(\tilde Y_{t-1} = y) P_{\theta_{t-\tau_{\alpha_t}}}(y, y') \\
      =& \sum_{y} \Pr(\tilde Y_{t-1} = y) P_{\theta_{t-\tau_{\alpha_t}}}(y, y') \int_{\Theta_{t}} \Pr(\theta_{t} = z | Y_{t-1} = y) dz \\
      =& \sum_{y} \int_{\Theta_{t}} \Pr(\tilde Y_{t-1} = y) P_{\theta_{t-\tau_{\alpha_t}}}(y, y')  \Pr(\theta_{t} = z | Y_{t-1} = y) dz \\
  \end{align}
  Consequently,
  \begin{align}
      &\sum_{y'} \left|\Pr(Y_t = y') - \Pr(\tilde Y_t = y') \right| \\
      \leq &\sum_{y, y'} \int_{\Theta_{t}} \left| \Pr(Y_{t-1} = y) P_z(y, y') - \Pr(\tilde Y_{t-1} = y) P_{\theta_{t-\tau_{\alpha_t}}}(y, y')\right| \Pr(\theta_{t} = z \mid Y_{t-1} = y) dz.
  \end{align}
Since for any $z \in \Theta_{t}$,
\begin{align}
  &\left| \Pr(Y_{t-1} = y) P_z(y, y') - \Pr(\tilde Y_{t-1} = y) P_{\theta_{t-\tau_{\alpha_t}}}(y, y')\right| \\
  \leq & \left| \Pr(Y_{t-1} = y) P_z(y, y') - \Pr(\tilde Y_{t-1} = y) P_{z}(y, y')\right| \\
  &+ \left| \Pr(\tilde Y_{t-1} = y) P_z(y, y') - \Pr(\tilde Y_{t-1} = y) P_{\theta_{t-\tau_{\alpha_t}}}(y, y')\right| \\
  \leq & \left| \Pr(Y_{t-1} = y) - \Pr(\tilde Y_{t-1} = y) \right| P_z(y, y') + L_PL_\theta \beta_{t-\tau_{\alpha_t}, t-1} \Pr(\tilde Y_{t-1} = y),
\end{align}
we have
\begin{align}
  &\sum_{y'} \left|\Pr(Y_t = y') - \Pr(\tilde Y_t = y') \right| \\
  \leq & \sum_y \left| \Pr(Y_{t-1} = y) - \Pr(\tilde Y_{t-1} = y) \right| + \ny L_PL_\theta \beta_{t-\tau_{\alpha_t}, t-1}.
\end{align}
Applying the above inequality recursively yields
\begin{align}
  \label{eq y difference two chains}
  \sum_{y'} \left|\Pr(Y_t = y') - \Pr(\tilde Y_t = y') \right| \leq \ny L_PL_\theta \sum_{j=t-\tau_{\alpha_t}}^{t-1} \beta_{t-\tau_{\alpha_t}, j}.
\end{align}
Consequently,
\begin{align}
  &\norm{ \E \left[{F_{\theta_{t - \tau_{\alpha_t}}}(w_{t - \tau_{\alpha_t}}, Y_t) -F_{\theta_{t - \tau_{\alpha_t}}}(w_{t - \tau_{\alpha_t}}, \tilde Y_t)} \right]}_c \\
  =& \norm{\sum_y \left(\Pr(Y_t = y) - \Pr(\tilde Y_t = y)\right) F_{\theta_{t-\tau_{\alpha_t}}}(w_{t-\tau_{\alpha_t}}, y)}_c \\
  \leq &\max_y \norm{F_{\theta_{t-\tau_{\alpha_t}}}(w_{t-\tau_{\alpha_t}}, y)}_c \ny L_PL_\theta \sum_{j=t-\tau_{\alpha_t}}^{t-1}\beta_{t-\tau_{\alpha_t}, j} \\
  \leq & 2\ny L_PL_\theta \sum_{j=t-\tau_{\alpha_t}}^{t-1}\beta_{t-\tau_{\alpha_t}, j}(B + A \norm{w_t - w^*_{\theta_t}}_c + A \norm{w^*_{\theta_t}}_c) \qq{(Using \eqref{eq bound of xk-tk}),}
\end{align}
which completes the proof.
\end{proof}

\subsection{Proof of Lemma~\ref{lem learning rates}}
\label{sec proof lem learning rates}
\lemlearningrates*
\begin{proof}
  By the definition of $\tau_{\alpha_t}$ in \eqref{eq definition of tau alpha t},
  it is easy to see
  \begin{align}
    \tau_{\alpha_t} = \ceil*{\frac{\log \alpha_t - \log C_0}{\log \tau}} = \fO(\log (t+t_0)),
  \end{align}
  where $\ceil{\cdot}$ is the ceiling function.
  Consequently,
  \begin{align}
    \alpha_{t-{\tau_{\alpha_t}}, t-1} \leq \tau_{\alpha_t} \alpha_{t-\tau_{\alpha_t}} = \fO\left(\frac{\log (t+t_0)}{(t+t_0 - \tau_{\alpha_t})^{\epsilon_\alpha}}\right) = \fO\left(\frac{\log (t+t_0)}{(t+t_0)^{\epsilon_\alpha}}\right),
  \end{align}
  implying
  \begin{align}
    \frac{\alpha_t \alpha_{t-{\tau_{\alpha_t}}, t-1}}{\beta_t} = \fO\left(\frac{\log (t+t_0)}{(t+t_0)^{2\epsilon_\alpha - \epsilon_\beta}}\right).
  \end{align}
  Assumption~\ref{assu twotimescale} ensures $\beta_t < \alpha_t$ holds for all $t$.
  Consequently,
  \begin{align}
    \beta_{t-{\tau_{\alpha_t}}, t-1} < \alpha_{t-{\tau_{\alpha_t}}, t-1} = \fO\left(\frac{\log (t+t_0)}{(t+t_0)^{\epsilon_\alpha}}\right),
  \end{align}
  which completes the proof.
\end{proof}

\subsection{Proof of Lemma~\ref{lem sa error bound 1}}
\label{sec proof lem sa error bound 1}
\lemsaerrorboundone*
\begin{proof}
  According to \eqref{eq sa iterates},
  we have
  \begin{align}
    \norm{w_{t+1}}_c \leq& \norm{w_t}_c + \alpha_t \left( \norm{F_{\theta_t}(w_t, Y_t)}_c + \norm{w_t}_c + \norm{\epsilon_t}_c \right) \\
    \leq& \norm{w_t}_c + \alpha_t \left(U_F + L_F \norm{w_t}_c + \norm{w_t}_c + U_\epsilon \norm{w_t}_c + U_\epsilon' \right) \\
    \intertext{\hfill(Lemma~\ref{lem bound of fxy} and Assumption~\ref{assu mds}).}
  \end{align}
  Consequently,
  it is easy to see that there exists a constant $C_{t_0, w_0}$ such that
  for all $t \leq t_0$,
  \begin{align}
    \E\left[\norm{w_{t} - w^*_{\theta_{t}}}_m^2\right] \leq C_{t_0, w_0}.
  \end{align}
\end{proof}

\subsection{Proof of Lemma~\ref{lem bound of m11}}
\label{sec proof lem bound of m11}
\lemboundofmoneone*
\begin{proof}
  \begin{align}
    &M_{11} \\
    =& \sum_{s, a} \left(d_{\mu_{\theta_t}}(s) \pi_{\theta_t}(a|s) \nabla \log \pi_{\theta_t}(a|s) q_{\pi_{\theta_t}}(s, a) +  \frac{\lambda_t}{\na} d_{\mu_{\theta_t}}(s) \nabla \log \pi_{\theta_t}(a|s) \right)^\top \nabla J_{\lambda_t}(\theta_t) \\
    =& \sum_{s', a'} \sum_{s, a} \left(d_{\mu_{\theta_t}}(s) \dv{\pi_{\theta_t}(a | s)}{\theta_{s',a'}} q_{\pi_{\theta_t}}(s, a) + \frac{\lambda_t}{\na} d_{\mu_{\theta_t}}(s) \dv{\log \pi_{\theta_t} (a|s)}{\theta_{s',a'}}   \right)  \dv{J_{\lambda_t}(\theta_t)}{\theta_{s',a'}} \\
    =& \sum_{s', a'} \left( d_{\mu_{\theta_t}}(s') \pi_{\theta_t}(a'|s') \adv_{\pi_{\theta_t}}(s', a') + \frac{\lambda_t}{\na} d_{\mu_{\theta_t}}(s')  (1 -  \na\pi_{\theta_t}(a'|s')) \right) \dv{J_{\lambda_t}(\theta_t)}{\theta_{s',a'}} \\
    \intertext{\hfill (Lemma~\ref{lem softmax policy gradient})}
    =& \sum_{s, a} \left( d_{\mu_{\theta_t}}(s) \pi_{\theta_t}(a|s) \adv_{\pi_{\theta_t}}(s, a) + \lambda_t d_{\mu_{\theta_t}}(s) (\frac{1}{\na} - \pi_{\theta_t}(a|s)) \right) \\
    &\times \left(\underbrace{\frac{1}{1 - \gamma} d_{\pi_{\theta_t}, \gamma}(s) \pi_{\theta_t}(a|s) \adv_{\pi_{\theta_t}}(s, a)}_{M_{111}} + \underbrace{\frac{\lambda_t}{\ns} (\frac{1}{\na} - \pi_{\theta_t}(a | s))}_{M_{112}}\right) \\
    \intertext{\hfill (Lemma~\ref{lem softmax policy gradient})}
    =& \sum_{s, a} \left( \frac{d_{\mu_{\theta_t}}(s)(1-\gamma)}{d_{\pi_{\theta_t}, \gamma}(s)}M_{111} + d_{\mu_{\theta_t}}(s) \ns M_{112} \right)(M_{111} + M_{112}) \\
    =& \sum_{s, a} \frac{d_{\mu_{\theta_t}}(s)(1-\gamma)}{d_{\pi_{\theta_t}, \gamma}(s)}M_{111}^2 + d_{\mu_{\theta_t}}(s) \ns M_{112}^2 + \left(\frac{d_{\mu_{\theta_t}}(s)(1-\gamma)}{d_{\pi_{\theta_t}, \gamma}(s)} + d_{\mu_{\theta_t}}(s) \ns\right) M_{111}M_{112} \\
    \geq& \sum_{s, a} \underbrace{\chi_{11} M_{111}^2 +  \chi_{12} M_{112}^2 + \left(\frac{d_{\mu_{\theta_t}}(s)(1-\gamma)}{d_{\pi_{\theta_t}, \gamma}(s)} + d_{\mu_{\theta_t}}(s) \ns\right) M_{111}M_{112}}_{M_{113}} \\
\end{align}
where 
\begin{align}
    \chi_{11} &\doteq \inf_{\theta, s} \frac{d_{\mu_{\theta}}(s) (1 - \gamma)}{d_{\pi_{\theta}, \gamma}(s)}, \\
    \chi_{12} &\doteq \inf_{\theta, s} {d_{\mu_{\theta}}(s) \ns}.
\end{align}
Assumption~\ref{assu mu uniform ergodicity},
the continuity of $d_{\mu_\theta}$ w.r.t. $\theta$ (Lemma~\ref{lem continuity of ergodic distribution}),
and the extreme value theorem ensures that
\begin{align}
  \chi_{11} > 0, \, \chi_{12} > 0.
\end{align}
If $M_{111}M_{112} < 0$,
then 
\begin{align}
  M_{113} \geq \chi_{11} M_{111}^2 +  \chi_{12} M_{112}^2 \geq \frac{\min\qty{\chi_{11}, \chi_{12}}}{2}(M_{111}+M_{112})^2.
\end{align}
If $M_{111}M_{112} \geq 0$,
then
\begin{align}
  M_{113} \geq \chi_{11} M_{111}^2 +  \chi_{12} M_{112}^2 + (\chi_{11} + \chi_{12})M_{111}M_{112} \geq \min\qty{\chi_{11}, \chi_{12}} (M_{111}+M_{112})^2
\end{align}
Let 
\begin{align}
  \chi_1 \doteq \frac{\min\qty{\chi_{11}, \chi_{12}}}{2} > 0,
\end{align}
then we always have
\begin{align}
  M_{113} \geq \chi_1 (M_{111}+M_{112})^2,
\end{align}
implying
\begin{align}
  M_{11} &\geq \chi_1 \sum_{s,a} (M_{111} + M_{112})^2 \\
  &= \chi_1 \norm{\nabla J_{\lambda_t}(\theta_t)}^2 \qq{(Lemma~\ref{lem softmax policy gradient})},
\end{align}
which completes the proof.
\end{proof}

\subsection{Proof of Lemma~\ref{lem bound of m121}}
\label{sec proof lem bound of m121}
\lemboundofmonetwoone*
\begin{proof}
  We first study the Lipschitz continuity of $\Lambda(\theta, y, \eta)$ defined in \eqref{eq actor helper function}.
  As shown in the verification of Assumption~\ref{assu regularization} (v) in Section~\ref{sec proof prop critic convergence},
  $q_{\pi_\theta}$ is Lipschitz continuous in $\theta$ and bounded.
  According to Lemma~\ref{lem softmax policy gradient},
  it is easy to see $\nabla \log \pi_\theta(a|s)$ is also Lipschitz continuous in $\theta$ and bounded.
  Assumption~\ref{assu mu uniform ergodicity} ensures that $\inf_{\theta, a, s} \mu_\theta(a|s) > 0$,
  hence it is easy to see $\frac{\pi_\theta(a|s)}{\mu_\theta(a|s)}$ is also Lipschitz continuous and bounded from above.
  We, therefore,
  conclude via Lemma~\ref{lem product of lipschitz functions} that there exist continuous functions $L_\Lambda(\eta)$ and $U_{\Lambda}(\eta)$ such that
  for any $y$,
  \begin{align}
    \norm{\Lambda(\theta, y, \eta) - \Lambda(\theta', y, \eta)} &\leq L_{\Lambda}(\eta) \norm{\theta -\theta'}, \\
    \sup_{\theta} \norm{\Lambda(\theta, y, \eta)} &\leq U_{\Lambda}(\eta).
  \end{align}

  We now study the Lipschitz continuity of $\bar \Lambda(\theta, \eta)$ defined in \eqref{eq actor helper function}.
  Lemma~\ref{lem continuity of ergodic distribution} confirms the Lipschitz continuity of $d_{\mu_\theta}$.
  Consequently,
  Lemma~\ref{lem product of lipschitz functions} implies that 
  there exist continuous functions $L_{\bar \Lambda}(\eta)$ and $U_{\bar \Lambda}(\eta)$ such that
  \begin{align}
    \norm{\bar \Lambda(\theta, \eta) - \bar \Lambda(\theta', \eta)} &\leq L_{\bar \Lambda}(\eta) \norm{\theta -\theta'}, \\
    \sup_{\theta} \norm{\bar \Lambda(\theta, y, \eta)} &\leq U_{\bar \Lambda}(\eta).
  \end{align}

  We now study the Lipschitz continuity of $\Lambda'(\theta, y, \eta)$ defined in \eqref{eq actor helper function 2}.
  Since $J_\eta(\theta)$ is $L_J + \eta L_\text{KL}$ smooth,
  Lemma~\ref{lem smooth definition} implies that
  $L_J + \eta L_\text{KL}$ is a Lipschitz constant of $\nabla J_\eta(\theta)$.
  From Lemma~\ref{lem softmax policy gradient},
  it is easy to see the upper bound of $\nabla J_\eta(\theta)$ is also a continuous function of $\eta$.
  Consequently,
  Lemma~\ref{lem product of lipschitz functions} implies
  there exist continuous functions $L_{\Lambda'}(\eta)$ and $U_{\Lambda'}(\eta)$ such that
  for all $y$,
  \begin{align}
    \label{eq lambda prime constants}
    \norm{\Lambda'(\theta, y, \eta) - \Lambda'(\theta', y, \eta)} &\leq L_{\Lambda'}(\eta) \norm{\theta -\theta'}, \\
    \sup_\theta \norm{\Lambda'(\theta, y, \eta)} &\leq U_{\Lambda'}(\eta).
  \end{align}
  Hence
  \begin{align}
    \norm{M_{121}} &= \norm{\Lambda'(\theta_t, Y_t, \lambda_t) - \Lambda'(\theta_{t-\tau_{\beta_t}}, Y_t, \lambda_t)} \\
    &\leq L_{\Lambda'}(\lambda_t) \norm{\theta_t - \theta_{t-\tau_{\beta_t}}} \\
    &\leq \frac{L_{\Lambda'}(\lambda_t) L_\theta}{l_{2, p}} \beta_{t-\tau_{\beta_t}, t-1} \qq{(Using \eqref{eq ltheta})}.
  \end{align}
  Since $\lambda_t \in [0, \lambda]$, 
  $L_{\Lambda'}(\eta)$ is a continuous function and well defined in $[0, \lambda]$,
  the extreme value theorem asserts that $L_{\Lambda'}(\eta)$ obtains its maximum in $[0, \lambda]$,
  say, e.g., $L_{\Lambda'}^*$.
  Then 
  \begin{align}
    \norm{M_{121}} \leq \frac{L_{\Lambda'}^* L_\theta}{l_{2, p}} \beta_{t-\tau_{\beta_t}, t-1}.
  \end{align}
\end{proof}

\subsection{Proof of Lemma~\ref{lem bound of m122}}
\label{sec proof lem bound of m122}
\lemboundofmonetwotwo*
\begin{proof}
    \begin{align}
      &\norm{\E\left[M_{122}\right]} \\
      =&\norm{\E\left[\E\left[M_{122} \mid \theta_{t-\tau_{\beta_t}}, Y_{t-\tau_{\beta_t}}\right]\right]} \\
      \leq &\E\left[\norm{\E\left[M_{122} \mid \theta_{t-\tau_{\beta_t}}, Y_{t-\tau_{\beta_t}}\right]}\right].
    \end{align}
    We now bound the inner expectation.
    In the rest of the proof,
    all $\Pr$ and $\E$ are implicitly conditioned on $\theta_{t-\tau_{\beta_t}}$ and $Y_{t-\tau_{\beta_t}}$. 
    \begin{align}
        &\norm{\E\left[M_{122}\right]} \\
        =&\norm{\E\left[\Lambda'(\theta_{t-\tau_{\beta_t}}, Y_t, \lambda_t) - \Lambda'(\theta_{t-\tau_{\beta_t}}, \tilde Y_t, \lambda_t)\right]} \\
        = &\norm{\sum_{y} \left(\Pr(\tilde Y_t = y) - \Pr(Y_t = y)\right) \Lambda'(\theta_{t-\tau_{\beta_t}}, y, \lambda_t)} \\
        \leq &\max_y \norm{ \Lambda'(\theta_{t-\tau_{\beta_t}}, y, \lambda_t)}\sum_{y} \left|\Pr(\tilde Y_t = y) - \Pr(Y_t = y)\right| \\
        \leq &U_{\Lambda'}(\lambda_t)\sum_{y} \left|\Pr(\tilde Y_t = y) - \Pr(Y_t = y)\right| \qq{(Using \eqref{eq lambda prime constants})} \\
        \leq &U_{\Lambda'}(\lambda_t) \ns \na L_\mu L_\theta \sum_{j=t-\tau_{\beta_t}}^{t-1} \beta_{t-\tau_{\beta_t}, j} \qq{(Similar to \eqref{eq y difference two chains} with $L_\theta$ defined in \eqref{eq ltheta})}.
    \end{align}
    Since $\lambda_t \in [0, \lambda]$ and the continuous function $U_{\Lambda'}(\eta)$ obtains its maximum, say, e.g., $U_{\Lambda'}^*$, in the compact set $[0, \lambda]$,
    we have
    \begin{align}
      \norm{\E\left[M_{122}\right]} \leq U_{\Lambda'}^* \ns \na L_\mu L_\theta \sum_{j=t-\tau_{\beta_t}}^{t-1} \beta_{t-\tau_{\beta_t}, j},
    \end{align}
    which completes the proof.
\end{proof}

\subsection{Proof of Lemma~\ref{lem bound of m123}}
\label{sec proof lem bound of m123}
\lemboundofmonetwothree*
\begin{proof}
  \begin{align}
    &\norm{\E\left[M_{123}\right]} \\
    =&\norm{\E\left[\Lambda'(\theta_{t-\tau_{\beta_t}}, \tilde Y_t, \lambda_t)\right]} \\
    =&\norm{\E\left[\E\left[\Lambda'(\theta_{t-\tau_{\beta_t}}, \tilde Y_t, \lambda_t)\mid \theta_{t-\tau_{\beta_t}}, Y_{t-\tau_{\beta_t}}\right]\right]} \\
    \leq &\E\left[\norm{\E\left[\Lambda'(\theta_{t-\tau_{\beta_t}}, \tilde Y_t, \lambda_t)\mid \theta_{t-\tau_{\beta_t}}, Y_{t-\tau_{\beta_t}}\right]}\right].
  \end{align}
  We now bound the inner expectation.
  In the rest of the proof, 
  all $\Pr$ and $\E$ are implicitly conditioned on $\theta_{t-\tau_{\beta_t}}$ and $Y_{t-\tau_{\beta_t}}$. 
  Since $\tilde Y_t = (\tilde S_t, \tilde A_t)$ and 
  \begin{align}
    \sum_{s, a} d_{\mu_{\theta_{t-\tau_{\beta_t}}}}(s) \mu_{\theta_{t-\tau_{\beta_t}}}(a|s) \Lambda'(\theta_{t-\tau_{\beta_t}}, (s, a), \lambda_t) = 0,
  \end{align}
  we have
  \begin{align}
    &\norm{\E\left[\Lambda'(\theta_{t-\tau_{\beta_t}}, \tilde Y_t, \lambda_t)\right]} \\
    =& \norm{\sum_{s, a} \left(\Pr(\tilde S_t = s, \tilde A_t = a) - d_{\mu_{\theta_{t-\tau_{\beta_t}}}}(s) \mu_{\theta_{t-\tau_{\beta_t}}}(a|s)\right) \Lambda'(\theta_{t-\tau_{\beta_t}}, (s, a), \lambda_t)} \\
    \leq& \sup_{s, a, \theta} \norm{\Lambda'(\theta, (s, a), \lambda_t)} \sum_{s, a} \abs{\Pr(\tilde S_t = s, \tilde A_t = a) - d_{\mu_{\theta_{t-\tau_{\beta_t}}}}(s) \mu_{\theta_{t-\tau_{\beta_t}}}(a|s)} \\
    \leq& U_{\Lambda'}^* \beta_t \qq{(Using \eqref{eq definition of beta t})},
  \end{align}
  which completes the proof.
\end{proof}

\subsection{Proof of Lemma~\ref{lem bound of m13}}
\label{sec proof lem bound of m13}
\lemboundofmonethree*
\begin{proof}
  \begin{align}
    &\norm{\E\left[M_{13}\right]} \\
    =&\norm{\E\left[\indot{\nabla J_{\lambda_t}(\theta_t)}{ \rho_t  \nabla \log \pi_{\theta_t}(A_t | S_t) \left(q_t(S_t, A_t) - q_{\pi_{\theta_t}}(S_t, A_t) \right) }\right]} \\
    \leq & \sum_{s, a} \E\left[\abs{\dv{J_{\lambda_t}(\theta_t)}{\theta_{s, a}}\rho_t  \dv{\log \pi_{\theta_t}(A_t | S_t)}{\theta_{s, a}}  \left(q_t(S_t, A_t) - q_{\pi_{\theta_t}}(S_t, A_t) \right)}\right] \\
    \leq& \sum_{s, a} \sqrt{{\E\left[\left(\dv{J_{\lambda_t}(\theta_t)}{\theta_{s, a}}\right)^2\right]}{\E\left[\left(\rho_t  \dv{\log \pi_{\theta_t}(A_t | S_t)}{\theta_{s, a}}\right)^2  \left(q_t(S_t, A_t) - q_{\pi_{\theta_t}}(S_t, A_t) \right)^2 \right]}},
    \intertext{\hfill (Cauchy-Schwarz inequality)}
  \end{align}
  Lemma~\ref{lem softmax policy gradient} implies that
  \begin{align}
    \abs{\dv{\log \pi_\theta(a|s)}{\theta_{s', a'}}} < 2.
  \end{align}
  Assumption~\ref{assu mu uniform ergodicity} implies that
  \begin{align}
    \rho_{max} \doteq \sup_{\theta, s, a} \frac{\pi_{\theta}(a|s)}{\mu_\theta(a|s)} < \infty.
  \end{align}
  Hence
  \begin{align}
    &\norm{\E\left[M_{13}\right]} \\
    \leq& 2\rho_{max} \sum_{s, a} \sqrt{{\E\left[\left(\dv{J_{\lambda_t}(\theta_t)}{\theta_{s, a}}\right)^2\right]}{\E\left[\left(q_t(S_t, A_t) - q_{\pi_{\theta_t}}(S_t, A_t) \right)^2 \right]}} \\
    \leq& 2\rho_{max} \sum_{s, a} \sqrt{{\E\left[\left(\dv{J_{\lambda_t}(\theta_t)}{\theta_{s, a}}\right)^2\right]}{\E\left[\norm{q_t - q_{\pi_{\theta_t}}}_\infty^2 \right]}} \\
    \leq& 2\rho_{max} \sqrt{\E\left[\norm{q_t - q_{\pi_{\theta_t}}}_\infty^2 \right]} \sum_{s, a} \left( \sqrt{{\E\left[\left(\dv{J_{\lambda_t}(\theta_t)}{\theta_{s, a}}\right)^2\right]}{}} \times 1 \right) \\
    \leq& 2\rho_{max} \sqrt{\nsa}\sqrt{\E\left[\norm{q_t - q_{\pi_{\theta_t}}}_\infty^2 \right]} \sqrt{\E\left[\norm{\nabla J_{\lambda_t}(\theta_t)}^2\right]},
    \intertext{\hfill (Cauchy-Schwarz inequality)}
  \end{align}
  which completes the proof.
\end{proof}

\subsection{Proof of Lemma~\ref{lem bound j lambda}}
\label{sec proof lem bound j lambda}
\lemboundjlambda*
\begin{proof}
  Lemma~\ref{lem softmax policy gradient} implies that 
  \begin{align}
    \abs{\dv{J_{\lambda_t}(\theta_t)}{\theta_{s,a}}} \leq \abs{\frac{1}{1 - \gamma} d_{\pi_\theta, \gamma, p_0}(s) \pi_\theta(a|s) {\adv_{\pi_\theta}(s, a)}} + \frac{\lambda_t}{\ns} \abs{\pi_\theta(a | s) - \frac{1}{\na}}.
  \end{align}
  Since $\lambda_t < \lambda$,
  we conclude that
  there exists a constant $\chi_6$ (depending on $\lambda$) such that $\forall t, \theta$
  \begin{align}
    \norm{\nabla J_{\lambda_t}(\theta)}^2 \leq \chi_6.
  \end{align}
  Then \eqref{eq actor perf recursive bound} and Proposition \ref{prop critic convergence} imply that there exists some constant $\chi_7 > 0$ such that
\begin{align}
  \E\left[J_{\lambda_t}(\theta_{t+1})\right] \geq& \E\left[J_{\lambda_t}(\theta_t)\right] + \beta_t \chi_{11} \E\left[\norm{\nabla J_{\lambda_t}(\theta_t)}^2\right] \\
  &-\underbrace{\left(\beta_t \chi_{12} \frac{\log^2(t+t_0)}{(t+t_0)^{\epsilon_\beta}} +\beta_t \chi_{7} t^{-\frac{\epsilon_q}{2}} \sqrt{\chi_6} + \beta_t \chi_2 \frac{1}{(t+t_0)^{\epsilon_\beta}}\right)}_{z_t}.
\end{align}
Hence 
\begin{align}
  &\E\left[J_{\lambda_{t+1}}(\theta_{t+1})\right] \\
  \geq& \E\left[J_{\lambda_t}(\theta_t)\right] + \beta_t \chi_{11} \E\left[\norm{\nabla J_{\lambda_t}(\theta_t)}^2\right] + \E\left[J_{\lambda_{t+1}}(\theta_{t+1})\right] - \E\left[J_{\lambda_t}(\theta_{t+1})\right] - z_t \\
  = & \E\left[J_{\lambda_t}(\theta_t)\right] + \beta_t \chi_{11} \E\left[\norm{\nabla J_{\lambda_t}(\theta_t)}^2\right] + (\lambda_t - \lambda_{t+1}) \E_{s\sim\fU_\fS}\left[\kl{\fU_\fA}{\pi_{\theta_{t+1}}(\cdot|s)}\right] - z_t \\
  \intertext{\hfill (Using \eqref{eq definition of new objective})}
  \geq & \E\left[J_{\lambda_t}(\theta_t)\right] - z_t \qq{(Using $\lambda_t > \lambda_{t+1}$)}.
\end{align}
Telescoping the above inequality yields
\begin{align}
  \E\left[J_{\lambda_t}(\theta_t)\right] \geq \E\left[J_{\lambda_0}(\theta_0)\right] - \sum_{k=0}^t z_k \geq \E\left[J_{\lambda_0}(\theta_0)\right] - \sum_{k=0}^\infty z_k. 
\end{align}
Since $\epsilon_\beta > 0.5$,
we have
\begin{align}
  \sum_{t=0}^\infty \beta_t \frac{\log^2(t+t_0)}{(t+t_0)^{\epsilon_\beta}} = \sum_{t=0}^\infty \frac{\beta \log^2(t+t_0)}{(t+t_0)^{2\epsilon_\beta}} &< \infty, \\
  \sum_{t=0}^\infty \beta_t \frac{1}{(t+t_0)^{\epsilon_\beta}} = \sum_{t=0}^\infty \frac{\beta }{(t+t_0)^{2\epsilon_\beta}} &< \infty.
\end{align}
Since $\epsilon_q > 2(1 - \epsilon_\beta)$,
we have
\begin{align}
  \sum_{t=0}^\infty \beta_t t^{-\frac{\epsilon_q}{2}} < \sum_{t=0} \frac{\beta}{t^{\epsilon_\beta + \frac{\epsilon_q}{2}}} < \infty.
\end{align}
We, therefore, conclude that
\begin{align}
  \sum_{t=0}^{\infty} z_t < \infty, 
\end{align}
implying
$\E\left[J_{\lambda_{t+1}}(\theta_{t+1})\right]$ is bounded from the below by some constant.
By \eqref{eq definition of new objective},
\begin{align}
  \E\left[J_{\lambda_{t+1}}(\theta_{t+1})\right] \leq \frac{r_{max}}{1 - \gamma},
\end{align}
we, therefore, conclude that $\abs{\E\left[J_{\lambda_{t+1}}(\theta_{t+1})\right]}$ is bounded by some constant.
Similarly, we have
\begin{align}
  &\E\left[J_{\lambda_{t}}(\theta_{t+1})\right] \\
  \geq& \E\left[J_{\lambda_{t-1}}(\theta_t)\right] + \beta_t \chi_{11} \E\left[\norm{\nabla J_{\lambda_t}(\theta_t)}^2\right] + \E\left[J_{\lambda_{t}}(\theta_{t})\right] - \E\left[J_{\lambda_{t-1}}(\theta_{t})\right] - z_t \\
  = & \E\left[J_{\lambda_{t-1}}(\theta_t)\right] + \beta_t \chi_{11} \E\left[\norm{\nabla J_{\lambda_t}(\theta_t)}^2\right] + (\lambda_{t-1} - \lambda_{t}) \E_{s\sim\fU_\fS}\left[\kl{\fU_\fA}{\pi_{\theta_t}(\cdot|s)}\right] - z_t \\
  \geq & \E\left[J_{\lambda_{t-1}}(\theta_t)\right] -  z_t \qq{(Using $\lambda_{t-1} > \lambda_{t}$)}.
\end{align}
Hence
$\abs{\E\left[J_{\lambda_{t}}(\theta_{t+1})\right]}$ is also bounded,
which completes the proof.
\end{proof}

\subsection{Proof of Lemma~\ref{lem bound j lambda diff}}
\label{sec proof lem bound j lambda diff}
\lemboundjlambdadiff*
\begin{proof}
  \begin{align}
    &\E\left[\sum_{k=\ceil{\frac{t}{2}}}^t \left(\frac{1}{\beta_k}J_{\lambda_k}(\theta_{k+1}) - \frac{1}{\beta_k}J_{\lambda_k}(\theta_k) \right) \right]\\
    =& \E\left[\sum_{k=\ceil{\frac{t}{2}}}^t\left(\frac{1}{\beta_{k-1}}J_{\lambda_{k-1}}(\theta_k) - \frac{1}{\beta_k}J_{\lambda_k}(\theta_k) \right) + \frac{1}{\beta_t} J_{\lambda_t}(\theta_{t+1}) - \frac{1}{\beta_{\ceil{\frac{t}{2}}-1}} J_{\lambda_{\ceil{\frac{t}{2}}-1}}(\theta_{\ceil{\frac{t}{2}}}) \right] \\
    =&\E\left[ \sum_{k=\ceil{\frac{t}{2}}}^t\left(\frac{1}{\beta_{k-1}}J_{\lambda_{k-1}}(\theta_{k}) - \frac{1}{\beta_{k-1}} J_{\lambda_k}(\theta_k) + \frac{1}{\beta_{k-1}} J_{\lambda_k}(\theta_k) - \frac{1}{\beta_k}J_{\lambda_k}(\theta_k) \right) \right] \\
    &+ \E\left[\frac{1}{\beta_t} J_{\lambda_t}(\theta_{t+1}) - \frac{1}{\beta_{\ceil{\frac{t}{2}} - 1}} J_{\lambda_{\ceil{\frac{t}{2}}- 1}}(\theta_{\ceil{\frac{t}{2}}}) \right]\\
    =& \E\left[\sum_{k=\ceil{\frac{t}{2}}}^t\left(\frac{1}{\beta_{k-1}}(\lambda_k - \lambda_{k-1}) \E_{s\sim \fU_\fS}\left[\kl{\fU_\fA}{\pi_{\theta_k}(\cdot |s)}\right] + \left(\frac{1}{\beta_{k-1}} - \frac{1}{\beta_k}\right) J_{\lambda_k}(\theta_k) \right) \right]\\
    &+ \E\left[\frac{1}{\beta_t} J_{\lambda_t}(\theta_{t+1}) - \frac{1}{\beta_{\ceil{\frac{t}{2}}-1}} J_{\lambda_{\ceil{\frac{t}{2}} - 1}}(\theta_{\ceil{\frac{t}{2}}}) \right]\\
    \leq &\E\left[ \sum_{k=\ceil{\frac{t}{2}}}^t\left(\frac{1}{\beta_{k-1}} - \frac{1}{\beta_k}\right) J_{\lambda_k}(\theta_k)  + \frac{1}{\beta_t} J_{\lambda_t}(\theta_{t+1}) - \frac{1}{\beta_{\ceil{\frac{t}{2}}-1}} J_{\lambda_{\ceil{\frac{t}{2}} - 1}}(\theta_{\ceil{\frac{t}{2}}}) \right] \qq{(Using $\lambda_k < \lambda_{k-1}$)}, \\
    \leq & \sum_{k=\ceil{\frac{t}{2}}}^t\left(\frac{1}{\beta_k} - \frac{1}{\beta_{k-1}} \right) U_{J, \lambda}  + \frac{1}{\beta_t} U_{J, \lambda} + \frac{1}{\beta_{\ceil{\frac{t}{2}}-1}} U_{J, \lambda} \qq{\hfill (Using $\beta_{k-1} > \beta_k$ and Lemma~\ref{lem bound j lambda})} \\
    = & \frac{U_{J, \lambda}}{\beta_t} - \frac{U_{J, \lambda}}{\beta_{\ceil{\frac{t}{2}}-1}} + \frac{1}{\beta_t} U_{J, \lambda} + \frac{1}{\beta_{\ceil{\frac{t}{2}}-1}} U_{J, \lambda}  \\
    = & \frac{2U_{J, \lambda}}{\beta} (t+t_0)^{\epsilon_\beta}
\end{align}
\end{proof}

\subsection{Proof of Lemma~\ref{lem bound of m11 sac}}
\label{sec proof lem bound of m11 sac}
\lemboundofmoneonesac*
\begin{proof}
  \begin{align}
    &\tilde M_{11} \\
    =& \sum_{s, a} \left(d_{\mu_{\theta_t}}(s) \pi_{\theta_t}(a|s) \nabla \log \pi_{\theta_t}(a|s) \left(q_{\pi_{\theta_t}, \lambda_t}(s, a) - \lambda_t \log \pi_{\theta_t}(a|s) \right)  \right)^\top \nabla \tilde J_{\lambda_t}(\theta_t) \\
    =& \sum_{s', a'} \sum_{s, a} \left(d_{\mu_{\theta_t}}(s) \dv{\pi_{\theta_t}(a | s)}{\theta_{s',a'}} \left( q_{\pi_{\theta_t}, \lambda_t}(s, a) - \lambda_t \log \pi_{\theta_t}(a|s) \right) \right)  \dv{\tilde J_{\lambda_t}(\theta_t)}{\theta_{s',a'}} \\
    =& \sum_{s', a'} \left( d_{\mu_{\theta_t}}(s') \pi_{\theta_t}(a'|s') \tilde \adv_{\pi_{\theta_t}}(s', a')  \right) \dv{\tilde J_{\lambda_t}(\theta_t)}{\theta_{s',a'}} \qq{(Lemma~\ref{lem softmax policy gradient})} \\
    =& \sum_{s', a'}  \frac{(1 - \gamma) d_{\mu_{\theta_t}}(s')}{d_{\pi_{\theta_t}, \gamma}(s')} \left(\dv{\tilde J_{\lambda_t}(\theta_t)}{\theta_{s',a'}}\right)^2 \qq{(Lemma~\ref{lem softmax policy gradient})} \\
    \geq & \inf_{\theta, s} \frac{(1-\gamma)d_{\mu_\theta}(s)}{d_{\pi_\theta, \gamma}(s)} \norm{\nabla \tilde J_{\lambda_t}(\theta_t)}^2.
\end{align}
Assumption~\ref{assu mu uniform ergodicity},
the continuity of $d_{\mu_\theta}$ w.r.t. $\theta$ (Lemma~\ref{lem continuity of ergodic distribution}),
and the extreme value theorem ensures that
the above $\inf$ is strictly positive,
which completes the proof.
\end{proof}

\subsection{Proof of Lemma~\ref{lem bound of m121 sac}}
\label{sec proof lem bound of m121 sac}
\lemboundofmonetwoonesac*
\begin{proof}
  We first study the Lipschitz continuity of $\Lambda_1(\theta, s, \eta)$ defined in \eqref{eq actor helper function sac}.
  We have
  \begin{align}
    &\Lambda_1(\theta, s, \eta)\\
    =& \sum_a \pi_\theta(a|s) \nabla \log \pi_\theta(a|s) \left(\tilde q_{\pi_\theta, \eta}(s,a) - \eta \log \pi_\theta (a|s)\right) \\
    =& \sum_a \nabla \pi_\theta(a|s) \tilde q_{\pi_\theta, \eta}(s,a) - \eta \sum_a \nabla \pi_\theta(a|s) \log \pi_\theta (a|s) \\
    =& \sum_a \nabla \pi_\theta(a|s) \tilde q_{\pi_\theta, \eta}(s,a) - \eta \sum_a \nabla \pi_\theta(a|s) \log \pi_\theta (a|s) - \eta \nabla \sum_a \pi_\theta(a|s) \\
    =& \sum_a \nabla \pi_\theta(a|s) \tilde q_{\pi_\theta, \eta}(s,a) - \eta \sum_a \nabla \pi_\theta(a|s) \log \pi_\theta (a|s) - \eta \sum_a \pi_\theta(a|s) \nabla \log \pi_\theta(a|s) \\
    =& \sum_a \nabla \pi_\theta(a|s) \tilde q_{\pi_\theta, \eta}(s,a) + \eta \nabla \ent{\pi_\theta(\cdot | s)}.
  \end{align}
  From \eqref{eq tmp 12}, \eqref{eq tmp 9} and \eqref{eq tmp 10},
  it is easy to see that (1) $\tilde v_{\pi_\theta, \eta}(s)$, as well as $\tilde q_{\pi_\theta, \eta}(s, a)$,
  is Lipschitz continuous in $\theta$ with the Lipschitz constant being a continuous function of $\eta$; 
  (2) $\abs{\tilde v_{\pi_\theta, \eta}(s)}$, as well as $\abs{\tilde q_{\pi_\theta, \eta}(s, a)}$,
  is bounded with the bound being a continuous function of $\eta$.
  From Lemmas~\ref{lem softmax policy gradient} and~\ref{lem smooth definition},
  it is then easy to see both $\nabla \pi_\theta(a|s)$ and $\nabla \ent{\pi_\theta(\cdot | s)}$ are bounded and Lipschitz continuous in $\theta$.
  With Lemma~\ref{lem product of lipschitz functions},
  we, therefore, conclude that 
  there exist continuous functions $L_{\Lambda_1}(\eta)$ and $U_{\Lambda_1}(\eta)$ such that
  for any $s$,
  \begin{align}
    \norm{\Lambda_1(s, \theta, \eta) - \Lambda_1(s, \theta', \eta) } &\leq L_{\Lambda_1}(\eta) \norm{\theta -\theta'}, \\
    \sup_{\theta} \norm{\Lambda_1(s, \theta, \eta)} &\leq U_{\Lambda_1}(\eta).
  \end{align}
  We now study the Lipschitz continuity of $\bar \Lambda_1(\theta, \eta)$ defined in \eqref{eq actor helper function sac}.
  Lemma~\ref{lem continuity of ergodic distribution} confirms the Lipschitz continuity of $d_{\mu_\theta}$.
  Consequently,
  Lemma~\ref{lem product of lipschitz functions} implies that 
  there exist continuous functions $L_{\bar \Lambda_1}(\eta)$ and $U_{\bar \Lambda_1}(\eta)$ such that
  \begin{align}
    \norm{\bar \Lambda_1(\theta, \eta) - \bar \Lambda_1(\theta', \eta)} &\leq L_{\bar \Lambda_1}(\eta) \norm{\theta -\theta'}, \\
    \sup_{\theta} \norm{\bar \Lambda_1(\theta, y, \eta)} &\leq U_{\bar \Lambda_1}(\eta).
  \end{align}
  We now study the Lipschitz continuity of $\Lambda_1'(\theta, y, \eta)$ defined in \eqref{eq actor helper function 2 sac}.
  Since $\tilde J_\eta(\theta)$ is $L_J + \eta L_H$ smooth,
  Lemma~\ref{lem smooth definition} implies that
  $L_J + \eta L_H$ is a Lipschitz constant of $\nabla \tilde J_\eta(\theta)$.
  From Lemma~\ref{lem softmax policy gradient},
  it is easy to see the upper bound of $\tilde \nabla J_\eta(\theta)$ is also a continuous function of $\eta$.
  Consequently,
  Lemma~\ref{lem product of lipschitz functions} implies
  there exist continuous functions $L_{\Lambda_1'}(\eta)$ and $U_{\Lambda_1'}(\eta)$ such that
  for all $y$,
  \begin{align}
    \norm{\Lambda_1'(\theta, y, \eta) - \Lambda_1'(\theta', y, \eta)} &\leq L_{\Lambda_1'}(\eta) \norm{\theta -\theta'}, \\
    \sup_\theta \norm{\Lambda_1'(\theta, y, \eta)} &\leq U_{\Lambda_1'}(\eta).
  \end{align}
  Hence
  \begin{align}
    \norm{\E\left[\tilde M_{121}\right]} &= \norm{\Lambda_1'(\theta_t, S_t, \lambda_t) - \Lambda_1'(\theta_{t-\tau_{\beta_t}}, S_t, \lambda_t)} \\
    &\leq L_{\Lambda_1'}(\lambda_t) \norm{\theta_t - \theta_{t-\tau_{\beta_t}}} \\
    &\leq L_{\Lambda_1'}(\lambda_t) L_\theta \beta_{t-\tau_{\beta_t}, t-1} \qq{(Using \eqref{eq ltheta sac})}.
  \end{align}
  Since $\lambda_t \in [0, \lambda]$, 
  $L_{\Lambda_1'}(\eta)$ is a continuous function and well defined in $[0, \lambda]$,
  the extreme value theorem asserts that $L_{\Lambda_1'}(\eta)$ obtains its maximum in $[0, \lambda]$,
  say, e.g., $L_{\Lambda_1'}^*$.
  Then 
  \begin{align}
    \norm{\E\left[M_{121}\right]} \leq L_{\Lambda_1'}^* L_\theta \beta_{t-\tau_{\beta_t}, t-1},
  \end{align}
  which completes the proof.
\end{proof}

\subsection{Proof of Lemma~\ref{lem bound j lambda diff sac}}
\label{sec proof lem bound j lambda diff sac}
\lemboundjlambdadiffsac*
\begin{proof}
  \begin{align}
    &\sum_{k=\ceil{\frac{t}{2}}}^t \left(\frac{1}{\beta_k}\tilde J_{\lambda_k}(\theta_{k+1}) - \frac{1}{\beta_k}\tilde J_{\lambda_k}(\theta_k) \right) \\
    =& \sum_{k=\ceil{\frac{t}{2}}}^t\left(\frac{1}{\beta_{k-1}}\tilde J_{\lambda_{k-1}}(\theta_k) - \frac{1}{\beta_k}\tilde J_{\lambda_k}(\theta_k) \right) + \frac{1}{\beta_t} \tilde J_{\lambda_t}(\theta_{t+1}) - \frac{1}{\beta_{\ceil{\frac{t}{2}}-1}} \tilde J_{\lambda_{\ceil{\frac{t}{2}}-1}}(\theta_{\ceil{\frac{t}{2}}}) \\
    =& \sum_{k=\ceil{\frac{t}{2}}}^t\left(\frac{1}{\beta_{k-1}}\tilde J_{\lambda_{k-1}}(\theta_{k}) - \frac{1}{\beta_{k-1}} \tilde J_{\lambda_k}(\theta_k) + \frac{1}{\beta_{k-1}} \tilde J_{\lambda_k}(\theta_k) - \frac{1}{\beta_k}\tilde J_{\lambda_k}(\theta_k) \right) \\
    &+ \frac{1}{\beta_t} \tilde J_{\lambda_t}(\theta_{t+1}) - \frac{1}{\beta_{\ceil{\frac{t}{2}} - 1}} \tilde J_{\lambda_{\ceil{\frac{t}{2}}- 1}}(\theta_{\ceil{\frac{t}{2}}}) \\
    =& \sum_{k=\ceil{\frac{t}{2}}}^t\left(\frac{1}{\beta_{k-1}}(\lambda_{k-1} - \lambda_{k}) \frac{\sum_s d_{\pi_{\theta_k}, \gamma}(s) \ent{\pi_{\theta_k}(\cdot |s)}}{1 - \gamma}  + \left(\frac{1}{\beta_{k-1}} - \frac{1}{\beta_k}\right) \tilde J_{\lambda_k}(\theta_k) \right) \\
    &+ \frac{1}{\beta_t} \tilde J_{\lambda_t}(\theta_{t+1}) - \frac{1}{\beta_{\ceil{\frac{t}{2}}-1}} \tilde J_{\lambda_{\ceil{\frac{t}{2}} - 1}}(\theta_{\ceil{\frac{t}{2}}}) \\
    \leq& \sum_{k=\ceil{\frac{t}{2}}}^t\left(\frac{1}{\beta_{k-1}}\left(\frac{\lambda}{(k+t_0-1)^{\epsilon_\lambda}} - \frac{\lambda}{(k+t_0)^{\epsilon_\lambda}}\right) \frac{\log \na}{1-\gamma} + \left(\frac{1}{\beta_{k-1}} - \frac{1}{\beta_k}\right) \tilde J_{\lambda_k}(\theta_k) \right) \\
    &+ \frac{1}{\beta_t} \tilde J_{\lambda_t}(\theta_{t+1}) - \frac{1}{\beta_{\ceil{\frac{t}{2}}-1}} \tilde J_{\lambda_{\ceil{\frac{t}{2}} - 1}}(\theta_{\ceil{\frac{t}{2}}}) \\
    \overset{(i)}{\leq} & \sum_{k=\ceil{\frac{t}{2}}}^t \left(\frac{1}{\beta_{k-1}(k-1+t_0)^{1+\epsilon_\lambda}} \frac{\lambda \log \na}{1-\gamma} +  \left(\frac{1}{\beta_{k-1}} - \frac{1}{\beta_k}\right) \tilde J_{\lambda_k}(\theta_k) \right) \\
    &+ \frac{1}{\beta_t} \tilde J_{\lambda_t}(\theta_{t+1}) - \frac{1}{\beta_{\ceil{\frac{t}{2}}-1}} \tilde J_{\lambda_{\ceil{\frac{t}{2}} - 1}}(\theta_{\ceil{\frac{t}{2}}}), \\
    \overset{(ii)}{\leq} & \frac{3\lambda\beta \log \na}{1-\gamma} + \sum_{k=\ceil{\frac{t}{2}}}^t\left(\frac{1}{\beta_k} - \frac{1}{\beta_{k-1}} \right) U_{\tilde J}  + \frac{1}{\beta_t} U_{\tilde J} + \frac{1}{\beta_{\ceil{\frac{t}{2}}-1}} U_{\tilde J} \\
    = & \frac{3\lambda\beta \log \na}{1-\gamma} + \frac{U_{\tilde J}}{\beta_t} - \frac{U_{\tilde J}}{\beta_{\ceil{\frac{t}{2}}-1}} + \frac{1}{\beta_t} U_{\tilde J} + \frac{1}{\beta_{\ceil{\frac{t}{2}}-1}} U_{\tilde J}  \\
    = &\frac{3\lambda\beta \log \na}{1-\gamma} + \frac{2U_{\tilde J}}{\beta} (t+t_0)^{\epsilon_\beta},
\end{align}
where $(i)$ results from the inequality
\begin{align}
  \frac{1}{(t-1)^x} - \frac{1}{t^x} &= \frac{t^x - (t-1)^x}{(t-1)^x t^x} = \frac{t^x(t-1)^{1-x} - (t-1)}{(t-1)t^x} \\
  &\leq \frac{t^x t^{1-x} - (t-1)}{(t-1)(t-1)^x} = \frac{1}{(t-1)^{1+x}}
\end{align}
and $(ii)$ results from the inequality
\begin{align}
  \sum_{t=1}^\infty \frac{1}{t^{1+\epsilon_\lambda + \epsilon_\beta}} \leq \sum_{t=1}^\infty \frac{1}{t^{1.5}} < 3.
\end{align}
\end{proof} 
    
\vskip 0.2in
\bibliography{ref_updated}

\end{document}